\documentclass{article}

\usepackage{style/arxiv}

\usepackage[utf8]{inputenc}
\usepackage[T1]{fontenc}
\usepackage{hyperref}
\usepackage{url}
\usepackage{booktabs}
\usepackage{nicefrac}
\usepackage{microtype}
\usepackage{amsmath,amssymb,amsthm}
\usepackage{mathtools}
\usepackage{mathrsfs}
\usepackage{graphicx}
\usepackage{subcaption}
\usepackage[inline]{enumitem}
\usepackage{bbm,bm}
\usepackage{upgreek}
\usepackage{xcolor}
\hypersetup{
   colorlinks=true,
   linkcolor=blue,
   filecolor=blue,
   citecolor=purple,
   urlcolor=cyan,
}
\usepackage[disable]{todonotes}
\usepackage{ifthen}
\usepackage{xargs}
\usepackage{accents}
\usepackage{stmaryrd}
\usepackage{cleveref}
\usepackage{autonum}

\title{Characterizing the Generalization Error of Random Feature Regression with Arbitrary Data-Augmentation}

\date{}

\author{%
  Lucas Morisset\thanks{Corresponding author: \texttt{lucas.morisset@polytechnique.edu}} \\
  Qube Research \& Technologies \\
  \'Ecole Polytechnique
  \And
  Alain Durmus \\
  \'Ecole Polytechnique
  \And
  Adrien Hardy \\
  Qube Research \& Technologies
}

\newcommand{\E}{\mathbb{E}} 

\newcommand{\Xtt}{\mathtt{X}}
\newcommand{\xtt}{\mathtt{x}}
\newcommand{\Ytt}{\mathtt{Y}}
\newcommand{\ytt}{\mathtt{y}}

\newcommand{\Ztt}{\mathtt{Z}}

\newcommand{\Var}{\mathrm{Var}} 
\newcommand{\cov}{\mathrm{Cov}} 
\newcommand{\Cov}{\mathrm{Cov}} 
\newcommand{\Brm}{\mathrm{B}}

\newcommand{\Prob}{\mathbb{P}} 
\newcommand{\tr}{\mathrm{tr}} 
\newcommand{\Resolvent}{\mathrm{R}}
\newcommand{\detequiv}{\overline{\mathrm{R}}}
\newcommand{\SampleCov}{\mathrm{C}}
\newcommand{\SampleCrossCov}{\mathrm{H}}
\newcommand{\Diag}{\mathrm{Diag}}

\newcommand{\Aug}{\mathrm{Aug}}

\newcommand{\Zbf}{\boldsymbol{Z}}

\newcommand{\Abf}{\boldsymbol{\mathrm{A}}}

\newcommand{\etabf}{\boldsymbol{\eta}}

\newcommand{\Qfun}{\mathrm{Q}_\Aug}
\newcommand{\Sfun}{\mathrm{S}_\Aug}
\newcommand{\Sequiv}{\overline{\mathrm{S}}_\Aug}
\newcommand{\Qequiv}{\overline{\mathrm{Q}}_\Aug}

\renewcommand{\complement}{\mathrm{c}}

\newcommand{\consta}{\mathrm a}
\newcommand{\constb}{\mathrm b}
\newcommand{\const}{\mathrm c}
\newcommand{\thetahat}{\hat{\theta}}
\newcommand{\stochdom}{\prec_s}

\def\bfu{\mathbf{u}}

\def\bfD{\mathbf{D}}

\def\msk{\mathsf{K}}

\def\msn{\mathsf{N}}

\def\mse{\mathsf{E}}

\def\msm{\mathsf{M}}
\def\msu{\mathsf{U}}

\def\mce{\mathcal{E}}

\def\mct{\mathcal{T}}

\def\rset{\mathbb{R}}

\def\cset{\mathbb{C}}

\def\nset{\mathbb{N}}

\def\rmd{\mathrm{d}}

\def\rme{\mathrm{e}}
\def\rmi{\mathrm{i}}

\newcommand{\R}{\mathbb R}

\newcommandx{\functionspace}[2][1=+]{\mathbb{F}_{#1}(#2)}

\newcommand{\argmin}{\operatorname*{arg\,min}}

\newcommandx{\VarDeux}[3][3=]{\operatorname{Var}^{#3}_{#1}\left\{#2 \right\}}

\newcommand{\1}{\mathbbm{1}}

\newcommand{\LeftEqNo}{\let\veqno\@@leqno}

\newcommand{\N}{\ensuremath{\mathbb{N}}}
\newcommand{\C}{\ensuremath{\mathbb{C}}}

\newcommand{\abs}[1]{\left\vert #1 \right\vert}

\newcommandx{\Vnorm}[2][1=V]{\| #2 \|_{#1}}
\newcommandx{\VnormEq}[2][1=V]{\left\| #2 \right\|_{#1}}
\newcommandx{\norm}[2][1=]{\ifthenelse{\equal{#1}{}}{\left\Vert #2 \right\Vert}{\left\Vert #2 \right\Vert^{#1}}}
\newcommandx{\normLigne}[2][1=]{\ifthenelse{\equal{#1}{}}{\Vert #2 \Vert}{\Vert #2\Vert^{#1}}}

\newcommandx\probaMarkovTilde[2][2=]
{\ifthenelse{\equal{#2}{}}{{\widetilde{\mathbb{P}}_{#1}}}{\widetilde{\mathbb{P}}_{#1}\left[ #2\right]}}

\newcommand{\plusinfty}{+\infty}

\def\ie{\textit{i.e.}}

\def\eqsp{\;}

\newcommand{\ccint}[1]{\left[#1\right]}

\newcommandx{\weight}[2][2=n]{\omega_{#1,#2}^N}

\newcommandx\sequence[3][2=,3=]
{\ifthenelse{\equal{#3}{}}{\ensuremath{\{ #1_{#2}\}}}{\ensuremath{\{ #1_{#2}, \eqsp #2 \in #3 \}}}}
\newcommandx\sequenceD[3][2=,3=]
{\ifthenelse{\equal{#3}{}}{\ensuremath{\{ #1_{#2}\}}}{\ensuremath{( #1)_{ #2 \in #3} }}}

\newcommandx{\sequencen}[2][2=n\in\N]{\ensuremath{\{ #1_n, \eqsp #2 \}}}
\newcommandx\sequenceDouble[4][3=,4=]
{\ifthenelse{\equal{#3}{}}{\ensuremath{\{ (#1_{#3},#2_{#3}) \}}}{\ensuremath{\{  (#1_{#3},#2_{#3}), \eqsp #3 \in #4 \}}}}
\newcommandx{\sequencenDouble}[3][3=n\in\N]{\ensuremath{\{ (#1_{n},#2_{n}), \eqsp #3 \}}}

\def\iid{\text{i.i.d.}}

\newcommand{\opnorm}[1]{{\left\vert\kern-0.25ex\left\vert\kern-0.25ex\left\vert #1
    \right\vert\kern-0.25ex\right\vert\kern-0.25ex\right\vert}}
\newcommand{\op}{\operatorname{op}}

\newcommandx{\CPE}[3][1=]{{\mathbb E}_{#1}\left[\left. #2 \, \middle \vert \, #3 \right. \right]} 

\newcommandx{\CPELigne}[3][1=]{{\mathbb E}_{#1}[\left. #2 \,  \vert \, #3 \right. ]} 

\newcommandx{\CPVar}[3][1=]{\mathrm{Var}^{#3}_{#1}\left\{ #2 \right\}}
\newcommand{\CPP}[3][]
{\ifthenelse{\equal{#1}{}}{{\mathbb P}\left(\left. #2 \, \right| #3 \right)}{{\mathbb P}_{#1}\left(\left. #2 \, \right | #3 \right)}}

\newcommandx{\osc}[2][1=]{\mathrm{osc}_{#1}(#2)}

\def\sphere{\mathrm{S}}

\newcommand\coupling[2]{\Gamma(\mu,\nu)}

\renewcommand{\geq}{\geqslant}
\renewcommand{\leq}{\leqslant}

\def\bfZ{\mathbf{Z}}

\def\Ltt{\mathtt{L}}

\newcommand{\fraca}[2]{(#1/#2)}

\def\loiGauss{\mathbf{N}}

\def\gauss{\mathrm{N}}
\def\Idd{\mathrm{I}_d}
\newcommand{\Id}[1]{\mathrm{I}_{#1}}

\def\bfX{\mathbf{X}}
\def\bfY{\mathbf{Y}}

\def\bfo{\mathbf{o}}

\newcommandx{\Voi}[1][1=i]{\mathfrak{V}_{\bfo,#1}}
\newcommandx{\Vlyapc}[2][1=\bfo,2=i]{\mathfrak{V}_{#1,#2}}

\def\Wlyap{\mathfrak{W}}
\def\bfomega{\boldsymbol{\omega}}

\newcommandx{\Woi}[1][1=i]{\Wlyap_{\bfomega,#1}}
\newcommandx{\Wlyapc}[2][1=\bfomega,2=i]{\Wlyap_{#1,#2}}

\def\Frob{\mathrm{F}}

\def\op{\mathrm{op}}

\def\msn{\mathsf{N}}
\def\proj{\mathrm{proj}}

\newtheorem{theorem}{Theorem}
\newtheorem{definition}{Definition}
\newtheorem{lemma}{Lemma}
\newtheorem{corollary}{Corollary}
\newtheorem{proposition}{Proposition}
\newtheorem{example}{Example}
\newtheorem{assumption}{\textbf{H}\hspace{-3pt}}

\crefname{theorem}{theorem}{theorems}
\Crefname{theorem}{Theorem}{Theorems}
\crefname{definition}{definition}{definitions}
\Crefname{definition}{Definition}{Definitions}
\crefname{lemma}{lemma}{lemmas}
\Crefname{lemma}{Lemma}{Lemmas}
\crefname{corollary}{corollary}{corollaries}
\Crefname{corollary}{Corollary}{Corollaries}
\crefname{proposition}{proposition}{propositions}
\Crefname{proposition}{Proposition}{Propositions}
\crefname{example}{example}{examples}
\Crefname{example}{Example}{Examples}
\crefname{assumption}{\textbf{H}}{\textbf{H}}
\Crefname{assumption}{\textbf{H}\hspace{-3pt}}{\textbf{H}\hspace{-3pt}}

\begin{document}

\maketitle

\begin{abstract}
This paper aims at analyzing the regularization effect that data augmentation 
induces on supervised regression methods in the proportional regime, 
where the number of covariates grows proportionally to the number of samples. 
We provide a tight characterization of the test error, measured in mean squared error, 
in terms only of the population quantities of the true data, 
as well as first and second order statistics of the augmentation scheme.
Our results are valid under misspecified feature maps, 
and for any network architecture where only the last readout layer is trained, 
and the rest of the network is either frozen or randomly initialized.
We specify our results in the case of Gaussian data, 
and show that our asymptotic characterization is tight in this setting.
\end{abstract}

\section{Introduction} \label{sec:introduction}

Data augmentation (DA) is now a standard ingredient in modern machine learning pipelines,
with extensive empirical evidence reporting improvements in generalization across modalities and tasks \cite{mumuni2022data,wang2025comprehensive}.
It is often used to encode task-relevant symmetries directly into the training procedure, 
for instance by encouraging invariance to image rotations or other transformations of the input \cite{ShortenKhoshgoftaar2019SurveyImageDA, ChenDobribanLee2020GroupTheoreticDA}.
It has also been identified as one of the most effective regularization techniques across both supervised learning settings
\cite{Bishop1995NoiseTikhonov, cubuk2019autoaugment, mumuni2022data, wang2025comprehensive} and self-supervised/unsupervised learning
\cite{feng2021survey, VanAsselIbrahimBiancalaniRegevBalestriero2025}.
Domain-specific augmentation pipelines have been central to progress in computer vision
\cite{ShortenKhoshgoftaar2019SurveyImageDA, kumar2024image}, natural language processing
\cite{feng2021survey, shorten2021text, bayer2022survey}, and time-series or audio applications
\cite{wen2020time, iwana2021empirical, iglesias2023data}.
Despite these empirical successes, the benefits of DA remain highly task- and data-dependent, and augmentation schemes are often engineered in an ad hoc manner
\cite{fawzi2016adaptive, cubuk2019autoaugment, lim2019fast, hataya2020faster}.

In contrast with this rich empirical literature, comprehensive theoretical analyses of DA remain relatively scarce. 
Two classical starting points are, first, the interpretation of additive Gaussian noise as a form of explicit (ridge-like)
regularization \cite{Bishop1995NoiseTikhonov, LinKaushikDyerMuthukumar2024}, and second, the idea that leveraging distributional invariances and group 
structure in the learning objective helps decrease the variance of the model without increasing its bias \cite{ChenDobribanLee2020GroupTheoreticDA}.
Yet, when applied to modern and complex augmentation schemes, these works either provide only upper bounds on the generalization error \cite{LinKaushikDyerMuthukumar2024}, 
or require very strong assumptions on the data distribution (e.g. approximate invariance under the augmentation scheme \cite{ChenDobribanLee2020GroupTheoreticDA}).
The present manuscript is motivated by the need for a tractable theoretical framework capable of capturing augmentation schemes used in deep learning, 
including those that may induce strong dependence across observations without preserving the data distribution.

Among the tractable proxy models used to study neural networks, random feature networks, obtained by training only the last layer of a random representation, have emerged as particularly natural candidates 
\cite{BelkinHsuMaMandal2019, DhifallahLu2020PreciseRF, pmlr-v145-goldt22a, HuLu2023UniversalityRF}.
Beyond their mathematical convenience, random networks at initialisation have proven to be a valuable theoretical testbed, and can capture empirically observed phenomena such as double-descent \cite{MeiMontanari2022GenErrorRF}
and benign overfitting \cite{BartlettLongLugosiTsigler2020}.
As a result, tractable proxies for networks at initialisation, such as random feature models \cite{BelkinHsuMaMandal2019, DhifallahLu2020PreciseRF, pmlr-v145-goldt22a, HuLu2023UniversalityRF, pmlr-v202-schroder23a, Schroeder2024}, 
have received considerable attention.
Remarkably, it has also been observed that structured random networks, whose weights are sampled to match certain statistics of trained models, can achieve performance comparable to that of
fully trained networks \cite{GuthMenardRochetteMallat2023Rainbow}. In some cases, matching second-order statistics alone is sufficient, which further supports random feature models as a natural
testbed for deep learning. This perspective has led to a comprehensive theory of random feature models \cite{pmlr-v202-schroder23a, NEURIPS2023_85d456fd, pmlr-v195-bosch23a, Schroeder2024}. 
Yet this theory has so far remained confined to the i.i.d. setting, and a central technical challenge of the present work is to extend it to the dependent regime induced by data augmentation.

More precisely, we aim to provide a precise asymptotic characterization of the out-of-sample error of feature regression trained with data augmentation.
To this end, we consider a general augmentation framework that covers a broad class of modern augmentation schemes. 
Our results apply to any class of architecture with a trained linear readout and either frozen or randomly initialized representation, and they reveal that data augmentation can 
induce subtle and sometimes competing effects on generalization. Our contributions are as follows:

\begin{itemize}
    \item We derive deterministic equivalents for the generalization error of random-feature regression trained with data augmentation under possible misspecification, 
    extending previous results from \cite{DhifallahLu2020PreciseRF, pmlr-v145-goldt22a, HuLu2023UniversalityRF, pmlr-v202-schroder23a, Schroeder2024} to the dependent setting. 
    In particular, we obtain a closed-form deterministic quantity that approximates the true generalization error in the proportional regime, 
    \ie, when the number of features scales proportionally with the sample size. We also derive explicit non-asymptotic bounds for the corresponding approximation error 
    as a function of the ridge parameter and the sample size.
    
    \item Beyond the test error, we derive deterministic equivalents for the resulting estimator, as well as for key quadratic forms and transformations arising in the generalization analysis, 
    which we believe are of independent interest.
    
    \item We empirically validate the accuracy of the deterministic equivalents on synthetic Gaussian experiments across a range of dimensions, aspect ratios, augmentation strengths, and ridge parameters. 
    We also investigate how data augmentation affects the bias and variance components of the error when the feature model is misspecified. 
    One of our main findings is that augmentation can lead to a substantial reduction in variance and improved generalization over a broad range of ridge parameters, 
    without necessarily inducing a monotonic increase in bias, contrary to the usual intuition drawn from correctly specified settings.
    \end{itemize}

\subsection{Notation}
Motivated by high-dimensional statistics (and while all our results are non-asymptotic), we view the sample size $n$, the covariate dimenson $d = d(n)$, 
and the feature dimension $p = p(n)$ as variable scaling parameters. For notational simplicity, we typically omit the explicit dependence on $n$ for non-scalar quantities, 
as it is always implicitly determined by the dimensions of the relevant vectors and matrices. Throughout, $\const$ denotes a universal constant, 
independent of $n$, $d$, $p$, and $\lambda$; its value may change from line to line. For a matrix $\mathbf{A}$, we write $\|\mathbf{A}\|_{\Frob}$ and $\|\mathbf{A}\|_{\op}$
for its Frobenius and operator norms, respectively. Moreover, for any $\mathbf{A}\in\R^{d\times n}$, any $i \in {1,\ldots,n}$, and any function $f:\R^d\to\R^p$,
we denote by $\mathbf{A}_i$ the $i$-th column of $\mathbf{A}$ and define $f(\mathbf{A})=(f(\mathbf{A}_1),\dots,f(\mathbf{A}_n))\in\R^{p\times n}$.
Finally, for a random matrix $A$, we define its variance by $\Var(A)=\E\left[\|A-\E[A]\|_{\Frob}^2\right]$, which corresponds to the sum of the entrywise variances.
The matrix $\Id{p}$ denotes the $p \times p$ identity matrix. For two random matrices $A_1 \in \R^{p \times k}$ and $A_2 \in \R^{l \times k}$, we define the associated 
covariance matrix as $\Cov(A_1, A_2) = \E\left[(A_1-\E[A_1])(A_2-\E[A_2])^\top\right]$ and $\Cov(A_1) = \Cov(A_1, A_1)$. We denote by $\gauss(\bf m, \bf \Sigma)$ the Gaussian 
distribution with mean $\bf m$ and covariance matrix $\bf \Sigma$.
\section{Settings} \label{sec:settings}
Consider a training dataset $\Ztt = (\Ztt_i)_{i=1}^n = \{(\Xtt_i,\Ytt_i)\}_{i=1}^n$, where $\Xtt_i\in\R^d$ and $\Ytt_i\in\R$ denote the
covariates and responses respectively, satisfying the following condition.
\begin{assumption}
    \label{ass:data_distribution}
    We assume that the true covariates $\Ztt = \{(\Xtt_i, \Ytt_i)\}_{i=1}^n$ are centered i.i.d. random variables,
    such that the responses write for some $\varphi_\star : \R^d \to \R^{p_\star}$ and $\theta_\star \in \R^{p_\star}$, $p_{\star} \geq 1$,
    \begin{equation}
    \label{eq:response_shape}
    \Ytt_i = \theta_\star^\top \varphi_\star(\Xtt_i) + \varepsilon_i\eqsp,
    \end{equation}
    for all $i \leq n$, and where $\varepsilon_i$ is a centered, sub-Gaussian random variable with parameter  $\sigma^2$.
\end{assumption}
Let $\varphi:\R^d\to\R^p$ be a feature map which may differ from $\varphi_{\star}$; note that we
allow different codimensions $p$ and $p_\star$. In particular, the learned representation $\varphi:\R^d\to\R^p$ used by the estimator need not coincide with the
ground-truth representation $\varphi_\star:\R^d\to\R^{p_\star}$ in \eqref{eq:response_shape},
allowing for model misspecification.
Equipped with $\varphi$, feature regression model aims to fit the data finding $\theta\in\rset^p$ that minimizes the empirical risk function 
$\mathcal{L}(\theta)= n^{-1}\sum_{i=1}^n \bigl\{\Ytt_i-\theta^\top \varphi(\Xtt_i)\bigr\}^2$. 
To improve performance various DA strategies have been proposed. In this paper, we model a data augmentation scheme as a random transformation
$(\tau_{\xtt},\tau_{\ytt})$ acting on feature/label pair. More precisely we consider $\tau_{\xtt} : \rset^{d}\times \rset \times \mse \to \rset^d$ 
and $\tau_{\ytt}: \rset^{d}\times \rset \times \mse \to \rset$, referred to as the feature and label
transformations respectively, and  where $(\mse,\mce)$ is a measurable space. Then, a data augmentation scheme is completed by specifying a distribution 
$\mathcal{T}$ over $(\mse,\mce)$. Artificial samples are then defined as $(\tau_{\xtt}(\Ztt_i,\eta),\tau_{\ytt}(\Ztt_i,\eta))$, where $\eta$ 
is distributed according to $\mathcal{T}$ and is independent of the dataset $\Ztt$. This framework covers both input-only and label-dependent augmentations.
Based on $(\tau_{\xtt},\tau_{\ytt},\mct)$, we can then consider the augmented risk functions defined as
  \begin{align}
          \label{eq:augmented_ls}
    \mathcal{L}_{\Aug}(\theta)
    =\eqsp \frac{1}{n}\sum_{i=1}^n \E\!\left[\bigl\{\tau_{\ytt}(\Ztt_i,\eta)-\theta^\top \varphi(\tau_{\xtt}(\Ztt_i,\eta))\bigr\}^2 \eqsp \middle| \eqsp \Ztt\right]\eqsp,
    \end{align}
and consider the estimator obtained by minimizing a linear combinaison of the two previous risk functions: for $\lambda \geq 0$ and $\alpha \in \ccint{0,1}$,
\begin{equation} \label{eq:def_thetahat}
\thetahat_{\alpha,\lambda}
:=\arg\min_{\theta\in\R^p}\ (1-\alpha)\mathcal{L}(\theta)
+\alpha\,\mathcal{L}_{\Aug}(\theta)
+\lambda\|\theta\|_2^2 \eqsp.
\end{equation}
Note that we could set $\alpha = 0$ up to a change of $(\tau_{\xtt},\tau_{\ytt},\mct)$.
We do not adopt this presentation, however, as it would make our assumptions and results harder to state.

Our main object of interest is the generalization error of $\thetahat_{\alpha,\lambda}$
\begin{equation}\label{eq:generalization_error}
\mathcal{G}(\alpha,\lambda)
:=\E\Bigl[\bigl\{\varphi(X)^{\top}\thetahat_{\alpha,\lambda}-Y\bigr\}^2\Bigr]\eqsp,
\end{equation}
where $(X,Y)$ is an independent test pair drawn from the same distribution as the training data.
In contrast  to  previous works \cite{LinKaushikDyerMuthukumar2024},
we do not suppose that the augmentation scheme is unbiased,
i.e., for some $(x, y) \in \R^d \times \R$ it may hold that $\E\left[\tau_{\xtt}(x,y,\eta)\right] \neq x$, and $\E\left[\tau_{\ytt}(x,y,\eta)\right] \neq y$,
which enables the analysis of potential bias introduced by the DA scheme. To conduct our analysis,
we introduce the first moments of the DA scheme defined for any $z \in \R^{d} \times \R$  as
\begin{align} \label{eq:definition_mu_x_and_mu_y}
    \mu_{\xtt}(z) = \E\left[\varphi(\tau_{\xtt}(z,\eta))\right] \eqsp, \quad \mu_{\ytt}(z) = \E\left[\tau_{\ytt}(z,\eta)\right] \eqsp,
\end{align}
as well as the corresponding covariance operators,
\begin{align} \label{eq:definition_lambda_and_omega}
    \Lambda(z) = \cov[\varphi(\tau_{\xtt}(z,\eta))] \eqsp, \qquad
    \Omega(z) = \cov[\varphi(\tau_{\xtt}(z,\eta)),\tau_{\ytt}(z,\eta)] \eqsp.
\end{align}
We also define their empirical averages over the training samples,
\begin{align}
    \Lambda(\Ztt) = \dfrac{1}{n}\sum_{i=1}^n \Lambda(\Ztt_i) \eqsp, \quad \text{and} \quad \Omega(\Ztt) = \dfrac{1}{n}\sum_{i=1}^n \Omega(\Ztt_i) \eqsp,
\end{align}
as well as their population counterparts,
\begin{align}
    \overline{\Lambda} = \E\left[\Lambda(z)\right] \eqsp, \quad \overline{\Omega} = \E\left[\Omega(z)\right] \eqsp.
\end{align}
These quantities are of natural interest, as they quantify the implicit regularization effect induced by the DA scheme \cite{Bishop1995NoiseTikhonov, LinKaushikDyerMuthukumar2024}.
Indeed, writting a bias variance decomposition on \eqref{eq:augmented_ls} gives,
\begin{align}
    \mathcal{L}_{\Aug}(\theta) = \dfrac{1}{n} \|\mu_{\ytt}(\Ztt) - \theta^\top \mu_{\xtt}(\Ztt)\|_2^2 - 2 \theta^\top \Omega(\Ztt) + \theta^\top \Lambda(\Ztt) \theta + \const \eqsp,
\end{align}
where here, $\const$ is a constant that doesn't depend on $\theta$.
Thus the first order condition in \eqref{eq:def_thetahat} yields,
\begin{align} \label{eq:closed_form_thetahat}
    \hat{\theta}_{\alpha, \lambda} = \Resolvent_{\alpha, \lambda} \SampleCrossCov_\alpha \eqsp,
\end{align}
where
\begin{equation} \label{eq:definition_resolvent}
    \begin{aligned}
        \SampleCrossCov_\alpha = \left((1-\alpha) \SampleCrossCov + \alpha \SampleCrossCov' + \alpha \Omega(\Ztt)\right) \eqsp, \qquad 
        \Resolvent_{\alpha, \lambda} = \left((1-\alpha)\SampleCov + \alpha \SampleCov' + \alpha \Lambda(\Ztt) + \lambda \Id{p}\right)^{-1}\eqsp,
    \end{aligned}
\end{equation}
and
\begin{equation}\label{eq:definition_sample_covs}
    \begin{aligned}
        \makebox[0.8\linewidth][c]{$
        \SampleCov = \dfrac{1}{n} \varphi(\Xtt)\varphi(\Xtt)^\top \eqsp, \quad \SampleCrossCov = \dfrac{1}{n}\varphi(\Xtt)\Ytt^\top \eqsp,
        $}\\
        \makebox[0.8\linewidth][c]{$
        \SampleCov' = \dfrac{1}{n}\mu_{\xtt}(\Ztt)\mu_{\xtt}(\Ztt)^\top \eqsp, \quad \SampleCrossCov' = \dfrac{1}{n}\mu_{\xtt}(\Ztt)\mu_{\ytt}(\Ztt)^\top \eqsp.
        $}
    \end{aligned}
\end{equation}
The matrix $\Resolvent_{\alpha,\lambda}$ in \eqref{eq:definition_resolvent} is the resolvent associated with the effective
empirical covariance $(1-\alpha)\SampleCov+\alpha\SampleCov'+ \alpha \Lambda(\Ztt)$:
it therefore governs the stability and sensitivity of
$\hat{\theta}_{\alpha,\lambda}$ from \eqref{eq:closed_form_thetahat}.
Resolvent matrices are a central object in Random Matrix Theory,
as the Stieltjes transform of the empirical spectral distribution can be expressed in
terms of normalized traces of $\Resolvent_{\alpha,\lambda}$, which makes it a convenient
tool to characterize eigenvalue distributions in high dimension.
Data augmentation modifies this resolvent in two key ways. First, it effectively
enriches the dataset by incorporating $\mu_{\xtt}(\Ztt)$.
Second, it introduces an extra data-dependent regularization
term $\Lambda(\Ztt)$ which is typically anisotropic and inflates directions in feature space in
proportion to the augmentation-induced variability.

Our analysis of the generalisation error is made under the proportional regime, i.e.,
\begin{assumption}
    \label{ass:porportionality}
    We assume that $p = p(n)$ and
    $$\limsup_{n \to \infty} p/n < \infty\eqsp.$$
\end{assumption}
\Cref{ass:porportionality} is standard in high-dimensional asymptotics and allows
the covariate dimension $p$ to scale at most linearly with the sample size $n$.
In particular, it covers both underparameterized and overparameterized regimes, including cases where $p>n$.
Although it is not the focus of this paper, our results also cover the classical regime $p=\mathcal{O}(1)$.

We shall work under the following assumptions on the data distribution and the DA scheme:
\begin{assumption}\label{ass:feature_map_and_concentration}
    We assume that:
    \begin{enumerate}[wide, labelwidth=!, labelindent=0pt,label=(\roman*)]
        \item\label{ass:feature_map_and_concentration:concentration}
        for any $1$-Lipschitz continuous function $f:\R^d\to\R$, the random variable $f(\Xtt_1)$ is $1$-sub-Gaussian.
        \item\label{ass:feature_map_and_concentration:lipschitz}
        the feature maps $\varphi:\R^d\to\R^p$ and $\varphi_\star:\R^d\to\R^{p_\star}$ are $\Ltt_\varphi$-Lipschitz continuous, with $\Ltt_\varphi=\mathcal{O}(1)$ independent of $n$.
    \end{enumerate}
\end{assumption}
\Cref{ass:feature_map_and_concentration} formalizes the concentration of measure conditions supposed throughout the paper.
In particular the concentration hypothesis in \Cref{ass:feature_map_and_concentration}-\ref{ass:feature_map_and_concentration:concentration} is commonly refered to as the \textit{Lipschitz concentration} property,
for which we refer the reader to \cite{LouartCouillet2018ConcentrationLargeRandomMatrices, LouartCouillet2020RandomEquationsConcentration, LouartCouillet2021SpectralPropertiesSampleCovariance} for more background and details.
Naturally, combining \Cref{ass:data_distribution} and \Cref{ass:feature_map_and_concentration} implies that the pairs $\Ztt_i=(\Xtt_i,\Ytt_i)$ satisfies \Cref{ass:feature_map_and_concentration}-\ref{ass:feature_map_and_concentration:concentration} with a sub-Gaussian parameter bounded by $1+\Ltt_{\varphi}\norm{\theta_{\star}}+\sigma^2$.
As a simple example of distribution for which \Cref{ass:feature_map_and_concentration} holds, one can consider any Gaussian distribution $\gauss(0, \Abf)$, where $\|\Abf\|_\op \leq 1$, \cite{LouartCouillet2018ConcentrationLargeRandomMatrices}.

We now impose a mild stability condition on the augmentation scheme, requiring its first and second-order statistics to vary smoothly with the data.
\begin{assumption}
    \label{ass:artificial_data}
    We assume that $\mu_{\xtt} : \R^d \times \R \to \R^p$, $\mu_{\ytt} : \R^d \times \R \to \R$ defined in \eqref{eq:definition_mu_x_and_mu_y},
    and $\Lambda : \R^d \times \R \to \R^{p \times p}$, $\Omega : \R^d \times \R \to \R^{p \times 1}$ defined in \eqref{eq:definition_lambda_and_omega}
    are all Lipschitz continuous, with Lipschitz constants universally bounded in $n$. In addition, we assume that for a constant $c > 0$ independant of $n$,
    \begin{equation} \label{eq:uniform_variance_bound}
        \begin{aligned}
            \Var(\Lambda(\Ztt_1)) +  \Var(\Omega(\Ztt_1)) \leq \const \eqsp.
        \end{aligned}
    \end{equation}
\end{assumption}

Note that \Cref{ass:artificial_data} is a mere technical assumption that serves only the purpose of simplicity, 
we expect that our results can be extended under relaxed versions of this assumption as well as  \Cref{ass:feature_map_and_concentration}. 
Indeed, previous works in Random Matrix Theory suggest that we can extend the results developped in \Cref{proposition:det_equivs}
under only finite moment assumptions on the entries of $\Ztt$ \cite{LedoitPeche2011Eigenvectors}.
The bounds in \eqref{eq:uniform_variance_bound} are necessary to ensure the stability of the data augmentation scheme with respect to perturbations of the dataset. 
Without such a stability condition, any general analysis of the generalization error $\mathcal{G}(\alpha, \lambda)$ appears very challenging.

Our setting covers a wide range of augmentation pipelines used in practice,
including label-preserving perturbations such as additive Gaussian noise, random feature dropout/masking,
and small geometric or temporal deformations, as well as more structured transformations that explicitly modify the target.
In particular, we significantly enlarge the class of DA schemes that can be analyzed compared to prior works, 
e.g., \cite{LinKaushikDyerMuthukumar2024} requires that $\tau_\ytt((x,y), \cdot) = y$ and $\tau_{\xtt}((x,y), \cdot) = \tau_{\xtt}'(x, \cdot)$ for all $(x,y) \in \R^d \times \R$ and a map $\tau_{\xtt}'$.
Below, we describe a simple salt-and-pepper DA scheme in the identity feature map case, for which $\Lambda(\Ztt)$ and $\Omega(\Ztt)$ admit closed-form expressions, and for which \Cref{ass:artificial_data} can easily be verified.
Additional examples of schemes are given in \Cref{sec:artificial_data_examples}: 

\paragraph{Example: Salt-and-pepper noise injection.}
Let $k\in\N$, and fix a masking probability $\mathfrak{m}:\R^d\times\R\to[0,1]$ and a replacement map $s:\R^d\times\R\to\R^{d\times k}$.
We consider the augmentation scheme with transformations for any $z=(x,y) \in\rset^d\times \rset$,
\begin{equation}
    \begin{aligned}
        \tau_{\xtt}(z,\eta) &= x \odot m_{z} + s(z) (\eta_2 - \eta_2 \odot m_{z}) \eqsp,
        \\
      \tau_{\ytt}(z,\eta) &= y \eqsp,
    \end{aligned}
\end{equation}
where $\eta = (\{U_i\}_{i=1}^d,\eta_2)$ are independent random variables such that $\{U_i\}_{i=1}^d$ is i.i.d.~with distribution 
$\mathrm{Unif}(\ccint{0,1})$, $m_{z}$ is the vector with $i$-th component given by $\1\{U_i \leq \mathfrak{m}(z)\}$ and  
$\eta_2$ is a zero-mean $k$-dimensional random vector with $\E\left[\eta_2 \eta_2^\top \right] = \Id{k}$.
This construction yields $\Omega(\Ztt_i)=0$ for all $i\le n$, and
\begin{equation}
\Lambda(\Ztt_i) = (1-\mathfrak{m}(\Ztt_i))\left[\mathfrak{m}(\Ztt_i) \Diag(\Xtt_i \Xtt_i^\top) + s(\Ztt_i) s(\Ztt_i)^\top\right]\eqsp.
\end{equation}
We refer to \Cref{sec:artificial_data_examples} for details of the computation of $\Lambda(\Ztt_i)$. 


\section{Main results} \label{sec:main_results}

We start by introducing the notion of stochastic domination, introduced in \cite{ErdosKnowlesYau2013AveragingResolvents}. 
In our work, this definition is used to simplify the presentation of our results by retaining only the dominant term in $n$ in our bounds. 
This convention is also adopted in \cite{pmlr-v202-schroder23a, Schroeder2024}.

\begin{definition} \label{def:stochastic_domination}
    We say the sequence of random variables $(U_n)_{n \in \N}$ is stochastically dominated by the sequence of random variables $(V_n)_{n \in \N}$, 
    and write $U_n \stochdom V_n$, if for all $\varepsilon, D > 0$, it holds for $n$ large enough that
    \begin{equation}
        \Prob\left(U_n \geq n^{\varepsilon} V_n\right) \leq n^{-D} \eqsp.
    \end{equation}
\end{definition}

To state our main result in a concise and informative way, we first introduce the following random matrices
\begin{equation} \label{eq:definition_phi_psi}
    \Phi_i = (\varphi(\Xtt_i), \mu_{\xtt}(\Ztt_i)) \eqsp, \eqsp \Psi_i = (\Ytt_i, \mu_{\ytt}(\Ztt_i)) \eqsp,
  \end{equation}
which are used to define the following dilation matrices:
\begin{align}
    \hspace*{-0.75cm}
    \mathrm{A}_{\alpha, \lambda} &= \begin{pmatrix}
        \tr\left(\Sigma\, \Resolvent_{\alpha, \lambda}\right) &
        \tr\left(\Sigma'\, \Resolvent_{\alpha, \lambda}\right) \\
        \tr\left(\Sigma'\, \Resolvent_{\alpha, \lambda}\right) &
        \tr\left(\Sigma''\, \Resolvent_{\alpha, \lambda}\right)
    \end{pmatrix} \eqsp, \quad
    \mathrm{B}_{\alpha, \lambda} = \mathrm{W}_{\alpha}\left(\Id{2} + \frac{1}{n} \mathrm{W}_{\alpha} \E\left[\mathrm{A}_{\alpha, \lambda}\right] \mathrm{W}_{\alpha}\right)^{-1} \mathrm{W}_{\alpha}\eqsp,
    \label{eq:definition_dilation_matrices}\\
    \hspace*{-2cm}\mathrm{C}_{\alpha, \lambda} &= \begin{pmatrix}
        \tr\left(\Sigma\, \Resolvent_{\alpha, \lambda}\Sigma\, \Resolvent_{\alpha, \lambda}\right) &
        \tr\left(\Sigma'\, \Resolvent_{\alpha, \lambda}\Sigma\, \Resolvent_{\alpha, \lambda}\right) \\
        \tr\left(\Sigma'\, \Resolvent_{\alpha, \lambda}\Sigma\, \Resolvent_{\alpha, \lambda}\right) &
        \tr\left(\Sigma''\, \Resolvent_{\alpha, \lambda}\Sigma\, \Resolvent_{\alpha, \lambda}\right)
    \end{pmatrix} \eqsp, \quad
    \mathrm{D}_{\alpha, \lambda} = \frac{1}{n}\mathrm{B}_{\alpha, \lambda} \E\left[\mathrm{C}_{\alpha, \lambda}\right] \mathrm{B}_{\alpha, \lambda} \eqsp,
    \label{eq:definition_dilation_matrices_D}
\end{align}

where $\mathrm{W}_{\alpha} = \mathrm{Diag}(\sqrt{1-\alpha}, \sqrt{\alpha})$ is a diagonal weight matrix, and
$\Sigma, \Sigma', \Sigma''$ are the feature covariance matrices defined as
\begin{equation}
    \begin{aligned}
        \Sigma = \E\left[\varphi(\Xtt_1)\varphi(\Xtt_1)^\top\right] \eqsp,
        \quad \Sigma' = \E\left[\varphi(\Xtt_1)\mu_{\xtt}(\Ztt_1)^\top\right] \eqsp,
        \quad \Sigma'' = \E\left[\mu_{\xtt}(\Ztt_1)\mu_{\xtt}(\Ztt_1)^\top\right] \eqsp.
    \end{aligned}
  \end{equation}

The matrices introduced in \eqref{eq:definition_dilation_matrices} and \eqref{eq:definition_dilation_matrices_D} extend the dilation factors that appear in previous works \cite{chouard2022quantitative} for the non-augmented setting.
They arise naturally in our analysis through a fixed-point system satisfied by the resolvent matrix $\Resolvent_{\alpha,\lambda}$ defined in \eqref{eq:definition_resolvent}, together with concentration estimates for high-dimensional quadratic forms.
More concretely, in the proportional regime the resolvent is asymptotically unchanged when the contribution of a fixed sample (say $\Phi_1$) is removed from the empirical covariance, 
so one can compare the full resolvent to its 'leave-one-out'counterpart and enforce self-consistency between the two descriptions; 
typically through the use of the Woodbury matrix identity, which yields the said fixed-point system.

With these notations in place, the deterministic equivalent can be written in terms of the following population quantities:
\begin{equation}
    \begin{aligned}
    \makebox[\linewidth][c]{$
    \overline{\Sigma}_{\alpha, \lambda} = \E\left[\Phi_1 \mathrm{B}_{\alpha, \lambda}  \Phi_1^\top\right] \eqsp,
    \quad
    \overline{\Sigma}'_{\alpha, \lambda} = \E\left[\Phi_1 \mathrm{D}_{\alpha, \lambda} \Phi_1^\top\right] \eqsp,
    $} \\
    \makebox[\linewidth][c]{$
    \overline{\Gamma}_{\alpha, \lambda} = \E\left[\Phi_1 \mathrm{B}_{\alpha, \lambda} \Psi_1^\top\right] \eqsp, 
    \quad \overline{\Gamma}'_{\alpha, \lambda} = \E\left[\Phi_1 \mathrm{D}_{\alpha, \lambda} \Psi_1^\top\right] \eqsp,
    \quad \overline{\gamma}_{\alpha, \lambda} = \E\left[\Psi_1 \mathrm{D}_{\alpha, \lambda} \Psi_1^\top\right] \eqsp.
    $} \\
    \end{aligned}
\end{equation}
Note that because true and artificial data are stacked together in \eqref{eq:definition_phi_psi}, each of these quantities can be expanded into linear combinations of
population (cross-)covariance terms involving $\varphi(\Xtt_1)$, $\mu_{\xtt}(\Ztt_1)$, $\Ytt_1$, and $\mu_{\ytt}(\Ztt_1)$ only, 
where the linear coefficient terms are given by the entries of $\mathrm{B}_{\alpha, \lambda}$ in \eqref{eq:definition_dilation_matrices}, and $\mathrm{D}_{\alpha, \lambda}$ in \eqref{eq:definition_dilation_matrices_D} respectively.
Our Theorem below gives an explicit deterministic equivalent of the generalization error introduced in \eqref{eq:generalization_error}, 
together with an explicit non-asymptotic control of the approximation error.

\begin{theorem}
    \label{thm:main_result}
    Let $\alpha \in [0,1]$ and $(\lambda_n)_{n \in \N}$, such that for
    all $n$, $\lambda_n \geq \const/n$ for some $\const >0$.  Assume that \Cref{ass:data_distribution},
    \Cref{ass:porportionality},
    \Cref{ass:feature_map_and_concentration} and
    \Cref{ass:artificial_data} hold, and define
    \begin{align}
        \overline{\mathcal{G}}(\alpha, \lambda_n) = \theta_\star^\top \Sigma_{\star\star} \theta_\star - 2 \theta_\star^\top \Sigma_\star \overline{\theta}_{\alpha, \lambda_n}  + \overline{\gamma}_{\alpha, \lambda}
        + \overline{\theta}_{\alpha, \lambda_n}^\top \{\overline{\Sigma}'_{\alpha, \lambda} + \Sigma\} \overline{\theta}_{\alpha, \lambda_n} - 2 \overline{\Gamma}_{\alpha, \lambda_n}'^\top \overline{\theta}_{\alpha, \lambda_n}   
        + \sigma^2\eqsp,
    \end{align}
    where
    \begin{align}
  & \hspace{-0.5cm}
        \Sigma_\star = \E\left[\varphi_\star(\Xtt_1)\varphi(\Xtt_1)^\top\right] \eqsp,
        \quad \Sigma_{\star\star} = \E\left[\varphi_\star(\Xtt_1)\varphi_\star(\Xtt_1)^\top\right] \eqsp.
\\\label{eq:definition_equiv_theta}
   \text{and } \quad &     \overline{\theta}_{\alpha, \lambda_n} = \left(\overline{\Sigma}_{\alpha, \lambda} + \alpha \overline{\Lambda} + \lambda_n \Idd\right)^{-1} \left(\overline{\Gamma}_{\alpha, \lambda} + \alpha \overline{\Omega}\right)\eqsp.
    \end{align}    
    Then the generalization error
    $\mathcal{G}$ defined in \eqref{eq:generalization_error} satisfies
    \begin{equation}
        \left|\mathcal{G}(\alpha, \lambda_n) - \overline{\mathcal{G}}(\alpha, \lambda_n)\right| \stochdom \dfrac{1}{\lambda_n^{9/2} \sqrt{n}} + \dfrac{ \|\theta_\star\|_2}{\lambda_n^{7/2} \sqrt{n}}\eqsp.
        \label{eq:main_result}
    \end{equation}
\end{theorem}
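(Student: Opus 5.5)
We first expand the generalization error. Since $(X,Y)$ is independent of $\Ztt$ and $Y=\theta_\star^\top\varphi_\star(X)+\varepsilon$ with $\varepsilon$ centered, we get
\begin{equation}
\mathcal{G}(\alpha,\lambda_n)=\theta_\star^\top\Sigma_{\star\star}\theta_\star-2\theta_\star^\top\Sigma_\star\,\E[\thetahat_{\alpha,\lambda_n}]+\E\bigl[\thetahat_{\alpha,\lambda_n}^\top\Sigma\,\thetahat_{\alpha,\lambda_n}\bigr]+\sigma^2 .
\end{equation}
We do the computation conditionally on $\Ztt$ first; the bound is then on the conditional error. Matching this with $\overline{\mathcal{G}}$ leaves two tasks. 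The first is a deterministic equivalent for the linear functional $u^\top\thetahat_{\alpha,\lambda_n}$ with $u=\Sigma_\star^\top\theta_\star$. The second is a deterministic equivalent for the quadratic form $\thetahat^\top\Sigma\thetahat$, which should equal $\overline{\theta}^\top\{\overline{\Sigma}'+\Sigma\}\overline{\theta}-2\overline{\Gamma}'^\top\overline{\theta}+\overline{\gamma}$.

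The approach is to write the resolvent in \eqref{eq:definition_resolvent} as a sum of independent rank-two blocks plus a deterministic regularizer. Since $(1-\alpha)\SampleCov+\alpha\SampleCov'=\frac1n\sum_i\Phi_i\mathrm{W}_\alpha^2\Phi_i^\top$ and $\SampleCrossCov_\alpha-\alpha\Omega(\Ztt)=\frac1n\sum_i\Phi_i\mathrm{W}_\alpha^2\Psi_i^\top$, the problem is a sample-covariance resolvent with $2$-column independent samples $\Phi_i\mathrm{W}_\alpha$. The remaining piece $\alpha\Lambda(\Ztt)+\alpha\Omega(\Ztt)$ is an empirical average of Lipschitz functions of independent samples. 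By \Cref{ass:artificial_data} it concentrates around $\alpha\overline{\Lambda}$, $\alpha\overline\Omega$ at rate $n^{-1/2}$. Replacing it costs $\|\Resolvent\|_\op^2\|\Lambda(\Ztt)-\overline\Lambda\|_\op\lesssim\lambda_n^{-2}n^{-1/2}$ in the resolvent.

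For the resolvent itself we use the leave-one-out plus Woodbury argument described before the theorem. Write $\Resolvent=\Resolvent_{-i}-\frac1n\Resolvent_{-i}\Phi_i\mathrm{W}_\alpha(\Id{2}+\frac1n\mathrm{W}_\alpha\Phi_i^\top\Resolvent_{-i}\Phi_i\mathrm{W}_\alpha)^{-1}\mathrm{W}_\alpha\Phi_i^\top\Resolvent_{-i}$. The $2\times2$ quadratic form $\Phi_i^\top\Resolvent_{-i}\Phi_i$ concentrates around $\E[\mathrm{A}_{\alpha,\lambda}]$: the Lipschitz concentration in \Cref{ass:feature_map_and_concentration} gives a Hanson--Wright type bound for Lipschitz-concentrated vectors, and $\Resolvent_{-i}$ is close to $\Resolvent$. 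This identifies $\frac1n\sum_i\Phi_i\mathrm{W}_\alpha^2\Phi_i^\top\Resolvent\approx\overline{\Sigma}_{\alpha,\lambda}\E\Resolvent$. It yields $\E\Resolvent\approx(\overline\Sigma_{\alpha,\lambda}+\alpha\overline\Lambda+\lambda_n\Id{p})^{-1}$ and, by the same computation on the cross term, $\thetahat\approx\overline\theta_{\alpha,\lambda_n}$ in linear functionals. Each Woodbury step gives powers of $\lambda_n^{-1}$ from $\|\Resolvent\|_\op\le\lambda_n^{-1}$. Fluctuations of $u^\top\thetahat$ around its mean are handled by Lipschitz concentration in $\Ztt$, since $\thetahat$ is $\mathcal O(\lambda_n^{-3/2})$-Lipschitz in the data on high-probability events. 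Together these give the linear term with the $\|\theta_\star\|/(\lambda^{7/2}\sqrt n)$ rate.

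The main obstacle is the quadratic term $\Psi$-weighted $\frac1{n^2}\sum_{i,j}\Psi_i\mathrm{W}^2\Phi_i^\top\Resolvent\Sigma\Resolvent\Phi_j\mathrm{W}^2\Psi_j^\top$, which produces the second-order corrections $\mathrm{C}$, $\mathrm{D}$, $\overline\Sigma'$, $\overline\Gamma'$, $\overline\gamma$. The plan has three steps:
\begin{enumerate}
\item Treat $\Resolvent\Sigma\Resolvent$ as the derivative $-\partial_t(\overline{\Sigma}_{\alpha,\lambda}+\alpha\overline\Lambda+\lambda\Id{p}+t\Sigma)^{-1}$ at $t=0$ of a perturbed resolvent. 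This derives its deterministic equivalent from the first-order one.
\item Alternatively, apply Woodbury twice, to both $\Resolvent$ factors, removing samples $i$ and $j$. The diagonal terms $i=j$ then generate $\frac1n\mathrm{B}\,\E[\mathrm{C}]\,\mathrm{B}=\mathrm{D}$.
\item Check that off-diagonal terms reduce to $\overline\theta^\top\Sigma\overline\theta$ plus the $\overline\Sigma'$ correction.
\end{enumerate}
Controlling the error terms requires concentration of $\Phi_i^\top\Resolvent_{-i}\Sigma\Resolvent_{-i}\Phi_i$ and careful bookkeeping of resolvent powers. We expect this bookkeeping to produce the worst exponent, $\lambda_n^{-9/2}n^{-1/2}$. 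The condition $\lambda_n\ge\const/n$ keeps the union bounds over $n$ samples polynomial, so the stochastic domination of \Cref{def:stochastic_domination} follows.
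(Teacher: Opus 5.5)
Your architecture matches the paper's: the split into a linear term and $\thetahat_{\alpha,\lambda_n}^\top\Sigma\thetahat_{\alpha,\lambda_n}$, replacing $\Lambda(\Ztt),\Omega(\Ztt)$ by $\overline{\Lambda},\overline{\Omega}$ at cost $\lambda_n^{-2}n^{-1/2}$, the rank-two Woodbury step with the $2\times 2$ dilation $\Brm$, Hanson--Wright, and Lipschitz concentration on a high-probability set. \textbf{The gap is in the quadratic term,} which you correctly flag as the crux.

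As written, route 1 differentiates $(\overline{\Sigma}_{\alpha,\lambda}+\alpha\overline{\Lambda}+\lambda_n\Id{p}+t\Sigma)^{-1}$ with $\overline{\Sigma}_{\alpha,\lambda}$ frozen. That derivative is $\overline{\Resolvent}\Sigma\overline{\Resolvent}$, so it only yields $\overline{\theta}^\top\Sigma\overline{\theta}$. The second-order objects appear only under two conditions:
\begin{itemize}
\item the dilation $\Brm(t)$ must depend on $t$ through $\E[\Resolvent_{\alpha,\lambda}(t)]$, so that $\partial_t\Brm(0)=\mathrm{D}_{\alpha,\lambda}$;
\item you must differentiate the equivalent of the scalar $\xi(t)=\SampleCrossCov_\alpha^\top\Resolvent_{\alpha,\lambda}(t)\SampleCrossCov_\alpha$, not of the resolvent alone, because $\SampleCrossCov_\alpha$ and the resolvent share samples. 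Its equivalent $\E[\Psi_1\{\Id{2}-\Brm(t)\}\Psi_1^\top]+\overline{\mathrm{H}}(t)^\top\overline{\Resolvent}(t)\overline{\mathrm{H}}(t)$ is what produces $\overline{\gamma}$ and $\overline{\Gamma}'$.
\end{itemize}
More fundamentally, a bound on $|\E\,\xi(t)-\overline{\xi}(t)|$ at each real $t$ does not control derivatives. \textbf{The missing idea is holomorphy.} The paper takes $\zeta$ complex with $\Re\zeta>-\lambda_n/\Ltt_\varphi^2$, so the resolvent stays invertible with norm $O(\lambda_n^{-1})$ on the circle $|\zeta|=\lambda_n/(2\Ltt_\varphi^2)$. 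It proves both the concentration bound and the deterministic-equivalent bound for $\xi(\zeta)$ uniformly on that circle, then applies Cauchy's formula for $\partial_\zeta$ at $\zeta=0$. Dividing by the radius $\asymp\lambda_n$ is exactly what turns $\lambda_n^{-7/2}$ into the $\lambda_n^{-9/2}$ of the statement, and $\lambda_n^{-3/2}$ fluctuations into $\lambda_n^{-5/2}$. In your proposal this exponent is left to unspecified ``bookkeeping''.

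Route 2 avoids differentiation but remains an outline. Step 3 misses the $-2(\overline{\Gamma}'_{\alpha,\lambda})^\top\overline{\theta}$ term, which also comes from the off-diagonal terms once the removed samples are re-inserted. The off-diagonal main term also needs a deterministic equivalent for the two-resolvent product $\E[\Resolvent\Sigma\Resolvent]$. This is where $\overline{\Sigma}'$ enters, and it sends you back to the same derivative problem unless you run a separate self-consistent analysis for it. The paper instead runs the leave-one-out ($i=j$) and leave-two-out ($i\neq j$) arguments only on the single-resolvent quantity $\xi(\zeta)$, and obtains all second-order objects by Cauchy differentiation.

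Smaller points:
\begin{itemize}
\item You never address the fluctuations of $\thetahat^\top\Sigma\thetahat$ itself. The same Lipschitz argument gives them at rate $\lambda_n^{-5/2}n^{-1/2}$.
\item Your $\lambda_n^{-7/2}$ rate for the linear term is asserted, not derived. The paper's leave-one-out bookkeeping only gives $\lambda_n^{-4}$ there (item 2 of \Cref{proposition:det_equivs}).
\item $\lambda_n\geq\const/n$ is not there to keep union bounds polynomial. It absorbs the many higher-order remainders, which carry extra factors $(\lambda_n n)^{-1}$, into the leading $\lambda_n^{-k}n^{-1/2}$ terms.
\end{itemize}
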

The proof of \Cref{thm:main_result} is defered to \Cref{sec:proofs_main_results}.

The previous result provides a non-asymptotic control with an explicit dependence on the ridge parameter $\lambda_n$.
In particular, the remainder term scales as $\lambda_n^{-9/2}$.
This explicit $\lambda_n$-dependence is important in the proportional regime, where $\lambda_n$ may be small.
In particular, our results may be extended to the Ordinary Least-Squares regression, by allowing $\lambda_n$ to converge to $0$ slowly enough.
Note that this dependence in $\lambda_n^{-9/2}$ is competitive with the sharpest available bounds in the literature \cite{chouard2022quantitative,pmlr-v195-bosch23a, MeiMontanari2022GenErrorRF}.

To gain some qualitative intuition, we specialize the theorem to the well-specified case $\varphi=\varphi_\star$ together with an unbiased, label-preserving augmentation scheme,
so that $\mu_{\xtt}(\Ztt_1)=\varphi(\Xtt_1)$ and $\mu_{\ytt}(\Ztt_1)=\Ytt_1$.
In that regime, the mean augmented features coincide with the original ones, hence $\SampleCov'=\SampleCov$ and $\SampleCrossCov'=\SampleCrossCov$, 
while the feature covariances satisfy $\Sigma = \Sigma' = \Sigma'' = \Sigma_{\star} = \Sigma_{\star\star}$. 
We also introduce
\begin{align}
    \beta_{\alpha, \lambda_n} = \boldsymbol{1}^\top \mathrm{B}_{\alpha, \lambda_n} \boldsymbol{1} \eqsp,
    \quad 
    \delta_{\alpha, \lambda_n} = \boldsymbol{1}^\top \mathrm{D}_{\alpha, \lambda_n} \boldsymbol{1} \eqsp.
\end{align}
In this regime, the equivalent of the regressor $\overline{\theta}_{\alpha, \lambda_n}$ in \eqref{eq:definition_equiv_theta} reduces to
\begin{equation}
    \overline{\theta}_{\alpha, \lambda_n} = \left(\beta_{\alpha, \lambda_n} \Sigma + \alpha \overline{\Lambda} + \lambda_n \Idd\right)^{-1} \left(\beta_{\alpha, \lambda_n} \Gamma + \alpha \overline{\Omega}\right)\eqsp,
    \quad \text{where} \quad
    \Gamma = \E\left[\varphi(\Xtt_1)\Ytt_1\right]\eqsp.
\end{equation}
Moreover, the equivalent of the generalization error simplifies to
\begin{equation}
    \overline{\mathcal{G}}(\alpha, \lambda_n) = (1 + \delta_{\alpha, \lambda_n}) \left[\big(\overline{\theta}_{\alpha, \lambda_n} - \theta_\star\big)^\top \Sigma \big(\overline{\theta}_{\alpha, \lambda_n} - \theta_\star\big) + \sigma^2 \right] \eqsp.
\end{equation}
This expression makes the role of DA particularly transparent.
In this simplified setting, DA acts as an additional anisotropic regularization through $\alpha \overline{\Lambda}$, on top of the usual isotropic ridge shift $\lambda_n \Idd$.
Hence, DA can improve over standard ridge when $\overline{\Lambda}$ is large along directions that carry little or no predictive signal, 
while remaining small along directions aligned with $\theta_\star$.
Intuitively, the augmentation should inject variability in nuisance directions rather than in informative ones (e.g. directions associated with symetries in the distribution).
Conversely, if $\overline{\Lambda}$ penalizes signal-carrying directions too strongly, the resulting bias increase can dominate the variance reduction and worsen the generalization error.

In what follows, we present intermediate results that we derive to complete the proof of \Cref{thm:main_result} and that we believe are of independent interest.
In particular, we provide in our next result  a deterministic equivalent
for $\hat{\theta}_{\alpha,\lambda_n}$ and $\hat{\theta}_{\alpha,\lambda_n}^{\top}\Sigma\hat{\theta}_{\alpha,\lambda_n}$.
Indeed, expending \eqref{eq:generalization_error}, we get
\begin{align}\label{eq:generalization_error_rewritten}
    \mathcal{G}(\alpha, \lambda_n) = \hat{\theta}_{\alpha, \lambda_n}^\top \Sigma \hat{\theta}_{\alpha, \lambda_n} - 2 \theta_\star^\top \Sigma_\star \hat{\theta}_{\alpha, \lambda_n} + \theta_\star^\top \Sigma_{\star\star} \theta_\star + \sigma^2\eqsp,
\end{align}
The proof of \Cref{thm:main_result} mostly relies on deterministic equivalents for the first two terms in the decomposition above. 
This is the purpose of \Cref{proposition:det_equivs}. 
There, we show that these terms are approximated by deterministic quantities involving the population covariances of both the original and augmented data. 

We first define the deterministic equivalent of the resolvent
\begin{equation}
    \overline{\Resolvent}_{\alpha, \lambda_n}
    =
    \left(\overline{\Sigma}_{\alpha, \lambda_n}
    + \alpha \overline{\Lambda}
    + \lambda_n \Idd\right)^{-1}
    \eqsp,
\end{equation}
as well as the following notation for the quadratic form in \eqref{eq:generalization_error_rewritten} and its deterministic equivalent:
\begin{equation}
    \begin{aligned}
        \hat{\chi}_{\alpha, \lambda_n}
        &=
        \hat{\theta}_{\alpha, \lambda_n}^\top
        \Sigma
        \hat{\theta}_{\alpha, \lambda_n}
        \eqsp, \\
        \overline{\chi}_{\alpha, \lambda_n}
        &=
        \overline{\theta}_{\alpha, \lambda_n}^\top
        \{\overline{\Sigma}'_{\alpha, \lambda_n} + \Sigma\}
        \overline{\theta}_{\alpha, \lambda_n}
        - 2 \overline{\theta}_{\alpha, \lambda_n}^\top
        \overline{\Gamma}'_{\alpha, \lambda_n}
        + \overline{\gamma}_{\alpha, \lambda_n}
        \eqsp.
    \end{aligned}
\end{equation}
Then, it holds:
\begin{proposition} \label{proposition:det_equivs}
    Let $(\lambda_n)_{n \in \N}$, such that for all $n$, $\lambda_n \geq \const/n$, for $\const >0$.
    Assume that \Cref{ass:data_distribution}, \Cref{ass:porportionality}, \Cref{ass:feature_map_and_concentration} and \Cref{ass:artificial_data} hold.
    Then for any $\alpha \in [0,1]$, we have:
    \begin{enumerate}
        \item
            For any $\mathbf{A} \in \R^{p \times p}$
            \begin{equation}
                \left|\tr\left(\mathbf{A} \left\{\Resolvent_{\alpha, \lambda_n} - \overline{\Resolvent}_{\alpha, \lambda_n}\right\}\right)\right| \stochdom \dfrac{ \|\mathbf{A}\|_\op}{\lambda_n^{7/2} \sqrt{n}}\eqsp.
            \end{equation}
        \item \label{proposition:det_equivs-item1}
            For any $\mathbf{a} \in \R^{p}$
            \begin{equation}
                \left|\mathbf{a}^\top \{\hat{\theta}_{\alpha, \lambda_n} - \overline{\theta}_{\alpha, \lambda_n}\} \right| \stochdom \dfrac{ \|\mathbf{a}\|_2}{\lambda_n^{4} \sqrt{n}}\eqsp,
            \end{equation}
            where $\overline{\theta}_{\alpha, \lambda_n}$ was defined in \eqref{eq:definition_equiv_theta}.
        \item \label{proposition:det_equivs-item2}
            Finally
            \begin{equation}
                \left|\hat{\chi}_{\alpha, \lambda_n} - \overline{\chi}_{\alpha, \lambda_n}\right| \stochdom \dfrac{1}{\lambda_n^{9/2} \sqrt{n}}\eqsp.
            \end{equation}
    \end{enumerate}
\end{proposition}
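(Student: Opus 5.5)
The plan is to write the effective empirical covariance as a sum of independent rank-two blocks,
\begin{equation}
(1-\alpha)\SampleCov+\alpha\SampleCov'+\alpha\Lambda(\Ztt) = \frac1n\sum_{i=1}^n \big(\Phi_i \mathrm{W}_\alpha^2\Phi_i^\top + \alpha\Lambda(\Ztt_i)\big)\eqsp,
\end{equation}
where $\Phi_i\in\R^{p\times 2}$. Since the $\Ztt_i$ are i.i.d., this is a sum of $n$ independent terms, each carrying a low-rank random part $\Phi_i\mathrm{W}_\alpha$ and a bounded-variance matrix part $\alpha\Lambda(\Ztt_i)$. This is the setting of the generalized Marchenko--Pastur / leave-one-out analysis of \cite{chouard2022quantitative,pmlr-v202-schroder23a}, extended to rank-two updates.

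For item 1, I would first prove concentration: $\tr(\mathbf{A}\Resolvent)$ is a Lipschitz function of $\Ztt$ by \Cref{ass:feature_map_and_concentration} and \Cref{ass:artificial_data}. On the event where the $\|\Phi_i\|$ are $\mathcal{O}(\sqrt p)$, which holds with overwhelming probability, the Lipschitz constant is of order $\|\mathbf{A}\|_\op\lambda^{-3/2}n^{-1/2}$. This gives $|\tr(\mathbf{A}(\Resolvent-\E\Resolvent))|\stochdom\|\mathbf{A}\|_\op/(\lambda^{3/2}\sqrt n)$.

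Next I would bound the expectation via the resolvent identity
\begin{equation}
\E\Resolvent-\overline{\Resolvent}=\E\big[\Resolvent(\overline{\Sigma}_{\alpha,\lambda}+\alpha\overline\Lambda-\tfrac1n\textstyle\sum_i(\Phi_i\mathrm{W}_\alpha^2\Phi_i^\top+\alpha\Lambda(\Ztt_i)))\overline{\Resolvent}\big]\eqsp.
\end{equation}
For each $i$, apply Woodbury to the rank-two part, with $\Resolvent_{-i}$ the leave-one-out resolvent:
\begin{equation}
\Resolvent\Phi_i\mathrm{W}_\alpha=\tilde\Resolvent_{-i}\Phi_i\mathrm{W}_\alpha\big(\Id{2}+\tfrac1n\mathrm{W}_\alpha\Phi_i^\top\tilde\Resolvent_{-i}\Phi_i\mathrm{W}_\alpha\big)^{-1}\eqsp.
\end{equation}
Here $\tilde\Resolvent_{-i}$ still contains the term $\alpha\Lambda(\Ztt_i)/n$, whose removal costs $\mathcal{O}(\lambda^{-2}/n)$ in operator norm. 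Hanson--Wright-type concentration for Lipschitz-concentrated vectors then replaces $\Phi_i^\top\Resolvent_{-i}\Phi_i$ by $\E\mathrm{A}_{\alpha,\lambda}$ up to $\stochdom\sqrt n/\lambda$. This yields exactly the factor $\mathrm{B}_{\alpha,\lambda}$ and hence $\overline\Sigma_{\alpha,\lambda}$. The $\Lambda(\Ztt_i)$ part is handled by \eqref{eq:uniform_variance_bound} plus the law of large numbers for $\Lambda(\Ztt)\to\overline\Lambda$. Collecting errors, tracking one $\lambda^{-1}$ for each resolvent and for the inverse $2\times2$ matrix, should give the $\lambda^{-7/2}$ rate.

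For item 2, write $\hat\theta=\Resolvent\SampleCrossCov_\alpha$, with $\SampleCrossCov_\alpha=\frac1n\sum_i(\Phi_i\mathrm{W}_\alpha^2\Psi_i^\top+\alpha\Omega(\Ztt_i))$. The same Woodbury step applied to $\Resolvent\Phi_i$ gives $\frac1n\sum_i\Resolvent_{-i}\Phi_i\mathrm{B}\Psi_i^\top$ approximately. Taking expectations decouples $\Resolvent_{-i}$ from $(\Phi_i,\Psi_i)$, yielding $\E[\Resolvent](\overline\Gamma_{\alpha,\lambda}+\alpha\overline\Omega)$. Item 1, applied with the rank-one matrix $\mathbf{A}=(\overline\Gamma+\alpha\overline\Omega)\mathbf{a}^\top$, then gives $\overline\theta$. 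The concentration of $\mathbf{a}^\top\hat\theta$ is again Lipschitz, with one extra $\lambda^{-1/2}$ from the label terms, giving $\lambda^{-4}$.

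Item 3 is the main obstacle. $\hat\chi=\SampleCrossCov_\alpha^\top\Resolvent\Sigma\Resolvent\SampleCrossCov_\alpha$ involves two resolvents. To handle it, I would expand $\SampleCrossCov_\alpha$ into its sample sum on both sides, giving diagonal terms $i=j$ and off-diagonal terms $i\neq j$. In the diagonal terms, Woodbury on $\Resolvent\Sigma\Resolvent$ generates the second-order trace $\E\mathrm{C}_{\alpha,\lambda}$ and thus $\mathrm{D}_{\alpha,\lambda}$, which produces $\overline\gamma$, $\overline\Gamma'$ and $\overline\Sigma'$. In the off-diagonal terms, I would use leave-two-out resolvents. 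I would first derive a deterministic equivalent
\begin{equation}
\Resolvent\Sigma\Resolvent\approx\overline\Resolvent(\Sigma+\overline\Sigma'_{\alpha,\lambda})\overline\Resolvent
\end{equation}
by differentiating the item-1 equivalent with respect to a perturbation $\lambda\Id{p}\to\lambda\Id{p}-t\Sigma$ at $t=0$. Plugging this equivalent in, and using item 2 for the linear terms, should give $\overline\chi$. The bookkeeping of the correction terms (the $-2\overline\theta^\top\overline\Gamma'$ and $\overline\gamma$ contributions) and of the additional half power of $\lambda$, giving $\lambda^{-9/2}$, is the delicate part.
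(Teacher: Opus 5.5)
Items 1 and 2 follow the paper: Lipschitz concentration on a high-probability set, then leave-one-out with the rank-two Woodbury identity (which produces $\mathrm{B}_{\alpha,\lambda}$), with Hanson--Wright and variance bounds for $\Phi_1^\top\Resolvent_{-}\Phi_1$. Two differences are minor. The paper first replaces $\Lambda(\Ztt),\Omega(\Ztt)$ globally by $\overline{\Lambda},\overline{\Omega}$ at cost $\lambda_n^{-2}n^{-1/2}$, so that $\bfD=\alpha\overline{\Lambda}+\lambda_n\Id{p}$ is deterministic and your intermediate $\tilde{\Resolvent}_{-i}$ is unnecessary. Also, the $\lambda_n^{-4}$ in item 2 comes from the bias estimate, not from concentration, which only costs $\lambda_n^{-3/2}$.

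\textbf{A step that fails, in your argument and the paper's alike.} The map $\mathbf{Z}\mapsto\tr(\mathbf{A}\Resolvent)$ is not $\|\mathbf{A}\|_{\op}\lambda_n^{-3/2}n^{-1/2}$-Lipschitz. Already for $\alpha=0$, its gradient with respect to $\varphi(\Xtt)$ is $-n^{-1}\Resolvent(\mathbf{A}+\mathbf{A}^\top)\Resolvent\varphi(\Xtt)$. Its Frobenius norm is at most $\|\mathbf{A}\|_{\Frob}\lambda_n^{-3/2}n^{-1/2}$, and for $\mathbf{A}=\Id{p}$ it is genuinely of order $\sqrt{p/n}$. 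Concretely, take $\varphi=\mathrm{id}$, Gaussian data and fixed $\lambda$: by the CLT for linear spectral statistics, $\tr\Resolvent$ has order-one fluctuations. So item 1 is false as stated with $\|\mathbf{A}\|_{\op}$. The leave-one-out bias bound is likewise only available in Frobenius norm. What both arguments actually prove is item 1 with $\|\mathbf{A}\|_{\Frob}$ in place of $\|\mathbf{A}\|_{\op}$. This is harmless downstream, since items 2 and 3 only use rank-one test matrices, for which the two norms coincide.

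\textbf{Item 3: a genuinely different route.} The paper never handles a two-resolvent object.
It sets $\xi_{\alpha,\lambda}(\zeta)=\mathrm{H}_\alpha^\top\Resolvent_{\alpha,\lambda}(\zeta)\mathrm{H}_\alpha$, with the resolvent shifted by $\zeta\Sigma$, so that $\partial_\zeta\xi_{\alpha,\lambda}(0)=-\hat{\chi}_{\alpha,\lambda}$.
It then proves a single-resolvent equivalent $\overline{\xi}(\zeta)=\E[\Psi_1\{\Id{2}-\mathrm{B}(\zeta)\}\Psi_1^\top]+\overline{\mathrm{H}}(\zeta)^\top\overline{\Resolvent}(\zeta)\overline{\mathrm{H}}(\zeta)$, from diagonal terms plus leave-two-out for the off-diagonal ones, together with concentration, uniformly on a circle of radius $\asymp\lambda_n$.
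Finally it differentiates once by Cauchy's formula. Then $\overline{\gamma},\overline{\Gamma}',\overline{\Sigma}'$ appear automatically as $\zeta$-derivatives of $\mathrm{B},\overline{\mathrm{H}},\overline{\Sigma}$, and the passage from $\lambda_n^{-7/2}$ to $\lambda_n^{-9/2}$ is just the factor $1/r_n$.

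Your direct expansion of $\mathrm{H}_\alpha^\top\Resolvent\Sigma\Resolvent\mathrm{H}_\alpha$ can be made to work, but your attribution of terms is off:
\begin{itemize}
\item the diagonal terms give only $\overline{\gamma}$;
\item $\overline{\Sigma}'$ enters through the equivalent of $\Resolvent_{-ij}\Sigma\Resolvent_{-ij}$;
\item $-2\overline{\theta}^\top\overline{\Gamma}'$ comes from the Woodbury correction in the off-diagonal terms, shown below, plus its mirror image on the left.
\end{itemize}
With $\hat{\mathrm{B}}_j=(\Id{2}+n^{-1}\Phi_j^\top\Resolvent_{-j}\Phi_j)^{-1}$ and weights absorbed into $\Phi_j$,
\[
\Resolvent\Sigma\Resolvent\Phi_j=\Resolvent_{-j}\Sigma\Resolvent_{-j}\Phi_j\hat{\mathrm{B}}_j-\tfrac{1}{n}\Resolvent_{-j}\Phi_j\hat{\mathrm{B}}_j\,\Phi_j^\top\Resolvent_{-j}\Sigma\Resolvent_{-j}\Phi_j\,\hat{\mathrm{B}}_j .
\]
The second term is of order one, because $n^{-1}\Phi_j^\top\Resolvent_{-j}\Sigma\Resolvent_{-j}\Phi_j\approx n^{-1}\E[\mathrm{C}_{\alpha,\lambda}]$. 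Plugging the equivalent of $\Resolvent\Sigma\Resolvent$ into the off-diagonal sum without these corrections would miss $-2\overline{\theta}^\top\overline{\Gamma}'$.

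Your route therefore costs second-order bookkeeping and variance bounds for two-resolvent forms. Like the paper, it also needs uniform control of the item-1 bias over complex shifts in order to differentiate it. In exchange it makes the origin of each term explicit, and concentration of $\hat{\chi}$ becomes a direct Lipschitz bound of order $\lambda_n^{-5/2}n^{-1/2}$ rather than a uniform-on-the-circle estimate.
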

The previous result provides a deterministic equivalent for resolvents of sample covariances matrices with non-independent samples,
as for all $i$, the pair $(\varphi(\Xtt_i), \mu_{\xtt}(\Ztt_i))$ is allowed to have an arbitrary joint distribution.
To our knowledge, such relaxation of the independance assumption has only been considered in \cite{MorissetHardyDurmus2025},
also in the context of data-augmentation.


\section{Simulations} \label{sec:simulations}

In this section, we validate \Cref{thm:main_result} and \Cref{proposition:det_equivs} on both synthetic and real data.
We first use controlled synthetic datasets to assess the finite-sample accuracy of the deterministic equivalents (\Cref{sec:simulations_synthetic}), then turn to a practical inpainting task on MNIST (\Cref{sec:additional_simulation_inpainting_mnist}).

\subsection{Synthetic Data}\label{sec:simulations_synthetic}

We use synthetic datasets to assess the finite-sample accuracy of
\Cref{thm:main_result} and \Cref{proposition:det_equivs}.
We find that the proposed deterministic equivalents closely match their
empirical counterparts across a broad range of dimensions and sample sizes,
provided the regularization level $\lambda_n$ is taken sufficiently large, which is consistent
with the requirements suggested by our error bounds.
More unexpectedly, and contrary to common intuition, our simulations show that DA
impacts the model's bias in a non-monotonic way, under feature misspecification.
In particular, when the augmentation scheme substantially alters the covariate structure
of the distribution, DA can avoid (or even mitigate) the bias increase one might anticipate.
This contrasts with prior results in the correctly specified setting,
where stronger regularization is shown to monotonically increase bias \cite{pmlr-v195-bosch23a, MeiMontanari2022GenErrorRF, LinKaushikDyerMuthukumar2024}.

In all experiments, we consider a synthetic regression problem in which the training data
$(\Xtt_i,\Ytt_i)_{i=1}^n$ are generated as follows. The covariates are i.i.d.\ Gaussian,
$\Xtt_i \sim \mathcal{N}(0,\Sigma_\Xtt)$, and the responses are generated according to
\Cref{ass:data_distribution}. To mimic the strong anisotropy often observed in high-dimensional
learning, we take $\Sigma_\Xtt \;=\; Q\,\mathrm{Diag}\!\bigl((k^{-1})_{k=1}^{1000}\bigr)\,Q^\top$,
i.e., a power-law spectrum, where $Q$ is Haar-distributed on the orthogonal group of $\R^{1000}$.

To highlight the effects of feature misspecification, we use $\varphi_\star:\mathbb{R}^{1000}\to\mathbb{R}^{700}$ to generate the labels, and $\varphi:\mathbb{R}^{1000}\to\mathbb{R}^{700}$
to predict the labels, as randomly initialized MLPs, of respective hidden layer sizes $[1000,1000,1000]$ and $[500, 500]$.
Note that this choice of feature map is made to ensure strong misspecification.
We set the ground-truth parameter to $\theta_\star = (1,\ldots,1)^\top/\sqrt{700} \in \mathbb{R}^{700}$
and add Gaussian label noise $\varepsilon \sim \mathcal{N}(0,0.5)$.

We consider a \emph{salt-and-pepper} augmentation scheme, defined by
\begin{align}
    \tau_{\Xtt}(x,y) \;=\; \mathfrak{m}\odot x \;+\; (1-\mathfrak{m})\odot \eta,
    \qquad
    \tau_{\Ytt}(x,y) \;=\; y \eqsp,
\end{align}
where $\mathfrak{m}\in\{0,1\}^{1000}$ is a random mask of Bernoulli random variables with parameter $0.2$,
and $\eta$ is an independent isotropic noise vector of variance $0.5$.
Although simple, this transformation substantially alters the features, and yields non-trivial functions $\mu_\xtt$ and $\Lambda$.
These quantities do not admit a simple closed form for nonlinear $\varphi$ and are therefore
computed via Monte--Carlo estimation in our experiments.

\begin{figure*}
    \centering
    \includegraphics[width=0.98\textwidth,height=0.16\textheight]{figures/deterministic_equivalents_vs_regularization_and_augmentation.pdf}
    \caption{\textbf{Deterministic equivalents accuracy for various $\lambda$ and $\alpha$.}
    Comparison between empirical estimates (solid lines) and deterministic equivalents (dashed lines) as the regularization $\lambda$ varies, for multiple augmentation strengths $\alpha$ (colors).
    Shaded bands represent $\pm 1$ empirical standard deviation over our Monte-Carlo approximations of $\E\left[\mathcal{G}(\alpha,\lambda)\right]$, $\E\left[\xi_{\alpha, \lambda}\right]$ and $\E\left[\chi_{\alpha, \lambda}\right]$.}
    \label{fig:1}
\end{figure*}

Our setup also allows us to disentangle how DA affects the bias and variance contributions to generalization.
Indeed, the generalization error can be decomposed as
\begin{align} \label{eq:generalization_error_bias_variance}
    \E\left[\mathcal{G}(\alpha, \lambda_n)\right] = \mathrm{bias}^2(\alpha, \lambda_n) + \mathrm{V}(\alpha, \lambda_n) \eqsp,
\end{align}
where
\begin{equation}
    \mathrm{bias}^2(\alpha, \lambda_n) = \E\left[\left(\E\left[\hat{\theta}_{\alpha, \lambda_n}\right]^\top \varphi(X) - \theta_\star^\top \varphi_\star(X) \right)^2\right] \eqsp,
\end{equation}
which rewrites as
\begin{align}
    \mathrm{bias}^2(\alpha, \lambda_n) = \E[\hat{\theta}_{\alpha, \lambda_n}]^\top \Sigma \E[\hat{\theta}_{\alpha, \lambda_n}]  + \theta_\star^\top \Sigma_{\star\star} \theta_\star 
    - 2 \theta_\star^\top \Sigma_\star \E[\hat{\theta}_{\alpha, \lambda_n}] \eqsp.
\end{align}
In light of \Cref{proposition:det_equivs-item1} of \Cref{proposition:det_equivs}, we thus introduce
\begin{align}
    \overline{\mathrm{bias}}^2(\alpha, \lambda_n) = \overline{\theta}_{\alpha, \lambda_n}^\top \Sigma \overline{\theta}_{\alpha, \lambda_n}  + \theta_\star^\top \Sigma_{\star\star} \theta_\star - 2 \theta_\star^\top \Sigma_\star \overline{\theta}_{\alpha, \lambda_n} \eqsp,
\end{align}
ensuring that for a large enough constant $\const > 0$
\begin{align}
    \left|\mathrm{bias}^2(\alpha, \lambda_n) - \overline{\mathrm{bias}}^2(\alpha, \lambda_n)\right| \leq \dfrac{\const}{\lambda_n^{7/2} \sqrt{n}} \eqsp,
\end{align}

Note that the variance term $\mathrm{V}(\alpha, \lambda_n)$ does not admit a simple closed-form expression,
still it is estimated as the remainder in \eqref{eq:generalization_error_bias_variance},
where both the generalization error and the bias are estimated thanks to Monte-Carlo approximations.

\begin{figure*}
    \centering
    \includegraphics[width=\textwidth,height=0.16\textheight]{figures/bias_variance_vs_aspect_ratio_lambda_0_05.pdf}
    \caption{\textbf{Bias--variance decomposition for various aspect ratios.}
    Empirical (solid) and deterministic-equivalent (dashed) learning curves as the aspect ratio $n/p$ varies (log scale), at a fixed regularization level $\lambda = 0.05$.
    Shaded bands show $\pm 1$ empirical standard deviation over our Monte-Carlo approximations of $\mathcal{G}(\alpha, 0.05)$, $\mathrm{bias}^2(\alpha, 0.05)$ and $\mathrm{V}(\alpha, 0.05)$.}
    \label{fig:2}
\end{figure*}

\Cref{fig:1} reports the accuracy of our deterministic equivalents for the generalization error
$\mathcal{G}(\alpha,\lambda_n)$ and for the auxiliary quantities $\xi_{\alpha,\lambda_n} = \theta_\star^\top \Sigma_\star \hat{\theta}_{\alpha,\lambda_n}$ and $\chi_{\alpha,\lambda_n}$
introduced in \Cref{thm:main_result} and \Cref{proposition:det_equivs}. Overall, the deterministic predictions closely
track their empirical counterparts across the range of regularization parameters considered.
Consistent with the error bounds, the approximation deteriorates as $\lambda_n \to 0$. For this reason, we restrict
attention to $\lambda_n \ge 10^{-2}$, where the match is uniformly tight. In this regime, the proposed salt-and-pepper
augmentation yields a clear improvement over the non-augmented baseline across all $\lambda_n \in [10^{-2},1]$.

\Cref{fig:2} illustrates how augmentation impacts the bias and variance components of the prediction error,
across multiple choices of $\alpha$ and the aspect ratio $p/n$, at the fixed regularization level $\lambda_n = 0.05$.
As expected, increasing the diversity of samples through DA reduces the variance term,
consistent with the view of data augmentation as an implicit regularizer.
More surprisingly, we do not observe a systematic increase of the bias, even though the augmented covariates can differ
substantially from the original ones. Across a broad range of
practitioner-relevant augmentation strengths, the bias remains largely unchanged, and the overall gain is driven primarily by
variance reduction. A marked bias increase appears only when the augmentation is pushed to unrealistically strong regimes.

This contrasts with previous observations in correctly specified settings, where stronger regularization effects are
typically accompanied by increased bias, and only appears in misspecified feature models: since the estimator is already biased due to mismatch
between $\varphi$ and $\varphi_\star$, the additional bias induced by augmentation can be negligible compared to this intrinsic
misspecification bias. These results indicate that DA may
exhibit little to no bias--variance trade-off, acting predominantly as a variance-reduction mechanism.

\subsection{Inpainting on MNIST}\label{sec:additional_simulation_inpainting_mnist}

We now assess the finite-sample accuracy of
\Cref{thm:main_result} and \Cref{proposition:det_equivs}
on an inpainting task built from the MNIST dataset.
Overall, we find that the proposed deterministic equivalents
closely track their empirical counterparts across a broad range
of aspect ratios, not only at the level of the test risk, but
also for its bias--variance decomposition.

We use the standard MNIST split, composed of $60{,}000$ training images
and $10{,}000$ test images, each of size $28\times 28$.
All pixel intensities are rescaled to $[0,1]$.
For each image $I_i\in\R^{28\times 28}$, we remove a centered square patch
of size $5\times 5$ and aim at reconstructing the missing pixels from the
remaining ones.
This yields a multivariate regression problem in which the covariates
$X_i\in\R^{759}$ are the visible pixels and the responses
$Y_i\in\R^{25}$ are the masked central pixels.
Representative masked inputs are shown in
\Cref{fig:mnist_appendix_mask_examples}.

\begin{figure*}[h!]
    \centering
    \begin{subfigure}[t]{0.48\textwidth}
        \centering
        \includegraphics[width=\linewidth,trim=0 514.7bp 0 0,clip]{figures/figure_mnist_mask_examples.pdf}
        \caption{First three examples.}
    \end{subfigure}
    \hfill
    \begin{subfigure}[t]{0.48\textwidth}
        \centering
        \includegraphics[width=\linewidth,trim=0 0 0 514.7bp,clip]{figures/figure_mnist_mask_examples.pdf}
        \caption{Last three examples.}
    \end{subfigure}
    \caption{\textbf{Masked MNIST examples used in the inpainting task.}
    Each panel displays original digits, their masked inputs, and the removed center patches.
    The original tall figure has been split into two halves so that three examples appear in each panel.}
    \label{fig:mnist_appendix_mask_examples}
\end{figure*}

Although our theoretical developments were stated for scalar responses,
the extension to the present vector-valued setting is immediate.
Indeed, for any feature map $\varphi:\R^{759}\to\R^p$, the multivariate
ridge estimator is obtained coordinatewise:
\begin{align}
    \widehat{\Theta}_{\alpha,\lambda_n}
    &\in \argmin_{\Theta\in\R^{p\times 25}}
    \frac{1}{n}\sum_{i=1}^n
    \bigl\|Y_i-\Theta^\top\varphi(X_i)\bigr\|_2^2
    + \lambda_n \|\Theta\|_{\mathrm F}^2 = \bigl[\widehat{\theta}^{(1)}_{\alpha,\lambda_n},\ldots,
    \widehat{\theta}^{(25)}_{\alpha,\lambda_n}\bigr],
\end{align}
where, for each coordinate $j\in\{1,\ldots,25\}$,
$\widehat{\theta}^{(j)}_{\alpha,\lambda_n}\in\R^p$
is the scalar ridge estimator obtained by regressing the $j$-th missing
pixel on $\varphi(X_i)$.
Accordingly, the deterministic equivalents derived in
\Cref{thm:main_result} and \Cref{proposition:det_equivs} apply coordinatewise, and
the reported prediction error is the output-averaged test risk
\begin{equation}
    \mathcal{G}(\alpha,\lambda_n)
    := \frac{1}{25}\,
    \E\Bigl[\bigl\|Y-\widehat{\Theta}_{\alpha,\lambda_n}^{\!\top}\varphi(X)\bigr\|_2^2\Bigr] 
    = \dfrac{1}{25} \sum_{j=1}^{25} \E\left[\left(Y_j-\widehat{\theta}^{(j)\top}_{\alpha,\lambda_n}\varphi(X)\right)^2\right].
\end{equation}

We consider three label-preserving augmentation schemes, written in the
notation of the previous sections as transformations
$(\tau_{\xtt},\tau_{\ytt})$ with $\tau_{\ytt}(z,\eta)=y$ for
$z=(x,y)\in\R^{759}\times\R^{25}$:
\begin{enumerate}
    \item \emph{Noise injection:}
    $\tau_{\xtt}(z,\eta) = x + \sigma_{\mathrm{aug}} \eta$,
    where $\eta\sim\loiGauss(0,\Id{759})$.
    \item \emph{Random masking:}
    $\tau_{\xtt}(z,\eta) = x \odot m$, where the entries of
    $m$ are i.i.d. Bernoulli variables with keep probability parameter $q=0.85$.
    \item \emph{Salt-and-pepper noise:}
    $\tau_{\xtt}(z,\eta) = x \odot m + (1-m)\odot \xi$,
    where again $m$ has i.i.d. Bernoulli$(q)$ entries with $q=0.85$,
    and $\xi\sim\loiGauss(0,\sigma_{\mathrm{aug}}^2 \Id{759})$.
\end{enumerate}
In all experiments, we set $\sigma_{\mathrm{aug}}=0.25$,
$\lambda_n = 10^{-3}$, and
$\alpha\in\{0,0.25,0.5,0.75\}$, and choose
$\varphi = \mathrm{id}_{759}$, so that $p=759$.
We additionally report the same diagnostics in a second setting with a
nontrivial feature map; the corresponding plots are labeled
``with feature map'' below.
We vary the aspect ratio $p/n$ by subsampling the training set. 
The population quantities entering the deterministic equivalents are
estimated once from the full MNIST training set, before subsampling.
We also estimate the
bias--variance decomposition of the test risk at fixed aspect ratio.
This allows us to disentangle how data augmentation affects the bias
and variance contributions to generalization.

\begin{figure*}[t]
    \centering
    \begin{subfigure}[t]{\textwidth}
        \centering
        \makebox[\linewidth][c]{\includegraphics[width=1.1\linewidth]{figures/figure_mnist_noise_injection.pdf}}
        \caption{Noise injection.}
    \end{subfigure}

    \vspace{0.8em}
    \begin{subfigure}[t]{\textwidth}
        \centering
        \makebox[\linewidth][c]{\includegraphics[width=1.1\linewidth]{figures/figure_mnist_random_mask.pdf}}
        \caption{Random masking.}
    \end{subfigure}

    \vspace{0.8em}
    \begin{subfigure}[t]{\textwidth}
        \centering
        \makebox[\linewidth][c]{\includegraphics[width=1.1\linewidth]{figures/figure_mnist_salt_and_pepper.pdf}}
        \caption{Salt-and-pepper noise.}
    \end{subfigure}
    \caption{\textbf{MNIST inpainting with the identity feature map.}
    Empirical estimates and deterministic equivalents for the test error, bias, and variance as functions of the aspect ratio, for the three augmentation schemes considered in the main text and several augmentation strengths $\alpha$.}
    \label{fig:mnist_appendix_identity_curves}
\end{figure*}

More precisely, for each output coordinate $j\in\{1,\ldots,25\}$, we
apply the theory of
\Cref{thm:main_result} and \Cref{proposition:det_equivs} to the each coordinate in $Y_i$, i.e. the regressions of
$Y_{i,j}$ on $\varphi(X_i)$, and denote the corresponding quantities by
$\mathcal{G}^{(j)}(\alpha,\lambda_n)$,
$\widehat{\theta}_{\alpha,\lambda_n}^{(j)}$ and
$\overline{\theta}_{\alpha,\lambda_n}^{(j)}$.
Then,
$\E[\mathcal{G}^{(j)}(\alpha,\lambda_n)]
    = \mathrm{bias}_j^2(\alpha,\lambda_n)
    + \mathrm{V}_j(\alpha,\lambda_n)$,
where
\begin{align}
    \mathrm{bias}_j^2(\alpha,\lambda_n)
    &= \E\left[\left(
    \E\left[\widehat{\theta}_{\alpha,\lambda_n}^{(j)}\right]^\top
    \varphi(X) - {\theta_\star^{(j)}}^\top \varphi_\star(X)
    \right)^2\right] \nonumber \\
    &= \E\left[\widehat{\theta}_{\alpha,\lambda_n}^{(j)}\right]^\top
    \Sigma \E\left[\widehat{\theta}_{\alpha,\lambda_n}^{(j)}\right]
    + {\theta_\star^{(j)}}^\top \Sigma_{\star\star}^{(j)} \theta_\star^{(j)}
    - 2 {\theta_\star^{(j)}}^\top \Sigma_\star^{(j)}
    \E\left[\widehat{\theta}_{\alpha,\lambda_n}^{(j)}\right] \eqsp,
\end{align}
and its deterministic equivalent is
\begin{align}
    \overline{\mathrm{bias}}_j^2(\alpha,\lambda_n)
    &= {\overline{\theta}_{\alpha,\lambda_n}^{(j)}}^\top
    \Sigma \overline{\theta}_{\alpha,\lambda_n}^{(j)}
    + {\theta_\star^{(j)}}^\top \Sigma_{\star\star}^{(j)} \theta_\star^{(j)}
    - 2 {\theta_\star^{(j)}}^\top \Sigma_\star^{(j)}
    \overline{\theta}_{\alpha,\lambda_n}^{(j)} \eqsp.
\end{align}
We define $\mathrm{V}_j(\alpha,\lambda_n)$ and its deterministic equivalent as the remainders between the generalization error and the above bias quantities, 
and additionally report $\mathrm{bias}^2(\alpha, \lambda_n) = \fraca{1}{25} \sum_{j=1}^{25} \mathrm{bias}_j^2(\alpha, \lambda_n)$ and 
$\mathrm{V}(\alpha, \lambda_n) = \fraca{1}{25} \sum_{j=1}^{25} \mathrm{V}_j(\alpha, \lambda_n)$.
The aspect-ratio curves for the identity feature map are reported in
\Cref{fig:mnist_appendix_identity_curves}. They show that the
deterministic equivalents remain close to the empirical error, bias, and
variance curves, and that the main effect of augmentation is a reduction
of the variance term.

\begin{figure*}[h!]
    \centering
    \begin{subfigure}[t]{\textwidth}
        \centering
        \makebox[\linewidth][c]{\includegraphics[width=1.1\linewidth]{figures/figure_mnist_with_fm_noise_injection.pdf}}
        \caption{Noise injection with feature map.}
    \end{subfigure}

    \vspace{0.8em}
    \begin{subfigure}[t]{\textwidth}
        \centering
        \makebox[\linewidth][c]{\includegraphics[width=1.1\linewidth]{figures/figure_mnist_with_fm_random_mask.pdf}}
        \caption{Random masking with feature map.}
    \end{subfigure}

    \vspace{0.8em}
    \begin{subfigure}[t]{\textwidth}
        \centering
        \makebox[\linewidth][c]{\includegraphics[width=1.1\linewidth]{figures/figure_mnist_with_fm_salt_and_pepper.pdf}}
        \caption{Salt-and-pepper noise with feature map.}
    \end{subfigure}
    \caption{\textbf{MNIST inpainting with feature map.}
    Same comparison as in \Cref{fig:mnist_appendix_identity_curves}, after replacing the raw-pixel representation by a feature map.}
    \label{fig:mnist_appendix_fm_curves}
\end{figure*}

The corresponding experiments with feature map are displayed in
\Cref{fig:mnist_appendix_fm_curves}. The same qualitative behavior
persists in this second setting, with close agreement between empirical
quantities and deterministic equivalents across the three augmentation
schemes.

We finally note that the quality of the fit is excellent in the identity
feature-map setting, where the deterministic equivalents are nearly
indistinguishable from their empirical counterparts across the range of
aspect ratios considered. In the ``with feature map'' setting, the fit
remains qualitatively accurate, although the discrepancy becomes more
visible around the peak of the error curves. We notice on this practical setting 
that all DA schemes improve over the non-augmented Ridge baseline when the aspect ratio is large, 
i.e. in a regime where data is scarce, with the gain driven mostly by
variance reduction and only a limited increase in bias, that seems not to depend on the aspect ratio. 

\section{Conclusion}

We developed a high-dimensional theory for supervised feature regression trained with
arbitrary data augmentation, allowing for substantial perturbations of the covariates.
In the proportional regime, we derived deterministic equivalents for the augmented
ridge estimator and its out-of-sample error, together with explicit non-asymptotic
error controls in terms of $n$ and the ridge parameter $\lambda$. Technically,
our analysis relies on resolvent methods that accommodate the non-i.i.d.\
structure induced by augmentation via the coupled objects $(\varphi(\Xtt_i),\mu_\xtt(\Ztt_i))$.

Beyond prediction formulas, our results provide a mechanistic view of how
augmentation changes learning under feature misspecification:
it alters the effective covariance through a mean-feature component
and an additional, generally anisotropic variance term.
Our simulations validate the accuracy of the deterministic equivalents
and suggest that, in misspecified feature models, augmentation can improve
generalization over a wide range of $\lambda$ while not necessarily increasing
the bias, so that gains are driven primarily by variance reduction.

Yet, our results cannot be directly applied by practitioners to predict when
a DA scheme is beneficial in the finite-sample setting. A key open direction is to turn our population-level formulas into a practical,
data-driven and quantitative criterion that could be used by practitioners.

\bibliographystyle{plainnat}
\bibliography{biblio}

\appendix
\section{Sufficient conditions for \Cref{ass:artificial_data}}\label{sec:artificial_data_examples}

This section explains how \Cref{ass:artificial_data} can be verified 
in practice by reducing it to checkable sufficient conditions. 
We start with a generic proposition showing how Lipschitz concentration of the augmentation 
randomness yields bounds on the empirical second-order quantities associated with the scheme. 
We then complement this result with explicit augmentation procedures for which the assumption can be checked directly.
\begin{proposition}\label{prop:H4_bound_sufficient_cond}
    For every $n \in \N$, let $\varphi:\R^{d(n)} \to \R^{p(n)}$, $\tau_{\xtt} : \R^{d(n)} \times \R \times \mse \to \R^{d(n)}$
    and $\tau_{\ytt} : \R^{d(n)} \times \R \times \mse \to \R$. Assume that the augmentation variable $\eta$ is $1$-Lipschitz concentrated, and that there exist constants
    $\Ltt_{\xtt}, \Ltt_{\ytt} > 0$, independent of $n$, such that for all
    $z \in \R^{d(n)} \times \R$ and all $\etabf, \etabf' \in \mse$
    \begin{align}
        \|\varphi(\tau_{\xtt}(z,\etabf)) - \varphi(\tau_{\xtt}(z,\etabf'))\|_2 &\leq \Ltt_{\xtt}\|\etabf-\etabf'\|_2 \eqsp, \\
        |\tau_{\ytt}(z,\etabf) - \tau_{\ytt}(z,\etabf')| &\leq \Ltt_{\ytt}\|\etabf-\etabf'\|_2  \eqsp.
    \end{align}
    Then
    \begin{align}
        \Var(\Lambda(\Ztt)) \leq \const \dfrac{p}{n} \Ltt_{\xtt}^4 \eqsp, \quad \Var(\Omega(\Ztt)) \leq \const \dfrac{1}{n} \Ltt_{\xtt}^2 \Ltt_{\ytt}^2  \eqsp.
    \end{align}
\end{proposition}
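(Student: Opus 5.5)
The plan is to reduce both bounds to deterministic, uniform-in-$z$ bounds on $\Lambda(z)$ and $\Omega(z)$, and then use the i.i.d.\ structure of the sample.

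\textbf{Step 1 (independence).} By \Cref{ass:data_distribution} the $\Ztt_i$ are i.i.d. Hence $\Lambda(\Ztt_1),\dots,\Lambda(\Ztt_n)$ are i.i.d.\ random matrices, and their centered versions are uncorrelated in the Frobenius inner product. Expanding $\E\|\frac1n\sum_i(\Lambda(\Ztt_i)-\overline{\Lambda})\|_\Frob^2$, the cross terms vanish and
\begin{equation}
\Var(\Lambda(\Ztt)) = \frac1n \Var(\Lambda(\Ztt_1)) \leq \frac1n \E\bigl[\|\Lambda(\Ztt_1)\|_\Frob^2\bigr]\eqsp,
\end{equation}
and identically $\Var(\Omega(\Ztt)) \leq n^{-1}\E[\|\Omega(\Ztt_1)\|_2^2]$. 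It therefore suffices to show, for every fixed $z$,
\begin{equation}
\|\Lambda(z)\|_\Frob^2\leq \const\, p\,\Ltt_\xtt^4 \quad\text{and}\quad \|\Omega(z)\|_2^2\leq \const\,\Ltt_\xtt^2\Ltt_\ytt^2\eqsp.
\end{equation}

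\textbf{Step 2 (operator-norm bound on $\Lambda(z)$).} Fix $z$ and set $V=\varphi(\tau_\xtt(z,\eta))\in\R^p$. For any unit vector $u\in\R^p$, the map $\etabf\mapsto u^\top\varphi(\tau_\xtt(z,\etabf))$ is $\Ltt_\xtt$-Lipschitz by hypothesis. Since $\eta$ is $1$-Lipschitz concentrated, $u^\top V$ is sub-Gaussian with parameter of order $\Ltt_\xtt^2$, so $\Var(u^\top V)\leq\const\,\Ltt_\xtt^2$. Because $\Lambda(z)$ is positive semidefinite,
\begin{equation}
\|\Lambda(z)\|_\op=\sup_{\|u\|_2=1}u^\top\Lambda(z)u=\sup_{\|u\|_2=1}\Var(u^\top V)\leq\const\,\Ltt_\xtt^2\eqsp.
\end{equation}
Combined with $\|\Lambda(z)\|_\Frob^2\leq p\|\Lambda(z)\|_\op^2$, this gives $\|\Lambda(z)\|_\Frob^2\leq\const\,p\,\Ltt_\xtt^4$. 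The factor $p$ in the statement comes exactly from this Frobenius-to-operator comparison.

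\textbf{Step 3 (bound on $\Omega(z)$).} Write $\|\Omega(z)\|_2=\sup_{\|u\|_2=1}\Cov(u^\top V,\tau_\ytt(z,\eta))$. By Cauchy--Schwarz this is at most $\sqrt{\Var(u^\top V)\,\Var(\tau_\ytt(z,\eta))}$. The second variance is at most $\const\,\Ltt_\ytt^2$, by the same Lipschitz-concentration argument applied to $\etabf\mapsto\tau_\ytt(z,\etabf)$. Hence $\|\Omega(z)\|_2^2\leq\const\,\Ltt_\xtt^2\Ltt_\ytt^2$, and there is no factor $p$ because $\Omega(z)$ is a vector.

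\textbf{Main obstacle.} The only delicate point is the standard fact that Lipschitz concentration of $\eta$ yields a variance bound $\Var(f(\eta))\leq\const\,\Ltt^2$ for every $\Ltt$-Lipschitz $f$. This follows by integrating the sub-Gaussian tail of $f(\eta)-\E f(\eta)$. Uniformity of the bounds in $z$ then makes the outer expectation over $\Ztt_1$ trivial.
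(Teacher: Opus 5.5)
Your proof is correct and follows the paper's argument essentially step for step. Independence of the $\Ztt_i$ reduces $\Var(\Lambda(\Ztt))$ and $\Var(\Omega(\Ztt))$ to $n^{-1}$ times single-sample second moments. The factor $p$ comes from $\|\cdot\|_\Frob^2\le p\|\cdot\|_\op^2$ together with $\|\Lambda(z)\|_\op=\sup_{\|u\|_2=1}\Var(u^\top V)\le\const\,\Ltt_\xtt^2$, which follows from Lipschitz concentration of $\eta$. $\Omega$ is handled by Cauchy--Schwarz on $\Cov(u^\top V,\tau_\ytt(z,\eta))$. The only slip is cosmetic: $\Ltt_\xtt^2$ is the variance proxy of $u^\top V$, and its sub-Gaussian parameter is $\Ltt_\xtt$.
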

\begin{proof}
    For all $z \in \R^{d(n)} \times \R$ we write
    \[
        X_z = \varphi(\tau_{\xtt}(z,\eta)),
        \qquad
        Y_z = \tau_{\ytt}(z,\eta).
    \]
    The Lipschitz assumptions above are with respect to the deterministic augmentation input $\etabf$.
    Combining them with the Lipschitz concentration of the random variable $\eta$,
    conditionally on any value of $z$, every projection $\bfu^\top X_z$, 
    with $\|\bfu\|_2 = 1$, is sub-Gaussian with parameter of order $\Ltt_{\xtt}$. 
    Similarly, $Y_z$ is sub-Gaussian with parameter of order $\Ltt_{\ytt}$. 
    Therefore, using the standard variance bound for sub-Gaussian random variables, 
    for every unit vector $\bfu$, we get
    \begin{align}
        \Var(\bfu^\top X_{\Ztt_1}\mid \Ztt_1)
        &\leq \const \Ltt_{\xtt}^2, \label{eq:variance_bound_xtt}\\
        \Var(Y_{\Ztt_1}\mid \Ztt_1)
        &\leq \const \Ltt_{\ytt}^2, \label{eq:variance_bound_ytt}
    \end{align}
    almost surely.
    We first control $\Lambda(\Ztt)$. By independence of the $\Ztt_i$,
    \begin{align}
        \Var(\Lambda(\Ztt))
        &= \E\left[\left\| \Lambda(\Ztt) - \E[\Lambda(\Ztt)] \right\|_\Frob^2\right] 
        = \dfrac{1}{n}
        \E\left[\left\| \Lambda(\Ztt_1) - \E[\Lambda(\Ztt_1)] \right\|_\Frob^2\right] \\
        &\leq \dfrac{\const}{n}
        \E\left[\left\| \Lambda(\Ztt_1) \right\|_\Frob^2\right] 
        \leq \dfrac{\const p}{n}
        \E\left[\left\| \Var(X_{\Ztt_1}\mid \Ztt_1) \right\|_\op^2\right].
    \end{align}
    Moreover,
    \begin{align}
        \left\| \Var(X_{\Ztt_1}\mid \Ztt_1) \right\|_\op
        &= \sup_{\|\bfu\|_2=1}
        \Var(\bfu^\top X_{\Ztt_1}\mid \Ztt_1)
        \leq \const \Ltt_{\xtt}^2,
    \end{align}
    by~\eqref{eq:variance_bound_xtt}. Therefore
    \[
        \Var(\Lambda(\Ztt))
        \leq \const \dfrac{p}{n} \Ltt_{\xtt}^4 .
    \]
    We now control $\Omega(\Ztt)$. The same independence argument gives
    \begin{align}
        \Var(\Omega(\Ztt))
        &\leq \dfrac{\const}{n}
        \E\left[
        \left\|
        \Cov(X_{\Ztt_1},Y_{\Ztt_1}\mid \Ztt_1)
        \right\|_2^2
        \right] \\
        &= \dfrac{\const}{n}
        \E\left[
        \left(
        \sup_{\|\bfu\|_2=1}
        \Cov(\bfu^\top X_{\Ztt_1},Y_{\Ztt_1}\mid \Ztt_1)
        \right)^2
        \right] \eqsp.
    \end{align}
    By Cauchy--Schwarz,~\eqref{eq:variance_bound_xtt}, and~\eqref{eq:variance_bound_ytt},
    \[
        \Cov(\bfu^\top X_{\Ztt_1},Y_{\Ztt_1}\mid \Ztt_1)^2
        \leq
        \const \Ltt_{\xtt}^2 \Ltt_{\ytt}^2
    \]
    uniformly over $\|\bfu\|_2=1$. Hence
    \[
        \Var(\Omega(\Ztt))
        \leq \const \dfrac{1}{n} \Ltt_{\xtt}^2 \Ltt_{\ytt}^2 .
    \]
\end{proof}

\paragraph{Concrete examples.}
We now present several augmentation schemes for which \Cref{ass:artificial_data} can be checked explicitly.
These examples are meant to illustrate representative mechanisms used in practice, while \Cref{prop:H4_bound_sufficient_cond} applies more broadly.
For simplicity, all three examples are written in the identity feature map setting, namely $p(n)=d(n)$ and $\varphi(x)=x$ for all $x\in\R^{d(n)}$.
This choice is made to keep the expressions of $\mu_{\xtt}$, $\mu_{\ytt}$, $\Lambda$ and $\Omega$ explicit; 
the same arguments extend to non-identity feature maps whenever the assumptions of \Cref{prop:H4_bound_sufficient_cond} are satisfied.

\begin{example}[Heteroskedastic noise injection]
    Fix $k\in\N$, independent of $n$, and let $\eta$ be a centered random vector in $\R^k$, independent of $\Ztt$, such that
    \begin{align}
        \E\left[\eta\eta^\top\right] = \Id{k} \eqsp.
    \end{align}
    Let
    \begin{align}
        s_{\xtt} &: \R^{d(n)} \times \R \to \R^{d(n)\times k} \eqsp, &
        s_{\ytt} &: \R^{d(n)} \times \R \to \R^{1\times k}
    \end{align}
    be maps satisfying, for some constants $\Ltt_s,B>0$ independent of $n$,
    \begin{align}
        \|s_{\xtt}(z)-s_{\xtt}(z')\|_\Frob + \|s_{\ytt}(z)-s_{\ytt}(z')\|_2 &\leq \Ltt_s\|z-z'\|_2 \eqsp, \\
        \|s_{\xtt}(z)\|_\op + \|s_{\ytt}(z)\|_2 &\leq B \eqsp
    \end{align}
    for all $z,z' \in \R^{d(n)}\times\R$.
    For $z=(x,y)\in\R^{d(n)}\times\R$ and deterministic $\etabf\in\R^k$, define
    \begin{align}
        \tau_{\xtt}(z,\etabf) &= x + s_{\xtt}(z)\etabf \eqsp, &
        \tau_{\ytt}(z,\etabf) &= y + s_{\ytt}(z)\etabf \eqsp,
    \end{align}
    so that the random augmentation is obtained by evaluating these maps at $\etabf=\eta$.
    Since $\eta$ is centered with identity covariance, we get
    \begin{align}
        \mu_{\xtt}(z) &= x \eqsp, &
        \mu_{\ytt}(z) &= y \eqsp, \\
        \Lambda(z) &= s_{\xtt}(z)s_{\xtt}(z)^\top \eqsp, &
        \Omega(z) &= s_{\xtt}(z)s_{\ytt}(z)^\top \eqsp.
    \end{align}
    The required regularity bounds follow directly from those of $s_{\xtt}$ and $s_{\ytt}$. Indeed,
    \begin{align}
        \|\Lambda(z)-\Lambda(z')\|_\Frob
        &\leq \|s_{\xtt}(z)\|_\op \|s_{\xtt}(z)-s_{\xtt}(z')\|_\Frob + \|s_{\xtt}(z')\|_\op \|s_{\xtt}(z)-s_{\xtt}(z')\|_\Frob \\
        &\leq 2B\Ltt_s \|z-z'\|_2 \eqsp,
    \end{align}
    and similarly $\|\Omega(z)-\Omega(z')\|_2 \leq 2B\Ltt_s\|z-z'\|_2$.
    Moreover, since $k$ is fixed,
    \begin{align}
        \|\Lambda(z)\|_\Frob \leq \sqrt{k}\,B^2 \eqsp, \qquad \|\Omega(z)\|_2 \leq B^2 \eqsp,
    \end{align}
    and therefore $\Var(\Lambda(\Ztt_1)) + \Var(\Omega(\Ztt_1)) \leq \const$.
    Therefore \Cref{ass:artificial_data} holds.
\end{example}

\begin{example}[Salt-and-pepper noise injection]
    We consider the coordinatewise salt-and-pepper scheme introduced in the main body.
    Let $\mathfrak{m}:\R^{d(n)}\times\R\to[0,1]$ and $s:\R^{d(n)}\times\R\to\R^{d(n)\times d(n)}$.
    For $z=(x,y)\in\R^{d(n)}\times\R$ and deterministic $\etabf=(\{u_i\}_{i=1}^{d(n)},\etabf_2)$, with $\etabf_2\in\R^{d(n)}$, let $m_{z,\etabf}$ be the vector with $i$-th component $\1\{u_i\leq \mathfrak{m}(z)\}$.
    We define
    \begin{align}
        \tau_{\xtt}(z,\etabf) &= x \odot m_{z,\etabf} + s(z)\left(\etabf_2 - \etabf_2 \odot m_{z,\etabf}\right) \eqsp, \\
        \tau_{\ytt}(z,\etabf) &= y \eqsp,
    \end{align}
    and obtain the random augmentation by evaluating these maps at $\etabf=\eta$.
    Here $\eta=(\{U_i\}_{i=1}^{d(n)},\eta_2)$ is independent of $\Ztt$, the variables $\{U_i\}_{i=1}^{d(n)}$ are \iid~with distribution $\mathrm{Unif}(\ccint{0,1})$, and $\eta_2\in\R^{d(n)}$ is centered, independent of $\{U_i\}_{i=1}^{d(n)}$, with
    \begin{align}
        \E\left[\eta_2\eta_2^\top\right] = \Id{d(n)} \eqsp.
    \end{align}
    Writing $m_z=m_{z,\eta}$, we have $\E[m_z]=\mathfrak{m}(z)\mathbf{1}_{d(n)}$ and $\eta_2-\eta_2\odot m_z$ is centered conditionally on $m_z$. Hence
    \begin{align}
        \mu_{\xtt}(z) &= \mathfrak{m}(z)x \eqsp, &
        \mu_{\ytt}(z) &= y \eqsp, \\
        \Lambda(z) &= \{1-\mathfrak{m}(z)\}\left[\mathfrak{m}(z)\Diag(xx^\top) + s(z)s(z)^\top\right] \eqsp, &
        \Omega(z) &= 0 \eqsp.
    \end{align}
    Indeed, by independence of $m_z$ and $\eta_2$,
    \begin{align}
        \E\left[\left(\eta_2-\eta_2\odot m_z\right)\left(\eta_2-\eta_2\odot m_z\right)^\top\right] = \{1-\mathfrak{m}(z)\}\Id{d(n)} \eqsp,
    \end{align}
    which gives the expression of $\Lambda(z)$, while $\Omega(z)=0$ because $\tau_{\ytt}(z,\eta)=y$ is deterministic conditionally on $z$.
    Assume moreover that $\mathfrak{m}$ and $s$ are Lipschitz continuous, with Lipschitz constants uniformly bounded in $n$, and that
    \begin{align}
        \sup_{n\in\N}\sup_{z\in\R^{d(n)}\times\R} \|s(z)\|_\op < \infty \eqsp.
    \end{align}
    If the law of $\Ztt_1$ is supported in a Euclidean ball of radius independent of $n$, then the maps $\mu_{\xtt}$, $\mu_{\ytt}$, $\Lambda$ and $\Omega$ are bounded and Lipschitz on this support, with constants uniformly bounded in $n$. Therefore
    \begin{align}
        \Var(\Lambda(\Ztt_1)) + \Var(\Omega(\Ztt_1)) \leq \const \eqsp,
    \end{align}
    and \Cref{ass:artificial_data} holds for this scheme.
\end{example}

\begin{example}[Label-aware mixtures of augmentations]
Fix $m\in\N$, independent of $n$.
For each $j\in\{1,\ldots,m\}$, let $\eta_j$ be an augmentation variable taking values in a space $\mse_j$, and let
\begin{align}
    \tau_{\xtt,j} &: \R^{d(n)} \times \R \times \mse_j \to \R^{d(n)} \eqsp, &
    \tau_{\ytt,j} &: \R^{d(n)} \times \R \times \mse_j \to \R
\end{align}
be an arbitrary augmentation mechanism.
The variables $\eta_1,\ldots,\eta_m$ are assumed independent of $\Ztt$.
Let $\pi_1,\ldots,\pi_m:\R^{d(n)}\times\R\to[0,1]$ be Lipschitz continuous maps satisfying
\begin{align}
    \sum_{j=1}^m \pi_j(z) = 1
\end{align}
for all $z\in\R^{d(n)}\times\R$.
Let $V$ be uniformly distributed on $[0,1]$, independent of $\Ztt$ and of $\eta_1,\ldots,\eta_m$.
For deterministic $v\in[0,1]$, define the selected augmentation index
\begin{align}
    J(z,v) = \min\left\{j\in\{1,\ldots,m\}: \sum_{s=1}^j \pi_s(z) \geq v\right\} \eqsp.
\end{align}
For $z=(x,y)\in\R^{d(n)}\times\R$ and deterministic $\etabf=(v,\etabf_1,\ldots,\etabf_m)$, define
\begin{align}
    \tau_{\xtt}(z,\etabf) &= \tau_{\xtt,J(z,v)}(z,\etabf_{J(z,v)}) \eqsp, \\
    \tau_{\ytt}(z,\etabf) &= \tau_{\ytt,J(z,v)}(z,\etabf_{J(z,v)}) \eqsp,
\end{align}
and obtain the random augmentation by evaluating these maps at $\etabf=(V,\eta_1,\ldots,\eta_m)$.
For each component, define
\begin{align}
    \mu_{\xtt,j}(z) &= \E\left[\tau_{\xtt,j}(z,\eta_j)\right] \eqsp, \qquad
    \mu_{\ytt,j}(z) = \E\left[\tau_{\ytt,j}(z,\eta_j)\right] \eqsp, \\
    \Lambda_j(z) &= \Cov\left(\tau_{\xtt,j}(z,\eta_j)\right) \eqsp, \quad \hspace{0.2em} 
    \Omega_j(z) = \Cov\left(\tau_{\xtt,j}(z,\eta_j),\tau_{\ytt,j}(z,\eta_j)\right) \eqsp.
\end{align}
Since $\Prob(J(z,V)=j)=\pi_j(z)$, the resulting quantities are
\begin{align}
    \mu_{\xtt}(z) &= \sum_{j=1}^m \pi_j(z)\mu_{\xtt,j}(z) \eqsp, \qquad
    \mu_{\ytt}(z) = \sum_{j=1}^m \pi_j(z)\mu_{\ytt,j}(z) \eqsp, \\
    \Lambda(z) &= \sum_{j=1}^m \pi_j(z)\left\{\Lambda_j(z)+\mu_{\xtt,j}(z)\mu_{\xtt,j}(z)^\top\right\} - \mu_{\xtt}(z)\mu_{\xtt}(z)^\top \eqsp, \\
    \Omega(z) &= \sum_{j=1}^m \pi_j(z)\left\{\Omega_j(z)+\mu_{\xtt,j}(z)\mu_{\ytt,j}(z)\right\} - \mu_{\xtt}(z)\mu_{\ytt}(z) \eqsp.
\end{align}
Assume that the component-level maps $\mu_{\xtt,j},\mu_{\ytt,j},\Lambda_j,\Omega_j$ are bounded and Lipschitz with constants uniformly bounded in $j$ and in $n$.
Since $m$ is fixed and the weights $\pi_j$ are Lipschitz, the maps $\mu_{\xtt},\mu_{\ytt},\Lambda$ and $\Omega$ are also bounded and Lipschitz with constants uniformly bounded in $n$. Therefore
\begin{align}
    \Var(\Lambda(\Ztt_1)) + \Var(\Omega(\Ztt_1)) \leq \const \eqsp,
\end{align}
and \Cref{ass:artificial_data} holds.
\end{example}

The previous proposition and examples should be viewed as elementary building blocks rather than isolated constructions.
They show that the regularity requirements in \Cref{ass:artificial_data} are stable under several 
common augmentation mechanisms, including sample-dependent noise injection, masking, and finite mixtures of label-aware transformations.
Consequently, the framework covers a broad family of practical data-augmentation pipelines, including schemes whose strength 
or structure adapts to the local geometry of the data or to label-dependent invariances.

\section{Technical preliminaries on concentration of measure}\label{sec:appendix}  
We work throughout the appendix under the \emph{Lipschitz concentration} framework of \cite{LouartCouillet2018ConcentrationLargeRandomMatrices, LouartCouillet2020RandomEquationsConcentration}, 
which will serve as the main probabilistic tool in the proofs of \Cref{thm:main_result} and \Cref{proposition:det_equivs} and is defined as follows:
\begin{definition} \label{def:lip_concentration}
    A random variable $Z \in \R^d$ is said to be \emph{Lipschitz concentrated} with parameter $\kappa$, if for all $1$-Lispchitz function $f : \R^d \to \R$, $f(Z)$ is a $\kappa$-sub-Gaussian random variable.
\end{definition} 
In particular, the concentration property assumed for $\Ztt$ in \Cref{ass:data_distribution} simply restate as saying that the columns of $\Ztt$ are $1$-Lispchitz concentrated.
Furthermore, \cite{LouartCouillet2018ConcentrationLargeRandomMatrices} section 2.2.2. shows that such assumption implies that the full matrix $\Ztt$ is $1$-Lipschitz concentrated, 
which will be extensively used in the next sections. 
We then collect several technical results that will be repeatedly used in the sequel. \Cref{sec:technical_preliminaries-subsection1} 
introduces arguments that disentangle the dependence between training samples, most notably we generalize the standard 'leave-one-out' argument of \cite{chouard2022quantitative} to a 'leave-two-out' argument, 
and establish a set of concentration bounds for large random matrices. \Cref{sec:technical_preliminaries-subsection2} focuses on the concentration of random quadratic forms; in particular, 
we restate the Hanson--Wright inequality \cite{Adamczak2015HansonWright} in a setting where both the covariate and the quadratic form may be random. Finally, 
\Cref{sec:technical_preliminaries-subsection3} shows that random variables satisfying \Cref{def:lip_concentration} 
retain satisfying concentration properties after composition with certain non-Lipschitz mappings.
All our concentration results will be stated in the form of a stochastic domination result, introduced in \Cref{def:stochastic_domination}.

\subsection{Concentration of large random matrices: Useful lemmas} \label{sec:technical_preliminaries-subsection1}

We begin by recalling a standard fact regarding the spectral properties of the covariance matrix of a Lipschitz concentrated random vector.
This simple observation will be useful for bounding operator norms of covariance matrices throughout the paper,
thus allowing us to derive bounds that depend only on the Lipschitz concentration parameter of the data.

\begin{lemma}[Operator norm bound for the covariance matrix of a Lipschitz concentrated vector]
    \label{lemma:Covariance_maximum_eigenvalue}
    Let $Z$ be a centered random vector in $\R^d$ which is $\kappa$-Lipschitz concentrated. Then,
    \begin{equation}
        \|\E[ZZ^\top]\|_\op \leq \kappa^2 \eqsp.
    \end{equation}
\end{lemma}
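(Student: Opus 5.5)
Since $\E[ZZ^\top]$ is symmetric positive semidefinite, its operator norm is a variational quantity:
\begin{equation}
    \|\E[ZZ^\top]\|_\op = \sup_{\|\bfu\|_2 = 1} \bfu^\top \E[ZZ^\top]\bfu = \sup_{\|\bfu\|_2=1} \E\left[(\bfu^\top Z)^2\right] \eqsp.
\end{equation}
So it suffices to bound the second moment of every one-dimensional projection $\bfu^\top Z$, uniformly over unit vectors $\bfu$.

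For a fixed unit vector $\bfu$, consider the linear map $f_{\bfu}(z) = \bfu^\top z$. By Cauchy--Schwarz, $|f_{\bfu}(z) - f_{\bfu}(z')| \leq \|\bfu\|_2 \|z - z'\|_2 = \|z-z'\|_2$, so $f_{\bfu}$ is $1$-Lipschitz. By \Cref{def:lip_concentration}, the scalar random variable $f_{\bfu}(Z) = \bfu^\top Z$ is then $\kappa$-sub-Gaussian. Since $Z$ is centered, $\E[\bfu^\top Z] = 0$, so its second moment equals its variance.

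The remaining step is the standard sub-Gaussian variance bound. If a centered variable $W$ satisfies $\E[e^{tW}] \leq e^{\kappa^2 t^2/2}$ for all $t \in \R$, then expanding both sides to second order as $t \to 0$ gives
\begin{equation}
    1 + \tfrac{t^2}{2}\E[W^2] + o(t^2) \leq 1 + \tfrac{\kappa^2 t^2}{2} + o(t^2) \eqsp,
\end{equation}
hence $\E[W^2] \leq \kappa^2$. Applying this to $W = \bfu^\top Z$ and taking the supremum over $\bfu$ yields $\|\E[ZZ^\top]\|_\op \leq \kappa^2$.

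The only delicate point is the convention for ``$\kappa$-sub-Gaussian''. Under the moment generating function convention, with variance proxy $\kappa^2$ and concentration taken around the mean $\E[f(Z)] = 0$, the constant is exactly $\kappa^2$. Under a tail-based convention such as $\Prob(|f(Z) - \E f(Z)| \geq t) \leq 2e^{-t^2/(2\kappa^2)}$, integrating the tail gives only $\E[W^2] \leq \const\,\kappa^2$. In that case I would state the lemma with a universal constant, which is harmless for all subsequent uses. I would therefore adopt the moment generating function convention so that the displayed bound holds as written.
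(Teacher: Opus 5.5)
Your proof is correct and follows the paper's argument: write $\|\E[ZZ^\top]\|_\op=\sup_{\|u\|_2=1}\E[(u^\top Z)^2]$, note that $z\mapsto u^\top z$ is $1$-Lipschitz so $u^\top Z$ is $\kappa$-sub-Gaussian, and apply the sub-Gaussian variance bound. Your second-order expansion of the moment generating function, together with your remark that the exact constant $\kappa^2$ needs that convention (a tail-based convention only gives $\const\,\kappa^2$), justifies a step the paper asserts without proof.
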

\begin{proof}
    By definition of the operator norm, we write,
    \begin{equation}
        \|\E[ZZ^\top]\|_\op = \sup_{\|u\|_2 = 1} u^\top \E[ZZ^\top] u = \sup_{\|u\|_2 = 1} \Var(u^\top Z) \eqsp.
    \end{equation}
    For any $u \in \R^d$, the mapping $z \mapsto u^\top z$ is $\|u\|_2$-Lipschitz, thus $u^\top Z$ is $\kappa \|u\|_2$-sub-Gaussian from the Lipschitz concentration assumption.
    In particular, for unit vectors $u$, we have $\Var(u^\top Z) \leq \kappa^2$. The claimed bound follows.
\end{proof}

A more general result holds for any Lipschitz concentrated random matrix, 

\begin{lemma}[Spectral norm with Lipschitz–concentrated rows]
    \label{lemma:norm_lipschitz_concentrated_rows}
    Let $A\in\mathbb{R}^{p\times d}$ have independent rows $A_1,\dots,A_p\in\mathbb{R}^d$.
    Assume each row is $\kappa$–Lipschitz concentrated.
    Then there exists $\const>0$ such that for all $s>0$,
    \begin{equation}\label{eq:lemma-centered}
        \Prob\Big(\,\|A-\E[A]\|_{\op}\ \geq\ \consta\,\kappa\,(\sqrt{p}+\sqrt{d}) \;+\; s\,\Big)
        \ \leq  2\,\exp(-\,{\const\,s^2}/{\kappa^2}) \eqsp,
    \end{equation}
    and,
    \begin{equation}
        \label{eq:lemma-consequence}
        \Prob\Big(\,\|A - \E\left[A\right]\|_{\op} \geq 2\consta\,\kappa\,(\sqrt{p}+\sqrt{d}) \Big)
        \ \leq  2\,\exp(-\,{\const \consta_1^2 (p+d)}) \eqsp.
    \end{equation}
    with $\consta$ defined as,
    \begin{equation}
        \label{eq:def_a_1}
        \consta = \sqrt{2 \log(9)} \eqsp.
    \end{equation}
    Furthermore, for all $k \in \N^*$, and for three constants $\const_k, \const_k'$, we have,
    \begin{equation}\label{eq:lemma_concentrated_rows_operator_norm_moments}
        \E\left[\|A - \E\left[A\right]\|_{\op}^k\right] \leq \const_k \kappa^k (\sqrt{p} + \sqrt{d} + 1)^k + \const_k' \kappa^{k-1} (\sqrt{p} + \sqrt{d})^{k-1} \eqsp.
    \end{equation}
\end{lemma}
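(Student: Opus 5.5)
The plan is a standard $\varepsilon$-net argument combined with Lipschitz concentration of the map $A\mapsto u^\top (A-\E[A]) v$. Write $\tilde A = A - \E[A]$, so that $\|\tilde A\|_\op = \sup_{u\in S^{p-1}, v\in S^{d-1}} u^\top \tilde A v$.

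\textbf{Step 1: one fixed pair $(u,v)$.} Fix unit vectors $u\in\R^p$ and $v\in\R^d$. Then $u^\top \tilde A v = \sum_{i=1}^p u_i\,(A_i - \E A_i)^\top v$. Each summand is a $|u_i|$-Lipschitz function of the row $A_i$, composed with a centered affine map, so it is $\kappa|u_i|$-sub-Gaussian. By independence of the rows, the sum is sub-Gaussian with parameter $\kappa\|u\|_2=\kappa$. Alternatively, since the rows are independent and Lipschitz concentrated, the whole matrix $A$ is $\kappa$-Lipschitz concentrated (as recalled after \Cref{def:lip_concentration}), and $A\mapsto u^\top A v$ is $1$-Lipschitz in Frobenius norm. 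Either route gives
\[
\Prob\big(u^\top \tilde A v \ge t\big)\le 2\exp(-\const t^2/\kappa^2).
\]

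\textbf{Step 2: nets and a union bound.} Take $1/4$-nets $\mathcal N_p$ and $\mathcal N_d$ of the spheres, with cardinalities at most $9^p$ and $9^d$. The usual approximation argument gives
\[
\|\tilde A\|_\op\le 2\max_{u\in\mathcal N_p,\, v\in\mathcal N_d} u^\top\tilde A v .
\]
A union bound at level $t=\consta\kappa(\sqrt p+\sqrt d)+s$ then gives
\[
9^{p+d}\cdot 2\exp\!\big(-\const t^2/\kappa^2\big).
\]
Since $t^2\ge \consta^2\kappa^2(p+d)+s^2$ and $\consta^2=2\log 9$, the entropy factor $9^{p+d}$ is absorbed, provided constants are adjusted so that $\const\consta^2\ge \log 9$. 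The factor $2$ coming from the net can be absorbed into $\consta$ or $\const$. This yields \eqref{eq:lemma-centered}.

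The only delicate point is this bookkeeping: checking that the specific choice $\consta=\sqrt{2\log 9}$ dominates the net cardinality given the sub-Gaussian constant. If necessary I would rescale the sub-Gaussian convention, e.g.\ use the tail $2e^{-t^2/(2\kappa^2)}$, so that the exponent matches.

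\textbf{Step 3: the two consequences.} Inequality \eqref{eq:lemma-consequence} follows by plugging $s=\consta\kappa(\sqrt p+\sqrt d)$ into \eqref{eq:lemma-centered}, since then $s^2\ge \consta^2\kappa^2(p+d)$. For the moment bound \eqref{eq:lemma_concentrated_rows_operator_norm_moments}, write $X=\|\tilde A\|_\op$ and $m=\consta\kappa(\sqrt p+\sqrt d)$, and integrate the tail:
\[
\E[X^k]\le m^k+\int_0^\infty k(m+s)^{k-1}\,2e^{-\const s^2/\kappa^2}\,\rmd s .
\]
Expanding $(m+s)^{k-1}\le 2^{k-2}\big(m^{k-1}+s^{k-1}\big)$ and using Gaussian moment integrals, the integral is bounded by
\[
\const_k' \kappa\, m^{k-1}+\const_k\kappa^k .
\]
This gives the stated form, with the $+1$ inside the parentheses absorbing the pure $\kappa^k$ term.
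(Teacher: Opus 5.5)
Your proposal follows the paper's proof essentially step for step. Both use $1/4$-nets of size at most $9^p$ and $9^d$, and a per-pair tail $2e^{-t^2/(2\kappa^2)}$ obtained from independence of the rows. Both take a union bound at $t=\consta\kappa(\sqrt p+\sqrt d)+s$, where $\consta^2/2=\log 9$ exactly cancels the entropy factor. Both then set $s=\consta\kappa(\sqrt p+\sqrt d)$ for the second bound and integrate the tail for the moments; your moment computation is fine.

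The one step you wave through does not work as you describe it, and the paper's proof drops the same factor silently. The $2$ in $\|\tilde A\|_\op\le 2\max_{\mathcal N_p\times\mathcal N_d}u^\top\tilde A v$ cannot be absorbed into $\consta$, because the statement pins $\consta$ to $\sqrt{2\log 9}$. It cannot be absorbed into $\const$ either. For $0<s\le\kappa\sqrt{p+d}$, the net-based bound on $\Prob(\|\tilde A\|_\op\ge\consta\kappa(\sqrt p+\sqrt d)+s)$ exceeds $1$. Yet the claimed bound at $s=\kappa\sqrt{p+d}$ is $2e^{-\const(p+d)}$, which is exponentially small in $p+d$.

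What your argument (and the paper's) actually proves is
\[
\Prob\bigl(\|\tilde A\|_\op\ge 2\consta\kappa(\sqrt p+\sqrt d)+s\bigr)\le 2e^{-s^2/(8\kappa^2)} .
\]
At the threshold $2\consta\kappa(\sqrt p+\sqrt d)$ the union bound only gives $2e^{-2\log 9\,\sqrt{pd}}$. So an $e^{-\const(p+d)}$ bound holds only at a larger threshold, for example $3\consta\kappa(\sqrt p+\sqrt d)$. Later uses only need the order $\kappa(\sqrt p+\sqrt d)$ of the threshold, so the right fix is to restate the lemma with these constants; your method does not need to change.
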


\begin{proof}
    The proof follows the standard $\varepsilon$–net argument (see, e.g., Lemma 5.2 and Lemma 5.4 of \cite{vershynin2018hdp}).
    We consider $\varepsilon$-nets of the unit spheres $\mathbb{S}^{p-1}$ and $\mathbb{S}^{d-1}$, i.e., sets
    $\msn\subset\mathbb{S}^{p-1}$ and $\msm\subset\mathbb{S}^{d-1}$ such that for any $x\in\mathbb{S}^{p-1}$,
    there exists $x'\in\msn$ with $\|x-x'\|_2\leq\varepsilon$, and similarly for $\msm$.
    For $\varepsilon=1/4$, Corollary 4.2.13 of \cite{vershynin2018hdp} guarantees the existence of $\varepsilon$–nets with cardinalities:
    \begin{equation}
        \label{eq:cardinal_eps_net}
        |\msn|\ \leq  (1+2/\varepsilon)^p\leq 9^p,\qquad
        |\msm|\ \leq  (1+2/\varepsilon)^d\leq 9^d\eqsp.
    \end{equation}
    For any matrix $B\in\mathbb{R}^{p\times d}$, the operator norm can be approximated by the maximum over the $\varepsilon$-nets:
    \begin{equation}\label{eq:net-to-op}
        \|B\|_{\op} \;=\; \max_{x\in\mathbb{S}^{p-1},\,y\in\mathbb{S}^{d-1}} x^\top B y
        \;\leq  \frac{1}{1-2\varepsilon}\,\max_{x\in\msn,\,y\in\msm} x^\top B y
        \;\leq  2\,\max_{x\in\msn,\,y\in\msm} x^\top B y \eqsp.
    \end{equation}
    For any $x\in\msn$ and $y\in\msm$, define
    \begin{equation}
        S_{x,y}\ :=\ x^\top (A-\E[A])\,y \;=\; \sum_{i=1}^p x_i\,\langle A_i-\E[A_i], y\rangle \eqsp.
    \end{equation}
    Since the map $r\mapsto \langle r,y\rangle$ is $1$–Lipschitz on $\mathbb{R}^d$, and each row $A_i$ is $\kappa$–Lipschitz concentrated,
    each summand $X_i:=\langle A_i-\E[A_i], y\rangle$ is zero-mean sub-Gaussian with parameter $\kappa$.
    Since the $\{A_i\}_{i=1}^p$ are independent, $\{X_i\}_{i=1}^p$ are also independent, and therefore
    $S_{x,y}=\sum_i x_i X_i$ is sub-Gaussian with parameter $\kappa\,\|x\|_2=\kappa$. Therefore, for any $t \geq 0$:
    \begin{equation}\label{eq:fixed-pair-tail}
        \Prob\big(|S_{x,y}|\geq t\big)\ \leq  2\exp(-\,t^2/(2\kappa^2)) \eqsp.
    \end{equation}
    By the union bound over $\msn\times\msm$ and using \eqref{eq:cardinal_eps_net}:
    \begin{equation}
        \Prob\left(\max_{x\in\msn,\,y\in\msm} |S_{x,y}|\ \geq\ t\right) \leq 2 \cdot 9^{p+d} \exp\left(-\frac{t^2}{2 \kappa^2}\right) \eqsp.
    \end{equation}
    Setting $t = \sqrt{2 \log(9)}\,\kappa\,(\sqrt{p}+\sqrt{d}) + s$ and using $\log(9^{p+d}) = (p+d)\log(9) \leq (\sqrt{p}+\sqrt{d})^2\log(9)$:
    \begin{equation}
        \Prob\left(\max_{x\in\msn,\,y\in\msm} |S_{x,y}|\ \geq\ \sqrt{2 \log(9)}\,\kappa\,(\sqrt{p}+\sqrt{d}) + s\right) \leq 2 \exp\left(-\frac{s^2}{2 \kappa^2}\right) \eqsp.
    \end{equation}
    Using \eqref{eq:net-to-op} with $B=A-\E[A]$ and defining $\consta_1 = \sqrt{2 \log(9)}$ completes the proof of \eqref{eq:lemma-centered}. \\
    \noindent The second equation \eqref{eq:lemma-consequence} follows by setting $s = \consta_1\,\kappa\,(\sqrt{p}+\sqrt{d})$ and using $(\sqrt{p} + \sqrt{d})^2 \geq (p+d)$.
    For the moment bound \eqref{eq:lemma_concentrated_rows_operator_norm_moments}, we use the integral representation:
    \begin{align}
        \E\left[\|A - \E\left[A\right]\|_\op^k\right] &= \int_0^\infty k t^{k-1} \Prob\left(\|A - \E\left[A\right]\|_\op \geq t\right) dt \eqsp.
    \end{align}
    We split the integral at $t_0 = 2\consta_1 \kappa (\sqrt{p} + \sqrt{d})$:
    \begin{align}
        \E\left[\|A - \E\left[A\right]\|_\op^k\right] &= \int_0^{t_0} k t^{k-1} dt + \int_{t_0}^\infty k t^{k-1} \Prob\left(\|A - \E\left[A\right]\|_\op \geq t\right) dt \\
        &= t_0^k + \int_{t_0}^\infty k t^{k-1} \Prob\left(\|A - \E\left[A\right]\|_\op \geq t\right) dt \eqsp.
    \end{align}
    For the second integral, we use the concentration bound from \eqref{eq:lemma-centered} with $s = t - \consta_1 \kappa (\sqrt{p} + \sqrt{d})$:
    \begin{align}
        \int_{t_0}^\infty k t^{k-1} \Prob\left(\|A - \E\left[A\right]\|_\op \geq t\right) dt &\leq 2k \int_{t_0}^\infty t^{k-1} \exp\left(-\frac{(t - \consta_1 \kappa (\sqrt{p} + \sqrt{d}))^2}{2 \kappa^2}\right) dt \\
        &= 2k \int_0^\infty (s + \consta_1 \kappa (\sqrt{p} + \sqrt{d}))^{k-1} \exp\left(-\frac{s^2}{2 \kappa^2}\right) ds \eqsp,
    \end{align}
    where we made the substitution $s = t - \consta_1 \kappa (\sqrt{p} + \sqrt{d})$. Using $(a + b)^{k-1} \leq 2^{k-2}(a^{k-1} + b^{k-1})$ for $a,b \geq 0$:
    \begin{align}
        &\leq 2^{k-1} k \left[(\consta_1 \kappa (\sqrt{p} + \sqrt{d}))^{k-1} \int_0^\infty \exp\left(-\frac{s^2}{2 \kappa^2}\right) ds + \int_0^\infty s^{k-1} \exp\left(-\frac{s^2}{2 \kappa^2}\right) ds\right] \\
        &= 2^{k-1} k \left[(\consta_1 \kappa (\sqrt{p} + \sqrt{d}))^{k-1} \Gamma(1) \kappa + \left(\sqrt{2} \kappa\right)^{k-1} \Gamma\left(\frac{k}{2}\right)\right] \eqsp.
    \end{align}
    Combining both terms and using that $t_0^k = (2\consta_1 \kappa (\sqrt{p} + \sqrt{d}))^k$:
    \begin{align}
        \E\left[\|A - \E\left[A\right]\|_\op^k\right] &\leq (2\consta_1 \kappa (\sqrt{p} + \sqrt{d}))^k + 2^{k-1} k \sqrt{\pi/2} \kappa (\consta_1 \kappa (\sqrt{p} + \sqrt{d}))^{k-1} \\
        &\quad + 2^{k-1} k \left(\sqrt{2} \kappa\right)^{k-1} \Gamma\left(\frac{k}{2}\right) \\
        &\leq \const_k \kappa^k (\sqrt{p} + \sqrt{d} + 1)^k + \const_k' \kappa^{k-1} (\sqrt{p} + \sqrt{d})^{k-1} \eqsp,
    \end{align}
    for appropriate constants $\const_k, \const_k' > 0$ that depend only on $k$.
\end{proof}

Now, recall the definition of the variance of a random matrix. For any $A \in \R^{p\times p}$,
\begin{align} \label{eq:matrix_variance_definition}
    \Var(A) = \E\left[\|A - \E\left[A\right]\|_\Frob^2\right] \eqsp.
\end{align}
Note that this definition is consistent with the one of the variance of a random variable living in a general Hilbert space. 
We prove the following standard result about the variance of a product of random matrices, 
although it holds in any Hilbert space, we state it here only in the case of $A \in \R^{p\times p}$.
\begin{lemma}[Variance of the product of random matrices]
    \label{lemma:matrix_variance_product_bound}
    Let $A_1, A_2 \in \R^{p\times p}$ be two random matrices. Then,
    \begin{align}
        \Var(A_1 A_2) &\leq 2\| \|A_1\|_\op \|_\infty^2 \Var(A_2) + 2\|\E\left[A_2\right]\|_\op^2 \Var(A_1), \\
        \Var(A_1 A_2) &\leq 2\| \|A_2\|_\op \|_\infty^2 \Var(A_1) + 2\|\E\left[A_1\right]\|_\op^2 \Var(A_2), \\
        \text{and} \quad \Var(A_1 A_2 A_1) &\leq 2 \| \|A_1\|_\op \|_\infty^4 \Var(A_2) + 8 \|\E\left[A_1\right]\|_\op^2 \| \|A_1\|_\op \|_\infty^2 \Var(A_2) \eqsp.
    \end{align}
\end{lemma}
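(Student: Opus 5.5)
The first two inequalities follow from one add-and-subtract argument; the third is obtained by chaining them, and that is where the main obstacle lies. For the first inequality I write
\[
A_1A_2-\E[A_1A_2] = A_1\bigl(A_2-\E[A_2]\bigr) + \bigl(A_1-\E[A_1]\bigr)\E[A_2] - \E\bigl[A_1(A_2-\E[A_2])\bigr].
\]
This is an identity: the two expectation-free terms sum to $A_1A_2-\E[A_1]\E[A_2]$, and subtracting $\E[A_1(A_2-\E[A_2])]=\E[A_1A_2]-\E[A_1]\E[A_2]$ recovers the left-hand side.

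I will avoid handling the third term directly by using the Hilbert-space fact that the mean minimises the mean squared distance. Namely $\Var(M)\le \E\|M-C\|_\Frob^2$ for any deterministic $C$, and I take $C=\E[A_1]\E[A_2]$. Then $(a+b)^2\le 2a^2+2b^2$, together with $\|XY\|_\Frob\le \|X\|_\op\|Y\|_\Frob$ and $\|XY\|_\Frob\le \|X\|_\Frob\|Y\|_\op$, gives
\[
\Var(A_1A_2)\le 2\,\E\bigl[\|A_1\|_\op^2\,\|A_2-\E A_2\|_\Frob^2\bigr] + 2\,\|\E A_2\|_\op^2\,\Var(A_1).
\]
Bounding $\|A_1\|_\op$ by its $L^\infty$ norm $\|\|A_1\|_\op\|_\infty$ yields the first inequality. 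The second inequality is the mirror image, obtained from the decomposition $(A_1-\E A_1)A_2+\E[A_1](A_2-\E A_2)$ with the same choice of $C$.

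For the third inequality I apply the first bound to the product $A_1\cdot(A_2A_1)$, and then the second bound to $\Var(A_2A_1)$. The second factor contributes $\|\E[A_2A_1]\|_\op^2\,\Var(A_1)$, so the chain produces
\[
\Var(A_1A_2A_1)\le 2\|\|A_1\|_\op\|_\infty^4\,\Var(A_2) + c\,\|\|A_1\|_\op\|_\infty^2\bigl(\|\E A_2\|_\op^2+\|\E[A_2A_1]\|_\op^2/\|\|A_1\|_\op\|_\infty^2\bigr)\Var(A_1).
\]
Up to constants, this is the correct bound.

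This is the main obstacle. As worded, the paper's third inequality contains no $\Var(A_1)$ term, and it cannot hold in general. Take $A_2=\Id{p}$ deterministic and $A_1$ random: the right-hand side vanishes, while $\Var(A_1^2)>0$ in general. My plan is therefore to prove the third display under the additional hypothesis that $A_1$ is deterministic, or more generally $\Var(A_1)=0$. In that case the chain above gives exactly $2\|\|A_1\|_\op\|_\infty^4\Var(A_2)$, which is bounded by the stated right-hand side since the extra term is nonnegative. I will state explicitly that in the general case the term $8\|\E[A_1]\|_\op^2\,\|\|A_1\|_\op\|_\infty^2\,\Var(A_2)$ should read with $\|\E[A_2]\|_\op^2\,\Var(A_1)$ in its place, which is what the chaining argument delivers.
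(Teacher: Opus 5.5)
Your first two inequalities are correct and follow essentially the paper's argument. Comparing $A_1A_2$ with the deterministic matrix $\E[A_1]\E[A_2]$ and splitting into $A_1(A_2-\E[A_2])$ and $(A_1-\E[A_1])\E[A_2]$ gives the same bound as the paper's $\Var(X+Y)\le 2\Var(X)+2\Var(Y)$ followed by $\Var(X)\le \E\|X\|_\Frob^2$.

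Your counterexample to the third display is also valid. For instance, $A_2=\Id{p}$ and $A_1=B\,\Id{p}$ with $B$ Bernoulli$(1/2)$ give $\Var(A_1A_2A_1)=p/4$ against a right-hand side equal to $0$. In fact the paper's own proof does not establish the displayed third inequality. It proves
\[
\Var(A_1A_2A_1)\le 2\|\|A_1\|_\op\|_\infty^4\Var(A_2)+8\|\E[A_2]\|_\op^2\,\|\|A_1\|_\op\|_\infty^2\Var(A_1),
\]
which is exactly the corrected form you propose.

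The gap is your claim that chaining delivers this corrected form. Applying the first bound to $A_1\cdot(A_2A_1)$ and the second to $\Var(A_2A_1)$ gives
\[
4\|\|A_1\|_\op\|_\infty^4\Var(A_2)+4\|\|A_1\|_\op\|_\infty^2\|\E[A_2]\|_\op^2\Var(A_1)+2\|\E[A_2A_1]\|_\op^2\Var(A_1).
\]
When $A_1$ and $A_2$ are dependent, the cross-moment term cannot be absorbed, even up to constants. Take $A_1=\varepsilon\Id{p}$ and $A_2=\varepsilon e_1e_1^\top$ with $\varepsilon$ Rademacher. Then $\E[A_2]=0$ and $\Var(A_2)=1$, so the corrected right-hand side is $2$. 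But $\E[A_2A_1]=e_1e_1^\top$ and $\Var(A_1)=p$, so your chained bound is $4+2p$. Your route loses a factor of order $p$, which matters in the proportional regime.

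The missing idea is to centre the middle factor instead of chaining. Write
\[
A_1A_2A_1=A_1(A_2-\E[A_2])A_1+A_1\E[A_2]A_1 .
\]
Bound the variance of the first piece by its second moment, $\E\|A_1(A_2-\E[A_2])A_1\|_\Frob^2\le \|\|A_1\|_\op\|_\infty^4\Var(A_2)$. The second piece involves $A_1$ only, with a deterministic middle factor. Your first inequality applied to $A_1\cdot(\E[A_2]A_1)$, together with $\|\E[A_1]\|_\op\le\|\|A_1\|_\op\|_\infty$, bounds its variance by $4\|\|A_1\|_\op\|_\infty^2\|\E[A_2]\|_\op^2\Var(A_1)$. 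No cross moment of $A_1$ and $A_2$ ever appears.

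A smaller slip: when $\Var(A_1)=0$, your chain gives the constant $4$, not $2$, in front of $\|\|A_1\|_\op\|_\infty^4\Var(A_2)$. Your restricted claim still holds, since the direct computation gives constant $1$ and the stated right-hand side then equals $10\|A_1\|_\op^4\Var(A_2)$.
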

\begin{proof}
    We prove the first inequality, the second one follows from the same arguments. We have that,
    \begin{align}
        \Var(A_1 A_2) = \Var(A_1 \{A_2 - \E\left[A_2\right]\} + A_1 \E\left[A_2\right]) \leq 2 \Var(A_1 \{A_2 - \E\left[A_2\right]\}) + 2 \Var(A_1 \E\left[A_2\right]) \eqsp.
    \end{align}
    Using $\Var(A) \leq \E\left[\|A\|_\Frob^2\right]$, it follows that,
    \begin{align}
        \Var(A_1 \{A_2 - \E\left[A_2\right]\}) \leq \E\left[\|A_1 \{A_2 - \E\left[A_2\right]\}\|_\Frob^2\right] \leq \| \|A_1\|_\op \|_\infty^2 \Var(A_2) \eqsp.
    \end{align}
    Respectively, and by definition of $\Var$, we have,
    \begin{align}
        \Var(A_1 \E\left[A_2\right]) = \E\left[\|\{A_1 - \E\left[A_1\right]\} \E\left[A_2\right]\|_\Frob^2\right] \leq \|\E\left[A_2\right]\|_\op^2 \Var(A_1) \eqsp.
    \end{align}
    The two first inequalities follow. 
    We proove the last inequality in a similar way, indeed we have,
    \begin{align}
        \Var(A_1 A_2 A_1) \leq  2 \Var(A_1 \{A_2 - \E\left[A_2\right]\} A_1) + 2 \Var(A_1 \E\left[A_2\right] A_1) \eqsp,
    \end{align}
    Using the previously derived upper bounds, we have,
    \begin{align}
        \Var(A_1 \E\left[A_2\right] A_1) &\leq 2 \|\|A_1\|_\op\|_\infty^2 \Var\left(\E\left[A_2\right] A_1\right) + 2 \|\E\left[A_1\right]\|_\op^2 \|\E\left[A_2\right]\|_\op^2 \Var(A_1) \\
        &\leq 4 \|\|A_1\|_\op\|_\infty^2 \|\E\left[A_2\right]\|_\op^2 \Var(A_1)
    \end{align}
    in addition,
    \begin{align}
        \Var(A_1 \{A_2 - \E\left[A_2\right]\} A_1) \leq \E\left[\|A_1 \{A_2 - \E\left[A_2\right]\} A_1\|_\Frob^2\right] \leq \|\|A_1\|_\op\|_\infty^4 \Var(A_2) \eqsp.       
    \end{align}
    The statement follows.
\end{proof}

\subsection{Concentration of random quadratic forms} \label{sec:technical_preliminaries-subsection2}

Moving on, we now restate a few usual results about the concentration of quadratic forms in the context of stochastic domination \Cref{def:stochastic_domination} and Lipschitz concentration \Cref{def:lip_concentration}. 
We start by highlighting a few standard properties of stochastic domination, and how it relates to non-asymptotic concentration bounds.
Our first result shows the additivity and multiplicative properties of \Cref{def:stochastic_domination}.

\begin{lemma}[Stability of the stochastic domination]
    \label{lemma:additivity_multiplicativity_stochdom}
    Let $(U_n)_{n\in\N}, (V_n)_{n\in\N}, (U_n')_{n\in\N}$, and $(V_n')_{n\in\N}$ be four sequences of non-negative random variables.
    If $U_n \stochdom V_n$ and $U_n' \stochdom V_n'$, then it holds,
    \begin{equation}
        U_n + U_n' \stochdom V_n + V_n' \eqsp, \quad \text{and} \quad U_n U_n' \stochdom V_n V_n' \eqsp.
    \end{equation}
\end{lemma}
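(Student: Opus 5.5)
Both claims follow directly from \Cref{def:stochastic_domination} by a union bound, after splitting the event with a slightly smaller power of $n$. Fix $\varepsilon, D > 0$ and apply the hypotheses with $\varepsilon/2$ in place of $\varepsilon$ and $D+1$ in place of $D$. Then for $n$ large enough,
\begin{equation}
    \Prob\left(U_n \geq n^{\varepsilon/2} V_n\right) \leq n^{-D-1} \eqsp, \qquad \Prob\left(U_n' \geq n^{\varepsilon/2} V_n'\right) \leq n^{-D-1} \eqsp.
\end{equation}
Let $\mathcal{E}_n$ be the intersection of the two complementary events. By the union bound, $\Prob(\mathcal{E}_n^\complement) \leq 2 n^{-D-1} \leq n^{-D}$ as soon as $n \geq 2$.

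\emph{Sum.} On $\mathcal{E}_n$ we have $U_n + U_n' < n^{\varepsilon/2}(V_n + V_n') \leq n^{\varepsilon}(V_n + V_n')$. The last inequality uses $n \geq 1$ and the non-negativity of $V_n$ and $V_n'$, which holds because $V_n \geq 0$ is part of the hypotheses. Hence
\begin{equation}
    \Prob\left(U_n + U_n' \geq n^{\varepsilon}(V_n+V_n')\right) \leq \Prob(\mathcal{E}_n^\complement) \leq n^{-D} \eqsp.
\end{equation}

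\emph{Product.} On $\mathcal{E}_n$, since all quantities are non-negative, we can multiply the two strict inequalities to get $U_n U_n' \leq n^{\varepsilon} V_n V_n'$. The inequality is strict unless $V_n V_n' = 0$, and that degenerate case needs separate treatment. If $V_n = 0$, then on $\mathcal{E}_n$ we have $U_n < 0$, which is impossible, so $\mathcal{E}_n$ simply does not occur there. The event $\{U_nU_n' \geq n^\varepsilon V_nV_n'\}\cap\mathcal{E}_n$ is therefore contained in $\{V_nV_n'=0\}\cap\mathcal{E}_n = \emptyset$. It follows that $\Prob(U_nU_n' \geq n^{\varepsilon}V_nV_n') \leq \Prob(\mathcal{E}_n^\complement) \leq n^{-D}$.

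The only point requiring care is this boundary case with equality in the definition (``$\geq$''), together with the choice of splitting $\varepsilon$ in half. No genuine obstacle arises, and the argument extends to finitely many summands or factors by induction.
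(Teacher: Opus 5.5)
Your proof is correct and follows the paper's argument: a union bound after invoking the hypotheses with exponent $\varepsilon/2$ (only the product actually needs this) and with $D+1$ to absorb the factor $2$. Your explicit handling of the non-strict inequality in \Cref{def:stochastic_domination}, including the degenerate case $V_nV_n'=0$, is slightly more careful than the paper's proof, which writes the events with strict inequalities and contains a small typo ($2n^{-D+1}$ in place of $2n^{-(D+1)}$).
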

\begin{proof}
    Fix $\delta>0$ and $D>0$. From $U_n\stochdom V_n$ and $U_n'\stochdom V_n'$, by definition of stochastic domination,
    for any choice of $\delta>0$ and $D>0$ there exists $N(\delta,D)$ such that for all $n\geq N(\delta,D)$,
    \[
    \Prob\!\left(U_n>n^{\delta}V_n\right)\leq n^{-D}
    \quad\text{and}\quad
    \Prob\!\left(U_n'>n^{\delta}V_n'\right)\leq n^{-D}.
    \]

    \smallskip
    \noindent\textit{Additivity.} Applying the union-bound, we have,
    for $n \geq N(\delta, D+1)$,
    \[
    \Prob\!\left(U_n+U_n'>n^\delta(V_n+V_n')\right)
    \;\leq\;
    \Prob\!\left(U_n>n^{\delta}V_n\right)+\Prob\!\left(U_n'>n^{\delta}V_n'\right)
    \;\leq\; 2n^{-D+1}\leq n^{-D},
    \]
    where the last inequality holds for sufficiently large $n$.
    Thus $U_n+U_n'\stochdom V_n+V_n'$.

    \smallskip
    \noindent\textit{Multiplicativity.}
    Similarly as in the additive case, we have thanks to the union bound,
    \[
    \Prob\!\left(U_nU_n'>n^\delta V_nV_n'\right)
    \leq
    \Prob\!\left(U_n>n^{\delta/2}V_n\right)+\Prob\!\left(U_n'>n^{\delta/2}V_n'\right)
    \leq 2n^{-(D+1)}\leq n^{-D},
    \]
    for all sufficiently large $n$. Hence $U_nU_n'\stochdom V_nV_n'$.

    \smallskip
    Both claims hold for arbitrary $\delta>0$ and $D>0$, which completes the proof.
\end{proof}

We now clarify the connection between stochastic domination and classical concentration of measure results. 
To this end, introduce the following definition, which can be seen as an extension of the definition of a sub-Gaussian random variable to sequences 
of random variables, such as in \Cref{def:stochastic_domination}.

\begin{definition}
    We say that a sequence of real random variables $(U_n)_{n\in\N}$ is a sub-Gaussian sequence with parameters $(\sigma_n)_{n\in\N}$ if for there exists $\const \geq 0 $ and $\lambda_0 >0$ such that for any $n\in\N$ and $\lambda \leq \lambda_0$, $\E[\rme^{\lambda U_n}] \leq \exp(\const \lambda^2 \sigma_n^2)$.
    If $(U_n)_{n\in\N}$ is complex valued, we say similarly that it is a sub-Gaussian sequence with parameters $(\sigma_n)_{n\in\N}$ if its real and imaginary parts are sub-Gaussian sequences  with parameters $(\sigma_n)_{n\in\N}$.
\end{definition}

We observe that several common tail bounds imply stochastic domination in the appropriate regimes.

\begin{lemma}[Stochastic domination from sub-Gaussian and sub-Exponential concentration]
    \label{lemma:Concentration_to_stochdom}
    Let $(U_n)_n$ be a sequence of complex random variables. Suppose there exist constants $\const_1, \const_2, \alpha > 0$ and a sequence $(\sigma_n)_n$ of positive real numbers such that, for all $t > 0$,
    \begin{equation}
        \Prob\left(|U_n| > t\right) \leq \const_1 \exp\left(-\const_2 \dfrac{t^\alpha}{\sigma_n^\alpha}\right)\eqsp.
    \end{equation}
    Then, it follows that
    \begin{equation}
        |U_n| \stochdom \sigma_n \eqsp.
    \end{equation}
\end{lemma}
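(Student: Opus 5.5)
The plan is to unwind \Cref{def:stochastic_domination} directly: fix $\varepsilon, D > 0$ and bound $\Prob(|U_n| \geq n^{\varepsilon}\sigma_n)$ with the assumed tail bound at the threshold $t = n^{\varepsilon}\sigma_n$. This threshold is positive because $\sigma_n > 0$. The hypothesis then gives
\begin{equation}
    \Prob\left(|U_n| \geq n^{\varepsilon}\sigma_n\right) \leq \const_1 \exp\left(-\const_2 n^{\alpha\varepsilon}\right) \eqsp.
\end{equation}
The hypothesis is stated with a strict inequality. To handle the non-strict event $\{|U_n| \geq n^\varepsilon \sigma_n\}$, I will apply the bound at $t = n^{\varepsilon}\sigma_n/2$, which only rescales $\const_2$ by $2^{-\alpha}$.

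The key step is to compare the stretched-exponential decay with the polynomial target $n^{-D}$. Since $\alpha\varepsilon > 0$, we have $\const_2 n^{\alpha\varepsilon} - D\log n - \log \const_1 \to \infty$. Hence there is $N(\varepsilon, D)$ such that $\const_1 \exp(-\const_2 2^{-\alpha} n^{\alpha\varepsilon}) \leq n^{-D}$ for all $n \geq N(\varepsilon,D)$. This is exactly the requirement of \Cref{def:stochastic_domination}.

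The constants $\const_1, \const_2, \alpha$ are uniform in $n$ by assumption, so $N$ depends only on $(\varepsilon, D)$ and no uniformity issue arises. The complex-valued case needs no separate treatment, because only the modulus $|U_n|$ enters.

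There is no real obstacle here. The only point needing care is the strict versus non-strict inequality at the threshold, which the factor-$2$ rescaling resolves.
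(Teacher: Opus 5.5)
Your proposal is correct and follows the paper's proof essentially verbatim: you apply the tail bound at $t = n^{\varepsilon}\sigma_n$ and observe that $\const_1\exp(-\const_2 n^{\alpha\varepsilon})$ eventually falls below $n^{-D}$. Your rescaling to $t = n^{\varepsilon}\sigma_n/2$, which reconciles the strict inequality in the hypothesis with the non-strict one in \Cref{def:stochastic_domination}, is a small piece of extra care that the paper's proof skips.
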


\begin{proof}
    Let $\varepsilon, D > 0$. For all $n \geq 1$, we may write
    \begin{align}
        \Prob\left(|X_n| > n^\varepsilon \sigma_n\right) \leq \const_1 \exp\left(-\const_2 \dfrac{(n^{\varepsilon}\sigma_n)^\alpha}{\sigma_n^{\alpha}}\right) = \const_1 \exp\left(-\const_2 n^{\alpha \varepsilon}\right).
    \end{align}
    The exponential decay ensures that, for sufficiently large $n$, $\const_1 \exp\left(-\const_2 n^{\alpha \varepsilon}\right)\leq n^{-D}$. Therefore, the conclusion follows by the definition of stochastic domination.
\end{proof}

To conclude this subsection, we provide a version of the Hanson-Wright inequality in the framework of stochastic domination.
To this end, we first recall the usual statement of the Hanson-Wright inequality:
\begin{lemma}[Hanson-Wright inequality]
    \label{lemma:HansonWright}
    Let $(Z_n)_n$ be a sequence of centered $d$ dimensional random vector such that for all $n \in \nset$, $Z_n$ is $\sigma_n$-Lipschitz concentrated.
    For a sequence of deterministic matrix $(\Abf_n)_n$,
    it holds for all $t > 0$,
    \begin{equation} \label{eq:Vanilla_HansonWright}
        \Prob\left(\left|Z_n^\top \Abf_n Z_n - \E[Z_n^\top \Abf_n Z_n]\right| > t\right) \leq 2\exp\left(-c \min\left\{\dfrac{t^2}{\sigma_n^4 \|\Abf_n\|_F^2}, \dfrac{t}{\sigma_n^2 \|\Abf_n\|_\op}\right\}\right) \eqsp,
    \end{equation}
    Stated as a stochastic domination result, we have,
    \begin{equation}
        \left|Z_n^\top \Abf_n Z_n - \E[Z_n^\top \Abf_n Z_n]\right| \stochdom \sigma_n^2 \|\Abf_n\|_F \eqsp.
    \end{equation}
    And, the moments of such quadratic forms are bounded by,
    \begin{equation}
        \E\left[|Z_n^\top \Abf_n Z_n - \E[Z_n^\top \Abf_n Z_n]|^k\right] \leq \const_k \sigma_n^{2k} \|\Abf_n\|_F^k \eqsp.
    \end{equation}
    Where $c, \eqsp (c_k)_{k\in \N} > 0$ are universal constants, that don't depend on $n$, $Z_n$ nor $\Abf_n$.
\end{lemma}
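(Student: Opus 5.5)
The tail bound \eqref{eq:Vanilla_HansonWright} for Lipschitz concentrated vectors is a known result; I take it from \cite{Adamczak2015HansonWright}, which proves Hanson--Wright under the convex concentration property, a property implied by Lipschitz concentration with the same parameter. The plan is therefore to (i) normalise $\sigma_n$ to $1$, (ii) derive the stochastic domination statement from the tail bound, and (iii) integrate the tail bound to get the moment bound.

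\textbf{Normalisation.} Set $W_n = Z_n/\sigma_n$. Then $W_n$ is $1$-Lipschitz concentrated. Moreover, $Z_n^\top \Abf_n Z_n - \E[Z_n^\top\Abf_n Z_n] = \sigma_n^2\,(W_n^\top \Abf_n W_n - \E[W_n^\top \Abf_n W_n])$. Both sides of \eqref{eq:Vanilla_HansonWright} are invariant under this rescaling, so it suffices to treat $\sigma_n = 1$. For a non-symmetric $\Abf_n$, I replace it by its symmetric part. This leaves the quadratic form unchanged and does not increase either the Frobenius or the operator norm.

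\textbf{Stochastic domination.} Write $U_n$ for the centred quadratic form. Fix $\varepsilon, D>0$ and take $t = n^{\varepsilon}\sigma_n^2\|\Abf_n\|_F$. Since $\|\Abf_n\|_\op \le \|\Abf_n\|_F$, the two terms in the minimum satisfy
\[
\frac{t^2}{\sigma_n^4\|\Abf_n\|_F^2} = n^{2\varepsilon}, \qquad \frac{t}{\sigma_n^2\|\Abf_n\|_\op} \ge n^{\varepsilon}.
\]
Hence $\Prob(|U_n| > n^\varepsilon \sigma_n^2\|\Abf_n\|_F) \le 2\exp(-c\,n^{\varepsilon})$, which is at most $n^{-D}$ for $n$ large. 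This is exactly the argument of \Cref{lemma:Concentration_to_stochdom}. That lemma applies directly with $\alpha = 1$: the tail bound implies the sub-exponential bound $2\exp(-c\,t/(\sigma_n^2\|\Abf_n\|_F))$ whenever $t \ge \sigma_n^2\|\Abf_n\|_F$, and for smaller $t$ the bound is trivial after enlarging $\const_1$.

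\textbf{Moments.} With $s = \sigma_n^2\|\Abf_n\|_F$, use
\[
\E|U_n|^k = \int_0^\infty k t^{k-1}\,\Prob(|U_n|>t)\,dt
\]
and the tail bound $\Prob(|U_n|>t) \le 2\exp(-c\min\{t^2/s^2,\, t/s\})$. The latter again uses $\|\Abf_n\|_\op\le\|\Abf_n\|_F$. Substituting $t = s u$ gives
\[
\E|U_n|^k \le 2k\,s^k \int_0^\infty u^{k-1}\left(e^{-cu^2} + e^{-cu}\right) du = \const_k\, s^k,
\]
where $\const_k$ is a Gamma-function constant depending only on $k$ and $c$.

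\textbf{Main obstacle.} The only delicate point is justifying the base inequality \eqref{eq:Vanilla_HansonWright} for dependent coordinates, i.e.\ without independent entries. I would not reprove it but cite Adamczak's theorem, checking only that the constants it produces are universal once $\sigma_n$ has been normalised to $1$.
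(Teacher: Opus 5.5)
Your proposal is correct and follows the paper's proof: you cite Adamczak for the tail bound, deduce the stochastic domination from it, and integrate the tail using $\|\Abf_n\|_\op \le \|\Abf_n\|_F$ to get a Gamma-function moment constant. Your explicit reduction of the two-regime tail to a single sub-exponential tail (enlarging $\const_1$ when $t<\sigma_n^2\|\Abf_n\|_F$) is a useful detail that the paper leaves implicit when it invokes \Cref{lemma:Concentration_to_stochdom}.
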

\begin{proof}
    The first part of the theorem is a direct application of Theorem 2.4. of \cite{Adamczak2015HansonWright}.
    The second part follows from \Cref{lemma:Concentration_to_stochdom}.
    We prove the third part by writting the moments in integral form as, and using \eqref{eq:Vanilla_HansonWright},
    \begin{align}
        \E\left[|Z_n^\top \Abf_n Z_n - \E[Z_n^\top \Abf_n Z_n]|^k\right] &= \int_0^\infty k t^{k-1} \Prob\left(\left|Z_n^\top \Abf_n Z_n - \E[Z_n^\top \Abf_n Z_n]\right| > t\right) dt \\
        &\leq 2k \int_0^\infty t^{k-1} \exp\left(-c \min\left\{\dfrac{t^2}{\sigma_n^4 \|\Abf_n\|_F^2}, \dfrac{t}{\sigma_n^2 \|\Abf_n\|_\op}\right\}\right) dt \eqsp.
    \end{align}
    Let $u = t/(\sigma_n^2 \|\Abf_n\|_F)$ and $v = \|\Abf_n\|_\op/\|\Abf_n\|_F$. The integral becomes:
    \begin{align}
        &\leq 2k (\sigma_n^2 \|\Abf_n\|_F)^k \int_0^\infty u^{k-1} \exp\left(-c \min\left\{u^2, \dfrac{u}{v}\right\}\right) du \eqsp.
    \end{align}
    Splitting the integral into two parts, we have:
    \begin{align}
        \int_0^\infty u^{k-1} e^{-c \min\{u^2, u/v\}} du &\leq \int_0^\infty u^{k-1} e^{-c u^2} du + \int_0^\infty u^{k-1} e^{-c u/v} du = \dfrac{\Gamma(k/2)}{2c^{k/2}} + \dfrac{\Gamma(k)}{c^k} \eqsp.
    \end{align}
    Where we have used $v < 1$. We can conclude that:
    \begin{align}
        \E\left[|Z_n^\top \Abf_n Z_n - \E[Z_n^\top \Abf_n Z_n]|^k\right] &\leq \const_k \sigma_n^{2k} \|\Abf_n\|_F^k \eqsp,
    \end{align}
    where $\const_k = 2k \left(\Gamma(k/2) / (2c^{k/2}) + \Gamma(k)/(c^k)\right)$.
\end{proof}

In what follows, it will also be necessary to address the scenario in which $\Abf_n$ is a random matrix that is independent of $X_n$.
Specifically, we establish the following result:

\begin{corollary}[Concentration of random quadratic forms]
    \label{corollary:randomHansonWright}
    Consider a sequence of centered $d$-dimensional random vectors $(Z_n)_{n \in \mathbb{N}}$ that are $\sigma_n$-Lipschitz concentrated,
    and a sequence of random matrices $(A_n)_{n \in \mathbb{N}}$ that are $\kappa_n$-Lipschitz concentrated and independent of $(Z_n)_{n \in \mathbb{N}}$.
    Then, the deviation from its expectation is stochastically dominated by,
    \begin{equation} \label{eq:randomHansonWright-deviation}
        |Z_n^\top A_n Z_n - \E\left[Z_n^\top A_n Z_n\right]| \stochdom \sigma_n^2 \|A_n\|_\Frob + \kappa_n \|\E\bigl[Z_n Z_n^\top\bigr]\|_\Frob \eqsp.
    \end{equation}
    And for $k \in \N$, its $k$-th moment satisfies
    \begin{equation}
        \E\left[\left|Z_n^\top A_n Z_n - \E\left[Z_n^\top A_n Z_n\right]\right|^k\right] \leq \const_k \left(\sigma_n^{2k} \E\left[\|A_n\|_\Frob^k\right] + \kappa_n^k \|\E\bigl[Z_n Z_n^\top\bigr]\|_\Frob^k\right) \eqsp,
    \end{equation}
    where $(\const_k)_{k\in \N} > 0$ are universal constants.
    \end{corollary}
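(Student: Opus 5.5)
The plan is to condition on $A_n$ and split the deviation into two pieces. Write
\begin{align}
    Z_n^\top A_n Z_n - \E\left[Z_n^\top A_n Z_n\right]
    = \underbrace{Z_n^\top A_n Z_n - \tr\bigl(A_n \E[Z_nZ_n^\top]\bigr)}_{T_1}
    + \underbrace{\tr\bigl(A_n \E[Z_nZ_n^\top]\bigr) - \E\bigl[\tr\bigl(A_n \E[Z_nZ_n^\top]\bigr)\bigr]}_{T_2} \eqsp.
\end{align}
The decomposition is valid because independence of $Z_n$ and $A_n$ gives $\E[Z_n^\top A_n Z_n \mid A_n] = \tr(A_n \E[Z_nZ_n^\top])$, so taking expectations of this conditional mean recovers $\E[Z_n^\top A_n Z_n]$. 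The term $T_1$ is a Hanson--Wright fluctuation with a frozen matrix. The term $T_2$ is a Lipschitz functional of $A_n$ alone.

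\textbf{Step 1 (the term $T_1$).} Conditionally on $A_n$, the vector $Z_n$ is still centered and $\sigma_n$-Lipschitz concentrated, by independence. Applying \eqref{eq:Vanilla_HansonWright} of \Cref{lemma:HansonWright} with the deterministic matrix $A_n$ gives
\begin{equation}
    \Prob\bigl(|T_1|>t \mid A_n\bigr)\le 2\exp\Bigl(-c\min\bigl\{t^2/(\sigma_n^4\|A_n\|_\Frob^2),\, t/(\sigma_n^2\|A_n\|_\op)\bigr\}\Bigr)\eqsp.
\end{equation}
Take $t=n^\varepsilon\sigma_n^2\|A_n\|_\Frob$ and use $\|A_n\|_\op\le\|A_n\|_\Frob$. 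The right-hand side is then at most $2\exp(-c\,n^{\varepsilon})$, uniformly in $A_n$. Integrating over $A_n$, this yields $|T_1|\stochdom \sigma_n^2\|A_n\|_\Frob$ with a random dominating sequence, which is allowed by \Cref{def:stochastic_domination}. For the $k$-th moment, the conditional moment bound in \Cref{lemma:HansonWright} gives $\E[|T_1|^k\mid A_n]\le \const_k\sigma_n^{2k}\|A_n\|_\Frob^k$. Taking expectations gives the first term of the moment bound.

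\textbf{Step 2 (the term $T_2$).} Set $S=\E[Z_nZ_n^\top]$. The map $A\mapsto \tr(AS)=\langle A,S\rangle_\Frob$ is $\|S\|_\Frob$-Lipschitz for the Frobenius norm. Since $A_n$ is $\kappa_n$-Lipschitz concentrated, $T_2$ is centered and $\kappa_n\|S\|_\Frob$-sub-Gaussian. \Cref{lemma:Concentration_to_stochdom} then gives $|T_2|\stochdom \kappa_n\|S\|_\Frob$. The sub-Gaussian moment bound gives $\E|T_2|^k\le \const_k\kappa_n^k\|S\|_\Frob^k$.

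\textbf{Step 3 (combining).} Apply \Cref{lemma:additivity_multiplicativity_stochdom} to the two stochastic dominations. For the moments, use $|a+b|^k\le 2^{k-1}(|a|^k+|b|^k)$.

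\textbf{Main obstacle.} The only delicate point is Step 1, where the dominating quantity $\|A_n\|_\Frob$ is random. One must check that the conditional tail bound is uniform in the realisation of $A_n$, so that integrating preserves the $n^{-D}$ rate; this uniformity holds precisely because $\|A_n\|_\op/\|A_n\|_\Frob\le1$. If one instead wanted a deterministic bound, one would additionally show $\|A_n\|_\Frob\stochdom \E\|A_n\|_\Frob+\kappa_n$ using Lipschitz concentration of the Frobenius norm.
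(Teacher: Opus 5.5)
Your proof is correct and follows the paper's argument essentially verbatim. You use the same split into a Hanson--Wright fluctuation conditional on $A_n$ and the $\|\E[Z_nZ_n^\top]\|_\Frob$-Lipschitz functional $A\mapsto\tr(A\,\E[Z_nZ_n^\top])$ of $A_n$, combined through additivity of $\stochdom$ for the domination and $(a+b)^k\le 2^{k-1}(a^k+b^k)$ for the moments; your remark that the conditional tail bound $2\exp(-c\,n^{\varepsilon})$ is uniform in the realisation of $A_n$, because $\|A_n\|_\op\le\|A_n\|_\Frob$, spells out more carefully a step the paper states only briefly.
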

\begin{proof}
    We establish the result by decomposing the deviation into two components and analyzing each separately.
    First, we write the triangular inequality:
    \begin{equation} \label{eq:randomHansonWright-triangular-inequality}
        |Z_n^\top A_n Z_n - \E\left[Z_n^\top A_n Z_n\right]| \leq |Z_n^\top A_n Z_n - \E\left[Z_n^\top A_n Z_n \mid A_n\right]| + |\E\left[Z_n^\top A_n Z_n \mid A_n\right] - \E\left[Z_n^\top A_n Z_n\right]| \eqsp.
    \end{equation}
    By applying \Cref{lemma:HansonWright} conditionally on $A_n$, we obtain the high probability bound:
    \begin{equation}
        \Prob\left(|Z_n^\top A_n Z_n - \E\left[Z_n^\top A_n Z_n \mid A_n\right]| \geq t \mid A_n\right) \leq 2 \exp\left(- \const \min\left\{\dfrac{t^2}{2 \sigma_n^4 \|A_n\|_\Frob^2}, \dfrac{t}{\sigma_n\|A_n\|_\op}\right\}\right) \eqsp.
    \end{equation}
    Since $\|A_n\|_\op \leq \|A_n\|_\Frob$ almost surely, it follows from the definition of stochastic domination that:
    \begin{equation}
        |Z_n^\top A_n Z_n - \E\left[Z_n^\top A_n Z_n \mid A_n\right]| \stochdom \sigma_n^2 \|A_n\|_\Frob \eqsp.
    \end{equation}
    Now turning to the second term in \eqref{eq:randomHansonWright-triangular-inequality}, we express
    the conditional expectation $\E\left[Z_n^\top A_n Z_n \mid A_n\right]$ as a function of $A_n$.
    Specifically, defining $f_n : \Abf \mapsto \tr\bigl(\E\bigl[Z_n Z_n^\top\bigr] \Abf\bigr)$, we get,
    \begin{align}
        |\E\left[Z_n^\top A_n Z_n \mid A_n\right] - \E\left[Z_n^\top A_n Z_n\right]| = |f_n(A_n) - \E\left[f_n(A_n)\right]| \eqsp.
    \end{align}
    Since $f_n$ is $\|\E\bigl[Z_n Z_n^\top\bigr]\|_\Frob$-Lipschitz, and $A_n$ is $\kappa_n$-Lipschitz concentrated, thus from \Cref{lemma:Concentration_to_stochdom} we get,
    \begin{equation}
        |f_n(A_n) - \E\left[f_n(A_n)\right]| \stochdom \kappa_n \|\E\bigl[Z_n Z_n^\top\bigr]\|_\Frob \eqsp.
    \end{equation}
    The first statement of the corollary follows from the additivity property of stochastic domination (\Cref{lemma:additivity_multiplicativity_stochdom}).
    For the second statement, use $(a+b)^k \leq 2^{k-1} (a^k + b^k)$ and take the expectation in \eqref{eq:randomHansonWright-triangular-inequality}, to get,
    \begin{align}
        \E\left[\left|Z_n^\top A_n Z_n - \E\left[Z_n^\top A_n Z_n\right]\right|^k\right] \leq 2^{k-1} \bigg(&\E\left[\E\left[\left|Z_n^\top A_n Z_n - \E\left[Z_n^\top A_n Z_n \mid A_n\right] \right|^k \mid A_n\right]\right] \\
        &+ \E\left[\left|f_n(A_n) - \E\left[f_n(A_n)\right]\right|^k\right]\bigg) \eqsp.
    \end{align}
    The previous function $f_n$ being $\|\E\left[Z_n Z_n^\top\right]\|_\Frob$-Lipschitz, we get from the Lipschitz concentration assumption on $A_n$, as well as the well-known moment bound on sub-Gaussian random variables that:
    \begin{equation} \label{eq:randomHansonWright-var-A}
        \E\left[\left|f_n(A_n) - \E\left[f_n(A_n)\right]\right|^k\right] \leq \const_k \kappa_n^k \|\E\left[Z_n Z_n^\top\right]\|_\Frob^k \eqsp.
    \end{equation}
    Moreover, applying the moment bound of \Cref{lemma:HansonWright} conditionally on $A_n$ yields the almost sure bound,
    \begin{equation}
        \E\left[\left|Z_n^\top A_n Z_n - \E\left[Z_n^\top A_n Z_n \mid A_n\right] \right|^k \mid A_n\right] \leq \const_k \sigma_n^{2k} \|A_n\|_\Frob^k \eqsp,
    \end{equation}
    for some large enough universal constants $(\const_k)_{k\in \N} > 0$. The second statement follows.
\end{proof}

\subsection{Concentration through a non Lipschitz mapping} \label{sec:technical_preliminaries-subsection3}

This final subsection is dedicated to showing that random variables satisfying \Cref{def:lip_concentration} preserve satisfying concentration properties when composed with a non Lipschitz mapping.

More precisely, we will consider a differentiable map $f:\mathbb{R}^k\to\mathbb{R}$, and notice that on any compact set $\msk\subset\mathbb{R}^k$, the function $f_{ \mid \msk}$ is Lipschitz.
The concentration of $f(Z)$ will then follow from finding a trade-off between the grows of the Lipschitz constant of $f_{\mid \msk}$ and the probability of the event $\{Z \notin \msk\}$.
We will denote $\Ltt_{\msk}^f$ the Lipschitz constant of $f$ on $\msk$:
\begin{equation}
  \label{eq:def_const_loc_lip}
  \Ltt_{\msk}^f :=\ \sup_{(x,y)\in\msk^2} \{|f(x) - f(y)|/\|x-y\|_2\}\eqsp.
\end{equation}
In addition, if $\Ltt_{\msk}^f < \plusinfty$, consider the Moreau envelop $g_{\msk}^f$ of and the Moreau residual $\delta_{\msk}^f$ defined as,
\begin{equation} \label{eq:def_moreau_envelop}
    g_{\msk}^f(z) = \inf_{x \in \msk} \{f(x) + \Ltt_{\msk}^{f} \|x - z\|_2\}\eqsp, \quad \text{and} \quad \delta_{\msk}^f(Z) \ :=\ \E\big[\,|f(Z)-g_{\msk}^f(Z)|\,\big] \eqsp.
\end{equation}
If $\Ltt_{\msk}^f = \plusinfty$, we use the convention $\delta_{\msk}^f= +\infty$.

\begin{lemma}
    \label{lemma:MoreauExtension}
    Let $f :\rset^k \to \C$ and $\msk \subset \rset^k$ be a
    compact set.  Assume that $\Ltt_{\msk}^f <\plusinfty$. Then,
    $g_{\msk}^f$ defined in \eqref{eq:def_moreau_envelop} is a Lipschitz extension of $f$ restricted to
    $\msk$, i.e., it is a Lipschitz function from $\rset^k$ to
    $\rset$ such that for any $x\in\msk$, $f(x) = g_{\msk}^f(x)$.
    Furthermore, for any random variable $Z \in \R^k$, and any $z_0 \in \msk$, it holds,
    \begin{align}
        \delta_{\msk}^f(Z) \leq \left(\E\left[|f(Z)|^2\right]^{1/2} +|f(z_0)| + \Ltt_\msk^f \E\left[\|Z - z_0\|_2^2\right]^{1/2} \right) \Prob\left(Z \notin \msk\right)^{1/2}
    \end{align}
\end{lemma}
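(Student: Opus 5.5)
The argument has two parts: the McShane extension property of $g_{\msk}^f$, and a Cauchy--Schwarz estimate for $\delta_{\msk}^f(Z)$ that uses only the fact that $f$ and $g_{\msk}^f$ coincide on $\msk$. I first treat $f$ as real-valued, since the infimum in \eqref{eq:def_moreau_envelop} only makes sense for real values. For complex $f$ I would apply the construction separately to $\mathrm{Re} f$ and $\mathrm{Im} f$. Each of these is $\Ltt_\msk^f$-Lipschitz on $\msk$, and the estimates below then hold up to a universal constant factor.

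\emph{Extension property.} Fix $z\in\rset^k$. Each map $z\mapsto f(x)+\Ltt_\msk^f\|x-z\|_2$, for $x\in\msk$, is $\Ltt_\msk^f$-Lipschitz. The infimum is finite, because $f$ is continuous on the compact set $\msk$ and hence bounded there. An infimum of a family of $\Ltt_\msk^f$-Lipschitz functions that is finite everywhere is again $\Ltt_\msk^f$-Lipschitz, so $g_{\msk}^f$ is Lipschitz on $\rset^k$. Now take $z\in\msk$. Choosing $x=z$ gives $g_{\msk}^f(z)\le f(z)$. Conversely, for every $x\in\msk$ the definition of $\Ltt_\msk^f$ in \eqref{eq:def_const_loc_lip} gives $f(x)+\Ltt_\msk^f\|x-z\|_2\ge f(z)$, so $g_{\msk}^f(z)\ge f(z)$. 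Hence $g_{\msk}^f=f$ on $\msk$.

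\emph{Residual bound.} Since $f-g_{\msk}^f$ vanishes on $\msk$, we have $|f(Z)-g_{\msk}^f(Z)|=|f(Z)-g_{\msk}^f(Z)|\,\1\{Z\notin\msk\}$. Cauchy--Schwarz then gives
\[
\delta_{\msk}^f(Z)\le \E\bigl[|f(Z)-g_{\msk}^f(Z)|^2\bigr]^{1/2}\,\Prob(Z\notin\msk)^{1/2}.
\]
By Minkowski's inequality, $\E[|f(Z)-g_{\msk}^f(Z)|^2]^{1/2}\le \E[|f(Z)|^2]^{1/2}+\E[|g_{\msk}^f(Z)|^2]^{1/2}$. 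Using the Lipschitz property of $g_{\msk}^f$ together with $g_{\msk}^f(z_0)=f(z_0)$ for $z_0\in\msk$, we get the pointwise bound $|g_{\msk}^f(Z)|\le |f(z_0)|+\Ltt_\msk^f\|Z-z_0\|_2$. One more application of Minkowski yields
\[
\E\bigl[|g_{\msk}^f(Z)|^2\bigr]^{1/2}\le |f(z_0)|+\Ltt_\msk^f\,\E\bigl[\|Z-z_0\|_2^2\bigr]^{1/2}.
\]
Combining these inequalities gives the claimed bound.

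There is no serious obstacle in this argument. The only points needing care are the finiteness of the infimum, which is where compactness of $\msk$ enters, and the complex-valued case. If any of the second moments on the right-hand side is infinite, the bound holds trivially.
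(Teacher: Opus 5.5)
Your proposal is correct and follows essentially the paper's argument: the same inequality $f(x)+\Ltt_\msk^f\|z_1-x\|_2\le f(x)+\Ltt_\msk^f\|z_2-x\|_2+\Ltt_\msk^f\|z_1-z_2\|_2$ gives the extension property, and Cauchy--Schwarz on the event $\{Z\notin\msk\}$ gives the residual bound. The only difference is ordering: you apply Cauchy--Schwarz once to $|f(Z)-g_{\msk}^f(Z)|$ and then split with Minkowski, whereas the paper splits first and applies Cauchy--Schwarz termwise, obtaining $|f(z_0)|\Prob(Z\notin\msk)$ and then relaxing it to $|f(z_0)|\Prob(Z\notin\msk)^{1/2}$. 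Your remarks on finiteness of the infimum and on the complex-valued case (with a harmless constant factor) cover points the paper's proof leaves implicit.
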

\begin{proof}
    Fix $z_1,z_2\in\R^k$ and any $x\in\msk$. By the triangle inequality,
    \[
    f(x)+\Ltt_\msk^f\|z_1-x\|
    \;\leq\; f(x)+\Ltt_\msk^f\big(\|z_2-x\|+\|z_1-z_2\|\big)
    \;=\; \big(f(x)+\Ltt_\msk^f\|z_2-x\|\big)+\Ltt_\msk^f\|z_1-z_2\|.
    \]
    Taking the infimum over $x\in\msk$ yields
    \[
    g_{\msk}^f(z_1)\;\leq\; g_{\msk}^f(z_2)+\Ltt_\msk^f\|z_1-z_2\|.
    \]
    Exchanging the roles of $z_1$ and $z_2$ gives
    \[
    |g_{\msk}^f(z_1)-g_{\msk}^f(z_2)|\;\leq\;\Ltt_\msk^f\|z_1-z_2\|,
    \]
    so $g_{\msk}^f$ is $\Ltt_\msk^f$-Lipschitz on $\R^k$. We now show that $g_{\msk}^f$ coincides with $f$ on $\msk$.
    Let $z \in \msk$, since $f_{\mid \msk}$ is $\Ltt^f_\msk$-Lipschitz, we have for any $x \in \msk$,
    \begin{align}
        f(z) \leq f(x) + \Ltt_\msk^f \|z - x\|_2 \eqsp.
    \end{align}
    taking the infimum over $x \in \msk$ yields $f(z) \leq g_{\msk}^f(z)$.
    Furthermore, taking $x = z$ in the definition of $g_{\msk}^f(z)$ gives $g_{\msk}^f(z) \leq f(z)$, proving the equality.
    Finally, let $Z \in \R^k$ to be any random vector, without loss of generality (and up to replacing $\msk$ by $\msk - z_0$ and $f(\cdot)$ by $f(\cdot + z_0)$), we assume $z_0=0$.
    Leveraging that $f - g_\msk^f$ is zero on $\msk$, we write,
    \begin{align}
        \delta_\msk^f(Z) = \E\left[|f(Z)-g_{\msk}^f(Z)| \1_{\msk^\complement}(Z)\right] \leq \E\left[|f(Z)|\1_{\msk^\complement}(Z)\right] + \E\left[|g_\msk^f(Z)|\1_{\msk^\complement}(Z)\right]
    \end{align}
    We now bound the two terms in the right-hand side. For the first term, we have by the Cauchy-Schwarz inequality,
    \begin{align}
        \E\left[|f(Z)|\1_{\msk^\complement}(Z)\right] & \leq \E\left[|f(Z)|^2\right]^{1/2} \Prob(Z \notin \msk)^{1/2} \eqsp.
    \end{align}
    The second term can be bounded thanks to the Lipschitz property of $g_{\msk}^f$,
    \begin{align}
        \E\left[|g_\msk^f(Z)|\1_{\msk}(Z)\right] & \leq |g_\msk^f(0)| \Prob(Z \notin \msk) + \Ltt_\msk^f \E\left[\|Z\|_2 \1_{\msk^\complement}(Z)\right] \\
        & \leq |f(0)| \Prob(Z \notin \msk) + \Ltt_\msk^f \E\left[\|Z\|_2^2\right]^{1/2} \Prob(Z \notin \msk)^{1/2} \eqsp,
    \end{align}
    where the last inequality followed from the Cauchy-Schwarz inequality, as well as the fact that $0 \in \msk$.
    The claimed upper bound follows.
\end{proof}

\begin{lemma}[Concentration through locally Lipschitz maps, all $t>0$]
    \label{lemma:concentration_lipschitz_maps_all_t}
    Let $Z\in\mathbb{R}^k$ be zero-mean and $\kappa$–Lipschitz concentrated random vector, and a compact set $\msk \subset \R^k$.
    We have for every $t>0$,
    \begin{equation}\label{eq:all-t}
        \Prob\Big(\,|f(Z)-\E f(Z)|\geq t\,\Big)
        \ \leq\ 2\exp\Big(-\,\frac{t^2}{8\,\kappa^2 (\Ltt_{\msk}^f)^2}\Big)\;+\;\frac{4\,\delta_\msk^f(Z)}{t}.
    \end{equation}
\end{lemma}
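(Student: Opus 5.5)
Write $g=g_{\msk}^f$ for the Moreau envelope of \eqref{eq:def_moreau_envelop}. The plan is to split $f(Z)-\E f(Z)$ into a Lipschitz part and a residual:
\begin{align}
f(Z)-\E f(Z) = \{g(Z)-\E g(Z)\} + \{f(Z)-g(Z)\} - \E\{f(Z)-g(Z)\} \eqsp.
\end{align}
If $\Ltt_\msk^f=\plusinfty$, then $\delta_\msk^f=+\infty$ and there is nothing to prove, so we assume $\Ltt_\msk^f<\plusinfty$. By \Cref{lemma:MoreauExtension}, $g$ is $\Ltt_\msk^f$-Lipschitz on $\R^k$. The last term satisfies $|\E\{f(Z)-g(Z)\}|\le \delta_\msk^f(Z)$ deterministically.

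On the event $\{|f(Z)-\E f(Z)|\ge t\}$, at least one of the following holds: $|g(Z)-\E g(Z)|\ge t/2$, or $|f(Z)-g(Z)|+\delta_\msk^f(Z)\ge t/2$. This gives the union bound
\begin{align}
\Prob(|f(Z)-\E f(Z)|\ge t)\le \Prob(|g(Z)-\E g(Z)|\ge t/2) + \Prob\big(|f(Z)-g(Z)|\ge t/2-\delta_\msk^f(Z)\big)\eqsp.
\end{align}

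\emph{First term.} The function $g/\Ltt_\msk^f$ is $1$-Lipschitz, so by \Cref{def:lip_concentration} $g(Z)$ is $\kappa\Ltt_\msk^f$-sub-Gaussian. Hence
\begin{align}
\Prob(|g(Z)-\E g(Z)|\ge t/2)\le 2\exp\{-t^2/(8\kappa^2(\Ltt_\msk^f)^2)\}\eqsp,
\end{align}
which matches the Gaussian term in \eqref{eq:all-t}.

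\emph{Second term, when $\delta_\msk^f(Z)\le t/4$.} Then $t/2-\delta_\msk^f(Z)\ge t/4$, and Markov's inequality gives
\begin{align}
\Prob(|f(Z)-g(Z)|\ge t/4)\le 4\,\E|f(Z)-g(Z)|/t = 4\delta_\msk^f(Z)/t\eqsp.
\end{align}

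\emph{Second term, when $\delta_\msk^f(Z)>t/4$.} Then $4\delta_\msk^f(Z)/t>1$, so the right-hand side of \eqref{eq:all-t} exceeds $1$ and the bound holds trivially.

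Combining the two cases yields \eqref{eq:all-t} for every $t>0$.

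The only delicate point is the constant $8$ in the exponent. It depends on the normalization of ``$\kappa$-sub-Gaussian'': we need the tail bound $\Prob(|X-\E X|\ge s)\le 2\exp(-s^2/(2\kappa^2))$, as used in \eqref{eq:fixed-pair-tail}. Substituting $s=t/2$ then gives exactly $8\kappa^2(\Ltt_\msk^f)^2$. If the convention in force differs, that constant must be adjusted, but the argument itself is unchanged.
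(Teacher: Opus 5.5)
Your proof is correct and follows essentially the paper's argument. It uses the same split $f = g_{\msk}^f + (f - g_{\msk}^f)$ at level $t/2$, sub-Gaussian concentration of the $\Ltt_\msk^f$-Lipschitz Moreau envelope for the exponential term, and Markov's inequality for the residual. The only difference is minor: the paper applies Markov directly to the centered residual $|\Delta(Z) - \E\Delta(Z)|$, with $\Delta = f - g_{\msk}^f$ and $\E|\Delta(Z) - \E\Delta(Z)| \le 2\delta_\msk^f(Z)$, which gives $4\delta_\msk^f(Z)/t$ without your case split on whether $\delta_\msk^f(Z) \le t/4$.
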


\begin{proof}
    The case $\Ltt_{\msk}^f = \plusinfty$ is trivial. We now suppose that $\Ltt_{\msk}^f < \plusinfty$.

    Define the function $\Delta_{\msk}^f:=f-g_{\msk}^f$, for any $t>0$, it holds
    \begin{equation}\label{eq:split}
        \Prob\big(|f(Z)-\E f(Z)|\geq t\big)
        \ \leq  \Prob\Big(|g_{\msk}^f(Z)-\E g_{\msk}^f(Z)|\geq t/2\Big)
        \ +\ \Prob\Big(|\Delta_{\msk}^f(Z)-\E\Delta_{\msk}^f(Z)|\geq t/2\Big).
    \end{equation}
    We now bounds the two terms in the right-hand side. Since $g_{\msk}^f$ is $\Ltt_{\msk}^f$–Lipschitz, and $Z$ is supposed to be $\kappa$–Lipschitz concentrated, we get
    \begin{equation}\label{eq:bounds_g_msk_0_0}
        \Prob\Big(|g_{\msk}^f(Z)-\E g_{\msk}^f(Z)|\geq t/2\Big)
        \leq  2\exp \{-\,{t^2}/{(8\,\kappa^2 (\Ltt_{\msk}^f)^2)}\}\eqsp.
    \end{equation}
    For the second term
    by the triangle inequality and Markov’s inequality, we get
    \begin{equation}\label{eq:bounds_g_msk_0}
        \Prob\Big(|\Delta_{\msk}^f(Z)-\E\Delta_{\msk}^f(Z)|\geq t/2\Big)
        \leq  \frac{2\,\E|\Delta_{\msk}^f(Z)-\E\Delta_{\msk}^f(Z)|}{t}
        = \frac{4\,\delta_\msk^f}{t}\eqsp.
    \end{equation}
\end{proof}

Combining the results of \Cref{lemma:MoreauExtension} and \Cref{lemma:concentration_lipschitz_maps_all_t}, 
one can see that the concentration of $f(Z)$ can be derived from $\Ltt_f^\msk$ and $\Prob(Z \notin \msk)$ being both small, in particular, it holds,
\begin{proposition}[Stochastic domination for non-Lipschitz transforms of a Lipschitz concentrated random matrix] 
    \label{proposition:Stochdom_local_lipschitz_transforms}
    Let $(Z_n)_{n \in \N}$ be a sequence of random matrices, with for all $n \in \N$, $Z_n \in \R^{d_n \times n}$ being zero-mean and $\kappa_n$-Lipschitz concentrated.
    Consider the compact set of $\R^{d_n \times n}$ given by,
    \begin{equation}
        \msk_n = \left\{\Zbf \in \R^{d_n \times n} : \|\Zbf\|_{\op} \leq 2 \consta_1 \kappa_n \left(\sqrt{n} + \sqrt{d_n}\right)\right\} \eqsp,
    \end{equation} 
    where $\consta_1 > 0$ is defined in \eqref{eq:def_a_1}.
    In addition, let $f_n : \R^{d_n \times n} \to \C$ be a function holomorphic in its second argument.
    For $\zeta \in \C$, we denote by $\Ltt_n(\zeta_n)$ the Lipschitz constant of $f_n(\cdot, \zeta_n)$ on $\msk_n$ defined in \eqref{eq:def_const_loc_lip}.
    For some fixed $\zeta \in \C$ and $r_n > 0$, we set,
    \begin{equation}
        \Ltt_n^\star(\zeta, r_n) = \sup_{\zeta' \in \sphere(\zeta, r_n)} \Ltt_n(\zeta') \eqsp, \quad M_n^{(2)}(\zeta) = \E\left[\big|f_n(Z_n, \zeta)\big|^2\right]^{1/2} \eqsp, \quad M_n^{(2)}(\zeta, r_n) = \sup_{\zeta' \in \sphere(\zeta, r_n)} M_n^{(2)}(\zeta') \eqsp,
    \end{equation}
    In addition, we assume that:
    \begin{enumerate}
        \item \label{proposition:Stochdom_local_lipschitz_transforms_a} The shapes of the matrices satisfy,
              $$\sup_{n \in \nset} \frac{d_n}{n} < + \infty\eqsp.$$
        \item \label{proposition:Stochdom_local_lipschitz_transforms_b} For any $n\in \nset$ and $\zeta \in\cset$ and $r \geq 0$, there exists $\mathtt{D}_n(\zeta, r)$ such that
        $$\displaystyle \sup_{\zeta' \in \sphere(\zeta,r)}\left|\frac{\partial f_n}{\partial \zeta^\prime}\big(Z_{n},\zeta^\prime \big) - \E\Big[\frac{\partial f_n}{\partial \zeta^\prime}\big(Z_{n},\zeta^\prime \big)\Big]\right| \stochdom \mathtt{D}_n(\zeta, r) \eqsp,$$
        which satisfies for some $\beta_2>0$ (independent of $\zeta$ and $r$ and $n$),
        $$ \sup_{n\in\nset} \frac{\mathtt{D}_n(\zeta, r)}{\kappa_n\,\Ltt_n^\star(\zeta,r)n^{\beta_2}}< \plusinfty \eqsp.$$
    \end{enumerate}
    Then, for some constants $\const>0$, it holds for all $z \in \C$ and $r > 0$,
    \begin{align}
        \sup_{\zeta' \in \sphere(\zeta, r_n)}\Big|f_n\big(Z_{n},\zeta'\big)-\E \left[ f_n\big(Z_{n},\zeta'\big) \right]\Big|
        \;\stochdom\;& \kappa_n \Ltt_n^\star(\zeta, r)(1 + r) + \Big(M_n^{(2)}(\zeta, r) +\sup_{\sphere(\zeta, r_n)} |f_n(\mathbf{0}, \zeta')|\Big) \rme^{-\constb n}.
    \end{align}
\end{proposition}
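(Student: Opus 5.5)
Fix $\zeta\in\C$ and $r>0$. The plan is to prove the bound first at a single point $\zeta'$ of the circle $\sphere(\zeta,r)$, using the Moreau-envelope machinery of \Cref{lemma:MoreauExtension} and \Cref{lemma:concentration_lipschitz_maps_all_t} with the compact set $\msk_n$. We then pass to the supremum over the circle by a net argument, which is where hypothesis \ref{proposition:Stochdom_local_lipschitz_transforms_b} and holomorphy enter.

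\emph{Pointwise bound.} Fix $\zeta'\in\sphere(\zeta,r)$ and write $f=f_n(\cdot,\zeta')$, treating real and imaginary parts separately. Since $Z_n$ is $\kappa_n$-Lipschitz concentrated, \Cref{lemma:norm_lipschitz_concentrated_rows} (via \eqref{eq:lemma-consequence}, with $d_n\lesssim n$ by \ref{proposition:Stochdom_local_lipschitz_transforms_a}) gives $\Prob(Z_n\notin\msk_n)\leq 2\rme^{-\const n}$. \Cref{lemma:MoreauExtension} with $z_0=\mathbf 0\in\msk_n$ then gives
\[
\delta_{\msk_n}^f(Z_n)\leq \Big(M_n^{(2)}(\zeta')+|f_n(\mathbf 0,\zeta')|+\Ltt_n(\zeta')\,\E\big[\|Z_n\|_\Frob^2\big]^{1/2}\Big)\,\rme^{-\const n/2}.
\]
The term $\E\|Z_n\|_\Frob^2\lesssim \kappa_n^2 nd_n$ is only polynomial, so it is absorbed into $\kappa_n\Ltt_n(\zeta')$ after halving the exponent. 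Rather than using the Markov term of \Cref{lemma:concentration_lipschitz_maps_all_t}, which is too weak for stochastic domination, I split directly:
\[
|f(Z_n)-\E f(Z_n)|\leq |g(Z_n)-\E g(Z_n)|+|\Delta(Z_n)|+\E|\Delta(Z_n)|, \qquad g=g^f_{\msk_n},\ \Delta=f-g.
\]
Here $\Delta(Z_n)=0$ outside an event of probability $\rme^{-\const n}\leq n^{-D}$, and $\E|\Delta|=\delta^f_{\msk_n}$. The first term is $\stochdom \kappa_n\Ltt_n(\zeta')$ by \Cref{lemma:Concentration_to_stochdom}. Hence pointwise
\[
|f_n(Z_n,\zeta')-\E f_n(Z_n,\zeta')|\stochdom \kappa_n\Ltt_n^\star(\zeta,r)+\Big(M_n^{(2)}(\zeta,r)+\sup_{\sphere(\zeta,r)}|f_n(\mathbf 0,\cdot)|\Big)\rme^{-\constb n}.
\]

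\emph{Uniformity over the circle.} Take a net $\{\zeta_j\}$ of $\sphere(\zeta,r)$ with $N_n=\lceil n^{\beta_2}\rceil$ points, so that consecutive points are at arc distance $\leq 2\pi r/N_n$. The union bound over polynomially many points preserves $\stochdom$, since one applies the definition with $D+\beta_2+1$. For $\zeta'$ between two net points, set $h(\zeta')=f_n(Z_n,\zeta')-\E f_n(Z_n,\zeta')$, which is holomorphic in $\zeta'$ by differentiating under $\E$. Integrating $\partial_{\zeta'}h$ along the arc gives
\[
|h(\zeta')-h(\zeta_j)|\leq \frac{2\pi r}{N_n}\,\sup_{\sphere(\zeta,r)}|\partial_{\zeta'}h|\stochdom r\, n^{-\beta_2}\,\mathtt D_n(\zeta,r)\lesssim r\,\kappa_n\Ltt_n^\star(\zeta,r),
\]
by hypothesis \ref{proposition:Stochdom_local_lipschitz_transforms_b}. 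This is exactly the origin of the factor $(1+r)$. Combining the two bounds via \Cref{lemma:additivity_multiplicativity_stochdom} yields the claim.

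\emph{Main obstacle.} The delicate step is the exceptional-set handling. Markov-type bounds cannot give $n^{-D}$ tails, so one must use the exact vanishing of $\Delta$ on $\msk_n$ together with the exponential smallness of $\Prob(Z_n\notin\msk_n)$. One must also check that the polynomial factors (from $\E\|Z_n\|_\Frob$ and from the net cardinality) are absorbed, either into $n^{\varepsilon}$ or into a slightly smaller constant $\constb$ in the exponent.
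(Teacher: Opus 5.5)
Your proposal is correct and follows essentially the same route as the paper. Both use the Moreau extension on the operator-norm ball $\msk_n$, the exact vanishing of $\Delta=f-g$ on $\msk_n$ together with $\Prob(Z_n\notin\msk_n)\le 2\rme^{-\const n}$ in place of the Markov term of \Cref{lemma:concentration_lipschitz_maps_all_t}, sub-Gaussian concentration of the envelope, and finally a polynomial-size net on the circle controlled through hypothesis~\ref{proposition:Stochdom_local_lipschitz_transforms_b} and a union bound. The differences are cosmetic: you fix the net cardinality at $\lceil n^{\beta_2}\rceil$ directly rather than taking $\varepsilon=\kappa_n\Ltt_n^\star/\mathtt{D}_n$, and you explicitly split into real and imaginary parts, which the paper glosses over even though the Moreau envelope is only defined for real-valued functions.
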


\begin{proof}
    We first control the probability $\Prob(Z_{n} \in \msk_n)$.
    To this end, we leverage that $Z_{n}$ is centered and has independant and $\kappa_n$-Lipschitz concentrated columns, 
    hence by \Cref{lemma:norm_lipschitz_concentrated_rows},
    we have for some $\const >0$,
    \begin{align}
        \Prob\left(\|Z_{n}\|_{\op}> 2 \consta_1 \kappa_n (\sqrt{n} + \sqrt{d_n}) \right) \leq 2 e^{-\const n} \eqsp.
    \end{align}
    where we used assumption \ref{proposition:Stochdom_local_lipschitz_transforms_a}  to simplify the inequality.
    Therefore, we get,
    \begin{align} \label{eq:high_prob_good_event_i}
        \Prob(Z_{n} \in \msk_n) \leq 2 \rme^{-\const n} \eqsp.
    \end{align}
    The proof of the proposition relies on the similar idea as the one used in \Cref{lemma:concentration_lipschitz_maps_all_t},
    throughout the proof, we fix $\zeta\in \C$ and $r>0$, and for $\zeta' \in \sphere(\zeta,r)$,  we define
    \begin{align}
        g_n(\cdot, \zeta') = g_{\msk_n}^{f_n(\cdot, \zeta')}, \quad \text{and} \quad \Delta_n = f_n - g_n \eqsp,\eqsp,
    \end{align}
    where $g_{\msk_n}^{f_n(\cdot, \zeta')}$was introduced in \eqref{eq:def_moreau_envelop}.
    Using the triangle inequality and denoting $\bar{f}_n = f_n(Z_{n},\zeta')-\E \left[ f_n(Z_{n},\zeta')\right]$, for any $\zeta'\in\sphere(\zeta,r)$
    \begin{align}
            \label{eq:ProbControlDelta_n_0}
            \Big|\bar{f}_n\big(Z_{n},\zeta'\big) \Big| 
    &\leq \Big|g_n\big(Z_{n},\zeta'\big) -\E \left[g_n\big(Z_n,\zeta'\big)\right]\Big| + \Bigl| \Delta_n\big(Z_n,\zeta'\big) \Bigr| + \E\left[\Bigl| \Delta_n\big(Z_n,\zeta'\big) \Bigr|\right]
    \end{align}
    and we recall from \Cref{lemma:MoreauExtension} that $g_n(\cdot, \zeta')$ is $\Ltt_n(\zeta')$-Lipschitz. Since $\Delta_n(\mathbf{Z}_{n},\zeta') = 0$ by definition, for any   $\mathbf{Z}_{n} \in \msk_n$, 
    relying on \eqref{eq:high_prob_good_event_i}, we have for any fixed $\delta > 0$,
    \begin{align}\label{eq:ProbControlDelta_n}
        \Prob\left(\Bigl| \Delta_n\big(Z_{n},\zeta'\big)\Bigr| > n^\delta \E\left[\Bigl| \Delta_n\big(Z_n,\zeta'\big) \Bigr|\right]\right) \leq \Prob\left(\Bigl| \Delta_n\big(Z_{n},\zeta'\big)\Bigr| > 0\right) \leq \Prob\left(Z_{n} \notin \msk_n\right) \leq 2 \rme^{-\const n}\eqsp,
    \end{align}
    this ensures for any $\zeta'\in\sphere(\zeta,r)$,
    \begin{equation}
      \label{eq:ProbControlDelta_n_i}
      \Bigl|\Delta_n\big(Z_{n},\zeta'\big) \Bigr| \stochdom \E\left[\Bigl| \Delta_n\big(Z_n,\zeta'\big) \Bigr|\right] \eqsp.
    \end{equation}
    Furthermore, from \Cref{lemma:MoreauExtension}, and because $\mathbf{0} \in \msk_n$ we have for any $\zeta'\in\sphere(\zeta,r)$
    \begin{align}
        \E\left[\Bigl| \Delta_n\big(Z_n,\zeta'\big) \Bigr|\right] \leq &\left(M_n^{(2)}(\zeta') +|f_n(\mathbf{0}, \zeta')| + \Ltt_n(\zeta') \|Z_{n}\|_\Frob \right) \Prob\left(Z_{n} \notin \msk_n\right)^{1/2}
    \end{align}
    Using our assumptions on the $Z_{n}$'s, we have,
    \begin{align}
        \E\left[\|Z_{n}\|_\Frob^2\right] & = n \tr\left(\E\left[\Sigma_{n}\right]\right)\eqsp,
    \end{align}
    where $\Sigma_{n} = n^{-1}\E\left[Z_{n} Z_{n}^\top\right] \in \R^{d_n \times d_n}$
    is the covariance matrix of the columns of $Z_{n}$.
    From the concentration assumption, as well as \Cref{lemma:Covariance_maximum_eigenvalue}
    we have $\tr(\Sigma_{n}) \leq d_n \|\Sigma_{n}\|_\op \leq d_n \kappa_n^2$ a.s.,
    therefore, by condition \ref{proposition:Stochdom_local_lipschitz_transforms_a}, for some $\const>0$, it holds
    \begin{align}
        \E\left[\|Z_{n}\|_{\Frob}^2\right] \leq n d_n \kappa_n^2 \leq \const^2 n^2 \kappa_n^2 \eqsp.
    \end{align}
    Hence, it holds for any $\zeta'\in\sphere(\zeta,r)$
    \begin{align}
            \label{eq:ProbControlDelta_n_ii}
            \E\left[\Bigl| \Delta_n\big(Z_n,\zeta'\big) \Bigr|\right] &\leq 2 \left(M_n^{(2)}(\zeta') +|f_n(\mathbf{0}, \zeta')| + \const \Ltt_n(\zeta') n \kappa_n \right) \rme^{-\constb n / 2}\\
        \end{align}
        where we used \eqref{eq:high_prob_good_event_i}.
    By additivity of $\stochdom$ \Cref{lemma:additivity_multiplicativity_stochdom}, 
    \eqref{eq:ProbControlDelta_n_0}-\eqref{eq:ProbControlDelta_n_i}-\eqref{eq:ProbControlDelta_n_ii} imply  
    that  for all $\delta, D > 0$, there exists $N(\delta, D)$ such that for all $n \geq N(\delta, D)$ and and $\zeta'\in\sphere(\zeta,r)$, and with probability at least $1-n^{-D}$,
    \begin{align}
        \Big|\bar{f}_n\big(Z_{n},\zeta'\big) \Big| \leq& \eqsp n^\delta\Big|g_n\big(Z_{n},\zeta'\big) - \E \left[ g_n\big(Z_{n},\zeta'\big) \right] \Big|
         & + 2  n^\delta\left(M_n^{(2)}(\zeta') +|f_n(\mathbf{0}, \zeta')| + \const \Ltt_n(\zeta') n \kappa_n \right) \rme^{-\constb n / 2} \eqsp.
        \label{eq:Control_Indep_n}
    \end{align}
    We further upper bound the term involving $g_n$, by leveraging the Lipschitz property of $g_n(\cdot, \zeta')$ and the Lipschitz-concentration of $Z_n$, we have for all $t \geq 0$,
    \begin{align}
        \Prob\left(\Big|g_n\big(Z_{n},\zeta'\big) - \E \left[ g_n\big(Z_{n},\zeta'\big) \right] \Big| \geq t\right) \leq 2 \exp \left(\dfrac{-t^2}{2 \kappa_n^2 \Ltt_n(\zeta')^2}\right)\eqsp,
    \end{align}
    thus taking $t = n^\delta \kappa_n \Ltt_n(\zeta')$, we get,
    \begin{align}
        \Prob\left(\Big|g_n\big(Z_{n},\zeta'\big) - \E \left[ g_n\big(Z_{n},\zeta'\big) \right] \Big| \geq n^\delta \kappa_n \Ltt_n(\zeta')\right) \leq 2 \exp \left(\dfrac{-n^{2\delta}}{2}\right)
    \end{align}
    and it holds with probability at least $1 - 2 \exp(-n^{2\delta} / 2) - n^{-D'}$ that
    \begin{align} \label{eq:fixed_zeta}
        \Big|\bar{f}_n\big(Z_{n},\zeta'\big) \Big| \leq n^\delta \left\{\kappa_n \Ltt_n(\zeta') + \left(M_n^{(2)}(\zeta') +|f_n(\mathbf{0}, \zeta')| + \const \Ltt_n(\zeta') n \kappa_n \right) \rme^{-\constb n / 2}\right\}\eqsp.
    \end{align}
    This concludes the proof in the case $r_n = 0$, for the general case, we use a $\varepsilon$-net technique to derive a control uniform over $\sphere(\zeta, r_n)$.
    Consider for $\varepsilon > 0$, the set
    \begin{equation}
    \label{eq:def_neps}
        \mathsf{N}_\varepsilon(\zeta,r) := \left\{\zeta + r \rme^{2 \rmi \pi \ell / k } : 0 \leq \ell \leq \lceil 1/\varepsilon \rceil-1 \right\}\eqsp,
    \end{equation}
    and for any $\zeta'\in\cset$,  $\proj(\zeta') \in \argmin_{\tilde{\zeta} \in \msn_{\varepsilon}(\zeta,r)} \abs{\zeta'-\tilde{\zeta}}$. Therefore,
    considering the parameterization $\theta \in [0,2\pi] \mapsto \zeta'(\theta)=z+r \rme^{i\theta}$, using a Taylor expansion and condition \ref{proposition:Stochdom_local_lipschitz_transforms_b}, we have for any $\zeta' \in\sphere(\zeta,r)$,
    \begin{align}
        &\sup_{\zeta'\in\sphere(\zeta,r)}\Big|\bar{f}_n(Z_{n},\zeta') -\bar{f}_n(Z_{n},\proj(\zeta')) \Big| \stochdom r \mathtt{D}_n(\zeta, r) \varepsilon \eqsp,
    \end{align}
    It yields
    \begin{align}
        &\sup_{\zeta'\in\sphere(\zeta,r)}\Big|f_n(Z_{n},\zeta')-\E \left[f_n(Z_{n},\zeta')\right]\Big| \\
        &\hspace{1.5cm} \stochdom\;
        \max_{\zeta'\in\mathsf N_\varepsilon}\Big|f_n(Z_{n},\zeta')-\E \left[f_n(Z_{n},\zeta') \right]\Big|         + r \,\mathtt{D}_n(\zeta, r)\,\varepsilon \eqsp. \label{eq:net-to-sup}
    \end{align}
    By a union bound, using \eqref{eq:fixed_zeta}, and setting
    \begin{align}
    t_n = \kappa_n \Ltt_n^\star(\zeta, r_n) + \left(M_n^{(2)}(\zeta, r_n) +|f_n(\mathbf{0}, \zeta')| + \const \Ltt_n^\star(\zeta, r_n) n \kappa_n \right) \rme^{-\constb n / 2} \eqsp,
    \end{align}
    we get for all $\delta > 0$, and $D > 0$, there exists $N(\delta,D)$ such that for  any $n\geq N(\delta,D)$,
    \begin{align}
        \Prob\!\left(\max_{\zeta'\in\mathsf N_\varepsilon}\Big|f_n(Z_{n},\zeta')-\E \left[f_n(Z_{n},\zeta')\right]\Big|\geq n^\delta t_n\right)
&        \leq \sum_{\zeta' \in \mathsf{N}_\varepsilon} \Prob\!\left(\Big|f_n(Z_{n},\zeta')-\E \left[ f_n(Z_{n},\zeta') \right]\Big|\geq n^\delta t_n\right) \\
        & \leq \left|\mathsf{N}_\varepsilon\right|  \left\{ 2\exp\left(-n^{2\delta} / 2\right) + n^{-D} \right\} \eqsp. \\
    \end{align}
    Finally, we choose $\varepsilon = \kappa_n  \Ltt_n^\star(\zeta, r_n) / \mathtt{D}_n(\zeta,r_n)$, so that \eqref{eq:net-to-sup} implies that for all $\delta > 0$, and $D > 0$, there exists $N(\delta,D)$ such that for  any $n\geq N(\delta,D)$,
    \begin{align}
        &\Prob\left(\sup_{\zeta' \in \sphere(\zeta,r_n)}\Big|f_n(Z_{n},\zeta')-\E f_n(Z_{n},\zeta')\Big| \geq n^\delta t_n + r_n \kappa_n \right)  \leq \dfrac{r \mathtt{D}_n(z,r)}{\kappa_n \Ltt_n^\star(z,r)} \left\{2k \exp\left(-n^{2\delta} / 2\right) + n^{-D} \right\}
    \end{align}
 Condition \ref{proposition:Stochdom_local_lipschitz_transforms_b} implies that for any $\delta,D' >0$, there exists $N(\delta,D')$ such that setting  $D = D' + 1 + \beta_2$, it holds
    \begin{align}
 \dfrac{r \mathtt{D}_n(z,r)}{\Ltt_n^\star(\zeta, r_n) \kappa_n} \left\{2k \exp\left(-n^{2\delta} / 2\right) + n^{-D'} \right\}
        \leq n^{-D'} \eqsp.
    \end{align}
 By the definition of stochastic domination, this shows, for any $\delta,D' >0$, there exists $N(\delta,D')$ such that setting  $D = D' + 1 + \beta_2$, it holds
    \begin{align}
        \sup_{\zeta'\in\sphere(\zeta,r)}\Big|f_n(Z_{n},\zeta')-\E f_n(Z_{n},\zeta')\Big| \stochdom t_n + r_n \Ltt_n^\star(\zeta, r_n) \kappa_n\eqsp.
    \end{align}
Which simplifies into
    \begin{align}
        \sup_{\zeta' \in\sphere(\zeta,r_n)}\Big|f_n(Z_{n},\zeta')-\E f_n(Z_{n},\zeta')\Big| \stochdom &\kappa_n (1+r_n)\Ltt_n^\star(\zeta, r_n) + \left(M_n^{(2)}(\zeta, r_n) +|f_n(\mathbf{0}, \zeta')| \right) \rme^{-\constb n / 2}\eqsp.
   \end{align}
   and concludes the proof.
\end{proof}

\section{Concentration of $\mathcal{G}(\alpha, \lambda)$ in the proportional regime} \label{section:concentration_generalization_error}

Building upon the preceding result, we are able to establish sharp
concentration inequalities for the augmented resolvent matrix and
related quantities.  Specifically, we demonstrate that this result
allows to obtain high-probability bounds for a range of transformations of $\hat{\theta}_{\alpha, \lambda}$, ultimately yielding the concentration of
$\mathcal{G}(\alpha, \lambda)$ as defined in
\eqref{eq:generalization_error}. 
We will consider quantities of the form,
\begin{equation}
    \Resolvent_{\alpha, \lambda}(\zeta) = \left((1-\alpha) \SampleCov + \alpha \SampleCov' + \alpha \Lambda(\Xtt,\Ytt) + \lambda \Idd + \zeta \Sigma\right)^{-1} \eqsp,
\end{equation}
\begin{equation}
    \mathrm{H}_\alpha = \left((1-\alpha) \SampleCrossCov  + \alpha \SampleCrossCov' + \alpha \Omega(\Xtt,\Ytt)\right)
\end{equation}
\begin{equation} \label{eq:def_xi_alpha_lambda_zeta}
 \text{and} \quad \xi_{\alpha, \lambda}(\zeta) = \SampleCrossCov_\alpha^\top \Resolvent_{\alpha, \lambda}(\zeta) \SampleCrossCov_\alpha \eqsp,
\end{equation}
for an $\zeta \in \msu_\lambda = \{\zeta \in \C : \Re(\zeta) > - \lambda / \Ltt_\varphi^2\}$. Note that $\zeta \in \msu_\lambda$,  \Cref{lemma:Covariance_maximum_eigenvalue} and \Cref{ass:data_distribution} 
ensure that $\min\{\Re(\lambda + \rho) : \rho \in \mathrm{Sp}(\zeta\Sigma)\} > 0$, thus garanteeing that $\xi_{\alpha,\lambda}(\zeta)$ is well-defined.
With these notations, $\mathcal{G}(\alpha, \lambda)$ in \eqref{eq:generalization_error_rewritten} rewrites as
\begin{equation} \label{eq:generalization_error_rewritten2}
    \mathcal{G}(\alpha, \lambda) = \theta_\star^\top \Sigma_{\star\star}\theta_\star + \sigma^2 - 2 \theta_\star^\top \Sigma_\star \hat{\theta}_{\alpha, \lambda} - \partial_\zeta \xi_{\alpha, \lambda}(0) \eqsp,
\end{equation} 
for any $\alpha \in [0,1]$ and $\lambda > 0$. Our objective is to show that, with high probability, $\mathcal{G}(\alpha, \lambda)$ remains close to its mean. To this end,
we will invoke \Cref{proposition:Stochdom_local_lipschitz_transforms} on $\theta_\star^\top \Sigma_{\star} \hat{\theta}_{\alpha, \lambda}$ and $\xi_{\alpha, \lambda}(\zeta)$ respectively. 
The concentration of the derivative $\partial_\zeta \xi_{\alpha, \lambda}(0)$ then follows from an appropriate differentiation lemma,
formulated below in the formalism of stochastic domination.

\begin{lemma}[Differentiation under uniform stochastic control] \label{lemma:differentiation_lemma}
    Let $(U_n)_{n \in \mathbb{N}}$ and $(V_n)_{n \in \mathbb{N}}$ be sequences of random holomorphic functions from $\C$ to $\C$. Fix $\zeta \in \C$ and $r > 0$. Suppose there exists $\xi_n(\zeta, r) > 0$ such that
    \begin{equation}\label{eq:uniform_disc_control}
        \sup_{\zeta'\in\sphere(\zeta,r)} \big|U_n(\zeta') - V_n(\zeta')\big| \;\stochdom\; \xi_n(\zeta, r) \eqsp.
    \end{equation}
    Then, the derivatives at $\zeta$ satisfy
    \begin{equation}\label{eq:derivative_bound}
        \left| \dfrac{\partial U_n}{\partial \zeta}(\zeta) - \dfrac{\partial V_n}{\partial \zeta}(\zeta) \right| \;\stochdom\; \frac{\xi_n(\zeta, r)}{r} \eqsp.
    \end{equation}
\end{lemma}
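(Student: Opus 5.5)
The plan is to use Cauchy's integral formula on the circle $\sphere(\zeta,r)$, which turns a sup-bound on the circle into a bound on the derivative at the centre. Set $W_n = U_n - V_n$; it is a random holomorphic function on $\C$. For every realization,
\begin{equation}
    \frac{\partial W_n}{\partial \zeta}(\zeta) = \frac{1}{2\pi \rmi}\oint_{\sphere(\zeta,r)} \frac{W_n(\zeta')}{(\zeta'-\zeta)^2}\, \rmd \zeta' \eqsp .
\end{equation}
Estimating the integral by the length of the contour times the maximum of the integrand gives the deterministic, pathwise inequality
\begin{equation}
    \left|\frac{\partial W_n}{\partial \zeta}(\zeta)\right| \leq \frac{2\pi r}{2\pi r^2} \sup_{\zeta'\in\sphere(\zeta,r)}|W_n(\zeta')| = \frac{1}{r}\sup_{\zeta'\in\sphere(\zeta,r)}|W_n(\zeta')| \eqsp .
\end{equation}

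This inclusion of events is all that is needed to transfer stochastic domination. Fix $\varepsilon, D>0$. On the event $\{|\partial_\zeta W_n(\zeta)| \geq n^{\varepsilon}\xi_n(\zeta,r)/r\}$, the inequality above forces $\sup_{\sphere(\zeta,r)}|W_n| \geq n^{\varepsilon}\xi_n(\zeta,r)$. By \eqref{eq:uniform_disc_control} and \Cref{def:stochastic_domination}, the latter event has probability at most $n^{-D}$ for $n$ large enough. The same bound therefore holds for the former event, which is exactly \eqref{eq:derivative_bound}.

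No probabilistic estimate is required beyond this, because the hypothesis is already uniform over the circle. The only point that needs checking is measurability: the supremum over the circle must be a random variable. Since $W_n$ is continuous, this supremum equals a supremum over a countable dense subset of the circle, so it is measurable. I do not expect any real obstacle here. Note that $r$ may depend on $n$, since the inequality holds pathwise for each $n$ with whatever radius is chosen.
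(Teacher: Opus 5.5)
Your proposal is correct and follows the paper's proof: Cauchy's integral formula on $\sphere(\zeta,r)$ gives the pathwise bound $|\partial_\zeta(U_n-V_n)(\zeta)| \leq r^{-1}\sup_{\zeta'\in\sphere(\zeta,r)}|U_n(\zeta')-V_n(\zeta')|$, and stochastic domination then transfers by inclusion of events. Your explicit event inclusion and the measurability remark are details the paper leaves implicit.
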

\begin{proof}
 By Cauchy's integral formula for the derivative of a holomorphic function,
    \begin{equation}
        \frac{\partial}{\partial \zeta} (U_n - V_n)(\zeta) = \frac{1}{2\uppi \rmi} \oint_{\zeta'\in\sphere(\zeta,r)} \frac{U_n(\zeta') - V_n(\zeta')}{(\zeta' - \zeta)^2} \, \rmd \zeta' \eqsp.
    \end{equation}
    Taking the module and  noting that the maximum of $|U_n(\zeta') - V_n(\zeta')|$ over the circle $\sphere(\zeta,r)$ controls the integral, we obtain
    \[
        \left| \frac{\partial}{\partial \zeta} (U_n - V_n)(\zeta) \right|
        \leq  \frac{1}{r} \sup_{\zeta'\in\sphere(\zeta,r)} |U_n(\zeta') - V_n(\zeta')| \eqsp.
    \]
    By the stochastic domination assumption \eqref{eq:uniform_disc_control}, we obtain the desired bound \eqref{eq:derivative_bound}.
\end{proof}

In order to apply this result to the sequence $\partial_\zeta \xi_{\alpha, \lambda_n}(0)$, we first prove the following result:
\begin{proposition}
    \label{proposition:Concentration_resolvent_around_mean}
    For all $n \in \N$, let $\lambda_n > 0$, $0 \leq r_n < \lambda_n / \Ltt_\varphi^2$ and $\mathbf{a}_n \in \R^p$. Assume that \Cref{ass:porportionality}, \Cref{ass:data_distribution} and \Cref{ass:artificial_data} hold.
    Then, we have,
    \begin{align}
        \left|\mathbf{a}_n^\top \hat{\theta}_{\alpha,\lambda_n} - \E\left[\mathbf{a}_n^\top \hat{\theta}_{\alpha,\lambda_n}\right]\right| \stochdom \dfrac{\|\mathbf{a}_n\|_2}{\lambda_n^{3/2} \sqrt{n}}\left(1 + \sqrt{\lambda_n}\right) \eqsp,
    \end{align}
    and,
    \begin{align}
        \sup_{t \in [0,2\uppi]}\left|\xi_{\alpha, \lambda_n}(r_n \rme^{\rmi t}) - \E\left[\xi_{\alpha, \lambda_n}(r_n \rme^{\rmi t})\right]\right| \stochdom \dfrac{1 + r_n}{(\lambda_n - r_n \Ltt_\varphi^2)^{3/2}\sqrt{n}}\left(1 + \sqrt{\lambda_n - r_n \Ltt_\varphi^2}\right) \eqsp.
    \end{align}
\end{proposition}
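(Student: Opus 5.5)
Both bounds will be obtained from \Cref{proposition:Stochdom_local_lipschitz_transforms}, applied to the Lipschitz concentrated data matrix $\Ztt\in\R^{(d+1)\times n}$. Its rows are $1$-Lipschitz concentrated (up to the constant $1+\Ltt_\varphi\|\theta_\star\|+\sigma^2$), and by \Cref{ass:porportionality} we have $d_n/n$ bounded. Some quantities are not Lipschitz in $\Ztt$ globally; these are handled through the compact set $\msk_n=\{\|\Ztt\|_\op\leq 2\consta_1\kappa(\sqrt n+\sqrt{d})\}$. The whole proof reduces to bounding the local Lipschitz constants on $\msk_n$ of
\[
f_n(\Ztt)=\mathbf{a}_n^\top\Resolvent_{\alpha,\lambda_n}\SampleCrossCov_\alpha
\qquad\text{and}\qquad
f_n(\Ztt,\zeta)=\SampleCrossCov_\alpha^\top\Resolvent_{\alpha,\lambda_n}(\zeta)\SampleCrossCov_\alpha .
\]
We also need crude moment bounds $M_n^{(2)}$ and the derivative condition \ref{proposition:Stochdom_local_lipschitz_transforms_b}, but these only enter through the factor $\rme^{-\constb n}$ and the polynomial choice of $\varepsilon$.

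\textbf{Structure of the estimator.} We write $\SampleCov_\alpha:=(1-\alpha)\SampleCov+\alpha\SampleCov'=\frac1n\Phi W_\alpha^2\Phi^\top$, where $\Phi$ is the $p\times 2n$ matrix stacking $\varphi(\Xtt)$ and $\mu_\xtt(\Ztt)$. Likewise $(1-\alpha)\SampleCrossCov+\alpha\SampleCrossCov'=\frac1n\Phi W_\alpha^2\Psi^\top$, with $\Psi$ stacking $\Ytt$ and $\mu_\ytt(\Ztt)$.
\begin{itemize}
\item By \Cref{ass:feature_map_and_concentration} and \Cref{ass:artificial_data}, the maps $\Ztt\mapsto\Phi$ and $\Ztt\mapsto\Psi$ are Lipschitz in Frobenius norm with constants $O(1)$.
\item On $\msk_n$, $\|\Phi\|_\op+\|\Psi\|_2\lesssim\sqrt n$ (after adding the deterministic means, whose norms are also $O(\sqrt n)$ by \Cref{lemma:Covariance_maximum_eigenvalue}).
\item Since $\Lambda(\Ztt)=\frac1n\sum_i\Lambda(\Ztt_i)$ is an average of Lipschitz maps, it is $O(1/\sqrt n)$-Lipschitz in Frobenius norm; the same holds for $\Omega(\Ztt)$.
\item Because $\Lambda(\Ztt)\succeq0$, we have $\|\Resolvent_{\alpha,\lambda}\|_\op\le1/\lambda$. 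The key estimate is $\|\Resolvent\,\Phi W_\alpha/\sqrt n\|_\op\le \lambda^{-1/2}$, since $\Resolvent\,\SampleCov_\alpha\Resolvent\preceq\Resolvent$.
\end{itemize}

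\textbf{Perturbation bound for $\hat\theta$.} We perturb $\Ztt\to\Ztt+\Delta$ and expand
\[
\delta\hat\theta=\Resolvent\,\delta\SampleCrossCov_\alpha-\Resolvent\,(\delta\SampleCov_\alpha+\alpha\,\delta\Lambda)\,\hat\theta .
\]
The terms are bounded as follows.
\begin{itemize}
\item $\Resolvent\,\frac1n\delta\Phi W^2\Psi^\top$ is at most $\lambda^{-1}\cdot\frac1n\|\delta\Phi\|\sqrt n=\lambda^{-1}n^{-1/2}\|\Delta\|$.
\item $\Resolvent\,\frac1n\Phi W^2\delta\Psi^\top$ is at most $\lambda^{-1/2}n^{-1/2}\|\Delta\|$.
\item The terms from $\delta\SampleCov_\alpha$ split as $\frac1n(\delta\Phi\,W^2\Phi^\top+\Phi W^2\delta\Phi^\top)$. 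Each is controlled by combining the same $\lambda^{-1/2}$ trick with $\|\hat\theta\|\lesssim\lambda^{-1/2}$. This bound on $\hat\theta$ holds because $\|\Resolvent\Phi W/\sqrt n\|\le\lambda^{-1/2}$ and $\|\Psi\|/\sqrt n=O(1)$; the contribution of $\Omega$ is $O(1/\lambda)$. These terms give $\lambda^{-3/2}n^{-1/2}$.
\item The $\delta\Lambda,\delta\Omega$ terms give at most $\lambda^{-2}n^{-1/2}$.
\end{itemize}
Here the $\Omega$ term looks worse. I expect to tame it using $\|\Omega(\Ztt)\|=O(1)$ and its $O(n^{-1/2})$ Lipschitz constant, giving a $\lambda^{-1}$ factor in $\hat\theta$. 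If necessary, I will keep track of the exact powers so that the total Lipschitz constant is $\Ltt_n\lesssim\|\mathbf a_n\|\,n^{-1/2}(\lambda^{-3/2}+\lambda^{-1})$, which is exactly the claimed rate $\frac{\|\mathbf a_n\|}{\lambda^{3/2}\sqrt n}(1+\sqrt\lambda)$.

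\textbf{The quadratic form $\xi$.} The second bound follows the same computation with $\Resolvent(\zeta)$. The shift $\zeta\Sigma$ with $|\zeta|=r_n$ is handled by $\|\Sigma\|_\op\le\Ltt_\varphi^2$, so that $\Re(\Resolvent(\zeta)^{-1})\succeq(\lambda-r_n\Ltt_\varphi^2)\Idd$ and $\|\Resolvent(\zeta)\|\le(\lambda-r_n\Ltt_\varphi^2)^{-1}$. The factor $\lambda^{-1/2}$ trick also survives: $\Resolvent(\zeta)\Phi W/\sqrt n$ is controlled through $\Resolvent(\zeta)^{*}\,\Re(\Resolvent(\zeta)^{-1})\,\Resolvent(\zeta)=\Re\,\Resolvent(\zeta)$. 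Conjugating, $\Resolvent(\zeta)=\Resolvent(0)^{1/2}\bigl(I+\zeta\Resolvent(0)^{1/2}\Sigma\Resolvent(0)^{1/2}\bigr)^{-1}\Resolvent(0)^{1/2}$. The two-sided factors $\Phi^\top\Resolvent(\zeta)\Phi/n$ are then $O(1)$, and differentiating gives Lipschitz constant $O\bigl((1+r_n)(\lambda-r_n\Ltt_\varphi^2)^{-3/2}n^{-1/2}(1+\sqrt{\cdot})\bigr)$; the factor $(1+r_n)$ comes from the radius term in \Cref{proposition:Stochdom_local_lipschitz_transforms}.

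\textbf{Verifying the hypotheses and the main obstacle.} Holomorphy in $\zeta$ is clear. The derivative condition \ref{proposition:Stochdom_local_lipschitz_transforms_b} holds with a polynomial $\mathtt D_n$, by crude bounds $\|\Resolvent(\zeta)\|^2\|\Sigma\|\,\|\SampleCrossCov_\alpha\|^2$ together with sub-Gaussian norms. $M^{(2)}_n$ is polynomial in $n$, so it is killed by $\rme^{-\constb n}$. The main obstacle is the precise power counting in the Lipschitz constant, in particular for the $\zeta$-dependent resolvent (non-Hermitian) and the $\alpha\Lambda$, $\alpha\Omega$ terms, so as not to lose extra powers of $\lambda$.
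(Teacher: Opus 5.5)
Your overall route is the paper's. You express $\mathbf a_n^\top\hat\theta_{\alpha,\lambda_n}$ and $\xi_{\alpha,\lambda_n}(\zeta)$ as functions of $\Ztt$, bound their Lipschitz constants on a high-probability set, and conclude with \Cref{proposition:Stochdom_local_lipschitz_transforms}. Your perturbation bookkeeping is sound: Frobenius-Lipschitz maps, and $\|\Resolvent(\zeta)\Phi\mathrm W_\alpha/\sqrt n\|_\op\le(\lambda-r_n\Ltt_\varphi^2)^{-1/2}$ via $\Resolvent^{*}\Re(\Resolvent^{-1})\Resolvent=\Re\Resolvent$. However, it does not reach the stated rate, and your step of ``taming'' the $\Omega$ term has no argument behind it.

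The obstruction is the cross term between the perturbation of $\alpha\Lambda(\Ztt)$ inside the resolvent and the $\alpha\Omega(\Ztt)$ component of $\hat\theta$, namely $\alpha^2\,\mathbf a_n^\top\Resolvent\,\delta\Lambda\,\widetilde{\Resolvent}\,\Omega(\Ztt)$. Your listed estimates are $\|\Resolvent\|_\op,\|\widetilde{\Resolvent}\|_\op\le\lambda^{-1}$, $\|\delta\Lambda\|_\Frob\lesssim n^{-1/2}\|\Delta\|_\Frob$ and $\|\Omega(\Ztt)\|_2=O(1)$. With these, the term is of order $\|\mathbf a_n\|_2\,\lambda^{-2}n^{-1/2}$. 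Your proposed remedy (``$\|\Omega\|=O(1)$, giving a $\lambda^{-1}$ factor in $\hat\theta$'') reproduces exactly this $\lambda^{-2}$. The same term appears in $\xi$ as $\alpha\,\mathrm H_\alpha^\top\Resolvent(\zeta)\,\delta\Lambda\,\widetilde{\Resolvent}(\zeta)\,\mathrm H_\alpha$, of order $(\lambda-r_n\Ltt_\varphi^2)^{-2}n^{-1/2}$. So, as written, you obtain an extra $\lambda_n^{-2}n^{-1/2}$ term, which dominates the claimed $\lambda_n^{-3/2}(1+\sqrt{\lambda_n})\,n^{-1/2}$ as $\lambda_n\to0$.

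The paper avoids this term only because, when bounding $\Ltt^r_{\msk_n}(\zeta)$, it charges the change of $\Lambda$ inside $r$ as $\|\delta\Lambda\|_\op/\uplambda_{\min}^{\Re}$. The resolvent identity actually gives $\|r\|_\op\|\delta\Lambda\|_\op\|\tilde r\|_\op\le\|\delta\Lambda\|_\op/(\uplambda_{\min}^{\Re})^2$, since $\mathbf M\mapsto(\lambda\Id{p}+\mathbf M)^{-1}$ is $\lambda^{-2}$-Lipschitz, not $\lambda^{-1}$-Lipschitz. Corrected, the paper's argument also gives $\lambda^{-2}$.

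The missing idea is that $\Omega$ is dominated by $\Lambda$.
\begin{itemize}
\item By Cauchy--Schwarz, $|u^\top\Omega(\Ztt)|\le(u^\top\Lambda(\Ztt)u)^{1/2}\,\bar v^{1/2}$, where $\bar v=\frac1n\sum_{i}\Var(\tau_{\ytt}(\Ztt_i,\eta)\mid\Ztt_i)$.
\item Since $\alpha\Lambda(\Ztt)\preceq\Re(\Resolvent(\zeta)^{-1})$, one has $\alpha\Resolvent\Lambda(\Ztt)\Resolvent^{*}\preceq\Re\Resolvent$.
\item Hence $\|\alpha\Resolvent(\zeta)\Omega(\Ztt)\|_2\lesssim(\alpha\bar v/(\lambda-r_n\Ltt_\varphi^2))^{1/2}$, and likewise $\|\alpha\,\mathrm W_\alpha\Phi^\top\Resolvent\,\Omega(\Ztt)\|_2/\sqrt n\lesssim(\alpha\bar v)^{1/2}$.
\item This yields $\|\hat\theta\|_2\lesssim\lambda^{-1/2}$ and $\|\mathrm W_\alpha\Phi^\top\hat\theta\|_2/\sqrt n=O(1)$, after which every term of your expansion fits the claimed rates.
\end{itemize}
A uniform bound on $\bar v$ is not part of \Cref{ass:artificial_data}. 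It holds under the conditions of \Cref{prop:H4_bound_sufficient_cond}, and trivially when $\Omega\equiv0$. So you must either add this assumption or settle for the weaker rate.

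A smaller point concerns your good set. On $\msk_n=\{\|\Ztt\|_\op\lesssim\sqrt n\}$, a nonlinear, column-wise Lipschitz $\varphi$ only gives $\|\varphi(\Ztt)-\varphi(0)\mathbf 1^\top\|_\op\le\Ltt_\varphi\|\Ztt\|_\Frob=O(\sqrt{dn})$. To get $\|\Phi\|_\op\lesssim\sqrt n$, define the good set through $\|\varphi(\Xtt)-\E\varphi(\Xtt)\|_\op$ and $\|\mu_{\xtt}(\Ztt)-\E\mu_{\xtt}(\Ztt)\|_\op$ instead; this set has high probability by \Cref{lemma:norm_lipschitz_concentrated_rows}. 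The Lipschitz-extension step does not depend on the particular choice of set.
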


Before proving \Cref{proposition:Concentration_resolvent_around_mean}, we first introduce the following notations for the smallest/largest
real parts of eigenvalues associated to any complex matrix $\mathbf{D} \in \C^{d \times d}$:
\begin{equation}
    \uplambda_{\min}^\Re(\mathbf{D}) := \min_{\uplambda \in \mathrm{Sp}(\mathbf{D})} \Re(\uplambda) \eqsp,
    \quad \text{and} \quad
    \uplambda_{\max}^\Re(\mathbf{D}) := \max_{\uplambda \in \mathrm{Sp}(\mathbf{D})} \Re(\uplambda) \eqsp,
\end{equation}
which are used in the following lemma on the Lipschitz property of the resolvent matrix with complex regularizations:
\begin{lemma}
    \label{lemma:Lipschitz_resolvent_map}
    For any $n,p \in \nset$, let $\mathbf{D} \in \C^{p \times p}$ be a  complex matrix, and $\mathbf{X}_1, \mathbf{X}_2 \in \R^{p \times n}$
    be two matrices. Assume that $\mathrm{Sp}(\mathbf{D}) \subset \{\zeta \in \C : \Re(\zeta) > 0\}$, then  we have,
    \begin{align}
        \left\|\left(n^{-1} \mathbf{X}_1 \mathbf{X}_1^\top + \mathbf{D}\right)^{-1} - \left(n^{-1} \mathbf{X}_2 \mathbf{X}_2^\top + \mathbf{D}\right)^{-1}\right\|_\Frob \leq \dfrac{1}{\sqrt{n} \uplambda_{\min}^{\Re}(\mathbf{D})^{3/2}}\|\mathbf{X}_1 - \mathbf{X}_2\|_\Frob \eqsp,
    \end{align}
    and
    \begin{align}
        \left\|\left(n^{-1} \mathbf{X}_1 \mathbf{X}_1^\top + \mathbf{D}\right)^{-1} - \left(n^{-1} \mathbf{X}_2 \mathbf{X}_2^\top + \mathbf{D}\right)^{-1}\right\|_\op \leq \dfrac{1}{\sqrt{n} \uplambda_{\min}^{\Re}(\mathbf{D})^{3/2}}\|\mathbf{X}_1 - \mathbf{X}_2\|_\op \eqsp.
    \end{align}
\end{lemma}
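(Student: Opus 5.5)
The plan is to compare the two inverses through the second resolvent identity, then control each factor by the real part of the spectrum of $\mathbf{D}$.

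\textbf{Setup.} Write $\mathbf{M}_i = n^{-1}\mathbf{X}_i\mathbf{X}_i^\top$, $\mathbf{R}_i = (\mathbf{M}_i+\mathbf{D})^{-1}$, $\boldsymbol{\Delta} = \mathbf{X}_1-\mathbf{X}_2$ and $\uplambda := \uplambda_{\min}^{\Re}(\mathbf{D})$. The identity gives
\begin{equation}
\mathbf{R}_1-\mathbf{R}_2 = \mathbf{R}_1(\mathbf{M}_2-\mathbf{M}_1)\mathbf{R}_2 .
\end{equation}
The right-hand side is linear in $\boldsymbol\Delta$ apart from the factors $\mathbf{R}_i$ and $\mathbf{X}_i$. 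Since $\|\mathbf{A}\mathbf{B}\mathbf{C}\|_\Frob \leq \|\mathbf{A}\|_\op\|\mathbf{B}\|_\Frob\|\mathbf{C}\|_\op$, and the same holds with $\op$ throughout, one argument yields both the Frobenius and the operator-norm statements. So I will only track operator norms of the outer factors.

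\textbf{Step 1: resolvent bounds.} I would show that $\mathbf{M}_i+\mathbf{D}$ is invertible and establish three bounds:
\begin{equation}
\|\mathbf{R}_i\|_\op\leq \uplambda^{-1},\qquad \|n^{-1/2}\mathbf{R}_i\mathbf{X}_i\|_\op\leq \uplambda^{-1/2},\qquad \|n^{-1/2}\mathbf{X}_i^\top\mathbf{R}_i\|_\op\leq \uplambda^{-1/2}.
\end{equation}
The argument is a numerical-range one. For a unit vector $v$,
\begin{equation}
\Re\, v^*(\mathbf{M}_i+\mathbf{D})v = n^{-1}\|\mathbf{X}_i^\top v\|_2^2 + \Re\, v^*\mathbf{D}v .
\end{equation}
If $\Re\, v^*\mathbf{D}v \geq \uplambda$, this gives $\|(\mathbf{M}_i+\mathbf{D})v\|_2 \geq \uplambda$, hence the first bound. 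For the second, take $w=\mathbf{R}_i u$. Then $\Re\, w^*u = n^{-1}\|\mathbf{X}_i^\top w\|^2+\Re\, w^*\mathbf{D}w$, so $n^{-1}\|\mathbf{X}_i^\top w\|^2 \leq \|w\|\,\|u\| \leq \uplambda^{-1}\|u\|^2$. This bounds $\|n^{-1/2}\mathbf{X}_i^\top\mathbf{R}_i\|_\op$, and the adjoint bound follows the same way.

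This step is where the hypothesis on $\mathrm{Sp}(\mathbf{D})$ must be turned into the numerical-range inequality $\Re\, v^*\mathbf{D}v\geq\uplambda\|v\|^2$. That is immediate when $\mathbf{D}$ is normal, which is the case in the paper's uses, since $\mathbf{D}=\lambda\Idd+\zeta\Sigma$ with $\Sigma$ symmetric. For the statement exactly as worded, I would reduce to this case by unitary triangularisation. I expect this reduction to need care, and would check it first.

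\textbf{Step 2: expansion and the constant.} I expand
\begin{equation}
\mathbf{M}_2-\mathbf{M}_1 = -n^{-1}\left(\mathbf{X}_1\boldsymbol\Delta^\top + \boldsymbol\Delta\mathbf{X}_2^\top\right),
\end{equation}
choosing the $\mathbf{X}_1$/$\mathbf{X}_2$ split so that each $\mathbf{X}_i$ sits next to its own $\mathbf{R}_i$. Then
\begin{equation}
\mathbf{R}_1-\mathbf{R}_2 = -\big(n^{-1/2}\mathbf{R}_1\mathbf{X}_1\big)\big(n^{-1/2}\boldsymbol\Delta^\top\big)\mathbf{R}_2 - \mathbf{R}_1\big(n^{-1/2}\boldsymbol\Delta\big)\big(n^{-1/2}\mathbf{X}_2^\top\mathbf{R}_2\big).
\end{equation}
By Step 1, each term is at most $\uplambda^{-3/2}n^{-1/2}\|\boldsymbol\Delta\|$ in the relevant norm. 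Bounding the terms separately gives the constant $2$.

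\textbf{Main obstacle: getting constant $1$.} To reach the stated constant, I would not bound the two terms separately. Instead I would interpolate along $\mathbf{X}(t)=\mathbf{X}_2+t\boldsymbol\Delta$ and differentiate $\mathbf{R}(t)$. Equivalently, I would use the $(p+n)\times(p+n)$ linearisation $\begin{pmatrix}\mathbf{D} & n^{-1/2}\mathbf{X}\\ n^{-1/2}\mathbf{X}^\top & -\Id{n}\end{pmatrix}$, whose inverse has $\mathbf{R}$ as its top-left block. There the perturbation $\boldsymbol\Delta$ enters as a single off-diagonal block $\mathbf{E}$, so the derivative is one term $-\mathbf{G}\mathbf{E}\mathbf{G}$ rather than two.

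The hoped-for bound is $\uplambda^{-3/2}$ on the product of the relevant column block of $\mathbf{G}$ with the row block. That block is built from $\mathbf{R}$ and $n^{-1/2}\mathbf{X}^\top\mathbf{R}$, and the aim is to bound its norm jointly rather than as a sum, using
\begin{equation}
\|\mathbf{R}u\|^2+n^{-1}\|\mathbf{X}^\top\mathbf{R}u\|^2\lesssim \uplambda^{-1}\cdot(\cdots).
\end{equation}
This joint estimate is where I expect the real difficulty. If it fails, the fallback is the constant-$2$ bound from Step 2, which changes nothing in how the lemma is used later.
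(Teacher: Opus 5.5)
\textbf{The gap is the constant.} Your structure is the paper's: the resolvent identity, the split $\mathbf{X}_1\mathbf{X}_1^\top-\mathbf{X}_2\mathbf{X}_2^\top=\mathbf{X}_1\boldsymbol\Delta^\top+\boldsymbol\Delta\mathbf{X}_2^\top$ with each $\mathbf{X}_i$ next to its own $\mathbf{R}_i$, and operator-norm bounds on $\mathbf{R}_i$ and $n^{-1/2}\mathbf{R}_i\mathbf{X}_i$. But as written you only prove the bound with constant $2$, and the route to constant $1$ is left as a hope. The factor $2$ is lost in Step 1, where you discard $\Re\, w^*\mathbf{D}w$. If you keep it, with $w=\mathbf{R}_iu$ you get
\[
n^{-1}\|\mathbf{X}_i^\top w\|_2^2+\uplambda\|w\|_2^2\leq \Re\, w^*u\leq \|w\|_2\|u\|_2 .
\]
Hence $n^{-1}\|\mathbf{X}_i^\top w\|_2^2\leq \sup_{t\geq 0}\{t\|u\|_2-\uplambda t^2\}=\|u\|_2^2/(4\uplambda)$, that is, $\|n^{-1/2}\mathbf{X}_i^\top\mathbf{R}_i\|_{\op}\leq (2\sqrt{\uplambda})^{-1}$. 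Running the same argument with $\mathbf{D}^*$ gives $\|n^{-1/2}\mathbf{R}_i\mathbf{X}_i\|_{\op}\leq(2\sqrt{\uplambda})^{-1}$. This is exactly the paper's step $\sup_{c\geq 0}c/(c+\uplambda)^2=1/(4\uplambda)$. Your own two-term split then gives $2\cdot\uplambda^{-1}\cdot(2\sqrt{\uplambda})^{-1}=\uplambda^{-3/2}$, the stated constant.

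The linearisation is unnecessary, and it would not collapse the two terms anyway. The perturbation $\mathbf{E}$ of your $(p+n)\times(p+n)$ matrix has both off-diagonal blocks, $n^{-1/2}\boldsymbol\Delta$ and $n^{-1/2}\boldsymbol\Delta^\top$. The top-left block of $-\mathbf{G}_1\mathbf{E}\mathbf{G}_2$ is therefore your two terms again.

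\textbf{Drop the reduction to normal $\mathbf{D}$.} Reducing a general $\mathbf{D}$ to the normal case by triangularisation cannot succeed, because the lemma is false under the bare hypothesis $\mathrm{Sp}(\mathbf{D})\subset\{\Re\zeta>0\}$. Take $\mathbf{D}=\begin{pmatrix}1&3\\0&1\end{pmatrix}$, whose spectrum is $\{1\}$, and $\mathbf{X}_1$ with $n^{-1}\mathbf{X}_1\mathbf{X}_1^\top=\begin{pmatrix}1&1\\1&1\end{pmatrix}$. Then $n^{-1}\mathbf{X}_1\mathbf{X}_1^\top+\mathbf{D}=\begin{pmatrix}2&4\\1&2\end{pmatrix}$ is singular.

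The paper's proof silently uses the same numerical-range inequality you need. It writes $\|(\mathbf{C}_i+\mathbf{D})^{-1}\|_{\op}=\max|\uplambda|^{-1}$ over the spectrum, and $\Re\langle\mathbf{D}^\dagger w,w\rangle\geq\uplambda_{\min}^{\Re}(\mathbf{D})\|w\|_2^2$. So state the lemma with $\uplambda$ defined by $(\mathbf{D}+\mathbf{D}^*)/2\succeq\uplambda\,\Id{p}$; this coincides with $\uplambda_{\min}^{\Re}(\mathbf{D})$ when $\mathbf{D}$ is normal.

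Your justification for the applications is also slightly off. There $\mathbf{D}=\alpha\Lambda+\lambda_n\Id{p}+\zeta\Sigma$ with $\Lambda\succeq 0$, which is generally not normal, since $\Lambda$ and $\Sigma$ need not commute. Its Hermitian part is nevertheless $\succeq(\lambda_n-|\zeta|\Ltt_\varphi^2)\Id{p}$, and that is all Step 1 uses.
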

\begin{proof}
    Write $\|\cdot\|$ to represent either $\|\cdot\|_\op$ or $\|\cdot\|_\Frob$. We first use the resolvent identity to write,
    \begin{align}
        &\left\|\left(n^{-1} \mathbf{X}_1 \mathbf{X}_1^\top + \mathbf{D}\right)^{-1} - \left(n^{-1} \mathbf{X}_2 \mathbf{X}_2^\top + \mathbf{D}\right)^{-1}\right\|\\
        =& p^{-1}\left\|\left(n^{-1} \mathbf{X}_1 \mathbf{X}_1^\top + \mathbf{D}\right)^{-1}\left\{ \mathbf{X}_1\mathbf{X}_1^\top - \mathbf{X}_2\mathbf{X}_2^\top\right\}\left(n^{-1} \mathbf{X}_2 \mathbf{X}_2^\top + \mathbf{D}\right)^{-1}\right\|\\
        =& p^{-1}\left\|\left(n^{-1} \mathbf{X}_1 \mathbf{X}_1^\top + \mathbf{D}\right)^{-1}\left\{ \mathbf{X}_1(\mathbf{X}_1 - \mathbf{X}_2)^\top + (\mathbf{X}_1 - \mathbf{X}_2) \mathbf{X}_2^\top \right\}\left(n^{-1} \mathbf{X}_2 \mathbf{X}_2^\top + \mathbf{D}\right)^{-1}\right\| \\
        \leq& \dfrac{\|(\mathbf{C}_2 + \mathbf{D})^{-1}\|_\op\|\left(\mathbf{C}_1 + \mathbf{D}\right)^{-1} \mathbf{X}_1\|_\op + \|(\mathbf{C}_1 + \mathbf{D})^{-1}\|_\op\|\left(\mathbf{C}_2 + \mathbf{D}\right)^{-1} \mathbf{X}_2\|_\op}{n} \|\mathbf{X}_1 - \mathbf{X}_2\| \eqsp,
        \label{eq:Apply_Resolvent_identity}
    \end{align}
    where we have introduced the notation $\mathbf{C}_i = n^{-1}\mathbf{X}_i \mathbf{X}_i^\top$ for $i \in \{1,2\}$.
    Furthermore, one can check that $\|(\mathbf{C}_i + \mathbf{D})^{-1}\|_\op = \max_{\uplambda \in \mathrm{Sp}(\mathbf{C}_i + \mathbf{D})} |\uplambda|^{-1} \leq 1/\uplambda_{\min}^\Re(\mathbf{D})$.
    Additionally, we bound the remaining operator norm, for both $i \in \{1,2\}$, it holds,
    \begin{align}
        \big\|\,(\mathbf{C}_i + \mathbf{D})^{-1}\mathbf{X}_i^\top\big\|_{\op}^2
        &= n\,\uplambda_{\max}\!\left((\mathbf{C}_i+\mathbf D)^{-1}\,\mathbf{C}_i\,\big((\mathbf{C}_i+\mathbf D)^{-1}\big)^{\!\dagger}\right) \\
        &= n \sup_{v \in \C^n\backslash \{0\}} \frac{v^\dagger (\mathbf{C}_i+\mathbf{D})^{-1} \mathbf{C}_i (\mathbf{C}_i+\mathbf{D}^\dagger)^{-1} v}{\|v\|^2_2} \\
        &= n \sup_{w \in \C^n\backslash \{0\}} \frac{w^\dagger \mathbf{C}_i w}{\|(\mathbf{C}_i + \mathbf{D}^\dagger)w\|^2_2} \eqsp.
    \end{align}
    where we used the change of variable $w = (\mathbf{C}_i + \mathbf{D}^\dagger)^{-1} v$ in the last line.
    Now remark that from Cauchy-Schwarz inequality, it holds,
    \begin{align}
        \|(\mathbf{C}_i + \mathbf{D}^\dagger) w\|_2 &\geq \dfrac{\left|\left\langle (\mathbf{C}_i + \mathbf{D}^\dagger) w, w \right\rangle\right|}{\|w\|_2} \geq \dfrac{\Re\left(\left\langle (\mathbf{C}_i + \mathbf{D}^\dagger) w, w \right\rangle\right)}{\|w\|_2} \\
        &\geq \dfrac{w^\dagger \mathbf{C}_i w + \uplambda_{\min}^{\Re}(\mathbf{D})\|w\|_2^2}{\|w\|_2} \eqsp.
    \end{align}
    Hence, defining $c = w^\dagger \mathbf{C}_i w / \|w\|_2^2 \geq 0$, we have,
    \begin{align}
        \big\|\,(\mathbf{C}_i+\mathbf D)^{-1}\mathbf X_i^\top\big\|_{\op}^2 &\leq \sup_{c \geq 0} \dfrac{c}{(c + \uplambda_{\min}^\Re(\mathbf{D}))^2} \eqsp.
    \end{align}
    To conclude the proof, we simply remark that the function
    $c \mapsto c / (c + \uplambda_{\min}^\Re(\mathbf{D}))^2$ is maximized at $c = \uplambda_{\min}^\Re(\mathbf{D})$, which yields,
    \begin{align}
        \big\|\,(\mathbf{C}_i + \mathbf D)^{-1}\mathbf X_i^\top\big\|_{\op}^2 &\leq \dfrac{1}{4\uplambda_{\min}^\Re(\mathbf{D})} \eqsp.
    \end{align}
    Plugging this result in \eqref{eq:Apply_Resolvent_identity} concludes the proof.
\end{proof}

\begin{proof}[Proof of \Cref{proposition:Concentration_resolvent_around_mean}] 
    Fir $\mathbf{a}_n \in \R^p$.
    First note that by assumption $\msu_{\lambda_n}$ contains $\{\lambda_n + r_n \rme^{\rmi t} : t \in [0, 2 \uppi)\}$ for all $n \in \N$, thus $\xi_{\alpha, \lambda_n}(r_n \rme^{\rmi t})$ is well-defined for all $t \in [0, 2 \uppi]$.
    The proof heavily relies on \Cref{proposition:Stochdom_local_lipschitz_transforms}, we first exhibit two functions $q : \R^{d \times n} \to \R$ and $s: \R^{d \times n} \times \C \to \C$ such that 
    \begin{align}
        \mathbf{a}_n^\top \hat{\theta}_{\alpha, \lambda_n} = q(\Xtt, \Ytt) \eqsp, \quad \text{and} \quad \xi_{\alpha, \lambda_n}(r_n \rme^{\rmi t}) = s(\Xtt, \Ytt, r_n \rme^{\rmi t}) \eqsp.
    \end{align}
    To this end, we define for any $\bfX \in \R^{d \times n}$, $\bfY \in \R^{d \times n}$, and $\zeta \in \C$,
    \begin{align} \label{eq:def_r_and_h}
        r(\bfX, \bfY, \zeta) &= (n^{-1} \varphi(\bfX) \varphi(\bfX)^\top + n^{-1} \mu_x(\bfX, \bfY) \mu_x(\bfX, \bfY)^\top + \alpha\Lambda(\bfX, \bfY) + \lambda_n \Idd + \zeta \Sigma)^{-1} \eqsp, \\
        h(\bfX, \bfY, \zeta) &= n^{-1} \varphi(\bfX) \bfY^\top + n^{-1} \mu_x(\bfX, \bfY) \mu_y(\bfX, \bfY)^\top + \alpha \Omega(\bfX,\bfY) \eqsp.
    \end{align}
    where $\mu_x, \mu_y$, $\Lambda$ and $\Omega$ were introduced in \Cref{ass:artificial_data}. $r$ and $h$ allow to constructions $q$ and $s$ as follows:
    \begin{align}
        \forall (\bfX, \bfY, \zeta) \in \R^{d \times n} \times \R^{d \times n} \times \C, \quad q(\bfX, \bfY) &= \mathbf{a}_n^\top r(\bfX, \bfY, 0) h(\bfX, \bfY) \eqsp,\\
        \text{and} \quad s(\bfX, \bfY, \zeta) &=  h(\bfX,\bfY)^\top r(\bfX, \bfY, \zeta) h(\bfX, \bfY) \eqsp.
    \end{align}
    In line with the hypothesis of \Cref{proposition:Stochdom_local_lipschitz_transforms}, we now show that $q(\cdot, \cdot)$ and $s(\cdot, \cdot, \zeta)$ are Lipschitz continuous on the compact set
    \begin{align}
        \msk_n = \left\{(\bfX, \bfY) \in \R^{d \times n} \times \R^{1 \times n} : \|(\bfX^\top, \bfY^\top)\|_{\op} \leq 2 \consta (\sqrt{n} + \sqrt{d+1}) \right\} \eqsp,
    \end{align}
    where $\consta > 0$ is defined in \eqref{eq:def_a_1}.
    To this end, let us first derive $\Ltt_{\msk_n}^r(\zeta)$ and $\Ltt_{\msk_n}^h$, the Lipschitz constants of $r(\cdot, \cdot, \zeta)$ and $h(\cdot, \cdot, \zeta)$ on $\msk_n$. 
    Fix any $\zeta \in \msu_{\lambda_n}$ and $(\bfX, \bfY)$ and $(\widetilde{\bfX}, \widetilde{\bfY})$ in $\msk_n$, we have from \Cref{lemma:Lipschitz_resolvent_map}, 
    the Lipschitz property assumption on $\Lambda$ and $\mu_x$, from \Cref{ass:artificial_data}, and the lipschitz property of $\mathbf{M} \mapsto (\Idd + \mathbf{M})^{-1}$ (provided $\Re(\mathrm{Sp}(\mathbf{M})) \subset \R_+$),
    \begin{align}
        &\left\|r(\bfX, \bfY, \zeta) - r(\widetilde{\bfX}, \widetilde{\bfY}, \zeta)\right\|_\op \\
        \leq&\eqsp \dfrac{\sqrt{2}}{\uplambda_{\min}^{\Re}(\Lambda(\bfX,\bfY) + \lambda \Idd + \zeta \Sigma)^{3/2} \sqrt{n}} \left\|(\varphi(\bfX), \mu_x(\bfX,\bfY)) - (\varphi(\widetilde{\bfX}), \mu_x(\widetilde{\bfX},\widetilde{\bfY}))\right\|_{\op} \\
        &+ \dfrac{1}{\uplambda_{\min}^{\Re}(\Lambda(\bfX,\bfY) + \lambda \Idd + \zeta \Sigma)} \left\|\Lambda(\bfX,\bfY) - \Lambda(\widetilde{\bfX},\widetilde{\bfY})\right\|_\op \\
        \leq&\eqsp \dfrac{\const(1 + \uplambda_{\min}^{\Re}(\Lambda(\bfX,\bfY) + \lambda \Idd + \zeta \Sigma)^{1/2})}{\uplambda_{\min}^{\Re}(\Lambda(\bfX,\bfY) + \lambda \Idd + \zeta \Sigma)^{3/2} \sqrt{n}} \left\|(\bfX, \bfY) - (\widetilde{\bfX}, \widetilde{\bfY})\right\|_{\op} \\
        \leq&\eqsp \dfrac{\const(1 + (\lambda_n - |\zeta| \Ltt_\varphi^2)^{1/2})}{(\lambda_n - |\zeta| \Ltt_\varphi^2)^{3/2} \sqrt{n}} \left\|(\bfX, \bfY) - (\widetilde{\bfX}, \widetilde{\bfY})\right\|_{\op}\eqsp,
    \end{align}
    where we have used $\|\Sigma\|_\op \leq \Ltt_\varphi^2$, which follows from \Cref{ass:data_distribution} and \Cref{lemma:Covariance_maximum_eigenvalue}. It results,
    \begin{align} \label{eq:Lipschitz_constant_r}
        \Ltt_{\msk_n}^r(\zeta) \leq \dfrac{\const(1 + (\lambda_n - r_n \Ltt_\varphi^2)^{1/2})}{(\lambda_n - r_n \Ltt_\varphi^2)^{3/2} \sqrt{n}} \eqsp.
    \end{align}
    Similarly for $h$, we use the fact that $(\bfX,\bfY) \in \msk_n$ and $(\widetilde{\bfX}, \widetilde{\bfY}) \in \msk_n$ to write,
    \begin{align}
        \left\|h(\bfX, \bfY) - h(\widetilde{\bfX}, \widetilde{\bfY})\right\|_\op 
        \leq&\eqsp \dfrac{\const \left(\sqrt{n} + \sqrt{d}\right)}{n}\biggl(\left\|\bfY - \widetilde{\bfY} \right\|_\op + \left\|\mu_y(\bfX, \bfY) - \mu_y(\widetilde{\bfX}, \widetilde{\bfY}) \right\|_\op \\
        &\hspace{2.4cm} + \left\|\bfX - \widetilde{\bfX} \right\|_\Frob + \left\|\mu_x(\bfX, \bfY) - \mu_x(\widetilde{\bfX}, \widetilde{\bfY}) \right\|_\op\biggr) \\
        &+ \left\|\Omega(\bfX, \bfY) - \Omega(\widetilde{\bfX}, \widetilde{\bfY})\right\|_\op \\
        \leq& \dfrac{\const}{\sqrt{n}} \left\|(\bfX, \bfY) - (\widetilde{\bfX}, \widetilde{\bfY})\right\|_{\op}\eqsp,
    \end{align}
    thus,
    \begin{align} \label{eq:Lipschitz_constant_h}
        \Ltt_{\msk_n}^h \leq \dfrac{\const}{\sqrt{n}} \eqsp.
    \end{align}
    Combining \eqref{eq:Lipschitz_constant_r} and \eqref{eq:Lipschitz_constant_h}, as well as using the fact that for any $(\bfX,\bfY) \in \msk_n$,
    \begin{align}
        \|r(\bfX, \bfY, \zeta)\|_\op \leq \dfrac{1}{\lambda_n - |\zeta| \Ltt_\varphi^2} \eqsp, \quad \text{and} \quad \|h(\bfX, \bfY)\|_\op \leq \const \eqsp,
    \end{align}
    we obtain,
    \begin{align} \label{eq:Lipschitz_constant_q}
        \Ltt_{\msk_n}^q \leq \dfrac{\|\mathbf{a}_n\|_2\Ltt_{\msk_n}^h}{\lambda_n} + \const \|\mathbf{a}_n\|_2 \Ltt_{\msk_n}^r(0) \leq \dfrac{\const\|\mathbf{a}_n\|_2(1 + \lambda_n^{1/2})}{\lambda_n^{3/2} \sqrt{n}}  \eqsp.
    \end{align}
    and,
    \begin{align} \label{eq:Lipschitz_constant_s}
        \Ltt_{\msk_n}^s(\zeta) \leq \dfrac{\const(1 + (\lambda_n - r_n \Ltt_\varphi^2)^{1/2})}{(\lambda_n - r_n \Ltt_\varphi^2)^{3/2} \sqrt{n}} \eqsp.
    \end{align}
    We now check that $(\Xtt,\Ytt)$ validates the assumptions of \Cref{proposition:Stochdom_local_lipschitz_transforms} for the function $q$ and $s$. 
    First remark that \Cref{proposition:Stochdom_local_lipschitz_transforms_a} of \Cref{proposition:Stochdom_local_lipschitz_transforms} is guarenteed by \Cref{ass:porportionality}. 
    We thus focus on proving that  \Cref{proposition:Stochdom_local_lipschitz_transforms_b} of \Cref{proposition:Stochdom_local_lipschitz_transforms} holds.
    Note that $q$ satisfies  \Cref{proposition:Stochdom_local_lipschitz_transforms_b} trivially. To prove that $s$ satisfies \Cref{proposition:Stochdom_local_lipschitz_transforms_b}, we first note that for any $t \in [0,2\uppi]$,
    \begin{align}
        \partial_\zeta s(\Xtt,\Ytt,r_n \rme^{\rmi t}) = -h(\Xtt,\Ytt)^\top r(\Xtt,\Ytt,r_n \rme^{\rmi t}) \Sigma r(\Xtt,\Ytt,r_n \rme^{\rmi t}) h(\Xtt, \Ytt) \eqsp,
    \end{align}
    additionally we write,
    \begin{align}
        \left|\partial_\zeta s(\Xtt,\Ytt,r_n \rme^{\rmi t}) - \E\left[\partial_\zeta s(\Xtt,\Ytt,r_n \rme^{\rmi t})\right]\right| 
        &\leq
        \left|\partial_\zeta s(\Xtt,\Ytt,r_n \rme^{\rmi t})\right| + \E\left[\left|\partial_\zeta s(\Xtt,\Ytt,r_n \rme^{\rmi t})\right|\right] \\
        &\leq
        \dfrac{1}{\lambda_n - r_n \Ltt_\varphi^2}\left\|h(\Xtt,\Ytt)^\top\right\|_2^2 + \dfrac{1}{\lambda_n - r_n \Ltt_\varphi^2}\E\left[\left\|h(\Xtt,\Ytt)^\top\right\|_2^2\right] \\
        &\stochdom 
        \dfrac{1}{\lambda_n - r_n \Ltt_\varphi^2} + \dfrac{1}{\lambda_n - r_n \Ltt_\varphi^2}\E\left[\left\|h(\Xtt,\Ytt)^\top\right\|_2^2\right]
    \end{align}
    Furthermore, from the Lipschitz properties of $\varphi$, $\mu_x$, $\mu_y$ and $\Omega$, we have,
    \begin{align}
        \left\|h(\Xtt,\Ytt)^\top\right\|_2^2 \leq \dfrac{\const}{n} \E\left[\|(\Xtt,\Ytt)\|_\op^2\right] \leq \const' =: \mathrm{D}_n(r_n \rme^{\rmi t})\eqsp,
    \end{align}
    It results that the ratio $\mathrm{D}_n(r_n \rme^{\rmi t}) / \Ltt_{\msk_n}^q(r_n \rme^{\rmi t})$ is bounded by a constant, 
    independant of $t$ thus \Cref{proposition:Stochdom_local_lipschitz_transforms_b} of \Cref{proposition:Stochdom_local_lipschitz_transforms} holds.
    Having established the correctness of the assumptions of \Cref{proposition:Stochdom_local_lipschitz_transforms} for $q$ and $s$, 
    the statement follows by applying \Cref{proposition:Stochdom_local_lipschitz_transforms}.
\end{proof}

Combining previous results, we obtain the following statement, which garentees the concentration of $\mathcal{G}(\alpha, \lambda)$ around its mean:
\begin{theorem}
    \label{theorem:concentration_generalization_error}
    For all $n \geq 0$, let $\lambda_n > 0$ and $r_n \geq 0$, and assume \Cref{ass:porportionality}, \Cref{ass:data_distribution} and \Cref{ass:artificial_data} hold. 
    Then the following concentration bounds hold:
    \eqref{eq:generalization_error} satisfies,
    \begin{align}
        \left|\theta_*^\top \Sigma_* \hat{\theta}_{\alpha, \lambda_n} - \E\left[\theta_*^\top \Sigma_* \hat{\theta}_{\alpha, \lambda_n}\right]\right| \stochdom \dfrac{\|\beta_\star\|_2}{\lambda_n^{3/2} \sqrt{n}}\left(1 + \sqrt{\lambda_n}\right) \eqsp,
    \end{align}
    \begin{equation}
       \sup_{t \in [0,2\uppi]}\left|\xi_{\alpha, \lambda_n}(r_n \rme^{\rmi t}) - \E\left[\xi_{\alpha, \lambda_n}(r_n \rme^{\rmi t})\right]\right| \stochdom \dfrac{1 + r_n}{(\lambda_n - r_n \Ltt_\varphi^2)^{3/2}\sqrt{n}}\left(1 + \sqrt{\lambda_n - r_n \Ltt_\varphi^2}\right) \eqsp,
    \end{equation}
    and the generalization error satisfies,
    \begin{align}
        \left|\mathcal{G}(\alpha, \lambda_n) - \E[\mathcal{G}(\alpha, \lambda_n)]\right| \stochdom \dfrac{1 + \lambda_n^2}{\lambda_n ^{5/2}\sqrt{n}} + \dfrac{\|\beta_\star\|_2(1 + \sqrt{\lambda_n})}{\lambda_n^{3/2} \sqrt{n}}\eqsp.
    \end{align}
\end{theorem}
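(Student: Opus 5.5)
The first two bounds are exactly \Cref{proposition:Concentration_resolvent_around_mean}, so most of the work is already done. For the first bound I would apply that proposition with $\mathbf{a}_n = \Sigma_\star^\top \theta_\star$. By \Cref{ass:feature_map_and_concentration} and \Cref{lemma:Covariance_maximum_eigenvalue} applied to the stacked vector $(\varphi_\star(\Xtt_1),\varphi(\Xtt_1))$, Cauchy--Schwarz gives $\|\Sigma_\star\|_\op \leq \|\Sigma_{\star\star}\|_\op^{1/2}\|\Sigma\|_\op^{1/2}\leq \Ltt_\varphi^2$. Hence $\|\mathbf{a}_n\|_2 \leq \const \|\theta_\star\|_2$, which is the quantity denoted $\|\beta_\star\|_2$ in the statement. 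The second bound is literally the second statement of \Cref{proposition:Concentration_resolvent_around_mean}, valid for $0\leq r_n<\lambda_n/\Ltt_\varphi^2$.

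For the generalization error I would start from \eqref{eq:generalization_error_rewritten2}. Subtracting expectations, the deterministic terms $\theta_\star^\top\Sigma_{\star\star}\theta_\star+\sigma^2$ cancel, so
\begin{align}
|\mathcal{G}(\alpha,\lambda_n)-\E[\mathcal{G}(\alpha,\lambda_n)]| \leq 2\left|\theta_\star^\top\Sigma_\star\hat\theta_{\alpha,\lambda_n}-\E[\theta_\star^\top\Sigma_\star\hat\theta_{\alpha,\lambda_n}]\right| + \left|\partial_\zeta\xi_{\alpha,\lambda_n}(0)-\E[\partial_\zeta\xi_{\alpha,\lambda_n}(0)]\right| .
\end{align}
The first term is controlled by the first bound. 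For the second term I would apply \Cref{lemma:differentiation_lemma} with $U_n=\xi_{\alpha,\lambda_n}$ and $V_n=\E[\xi_{\alpha,\lambda_n}]$, both holomorphic on $\msu_{\lambda_n}$. Holomorphy of $V_n$ follows by differentiating under the expectation, which is legitimate because of the uniform bound $|\partial_\zeta\xi|\leq \|\SampleCrossCov_\alpha\|_2^2\Ltt_\varphi^2/(\lambda_n-r_n\Ltt_\varphi^2)^2$ on the circle, together with the moment bounds on $\|\SampleCrossCov_\alpha\|_2$ obtained from the Lipschitz assumptions and \Cref{lemma:norm_lipschitz_concentrated_rows}. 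Also, $\E[\partial_\zeta \xi]=\partial_\zeta\E[\xi]$, so the derivative of $V_n$ is the centering term we need. Taking the circle $\sphere(0,r_n)$ then yields
\begin{align}
\left|\partial_\zeta\xi_{\alpha,\lambda_n}(0)-\E[\partial_\zeta\xi_{\alpha,\lambda_n}(0)]\right| \stochdom \frac{1+r_n}{r_n(\lambda_n-r_n\Ltt_\varphi^2)^{3/2}\sqrt n}\left(1+\sqrt{\lambda_n-r_n\Ltt_\varphi^2}\right).
\end{align}

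The remaining step is choosing $r_n$. I would take $r_n=\lambda_n/(2\Ltt_\varphi^2)$, so that $\lambda_n-r_n\Ltt_\varphi^2=\lambda_n/2$. The bound then becomes of order $(1+\lambda_n)(1+\sqrt{\lambda_n})/(\lambda_n^{5/2}\sqrt n)$. Using $(1+\lambda_n)(1+\sqrt{\lambda_n})\leq \const(1+\lambda_n^2)$, this is at most $\const(1+\lambda_n^2)/(\lambda_n^{5/2}\sqrt n)$. Combining the two terms with \Cref{lemma:additivity_multiplicativity_stochdom} gives the announced bound.

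The main obstacle is justifying the uniform-in-$\zeta$ control over the circle, which is needed for \Cref{lemma:differentiation_lemma}, and the holomorphy of the expectation. The former is already contained in the supremum statement of \Cref{proposition:Concentration_resolvent_around_mean}. For the latter, the one point to check is that constants stay uniform, since $\lambda_n$ is allowed to tend to $0$; this is handled by keeping $r_n$ proportional to $\lambda_n$.
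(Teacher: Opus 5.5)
Your proposal is correct and follows the paper's proof essentially step for step. The first two bounds are read off from \Cref{proposition:Concentration_resolvent_around_mean}, with $\mathbf{a}_n=\Sigma_\star^\top\theta_\star$ and $\|\Sigma_\star\|_\op\le\Ltt_\varphi^2$ absorbed into the constant. The third follows from \eqref{eq:generalization_error_rewritten2}, \Cref{lemma:differentiation_lemma} on the circle of radius $r_n=\lambda_n/(2\Ltt_\varphi^2)$, and additivity of $\stochdom$. Your extra remarks fill in details the paper leaves implicit: justifying $\partial_\zeta\E[\xi]=\E[\partial_\zeta\xi]$, the Cauchy--Schwarz bound on $\|\Sigma_\star\|_\op$, and reading $\|\beta_\star\|_2$ as $\|\theta_\star\|_2$.
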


\begin{proof}
    \hypertarget{proof:theorem1}{}
    We leverage \eqref{eq:generalization_error_rewritten2}, as well as the previously derived concentration results for $\xi_{\alpha, \lambda_n}$ and $\theta_*^\top \Sigma_* \hat{\theta}_{\alpha, \lambda_n}$.
    Indeed we have, from \Cref{proposition:Concentration_resolvent_around_mean} that,
    \begin{equation} \label{eq:concentration_theta_Sigma_Qfun}
        \left|\theta_*^\top \Sigma_* \hat{\theta}_{\alpha, \lambda_n} - \E\left[\theta_*^\top \Sigma_* \hat{\theta}_{\alpha, \lambda_n}\right]\right| \stochdom \dfrac{\|\beta_\star\|_2}{\lambda_n^{3/2} \sqrt{n}}\left(1 + \sqrt{\lambda_n}\right) \eqsp,
    \end{equation}
    where the term $\|\Sigma_\star\|_\op \leq \Ltt_\varphi^2$ has been stashed inside the $\stochdom$ notation, by using \Cref{lemma:Covariance_maximum_eigenvalue}.
    Additionally, from \Cref{proposition:Concentration_resolvent_around_mean} applied with $0< r_n< \lambda / \Ltt_\varphi^2$, we get,
    \begin{equation}
        \sup_{t \in [0,2\uppi]}\left|\xi_{\alpha, \lambda_n}(r_n \rme^{\rmi t}) - \E\left[\xi_{\alpha, \lambda_n}(r_n \rme^{\rmi t})\right]\right| \stochdom \dfrac{1 + r_n}{(\lambda_n - r_n \Ltt_\varphi^2)^{3/2}\sqrt{n}}\left(1 + \sqrt{\lambda_n - r_n \Ltt_\varphi^2}\right) \eqsp.
    \end{equation}
    Combining the previous with \Cref{lemma:differentiation_lemma}, we have, for all $r < \lambda / \Ltt_\varphi^2$,
    \begin{align} \label{eq:concentration_derivative_Sfun}
        \left|\partial_\zeta \xi_{\alpha, \lambda_n}(0) - \E\left[\partial_\zeta \xi_{\alpha, \lambda_n}(0) \right]\right| \stochdom \dfrac{1 + r_n}{r_n(\lambda_n - r_n \Ltt_\varphi^2)^{3/2}\sqrt{n}}\left(1 + \sqrt{\lambda_n - r_n \Ltt_\varphi^2}\right) \eqsp.
    \end{align}
    which, for $r_n = \lambda_n / (2 \Ltt_\varphi^2)$ gives,
    \begin{align}
        \left|\partial_\zeta \xi_{\alpha, \lambda_n}(0)  - \E\left[\partial_\zeta \xi_{\alpha, \lambda_n}(0) \right]\right| \stochdom \dfrac{1 + \lambda_n^2}{\lambda_n ^{5/2}\sqrt{n}} \eqsp.
    \end{align}
    where we stashed the dependancy in $\Ltt_\varphi$ inside the $\stochdom$ notation.
    We conclude the proof by merging \eqref{eq:generalization_error_rewritten2}, \eqref{eq:concentration_theta_Sigma_Qfun} and \eqref{eq:concentration_derivative_Sfun}, and using the additivity of $\stochdom$.
\end{proof}

\section{New determinisitic equivalents under correlated data} \label{section:new_deterministic_equivalences}

In this part of the appendix, we state and prove new deterministic equivalents results that generalize the ones of \Cref{proposition:det_equivs}. 
In particular, \Cref{prop:Detequiv_R_twobytwo}, \Cref{prop:DetEquiv_Q_noaug}, and \Cref{prop:DetEquiv_S_noaug} generalize proposition A.4 of \cite{Schroeder2024} to the context of non-i.i.d. data. 
This results are the first of their kind, with the exception of \cite{MorissetHardyDurmus2025} which deals with a slightly different setting and only 
focus on the resolvent matrix. Their result is an hedge case of \Cref{prop:Detequiv_R_twobytwo}. 

First, we introduce two useful identities, namely the resolvent identity, which states for any two invertible matrices 
$\mathbf{A}, \mathbf{B} \in \R^{d\times d}$, that,
\begin{align} \label{eq:resolvent_identity}
    \mathbf{A}^{-1} - \mathbf{B}^{-1} = \mathbf{A}^{-1} \left\{\mathbf{B} - \mathbf{A}\right\} \mathbf{B}^{-1} \eqsp.
\end{align}
And the leave-two-out trick, which is abstracted as the bellow \Cref{lemma:Leave-One-Out-Lemma}, and will allow us to desantangle 
the contribution of a pair of sample from the rest of the data.
\begin{lemma}\label{lemma:Leave-One-Out-Lemma}
    Let $\mathbf{A} \in \R^{p \times p}$, $\mathbf{U} \in \R^{p \times k}$ and $\mathbf{V} \in \R^{k \times p}$, then,
    \begin{align}
        \left(\mathbf{A} + \mathbf{U}\mathbf{V}\right)^{-1}\mathbf{U} = \mathbf{A}^{-1} \mathbf{U} \left(\Id{k} + \mathbf{V}\mathbf{A}^{-1} \mathbf{U}\right)^{-1}          
    \end{align}
\end{lemma}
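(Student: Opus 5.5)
This is the Woodbury-type push-through identity. I will prove it algebraically, assuming $\mathbf{A}$, $\mathbf{A}+\mathbf{U}\mathbf{V}$ and $\Id{k}+\mathbf{V}\mathbf{A}^{-1}\mathbf{U}$ are all invertible. These invertibility conditions are implicit in the statement: in our applications $\mathbf{A}$ is a leave-two-out resolvent argument with positive definite real part, and $\mathbf{V}\mathbf{A}^{-1}\mathbf{U}$ is a weighted $2\times 2$ Gram-type block.

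The key step is the elementary identity
\begin{align}
    \mathbf{U}\left(\Id{k} + \mathbf{V}\mathbf{A}^{-1}\mathbf{U}\right) = \mathbf{U} + \mathbf{U}\mathbf{V}\mathbf{A}^{-1}\mathbf{U} = \left(\mathbf{A} + \mathbf{U}\mathbf{V}\right)\mathbf{A}^{-1}\mathbf{U} \eqsp,
\end{align}
which follows by expanding both sides. Multiplying on the left by $(\mathbf{A}+\mathbf{U}\mathbf{V})^{-1}$ and on the right by $(\Id{k}+\mathbf{V}\mathbf{A}^{-1}\mathbf{U})^{-1}$ gives
\begin{align}
    \left(\mathbf{A}+\mathbf{U}\mathbf{V}\right)^{-1}\mathbf{U} = \mathbf{A}^{-1}\mathbf{U}\left(\Id{k}+\mathbf{V}\mathbf{A}^{-1}\mathbf{U}\right)^{-1} \eqsp,
\end{align}
which is the claim.

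The only point requiring care is invertibility. If only $\mathbf{A}$ and $\Id{k}+\mathbf{V}\mathbf{A}^{-1}\mathbf{U}$ are assumed invertible, then $\mathbf{A}+\mathbf{U}\mathbf{V}$ is invertible as well. Indeed, $\det(\mathbf{A}+\mathbf{U}\mathbf{V}) = \det(\mathbf{A})\det(\Id{p}+\mathbf{A}^{-1}\mathbf{U}\mathbf{V})$, and Sylvester's determinant identity gives $\det(\Id{p}+\mathbf{A}^{-1}\mathbf{U}\mathbf{V}) = \det(\Id{k}+\mathbf{V}\mathbf{A}^{-1}\mathbf{U})$, which is nonzero.

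There is no genuine obstacle. In the later leave-two-out applications, the lemma will be used with $\mathbf{U}=n^{-1}\Phi_i\mathrm{W}_\alpha$ and $\mathbf{V}=\mathrm{W}_\alpha\Phi_i^\top$ (so $k=2$), with $\mathbf{A}$ the resolvent argument with sample $i$ removed. There the invertibility of $\Id{2}+n^{-1}\mathrm{W}_\alpha\Phi_i^\top\mathbf{A}^{-1}\Phi_i\mathrm{W}_\alpha$ should be checked directly: its symmetric part is at least $\Id{2}$ whenever $\Re\mathbf{A}\succ 0$.
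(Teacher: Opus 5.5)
Your proof is correct, but it takes a different route from the paper. The paper starts from the Woodbury formula for $(\mathbf{A}+\mathbf{U}\mathbf{V})^{-1}$ and multiplies it on the right by $\mathbf{U}$. It then factors out $\mathbf{A}^{-1}\mathbf{U}$ and simplifies with $\Id{k}-(\Id{k}+\mathbf{M})^{-1}\mathbf{M}=(\Id{k}+\mathbf{M})^{-1}$ for $\mathbf{M}=\mathbf{V}\mathbf{A}^{-1}\mathbf{U}$.

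You instead verify the push-through identity $\mathbf{U}(\Id{k}+\mathbf{V}\mathbf{A}^{-1}\mathbf{U})=(\mathbf{A}+\mathbf{U}\mathbf{V})\mathbf{A}^{-1}\mathbf{U}$ by direct expansion, then cancel the two invertible factors. Your argument is more elementary and self-contained; Woodbury itself is usually proved by exactly this kind of check. It also makes explicit the invertibility hypotheses that the paper's statement leaves implicit. Your Sylvester-determinant step correctly shows that invertibility of $\mathbf{A}$ and of $\Id{k}+\mathbf{V}\mathbf{A}^{-1}\mathbf{U}$ already forces invertibility of $\mathbf{A}+\mathbf{U}\mathbf{V}$, which is the same hypothesis under which Woodbury holds.

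The paper's route passes through the full Woodbury expansion of the resolvent, which it reuses in the later leave-one-out computations. That expansion also follows from your identity in one line via the resolvent identity: $(\mathbf{A}+\mathbf{U}\mathbf{V})^{-1}-\mathbf{A}^{-1}=-(\mathbf{A}+\mathbf{U}\mathbf{V})^{-1}\mathbf{U}\mathbf{V}\mathbf{A}^{-1}$, into which you substitute the lemma. Both arguments are purely algebraic. They therefore apply verbatim to the complex matrices (complex regularizations) where the paper actually uses the lemma, even though the statement is written over $\R$.
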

\begin{proof}
    We have thanks to the Woodburry matrix inversion lemma \cite{HendersonSearle1981_WoodburyFormula} that, 
    \begin{align}
        \left(\mathbf{A} + \mathbf{U}\mathbf{V}\right)^{-1}\mathbf{U} &= \left(\mathbf{A}^{-1} - \mathbf{A}^{-1}\mathbf{U} \left(\Id{k} + \mathbf{V}\mathbf{A}^{-1} \mathbf{U}\right)^{-1} \mathbf{V} \mathbf{A}^{-1} \right)  \mathbf{U} \\
        &= \mathbf{A}^{-1}\mathbf{U} \left\{\Id{k} -  \left(\Id{k} + \mathbf{V}\mathbf{A}^{-1} \mathbf{U}\right)^{-1} \mathbf{V} \mathbf{A}^{-1} \mathbf{U}\right\} \\
        &= \mathbf{A}^{-1} \mathbf{U} \left(\Id{k} + \mathbf{V}\mathbf{A}^{-1} \mathbf{U}\right)^{-1} \eqsp.
    \end{align}
\end{proof}

We consider a variant of the usual deterministic equivalent result of sample covariance resolvents.
More precisely, we will consider a sample covariance matrix build from non-i.i.d. samples, in the particular
setting where we have i.i.d. pairs $(\Phi_i^{(1)}, \Phi_i^{(2)})$, for $i \in \{1, \ldots, n\}$.
We define
\begin{align}
    \Phi^{(1)} = (\Phi_i^{(1)})_{i=1}^n \in \R^{p \times k} \eqsp, \quad  \text{ and } \quad \Phi^{(2)} = (\Phi_i^{(2)})_{i=1}^n \in \R^{p \times n} \eqsp,
\end{align}
and consider the sample covariance matrices $\SampleCov^{(1)}$ and $\SampleCov^{(2)}$ of $\Phi^{(1)}$ and $\Phi^{(2)}$ respectively defined as,
\begin{align}
    \SampleCov^{(1)} = \dfrac{1}{n} \Phi^{(1)} \Phi^{(1)\top} \in \R^{p \times p} \eqsp, \quad \text{ and } \quad \SampleCov^{(2)} = \dfrac{1}{n} \Phi^{(2)} \Phi^{(2)\top} \in \R^{p \times p} \eqsp.
\end{align}
as well as the resolvent matrix of $\SampleCov^{(1)} + \SampleCov^{(2)}$, for any complex matrix $\bfD \in \C^{p \times p}$ with $\uplambda_{\min}^\Re(\bfD) > 0$,
\begin{align}
    \Resolvent(\bfD) = \left(\SampleCov^{(1)} + \SampleCov^{(2)} + \bfD\right)^{-1} \eqsp.
\end{align}
Previous works such as \cite{chouard2022quantitative} do not apply in our setting, 
as $\Phi^{(1)}$ and $\Phi^{(2)}$ are not independent. 
Perhaps surprisingly, this lack of independence fundamentally alters the structure 
of the deterministic equivalent of $\Resolvent(\bfD)$, rendering the expressions 
derived in \cite{chouard2022quantitative} inapplicable. 
For the sake of brevity, and throughout this appendix, we focus solely on 
approximating expectations $\E[Q]$ for the quantities $Q$ under consideration, 
and we do not explicitly address their concentration around the mean. 
Nevertheless, under a joint concentration assumption on the pairs 
$(\Phi_i^{(1)}, \Phi_i^{(2)})$ for $i \in \{1, \ldots, n\}$—for instance, assuming 
joint Lipschitz concentration—these quantities naturally concentrate around 
their expectations. 
In that case, the proof would follow similar lines to 
\Cref{proposition:Concentration_resolvent_around_mean}. 
Finally, we stress that particular effort was devoted to deriving 
\emph{non-asymptotic bounds} that hold for a finite number of samples $n$ and a 
finite feature dimension $p$.

We first prove:
\begin{proposition}
    \label{prop:Detequiv_R_twobytwo}
    Let $\Psi \in \R^{2p \times n}$ be a random matrix with centered, i.i.d., and $\kappa_n$-Lipschitz concentrated columns.
    Let $\bfD \in \C^{p \times p}$ be a deterministic matrix with $\uplambda_{\min}^\Re(\bfD) > 0$.
    Denote $\Phi^{(1)} = \Psi_{1:p, :} \in \R^{p \times n}$ and $\Phi^{(2)} = \Psi_{p+1:2p, :}\in \R^{p \times n}$, as well as,
    \begin{equation} \label{def:resolvent_in_statement}
        \Resolvent(\bfD) = \left(n^{-1} \Phi^{(1) \top} \Phi^{(1)} + n^{-1} \Phi^{(2) \top} \Phi^{(2)} + \bfD\right)^{-1} \eqsp.
    \end{equation}
    Denote $\Phi_1 = (\Phi^{(1)}_1, \Phi^{(2)}_1) \in \R^{p \times 2}$ and write
    \begin{align}
        \detequiv(\bfD) = \left(\E\left[\Phi_1 \Brm(\bfD) \Phi_1^\top\right]+ \bfD \right)^{-1} \eqsp,
        \quad \text{ where } \quad
        \Brm(\bfD) = \left(\Id{2} + n^{-1}\E\left[\Phi_1^\top \E\left[\Resolvent(\bfD)\right]\Phi_1\right]\right)^{-1} \eqsp.
    \end{align}
    Then, it holds for a large enough universal constant $\const$,
    \begin{align}
        \left\| \E\left[\Resolvent(\bfD)\right] - \detequiv(\bfD) \right\|_\Frob
        \leq \eqsp \dfrac{\const \sqrt{p} (1+ \kappa_n^8)}{n \uplambda_{\min}^\Re(\bfD)^{7/2}} \left\{\sqrt{\uplambda_{\min}^\Re(\bfD)}\left(1 + \dfrac{\sqrt{p}}{n}\right) + \dfrac{\sqrt{p}}{n} + \dfrac{\sqrt{p}}{\sqrt{\uplambda_{\min}^\Re(\bfD)} n}\right\} \eqsp.
    \end{align}
\end{proposition}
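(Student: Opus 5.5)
The plan is the standard leave-one-out comparison between $\E[\Resolvent(\bfD)]$ and $\detequiv(\bfD)$, with a $p\times 2$ block in place of a single column; I read \eqref{def:resolvent_in_statement} as $\Resolvent(\bfD)=(n^{-1}\sum_i \Phi_i\Phi_i^\top+\bfD)^{-1}$ with $\Phi_i=(\Phi_i^{(1)},\Phi_i^{(2)})\in\R^{p\times 2}$. Write $\Resolvent=\Resolvent(\bfD)$, $\detequiv=\detequiv(\bfD)$, $\lambda:=\uplambda_{\min}^\Re(\bfD)$, and let $\Resolvent_{-i}$ be the resolvent with the pair $\Phi_i$ removed, which is independent of $\Phi_i$. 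By the resolvent identity \eqref{eq:resolvent_identity},
\begin{align}
\E[\Resolvent]-\detequiv = \E\Bigl[\Resolvent\Bigl(\E[\Phi_1\Brm\Phi_1^\top] - \tfrac1n\textstyle\sum_i \Phi_i\Phi_i^\top\Bigr)\Bigr]\detequiv .
\end{align}
By exchangeability this reduces to $\E[\Resolvent]\E[\Phi_1\Brm\Phi_1^\top]\detequiv-\E[\Resolvent\Phi_1\Phi_1^\top]\detequiv$.

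First, I apply \Cref{lemma:Leave-One-Out-Lemma} with $\mathbf{U}=\Phi_1$, $\mathbf{V}=n^{-1}\Phi_1^\top$ and $k=2$. This gives $\Resolvent\Phi_1=\Resolvent_{-1}\Phi_1(\Id{2}+n^{-1}\Phi_1^\top\Resolvent_{-1}\Phi_1)^{-1}$. The random $2\times2$ matrix $n^{-1}\Phi_1^\top\Resolvent_{-1}\Phi_1$ is then compared with its deterministic proxy $n^{-1}\E[\Phi_1^\top\E[\Resolvent]\Phi_1]$, which defines $\Brm$. The error splits into three terms:
\begin{enumerate*}[label=(\roman*)]
\item fluctuation of the quadratic forms $n^{-1}\Phi_1^{(a)\top}\Resolvent_{-1}\Phi_1^{(b)}$, $a,b\in\{1,2\}$, around their conditional means. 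For $a\neq b$ I polarize, e.g.\ $(\Phi^{(1)}+\Phi^{(2)})^\top\Resolvent_{-1}(\Phi^{(1)}+\Phi^{(2)})$, and then apply \Cref{corollary:randomHansonWright} to the jointly Lipschitz concentrated $2p$-vector. This costs $\kappa_n^2\|\Resolvent_{-1}\|_\Frob/n\lesssim\kappa_n^2\sqrt p/(n\lambda)$;
\item the difference $\Resolvent_{-1}$ versus $\Resolvent$ inside traces, which is a rank-two perturbation of size $O(1/(n\lambda))$ in trace norm after normalisation;
\item the concentration of $n^{-1}\tr(\Sigma_{ab}\Resolvent)$ around its mean, with $\Sigma_{ab}$ the blocks of $\Cov(\Phi_1)$. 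This follows from \Cref{lemma:Lipschitz_resolvent_map}: the resolvent is $\lambda^{-3/2}n^{-1/2}$-Lipschitz in Frobenius norm, so the normalized trace fluctuates at order $\kappa_n\sqrt p/(n^{3/2}\lambda^{3/2})$.
\end{enumerate*}

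Second, I control the inverse $2\times2$ factor. Since $\Re$ of $\Phi_1^\top\Resolvent_{-1}\Phi_1$ is positive semidefinite (a Cauchy--Schwarz argument as in \Cref{lemma:Lipschitz_resolvent_map}), both $(\Id{2}+\cdot)^{-1}$ and $\Brm$ have operator norm at most $1$. The difference of the two inverses is therefore bounded by the difference of the arguments, again via \eqref{eq:resolvent_identity}. After this step the remaining terms have the form $\E[\Resolvent_{-1}\Phi_1\,\Delta\,\Phi_1^\top]\detequiv$ with $\Delta$ a $2\times2$ error. I bound their Frobenius norm by testing against a matrix $\mathbf{A}$ with $\|\mathbf{A}\|_\Frob=1$, applying Cauchy--Schwarz, and using moment bounds: $\E\|\Phi_1\|^4\lesssim\kappa_n^4p^2$ from \Cref{lemma:Covariance_maximum_eigenvalue}, plus the Hanson--Wright moment bounds. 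Finally I replace $\E[\Resolvent_{-1}]$ by $\E[\Resolvent]$ where needed, and use $\|\detequiv\|_\op\le1/\lambda$ (the term $\E[\Phi_1\Brm\Phi_1^\top]$ has PSD real part).

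The main obstacle is bookkeeping the powers of $\lambda$ and $\kappa_n$ so that the bound comes out as $\sqrt p\,\lambda^{-7/2}/n$ times the bracket. The dangerous term is the product of the two fluctuations in (i), which involves $\Phi_1$ twice and $\Resolvent_{-1}$ up to three times. This is where $\kappa_n^8$ appears, from a fourth moment of the quadratic form times $\|\Phi_1\|^4$. I also expect to need the identity $\Resolvent_{-1}\Phi_1=\Resolvent\Phi_1(\Id{2}+n^{-1}\Phi_1^\top\Resolvent_{-1}\Phi_1)$ to trade one resolvent for a bounded factor and avoid an extra $\lambda^{-1}$. The terms carrying $\sqrt p/n$ and $\sqrt p/(\sqrt\lambda\, n)$ in the bracket should come from the trace-concentration step (iii) and the rank-two correction (ii), respectively.
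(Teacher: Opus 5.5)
Your architecture matches the paper's: the leave-one-out step with the $p\times 2$ block via \Cref{lemma:Leave-One-Out-Lemma} ($k=2$), the comparison of $\hat{\Brm}=(\Id{2}+n^{-1}\Phi_1^\top\Resolvent_{-1}\Phi_1)^{-1}$ with $\Brm$, and testing against $\mathbf{A}$ with $\|\mathbf{A}\|_\Frob=1$. The gap is in the step that must deliver the $\sqrt{p}/n$ rate; as you describe it, that step loses a factor $p$.

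Write $\lambda=\uplambda_{\min}^\Re(\bfD)$. Suppose you bound $\hat{\Brm}-\Brm$ in norm by the difference of the arguments, then apply Cauchy--Schwarz with $\E\|\Phi_1\|^4\lesssim\kappa_n^4p^2$. This gives
\[
\bigl|\E\,\tr\bigl(\mathbf{A}\Resolvent_{-1}\Phi_1\Delta\Phi_1^\top\detequiv\bigr)\bigr|\lesssim\lambda^{-2}\bigl(\E\|\Phi_1\|^4\bigr)^{1/2}\bigl(\E\|\Delta\|^2\bigr)^{1/2}\lesssim\frac{\kappa_n^4\,p^{3/2}}{n\lambda^3},
\]
which is a factor $p$ above the target $\sqrt{p}/(n\lambda^3)$ and diverges when $p\asymp n$. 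Your second-order ``product of fluctuations'' term fails the same way. With $\|\Phi_1\|^2\sim p$ and $\|\Delta\|^2\sim p/(n^2\lambda^2)$ it is of order $p^2/(n^2\lambda^4)$, which does not vanish in the proportional regime. So the real bookkeeping issue is powers of $p$, not of $\lambda$ or $\kappa_n$. No norm bound on $\hat{\Brm}-\Brm$ can fix this; the cancellation has to happen inside the expectation.

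The missing idea is decorrelation. The paper never expands to second order and never uses $\|\Phi_1\|^2$. It introduces the deterministic matrix $\tilde{\Brm}=(\Id{2}+n^{-1}\E[\Phi_1^\top\Resolvent_{-1}\Phi_1])^{-1}$ and splits $\Brm-\hat{\Brm}=(\Brm-\tilde{\Brm})+(\tilde{\Brm}-\hat{\Brm})$.

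The first piece is deterministic. By independence of $\Phi_1$ and $\Resolvent_{-1}$, only $\E[\Phi_1\Phi_1^\top]$ enters, whose operator norm is at most $\kappa_n^2$.

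The second piece satisfies the exact identity $\tilde{\Brm}-\hat{\Brm}=n^{-1}\hat{\Brm}\{\Phi_1^\top\Resolvent_{-1}\Phi_1-\E[\Phi_1^\top\Resolvent_{-1}\Phi_1]\}\tilde{\Brm}$, whose bracket is centered. Setting $\chi=\Phi_1^\top\detequiv\mathbf{A}\Resolvent_{-1}\Phi_1$, the expectation becomes a covariance:
\[
\bigl|\E\,\tr\bigl(\chi\{\tilde{\Brm}-\hat{\Brm}\}\bigr)\bigr|\le n^{-1}\sqrt{\Var(\hat{\Brm}\chi)\,\Var(\Phi_1^\top\Resolvent_{-1}\Phi_1)}.
\]
Here $\|\mathbf{A}\|_\Frob=1$ is what makes it work. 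By \Cref{corollary:randomHansonWright}, $\Var(\chi)\lesssim\kappa_n^4\|\detequiv\mathbf{A}\Resolvent_{-1}\|_\Frob^2\le\kappa_n^4\lambda^{-4}$ up to lower-order terms, with no factor $p$. Meanwhile $\Var(\Phi_1^\top\Resolvent_{-1}\Phi_1)\lesssim\kappa_n^4p\lambda^{-2}$. Together these give $\sqrt{p}/(n\lambda^3)$.

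The product of fluctuations is absorbed through \Cref{lemma:matrix_variance_product_bound}: $\Var(\hat{\Brm}\chi)\le2\Var(\chi)+2\|\E\chi\|^2\Var(\hat{\Brm})$, with $\Var(\hat{\Brm})\le n^{-2}\Var(\Phi_1^\top\Resolvent_{-1}\Phi_1)$. Only the mean of $\chi$ enters, which is at most of order $\sqrt{p}\,\kappa_n^2\lambda^{-2}$; $\|\Phi_1\|^2$ never appears.

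The same caveat applies when you replace $\E[\Resolvent_{-1}]$ by $\E[\Resolvent]$. The quantity $n^{-1}\|\E[\Resolvent_{-1}\Phi_1\hat{\Brm}\Phi_1^\top\Resolvent_{-1}]\|_\Frob$ is $O(\sqrt{p}/(n\lambda^2))$ only if $\Phi_1$ is integrated out before taking norms. Bounding it through $\E\|\Phi_1\|^2$ gives $O(p/(n\lambda^2))$, which again does not vanish.
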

\begin{proof}
    Introduce the following notations,
    \begin{align}
        \SampleCov^{(1)} = \dfrac{1}{n}\Phi^{(1)} \Phi^{(1)\top} \in \R^{p \times p} \eqsp, \quad \text{ and } \quad \SampleCov^{(2)} = \dfrac{1}{n}\Phi^{(2)} \Phi^{(2)\top} \in \R^{p \times p} \eqsp,
    \end{align}
    \begin{align}
        \SampleCov_-^{(1)} = \SampleCov^{(1)} - \dfrac{1}{n} \Phi_1^{(1)} \Phi_1^{(1)\top} \in \R^{p \times p} \eqsp, \quad \text{ and } \quad \SampleCov_-^{(2)} = \SampleCov^{(2)} - \dfrac{1}{n} \Phi_1^{(2)} \Phi_1^{(2)\top} \in \R^{p \times p} \eqsp,
    \end{align}
    \begin{align}
        \Sigma_1 = \E\left[\Phi_1^{(1)} \Phi_1^{(1)\top}\right] \eqsp, \quad \Sigma_2 = \E\left[\Phi_1^{(2)} \Phi_1^{(2)\top}\right] \eqsp, \quad \Sigma_{1,2} = \E\left[\Phi_1^{(1)} \Phi_1^{(2)\top}\right] \eqsp,
    \end{align}
    and,
    \begin{align}
        \Resolvent_-(\bfD) = (\SampleCov^{(1)}_- + \SampleCov^{(2)}_- + \bfD)^{-1} \eqsp.
    \end{align}
    We first notice that thanks to the above notations, we have,
    \begin{align}
        \Resolvent(\bfD) = \left(\SampleCov_-^{(1)} + \SampleCov_-^{(2)} + \dfrac{1}{n}\Phi_1 \Phi_1^\top + \bfD\right)^{-1} \eqsp,
    \end{align}
    thus, using the matrix inversion \Cref{lemma:Leave-One-Out-Lemma}, we get,
    \begin{align} \label{eq:Woodbury_Resolvent_identity}
        \Resolvent(\bfD) = \Resolvent_-(\bfD) - \dfrac{1}{n} \Resolvent_-(\bfD) \Phi_1 \left(\Id{2} + \dfrac{1}{n}\Phi_{1}^\top\Resolvent_-(\bfD) \Phi_1\right)^{-1} \Phi_1^\top \Resolvent_-(\bfD) \eqsp,
    \end{align}
    We will denote $\hat{\Brm}(\bfD) = \left(\Id{2} + n^{-1}\Phi_{1}^\top \Resolvent_-(\bfD) \Phi_1\right)^{-1}$, the dilation matrix
    that naturally appears in the above application of \Cref{lemma:Leave-One-Out-Lemma}.
    Multiplying by $\Phi_1$ in \eqref{eq:Woodbury_Resolvent_identity}, we get,
    \begin{align} \label{eq:LeaveTwoOutIdentity}
        \Resolvent(\bfD) \Phi_1 = \Resolvent_-(\bfD) \Phi_1 \hat{\Brm}(\bfD) \eqsp.
    \end{align}
    Leveraging \eqref{eq:LeaveTwoOutIdentity}, as well as the resolvent identity \eqref{eq:resolvent_identity}, it results that,
    \begin{align}
        \E\left[\Resolvent(\bfD)\right] - \detequiv(\bfD) =& \eqsp \E\left[\Resolvent(\bfD)\left\{\E\left[\Phi_1 \Brm(\bfD) \Phi_1^\top\right] - \SampleCov^{(1)} - \SampleCov^{(2)}\right\}\detequiv(\bfD) \right] \\
        =& \eqsp \E\left[\Resolvent(\bfD)\left\{\E\left[\Phi_1 \Brm(\bfD) \Phi_1^\top\right] - \Phi_1 \Phi_1^\top\right\}\detequiv(\bfD) \right] \\
        =& \eqsp \E\left[\Resolvent(\bfD) - \Resolvent_-(\bfD)\right]\E\left[\Phi_1 \Brm(\bfD) \Phi_1^\top\right] \detequiv(\bfD) \\
        &+ \E\left[\Resolvent_-(\bfD) \left\{\E\left[\Phi_1 \Brm(\bfD) \Phi_1^\top\right] - \Phi_1 \Brm(\bfD) \Phi_1^\top\right\}\right] \detequiv(\bfD) \\
        &+ \E\left[\Resolvent_-(\bfD) \Phi_1 \left\{ \Brm(\bfD)  -  \hat{\Brm}(\bfD) \right\}\Phi_1^\top\right] \detequiv(\bfD) \eqsp,
    \end{align}
    which can be further simplified using
    \begin{align}
        \E\left[\Resolvent_-(\bfD) \left\{\E\left[\Phi_1 \Brm(\bfD) \Phi_1^\top\right] - \Phi_1 \Brm(\bfD) \Phi_1^\top\right\}\right] = 0 \eqsp,
    \end{align}
    as well as using \eqref{eq:Woodbury_Resolvent_identity} again, this proves,
    \begin{align} \label{eq:Resolvent_decomposition_leavetwoout}
        \E\left[\Resolvent(\bfD)\right] - \detequiv(\bfD)
        &= \dfrac{1}{n} \E\left[\Resolvent_-(\bfD) \Phi_1 \hat{\Brm}(\bfD) \Phi_1^\top \Resolvent_-(\bfD)\right]\E\left[\Phi_1 \Brm(\bfD) \Phi_1^\top\right] \detequiv(\bfD) \\
        &\quad + \E\left[\Resolvent_-(\bfD) \Phi_1 \left\{ \Brm(\bfD)  -  \hat{\Brm}(\bfD) \right\}\Phi_1^\top\right] \detequiv(\bfD) \eqsp.
    \end{align}
    The remainging of the proof focuses on bounding the Frobenius norms of the two terms in the right-hand side of \eqref{eq:Resolvent_decomposition_leavetwoout}.
    To this end, we first remark that,
    \begin{align}
        \hat{\Brm}(\bfD) =\left(\Id{2} + n^{-1}\Phi_1^\top \Resolvent_-(\bfD)\Phi_1\right)^{-1} \preceq \Id{2} \eqsp, \quad \text{and} \quad  \Brm(\bfD) =\left(\Id{2} + n^{-1}\E\left[\Phi_1^\top \E\left[\Resolvent(\bfD)\right]\Phi_1\right]\right)^{-1} \preceq \Id{2}\eqsp.
    \end{align}
    This implies in particular that,
    \begin{align} \label{eq:B_matrix_bound_1}
        \E\left[\Phi_1 \Brm(\bfD) \Phi_1^\top\right] \preceq \E\left[\Phi_1 \Phi_1^\top\right] = \Sigma_1 + \Sigma_2 + \Sigma_{1,2} + \Sigma_{1,2}^\top \eqsp.
    \end{align}
    and,
    \begin{align}
        \E\left[\Resolvent_-(\bfD) \Phi_1 \hat{\Brm}(\bfD) \Phi_1^\top \Resolvent_-(\bfD)\right]
        \preceq \E\left[\Resolvent_-(\bfD) \Phi_1 \Phi_1^\top \Resolvent_-(\bfD)\right]
        = \E\left[\Resolvent_-(\bfD) \left\{ \Sigma_1 + \Sigma_2 + \Sigma_{1,2} + \Sigma_{1,2}^\top \right\} \Resolvent_-(\bfD)\right] \eqsp,
    \end{align}
    where we have used the tower property and $\preceq$ denotes the Lowner ordering on positive semi-definite matrices.
    Hence, leveraging \Cref{lemma:Covariance_maximum_eigenvalue}, as well as the non-decreasing property of $\|\cdot\|_\op$ for $\preceq$, it results that,
    \begin{align}
        \|\E\left[\Phi_1 \Brm(\bfD) \Phi_1^\top\right]\|_\op \leq \|\Sigma_1 + \Sigma_2 + \Sigma_{1,2} + \Sigma_{1,2}^\top\|_\op \leq \kappa_n^2 \eqsp,
    \end{align}
    and,
    \begin{align}
        \left\|\E\left[\Resolvent_-(\bfD) \Phi_1 \hat{\Brm}(\bfD) \Phi_1^\top \Resolvent_-(\bfD)\right]\right\|_\op \leq \dfrac{\kappa_n^2}{\uplambda_{\min}^\Re(\bfD)^2} \eqsp.
    \end{align}
    Using these bounds, we can control the norm of the first term in \eqref{eq:Resolvent_decomposition_leavetwoout}, indeed,
    \begin{align}
        \dfrac{1}{n} \left\|\E\left[\Resolvent_-(\bfD) \Phi_1 \hat{\Brm}(\bfD) \Phi_1^\top \Resolvent_-(\bfD)\right]\E\left[\Phi_1 \Brm(\bfD) \Phi_1^\top\right] \detequiv(\bfD)\right\|_\Frob
        &\leq \dfrac{\sqrt{p} \kappa_n^4}{n \uplambda_{\min}^\Re(\bfD)^3} \eqsp.
        \label{eq:Resolvent_decomposition_leavetwoout_term_1}
    \end{align}
    We now focus on the second term in \eqref{eq:Resolvent_decomposition_leavetwoout}, to this end, write,
    \begin{align}
        \left\|\E\left[\Resolvent_-(\bfD) \Phi_1 \left\{ \Brm(\bfD)  -  \hat{\Brm}(\bfD) \right\}\Phi_1^\top\right] \detequiv(\bfD)\right\|_\Frob
        &= \sup_{\|\mathbf{A}\|_\Frob = 1} \E\left[ \tr\left( \mathbf{A} \Resolvent_-(\bfD) \Phi_1 \left\{ \Brm(\bfD)  -  \hat{\Brm}(\bfD) \right\}\Phi_1^\top\detequiv(\bfD)\right)\right] \\
        &= \sup_{\|\mathbf{A}\|_\Frob = 1} \E\left[\tr\left(  \Phi_1^\top\detequiv(\bfD)\mathbf{A}\Resolvent_-(\bfD) \Phi_1 \left\{ \Brm(\bfD)  -  \hat{\Brm}(\bfD) \right\} \right) \right]
        \label{eq:Annoying_term_to_bound}
    \end{align}
    and introduce,
    \begin{align}
        \tilde{\Brm}(\bfD) = \left(\Id{2} + \dfrac{1}{n} \E\left[\Phi_1 \Resolvent_-(\bfD) \Phi_1^\top\right]\right)^{-1} \eqsp.
    \end{align}
    Then, for any $\mathbf{A}$ such that $\|\mathbf{A}\|_\Frob = 1$, standard calculations yield,
    \begin{align}
        \E\left[\tr\left(  \Phi_1^\top\detequiv(\bfD)\mathbf{A}\Resolvent_-(\bfD) \Phi_1\left\{ \Brm(\bfD)  -  \hat{\Brm}(\bfD) \right\} \right) \right]
        =& \eqsp \tr\left(\E\left[\Phi_1^\top \detequiv(\bfD)\mathbf{A}\Resolvent_-(\bfD) \Phi_1\right]\left\{ \Brm(\bfD)  -  \tilde{\Brm}(\bfD) \right\}\right) \\
        &+ \E\left[\tr\left(\Phi_1^\top\detequiv(\bfD)\mathbf{A}\Resolvent_-(\bfD) \Phi_1 \left\{\tilde{\Brm}(\bfD) - \hat{\Brm}(\bfD)\right\}\right)\right]
        \label{eq:Annoying_term_with_A_to_bound}
    \end{align}
    we control the two terms in \eqref{eq:Annoying_term_with_A_to_bound} , first using the Cauchy-Schwarz inequality,
    \begin{align} \label{eq:Annoying_term_with_A_to_bound_term_1}
        \tr\left(\E\left[\Phi_1^\top\detequiv(\bfD)\mathbf{A}\Resolvent_-(\bfD) \Phi_1\right]\left\{ \Brm(\bfD)  -  \tilde{\Brm}(\bfD) \right\}\right) \leq \|\E\left[\Phi_1^\top\detequiv(\bfD)\mathbf{A}\Resolvent_-(\bfD) \Phi_1\right]\|_\Frob \|\Brm(\bfD)  -  \tilde{\Brm}(\bfD)\|_\Frob
    \end{align}
    furthermore, we have,
    \begin{align}
        \|\E\left[\Phi_1^\top\detequiv(\bfD)\mathbf{A}\Resolvent_-(\bfD) \Phi_1\right] \|_\Frob
        &= \left\|\begin{pmatrix}
            \tr\left(\Sigma_1 \detequiv(\bfD) \mathbf{A} \E\left[\Resolvent_-(\bfD)\right]\right) & \tr\left(\Sigma_{1,2} \detequiv(\bfD) \mathbf{A} \E\left[\Resolvent_-(\bfD)\right]\right) \\
            \tr\left(\Sigma_{1,2} \detequiv(\bfD) \mathbf{A} \E\left[\Resolvent_-(\bfD)\right]\right) & \tr\left(\Sigma_{2} \detequiv(\bfD) \mathbf{A} \E\left[\Resolvent_-(\bfD)\right]\right)
        \end{pmatrix}\right\|_\Frob \\
        &\leq \dfrac{2\sqrt{p}\kappa_n^2}{\uplambda_{\min}^\Re(\bfD)^2} \eqsp.
        \label{eq:Expectation_of_quadform_bound}
    \end{align}
    and, using the $1$-Lipschitz property of $\mathbf{M} \mapsto (\Id{2} + \mathbf{M})^{-1}$ on the cone of positive semi-definite matrices,
    as well as \Cref{eq:Woodbury_Resolvent_identity}, we have that,
    \begin{align}
        \|\Brm(\bfD) - \tilde{\Brm}(\bfD)\|_\Frob
        &\leq \dfrac{1}{n} \left\|\E\left[\Phi_1^\top \E\left[\Resolvent(\bfD) - \Resolvent_-(\bfD)\right]\Phi_1\right]\right\|_\Frob
        \leq \dfrac{1}{n^2} \left\|\E\left[\Phi_1^\top \E\left[\Resolvent_-(\bfD) \Phi_1 \hat{\Brm}(\bfD) \Phi_1^\top \Resolvent_-(\bfD)\right] \Phi_1\right]\right\|_\Frob \\
        &\leq \dfrac{1}{n^2} \left\|\E\left[\Phi_1^\top \Resolvent_-(\bfD) \E\left[\Phi_1 \Phi_1^\top\right] \Resolvent_-(\bfD) \Phi_1\right]\right\|_\Frob
        \leq \dfrac{\kappa_n^4 \sqrt{p}}{\uplambda_{\min}^\Re(\bfD)^2 n^2} \eqsp.
        \label{eq:DilationMatrixBias_Bound}
    \end{align}
    Plugging \eqref{eq:Expectation_of_quadform_bound} and \eqref{eq:DilationMatrixBias_Bound} in \eqref{eq:Annoying_term_with_A_to_bound_term_1}, we get,
    \begin{align} \label{eq:First_term_final}
        \tr\left(\E\left[\Phi_1^\top\detequiv(\bfD)\mathbf{A}\Resolvent_-(\bfD) \Phi_1\right]\left\{ \Brm(\bfD)  -  \tilde{\Brm}(\bfD) \right\}\right)
        \leq \dfrac{2 \kappa_n^6 p}{\uplambda_{\min}^\Re(\bfD)^4 n^2} \eqsp.
    \end{align}
    Moving on to the second term in the right hand side of \eqref{eq:Annoying_term_with_A_to_bound}, we introduce
    \begin{align}
        \chi(\bfD) = \Phi_1^\top \detequiv(\bfD) \Abf \Resolvent_-(\bfD) \Phi_1 \eqsp,
    \end{align}
    so that, using \eqref{eq:resolvent_identity} yields
    \begin{align} \label{eq:Annoying_term_with_A_to_bound_term_first_decomposition}
        &\E\left[\tr\left(\Phi_1^\top\detequiv(\bfD)\mathbf{A}\Resolvent_-(\bfD) \Phi_1 \left\{\tilde{\Brm}(\bfD) - \hat{\Brm}(\bfD)\right\}\right)\right] \\
        =&\eqsp \dfrac{1}{n}\E\left[\tr\left(\hat{\Brm}(\bfD)\chi(\bfD) \tilde{\Brm}(\bfD) \left\{\Phi_1^\top \Resolvent_-(\bfD) \Phi_1  - \E\left[\Phi_1^\top \Resolvent_-(\bfD) \Phi_1\right]\right\}\right)\right] \\
        \leq& \dfrac{1}{n} \sqrt{\Var(\hat{\Brm}(\bfD)\chi(\bfD)) \Var(\Phi_1\Resolvent_-(\bfD) \Phi_1^\top)} \eqsp.
    \end{align}
    Note that the last line results from the Cauchy-Schwarz inequality, as well as $\tilde{\Brm}(\bfD)$ being a deterministic matrix with $\|\tilde{\Brm}(\bfD)\|_\op \leq 1$,
    the $\Var$ operator being defined for random matrices in \eqref{eq:matrix_variance_definition}.
    We bound the variances of each entry of $\Phi_1\detequiv(\bfD)\mathbf{A}\Resolvent_-(\bfD) \Phi_1^\top$ thanks to \Cref{corollary:randomHansonWright} which gives,
    \begin{align} \label{eq:Variance_of_quadform_bound}
        \Var(\Phi_1\Resolvent_-(\bfD) \Phi_1^\top)
        &\leq \const \left(\dfrac{\kappa_n^4 p}{\uplambda_{\min}^\Re(\bfD)^2} + \dfrac{p \kappa_n^6}{n \uplambda_{\min}^\Re(\bfD)^3} \right)
    \end{align}
    and, applying \Cref{lemma:matrix_variance_product_bound} to the variance of $\hat{\Brm}(\bfD)\chi(\bfD)$, we get,
    \begin{align} \label{eq:Variance_of_hatBTheta_bound_1}
        \Var(\hat{\Brm}(\bfD)\chi(\bfD)) \leq 2 \Var(\chi(\bfD)) + 2 \|\E\left[\chi(\bfD)\right]\|_\op^2 \Var(\hat{\Brm}(\bfD)) \eqsp.
    \end{align}
    additionally the $1$-Lipschitz property of $\mathbf{M} \mapsto (\Id{2} + \mathbf{M})^{-1}$ on the space of positive matrices, implies
    \begin{align} \label{eq:Variance_of_hatBTheta_bound_2}
        \Var(\hat{\Brm}(\bfD)) \leq \dfrac{1}{n^2} \Var(\Phi_1^\top \Resolvent_-(\bfD) \Phi_1) \leq \dfrac{\const}{n^2} \left(\dfrac{\kappa_n^4p}{\uplambda_{\min}^\Re(\bfD)^2} + \dfrac{p \kappa_n^6}{n \uplambda_{\min}^\Re(\bfD)^3} \right) \eqsp.
    \end{align}
    Finally, using \Cref{lemma:Covariance_maximum_eigenvalue} and \Cref{corollary:randomHansonWright}, we have that,
    \begin{align} \label{eq:Variance_of_hatBTheta_bound_3}
        \|\E\left[\chi(\bfD)\right]\|_\op \leq \dfrac{1}{\uplambda_{\min}^\Re(\bfD)^2} \left(\|\Sigma_1\|_\op + \|\Sigma_2\|_\op + \|\Sigma_{1,2}\|_\op + \|\Sigma_{1,2}^\top\|_\op \right) \leq \dfrac{\kappa_n^2}{\uplambda_{\min}^\Re(\bfD)^2} \eqsp,
    \end{align}
    and,
    \begin{align} \label{eq:Variance_of_hatBTheta_bound_4}
        \Var(\chi(\bfD)) \leq \const \left(\dfrac{\kappa_n^4}{\uplambda_{\min}^\Re(\bfD)^4} + \dfrac{p \kappa_n^6}{n \uplambda_{\min}^\Re(\bfD)^5}\right)\eqsp.
    \end{align}
    To conclude, we have by merging \eqref{eq:Variance_of_quadform_bound}, \eqref{eq:Variance_of_hatBTheta_bound_1}, \eqref{eq:Variance_of_hatBTheta_bound_2}, \eqref{eq:Variance_of_hatBTheta_bound_3} and \eqref{eq:Variance_of_hatBTheta_bound_4}, we get,
    \begin{align} \label{eq:Variance_of_hatBTheta_bound_final}
        \Var(\hat{\Brm}(\bfD)\chi(\bfD))
        \leq \const \left(\dfrac{\kappa_n^4}{\uplambda_{\min}^\Re(\bfD)^4} + \dfrac{\kappa_n^6 p}{\uplambda_{\min}^\Re(\bfD)^5 n} + \dfrac{\kappa_n^8 p}{\uplambda_{\min}^\Re(\bfD)^6 n^2} + \dfrac{\kappa_n^{10} p}{\uplambda_{\min}^\Re(\bfD)^7 n^3} \right) \eqsp.
    \end{align}
    And, plugging \eqref{eq:Variance_of_quadform_bound} and \eqref{eq:Variance_of_hatBTheta_bound_final} in \eqref{eq:Annoying_term_with_A_to_bound_term_first_decomposition}, we get for a large enough constant $\const$,
    \begin{align} \label{eq:Second_term_final}
        &\E\left[\tr\left(\Phi_1^\top\detequiv(\bfD)\mathbf{A}\Resolvent_-(\bfD) \Phi_1 \left\{\tilde{\Brm}(\bfD) - \hat{\Brm}(\bfD)\right\}\right)\right] \\
        \leq&\eqsp \dfrac{\const}{n} \left(\dfrac{\kappa_n^2 \sqrt{p}}{\uplambda_{\min}^\Re(\bfD)} + \dfrac{\sqrt{p} \kappa_n^3}{\sqrt{n} \uplambda_{\min}^\Re(\bfD)^{3/2}} \right)
        \left(\dfrac{\kappa_n^2}{\uplambda_{\min}^\Re(\bfD)^2} + \dfrac{\kappa_n^3 \sqrt{p}}{\uplambda_{\min}^\Re(\bfD)^{5/2} \sqrt{n}} + \dfrac{\kappa_n^4\sqrt{p}}{\uplambda_{\min}^\Re(\bfD)^2 n} + \dfrac{\kappa_n^5 \sqrt{p}}{\uplambda_{\min}^\Re(\bfD)^{5/2} n^{3/2}} \right) \\
        =&\eqsp \dfrac{\const\sqrt{p}}{n} \Bigg(
            \dfrac{\kappa_n^4 }{\uplambda_{\min}^\Re(\bfD)^3}
            + \dfrac{2\kappa_n^5 \sqrt{p}}{\uplambda_{\min}^\Re(\bfD)^{7/2} \sqrt{n}}
            + \dfrac{\kappa_n^6 \sqrt{p}}{\uplambda_{\min}^\Re(\bfD)^{3} n}
            + \dfrac{2\kappa_n^7 \sqrt{p}}{\uplambda_{\min}^\Re(\bfD)^{7/2} n^{3/2}}
            + \dfrac{\kappa_n^6 \sqrt{p}}{\uplambda_{\min}^\Re(\bfD)^{4} n}
            + \dfrac{\kappa_n^8 \sqrt{p}}{\uplambda_{\min}^\Re(\bfD)^{4} n^{2}}
        \Bigg)\eqsp.
    \end{align}
    By plugging \eqref{eq:First_term_final}, \eqref{eq:Second_term_final} and \eqref{eq:Annoying_term_with_A_to_bound} in \eqref{eq:Annoying_term_to_bound}, we thus get,
    \begin{align} \label{eq:Resolvent_decomposition_leavetwoout_term_2}
        &\left\|\E\left[\Resolvent_-(\bfD) \Phi_1 \left\{ \Brm(\bfD)  -  \hat{\Brm}(\bfD) \right\} \Phi_1^\top \right] \detequiv(\bfD)\right\|_\Frob \\
        &\leq \dfrac{\const\sqrt{p}}{n} \Bigg(
            \dfrac{\kappa_n^4 }{\uplambda_{\min}^\Re(\bfD)^3}
            + \dfrac{\kappa_n^5 \sqrt{p}}{\uplambda_{\min}^\Re(\bfD)^{7/2} \sqrt{n}}
            + \dfrac{\kappa_n^6 \sqrt{p}}{\uplambda_{\min}^\Re(\bfD)^{3} n}
            + \dfrac{\kappa_n^7 \sqrt{p}}{\uplambda_{\min}^\Re(\bfD)^{7/2} n^{3/2}} \\
           &\hspace{1.5cm} + \dfrac{\kappa_n^6 \sqrt{p}}{\uplambda_{\min}^\Re(\bfD)^{4} n}
            + \dfrac{\kappa_n^8 \sqrt{p}}{\uplambda_{\min}^\Re(\bfD)^{4} n^{2}}
        \Bigg) \eqsp.
    \end{align}
    The claim follows from \eqref{eq:Resolvent_decomposition_leavetwoout}, \eqref{eq:Resolvent_decomposition_leavetwoout_term_1} and\eqref{eq:Resolvent_decomposition_leavetwoout_term_2}, as well as merely reorganizing each term.
\end{proof}

We derive similar results for some variants of $\Resolvent(\bfD)$, which directly relate to \Cref{proposition:det_equivs}. 
Indeed, the following \Cref{prop:DetEquiv_Q_noaug}can be readily applied to approximate $\theta_\star^\top \Sigma_\star \E\left[\hat{\theta}_{\alpha, \lambda_n}\right]$, 
we show:
\begin{proposition} \label{prop:DetEquiv_Q_noaug}
    Let $\Psi \in \R^{(2p + 2) \times n}$ be a random matrix with centered, i.i.d., and $\kappa_n$-Lipschitz concentrated columns.
    Let $\bfD \in \C^{p \times p}$ be a deterministic matrix with $\uplambda_{\min}^\Re(\bfD) > 0$.
    Denote $\Phi^{(1)} = \Psi_{1:p, :} \in \R^{p \times n}$, $\Phi^{(2)} = \Psi_{p+1:2p, :}\in \R^{p \times n}$, and $\Phi^{(3)} = \Psi_{2p+1, :} \in \R^{1 \times n}$, and $\Phi^{(4)} = \Psi_{2p+2, :} \in \R^{1 \times n}$
    as well as for all $(i,j) \in \{(1,2), (3,4)\}$, $\Phi^{(i,j)} = (\Phi^{(i)}, \Phi^{(j)})$ and $\Phi_1^{(i,j)} = (\Phi_1^{(i)}, \Phi_1^{(j)})$, we consider
    \begin{align}
        \Resolvent(\bfD) = \left(n^{-1} \Phi^{(1,2)} \Phi^{(1,2)\top} + \bfD\right)^{-1}\eqsp, \quad \Qfun(\bfD) =  \Resolvent(\bfD) n^{-1}\Phi^{(1,2)} \Phi^{(3, 4)\top} \eqsp.
    \end{align}
    Write,
    \begin{align}
        \detequiv(\bfD) = \left(\E\left[\Phi_1^{(1,2)} \Brm(\bfD) \Phi_1^{(1,2)\top}\right]+ \bfD \right)^{-1} \eqsp,
        \quad \text{where} \quad
        \Brm(\bfD) = \left(\Id{2} + n^{-1}\E\left[\Phi_1^{(1,2)\top} \E\left[\Resolvent(\bfD)\right]\Phi_1^{(1,2)}\right]\right)^{-1} \eqsp,
    \end{align}
    define,
    \begin{align}
        \Qequiv(\bfD) = \detequiv(\bfD) \E\left[\Phi_1^{(1,2)} \Brm(\bfD) \Phi_1^{(3,4)\top}\right] \eqsp.
    \end{align}
    Then, it holds for a large enough universal constant $\const$,
    \begin{align}
        \|\E\left[\Qfun(\bfD)\right] - \Qequiv(\bfD)\|_2
        \leq& \eqsp \kappa_n^2\left\|\E\left[\Resolvent(\bfD)\right] - \detequiv(\bfD)\right\|_\Frob \\
        &+ \dfrac{\const \sqrt{p}(1 + \kappa_n^{13})}{\uplambda_{\min}^\Re(\bfD)^4 n} \biggl\{\uplambda_{\min}^\Re(\bfD) + \uplambda_{\min}^\Re(\bfD)^2 + \dfrac{p}{n} + \dfrac{p}{\uplambda_{\min}^\Re(\bfD) n^2} + \dfrac{p}{\uplambda_{\min}^\Re(\bfD)^2 n^3} + \dfrac{\uplambda_{\min}^\Re(\bfD)^{1/2}}{\sqrt{n}} \\
        &\hspace{2.6cm} + \dfrac{p}{\uplambda_{\min}^\Re(\bfD)^{1/2} n^{3/2}} + \dfrac{p}{\uplambda_{\min}^\Re(\bfD)^{3/2} n^{5/2}} + \dfrac{p}{\uplambda_{\min}^\Re(\bfD)^{5/2} n^{7/2}}\biggr\} \eqsp,
    \end{align}
\end{proposition}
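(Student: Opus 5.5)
The plan mirrors the proof of \Cref{prop:Detequiv_R_twobytwo}: isolate the first pair of columns with the leave-two-out identity, then reduce everything to quantities already controlled there. Write $\Phi_1 = \Phi_1^{(1,2)}$ and $\Psi_1' = \Phi_1^{(3,4)}$, and let $\Resolvent_-(\bfD)$ and $\hat{\Brm}(\bfD)$ be as in that proof. By exchangeability of the columns,
\[
\E[\Qfun(\bfD)] = \E\left[\Resolvent(\bfD)\Phi_1\Psi_1'^\top\right].
\]
\Cref{lemma:Leave-One-Out-Lemma}, in the form \eqref{eq:LeaveTwoOutIdentity}, then gives
\[
\E[\Qfun(\bfD)] = \E\left[\Resolvent_-(\bfD)\Phi_1\hat{\Brm}(\bfD)\Psi_1'^\top\right].
\]
Next I add and subtract $\Brm(\bfD)$ and $\tilde{\Brm}(\bfD)$ (the latter defined with $\Resolvent_-$). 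Since $\Resolvent_-$ is independent of $(\Phi_1,\Psi_1')$, this yields the decomposition
\begin{align}
\E[\Qfun(\bfD)] - \Qequiv(\bfD) &= \left\{\E[\Resolvent_-(\bfD)] - \detequiv(\bfD)\right\}\E\left[\Phi_1\Brm(\bfD)\Psi_1'^\top\right] \\
&\quad + \E\left[\Resolvent_-(\bfD)\Phi_1\{\tilde{\Brm}(\bfD)-\Brm(\bfD)\}\Psi_1'^\top\right] \\
&\quad + \E\left[\Resolvent_-(\bfD)\Phi_1\{\hat{\Brm}(\bfD)-\tilde{\Brm}(\bfD)\}\Psi_1'^\top\right].
\end{align}

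\textbf{First term.} Split $\E[\Resolvent_-]-\detequiv = (\E[\Resolvent_-]-\E[\Resolvent]) + (\E[\Resolvent]-\detequiv)$. Since $\Brm \preceq \Id{2}$ and \Cref{lemma:Covariance_maximum_eigenvalue} bounds the cross-covariance, $\|\E[\Phi_1\Brm\Psi_1'^\top]\|_\op \leq \kappa_n^2$; this produces the leading term $\kappa_n^2\|\E[\Resolvent(\bfD)]-\detequiv(\bfD)\|_\Frob$. The remaining piece $\E[\Resolvent]-\E[\Resolvent_-]$ is $n^{-1}\E[\Resolvent_-\Phi_1\hat{\Brm}\Phi_1^\top\Resolvent_-]$ by \eqref{eq:Woodbury_Resolvent_identity}, which is $O(\sqrt{p}\,\kappa_n^4/(n\uplambda_{\min}^\Re(\bfD)^2))$ in Frobenius norm, exactly as in \eqref{eq:Resolvent_decomposition_leavetwoout_term_1}.

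\textbf{Second term.} It is deterministic in the $\Brm$-difference and is bounded by
\[
\|\E[\Phi_1^\top \Resolvent_- ^\top\ldots]\|\,\|\Brm-\tilde{\Brm}\|_\Frob .
\]
Using \eqref{eq:DilationMatrixBias_Bound}, $\|\Brm-\tilde{\Brm}\|_\Frob \lesssim \kappa_n^4\sqrt{p}/(\uplambda_{\min}^\Re(\bfD)^2n^2)$. The prefactor is a vector of norm at most $\kappa_n^2\cdot\E\|\Resolvent_-\Phi_1\|\cdot\|\Psi_1'\| \lesssim \sqrt{p}\,\kappa_n^2/\uplambda_{\min}^\Re(\bfD)$. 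This term is therefore of lower order.

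\textbf{Third term (main obstacle).} This is the fluctuation term, and I handle it as in \eqref{eq:Annoying_term_with_A_to_bound_term_first_decomposition}. To bound a vector norm, test against a unit $\mathbf{a}\in\R^p$ and use
\[
\hat{\Brm}-\tilde{\Brm} = n^{-1}\hat{\Brm}\left\{\E[\Phi_1^\top\Resolvent_-\Phi_1]-\Phi_1^\top\Resolvent_-\Phi_1\right\}\tilde{\Brm}.
\]
This turns the term into $n^{-1}\E[\tr(\tilde{\Brm}\,\Psi_1'^\top\mathbf{a}^\top\Resolvent_-\Phi_1\hat{\Brm}\,\{\cdots\})]$. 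Cauchy--Schwarz then bounds it by
\[
n^{-1}\sqrt{\Var\left(\hat{\Brm}\,\Phi_1^\top\Resolvent_-^\top\mathbf{a}\,\Psi_1'\right)\,\Var\left(\Phi_1^\top\Resolvent_-\Phi_1\right)},
\]
where centering is legitimate because the braces have zero conditional mean. The second variance is already controlled by \eqref{eq:Variance_of_quadform_bound}. For the first, apply \Cref{lemma:matrix_variance_product_bound} together with \eqref{eq:Variance_of_hatBTheta_bound_2}. The bilinear form $\mathbf{a}^\top\Resolvent_-\Phi_1\Psi_1'^\top$ is a quadratic form in the concentrated column $\Psi_1$ (use polarization, or the block $2p+2$ vector), with random matrix of Frobenius norm $\lesssim 1/\uplambda_{\min}^\Re(\bfD)$ independent of $\Psi_1$. \Cref{corollary:randomHansonWright} then gives its variance $\lesssim \kappa_n^4/\uplambda_{\min}^\Re(\bfD)^2 + p\kappa_n^6/(n\uplambda_{\min}^\Re(\bfD)^3)$, the latter coming from the Lipschitz constant of $\Resolvent_-$ (\Cref{lemma:Lipschitz_resolvent_map}).

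The delicate part is the bookkeeping. Its outcome is the long list of powers of $p/n$ and $\uplambda_{\min}^\Re(\bfD)$ appearing in the statement, with $\kappa_n^{13}$ as the worst aggregate power. I expect the assembly to require crude bounds $\kappa_n^j\leq 1+\kappa_n^{13}$ and a final regrouping of all terms over the common factor $\sqrt{p}/(\uplambda_{\min}^\Re(\bfD)^4 n)$.
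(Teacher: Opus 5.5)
Your proposal is correct and follows essentially the same route as the paper. Both use the leave-one-out identity $\Resolvent\Phi_1 = \Resolvent_-\Phi_1\hat{\Brm}$ and the same four pieces ($\E[\Resolvent]-\detequiv$, $\E[\Resolvent_-]-\E[\Resolvent]$, $\Brm-\tilde{\Brm}$, $\hat{\Brm}-\tilde{\Brm}$), and both handle the fluctuation term by Cauchy--Schwarz with \Cref{corollary:randomHansonWright} and \Cref{lemma:matrix_variance_product_bound}. The differences are cosmetic: you order the pieces differently, and you use a cruder $\sqrt{p}$ prefactor on the $\Brm-\tilde{\Brm}$ term, which the stated bound still absorbs since $\sqrt{p}\,\uplambda_{\min}^\Re(\bfD)/n \leq \uplambda_{\min}^\Re(\bfD)^2 + p/n$.
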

\begin{proof}
    We first introduce the following matrices,
    \begin{align}
        \Resolvent_-(\bfD) = \left(n^{-1} \Phi^{(1,2)}\Phi^{(1,2)\top} - n^{-1} \Phi^{(1,2)}_1\Phi^{(1,2)\top}_1 + \bfD\right)^{-1}
    \end{align}
    for all $i \in \{1,2\}$,
    \begin{align}
        \SampleCov^{(i)} = \dfrac{1}{n}\Phi^{(i) \top} \Phi^{(i)}\eqsp,
        \quad
        \SampleCov_-^{(i)} = \SampleCov^{(i)} - \dfrac{1}{n} \Phi_1^{(i)} \Phi_1^{(i)\top} \eqsp,
        \quad \text{and} \quad
        \Sigma_i = \E\left[\Phi_1^{(i)}\Phi_1^{(i)\top}\right]
    \end{align}
    and for all $(i,j) \in \{(1,2),(3,4)\}$,
    \begin{align}
        \SampleCov^{(i,j)} = \dfrac{1}{n}\Phi^{(i) \top} \Phi^{(j)} \eqsp,
        \quad
        \SampleCov_-^{(i,j)} = \SampleCov^{(i,j)} - \dfrac{1}{n} \Phi_1^{(i)} \Phi_1^{(j)\top} \eqsp,
        \quad \text{ and } \quad
        \Sigma_{i,j} = \E\left[\Phi_1^{(i)}\Phi_1^{(j)\top}\right]
    \end{align}
    as well as the dilation matrix,
    \begin{align}
        \hat{\Brm}(\bfD) = \left(\Id{2} + n^{-1}\Phi_1^{(1,2)\top} \Resolvent_-(\bfD)\Phi_1^{(1,2)}\right)^{-1}\eqsp.
    \end{align}
    Thanks to calculations similar to the ones presented in the proof of \Cref{prop:Detequiv_R_twobytwo}, we write,
    \begin{align}
        \E\left[\Qfun(\bfD)\right] &= \E\left[\Resolvent(\bfD) \Phi_1^{(1, 2)} \Phi_1^{(3,4)\top}\right]
        = \E\left[\Resolvent_-(\bfD) \Phi_1^{(1, 2)} \hat{\Brm}(\bfD) \Phi_1^{(3,4)\top}\right] \\
        &= \E\left[\Resolvent_-(\bfD) \Phi_1^{(1, 2)} \left\{\hat{\Brm}(\bfD) - \Brm(\bfD)\right\} \Phi_1^{(3,4)\top}\right] + \E\left[\Resolvent_-(\bfD)\right] \E\left[\Phi_1^{(1, 2)} \Brm(\bfD)\Phi_1^{(3,4)\top}\right]\eqsp,
    \end{align}
    hence,
    \begin{align} \label{eq:Q_decomposition}
        \E\left[\Qfun(\bfD)\right] - \Qequiv(\bfD) =&\eqsp \E\left[\Resolvent_-(\bfD) \Phi_1^{(1, 2)} \left\{\hat{\Brm}(\bfD) - \Brm(\bfD)\right\} \Phi_1^{(3,4)\top}\right] \\
        &+  \left\{\E\left[\Resolvent_-(\bfD)\right] - \E\left[\Resolvent(\bfD)\right]\right\} \E\left[\Phi_1^{(1, 2)} \Brm(\bfD)\Phi_1^{(3,4)\top}\right] \\
        &+ \left\{\E\left[\Resolvent(\bfD)\right] - \detequiv(\bfD)\right\} \E\left[\Phi_1^{(1, 2)} \Brm(\bfD)\Phi_1^{(3,4)\top}\right]\eqsp.
    \end{align}
    Recalling that $\Brm(\bfD) \preceq \Id{2}$, and using \Cref{lemma:Covariance_maximum_eigenvalue},we have,
    \begin{align}
        \left\|\E\left[\Phi_1^{(1,2)} \Brm(\bfD) \Phi_1^{(3,4)\top}\right]\right\|_\op \leq \left\|\E\left[\Phi_1^{(1,2)} \Phi_1^{(3,4)\top}\right]\right\|_\op \leq \kappa_n^2\eqsp,
    \end{align}
    Additionally, we have from the resolvent identity, and \Cref{lemma:Covariance_maximum_eigenvalue} again,
    \begin{align}
        \left\|\E\left[\Resolvent_-(\bfD)\right] - \E\left[\Resolvent(\bfD)\right]\right\|_\Frob
        &= \dfrac{1}{n} \left\|\E\left[\Resolvent_-(\bfD) \Phi_1^{(1,2)} \Phi_1^{(1,2)\top} \Resolvent_-(\bfD)\right]\right\|_\Frob \\
        &\leq \dfrac{\kappa_n^2 \sqrt{p}}{\uplambda_{\min}^\Re(\bfD)^2 n} \eqsp,
    \end{align}
    thus,
    \begin{align} \label{eq:BoundTerm2_decompositionQ}
        \|\left\{\E\left[\Resolvent_-(\bfD)\right] - \E\left[\Resolvent(\bfD)\right]\right\} \E\left[\Phi_1 \Brm(\bfD)\Phi_1^{(3)}\right]\|_\Frob \leq \dfrac{\kappa_n^4 \sqrt{p}}{\uplambda_{\min}^\Re(\bfD)^2 n} \eqsp,
    \end{align}
    and,
    \begin{align} \label{eq:Q_norm_decomposition}
        \left\|\E\left[\Qfun(\bfD)\right] - \Qequiv(\bfD)\right\|_2 \leq& \eqsp \left\|\E\left[\Resolvent_-(\bfD) \Phi_1^{(1, 2)} \left\{\hat{\Brm}(\bfD) - \Brm(\bfD)\right\} \Phi_1^{(3,4)\top}\right]\right\|_2 \\
        & + \dfrac{\kappa_n^4 \sqrt{p}}{\uplambda_{\min}^\Re(\bfD)^2 n} + \kappa_n^2\left\|\E\left[\Resolvent(\bfD)\right] - \detequiv(\bfD)\right\|_\Frob
    \end{align}
    It remains to control the first term in the right-hand side of \eqref{eq:Q_norm_decomposition}, to this end, we introduce the notation $\widetilde{\Brm}(\bfD)$ for,
    \begin{align}
        \widetilde{\Brm}(\bfD) = \left(\Id{2} + n^{-1}\E\left[\Phi_1^{(1,2)\top} \Resolvent_-(\bfD) \Phi_1^{(1,2)} \right]\right)^{-1}\eqsp,
    \end{align}
    then we have,
    \begin{align}
        \left\|\E\left[\Resolvent_-(\bfD) \Phi_1^{(1,2)} \left\{\hat{\Brm}(\bfD) - \Brm(\bfD)\right\} \Phi_1^{(3,4)\top}\right]\right\|_2
        \leq&\eqsp \left\|\E\left[\Resolvent_-(\bfD) \Phi_1^{(1,2)} \left\{\hat{\Brm}(\bfD) - \widetilde{\Brm}(\bfD)\right\} \Phi_1^{(3,4)\top}\right]\right\|_2 \\
        &+ \dfrac{\kappa_n^2}{\uplambda_{\min}^{\Re}(\bfD)^2} \|\Brm(\bfD) - \widetilde{\Brm}(\bfD)\|_\op \eqsp.
    \end{align}
    Proceeding similarly to \eqref{eq:DilationMatrixBias_Bound}, we have,
    \begin{align}\label{eq:DilationMatrixBias_Bound_2}
        \|\Brm(\bfD) - \widetilde{\Brm}(\bfD)\|_\op \leq \dfrac{\kappa_n^4 \sqrt{p}}{\uplambda_{\min}^\Re(\bfD)^2 n^2} \eqsp,
    \end{align}
    and from the resolvent identity,
    \begin{align}
        &\left\|\E\left[\Resolvent_-(\bfD) \Phi_1^{(1,2)} \left\{\hat{\Brm}(\bfD) - \widetilde{\Brm}(\bfD)\right\} \Phi_1^{(3,4)\top}\right]\right\|_2 \\
        =&\eqsp \dfrac{1}{n} \left\|\E\left[\Resolvent_-(\bfD) \Phi_1^{(1,2)} \hat{\Brm}(\bfD)\left\{\Phi_1^{(1,2)^\top} \Resolvent_-(\bfD) \Phi_1^{(1,2)}  - \E\left[\Phi_1^{(1,2)^\top} \Resolvent_-(\bfD) \Phi_1^{(1,2)}\right]\right\} \widetilde{\Brm}(\bfD) \Phi_1^{(3,4)\top}\right]\right\|_2 \eqsp.
    \end{align}
    Thus, using $\|\mathbf{x}\|_2 = \sup_{\|\mathbf{a}\|_2 = 1} \mathbf{a}^\top \mathbf{x}$ and the Cauchy-Schwarz inequality, we have,
    \begin{align}
        &\left\|\E\left[\Resolvent_-(\bfD) \Phi_1^{(1,2)} \left\{\hat{\Brm}(\bfD) - \widetilde{\Brm}(\bfD)\right\} \Phi_1^{(3,4)\top}\right]\right\|_2 \\
        \leq& \eqsp \sup_{\|\mathbf{a}\|_2 = 1}\dfrac{1}{n} \sqrt{\Var\left(\widetilde{\Brm}(\bfD) \Phi_1^{(3,4)\top} \mathbf{a}^\top \Resolvent_-(\bfD) \Phi_1^{(1,2)} \hat{\Brm}(\bfD)\right) \Var\left(\Phi_1^{(1,2)^\top} \Resolvent_-(\bfD) \Phi_1^{(1,2)}\right)} \\
        \leq& \dfrac{\const \sqrt{p}}{n}\left(\dfrac{\kappa_n^2 }{\uplambda_{\min}^\Re(\bfD)} + \dfrac{\kappa_n^3}{\uplambda_{\min}^\Re(\bfD)^{3/2} \sqrt{n}}\right) \sup_{\|\mathbf{a}\|_2 = 1} \sqrt{\Var\left(\Phi_1^{(3,4)\top} \mathbf{a}^\top \Resolvent_-(\bfD) \Phi_1^{(1,2)} \hat{\Brm}(\bfD)\right)} \eqsp.
        \label{eq:BoundTerm1_decompositionQ}
    \end{align}
    The last line follows from the same lines as \eqref{eq:Variance_of_quadform_bound},
    and the fact that $\widetilde{\Brm}(\bfD)$ is deterministic and $\widetilde{\Brm}(\bfD) \preceq \Id{2}$. Now, fix $\mathbf{a}$ such that $\|\mathbf{a}\|_2 = 1$, we write for simplicity,
    \begin{align}
        \chi(\bfD) = \Phi_1^{(3,4)\top} \mathbf{a}^\top \Resolvent_-(\bfD) \Phi_1^{(1,2)} \eqsp,
    \end{align}
    then we have from \Cref{lemma:matrix_variance_product_bound},
    \begin{align} \label{eq:VarThetaHatBrm_intermediate}
        \Var\left(\chi(\bfD) \hat{\Brm}(\bfD)\right) \leq \Var\left(\chi(\bfD)\right) + \dfrac{\kappa_n^4}{\uplambda_{\min}^\Re(\bfD)^2} \Var\left(\hat{\Brm}(\bfD)\right)\eqsp,
    \end{align}
    then, similarly to \eqref{eq:Variance_of_hatBTheta_bound_2}, we have,
    \begin{align}
        \Var\left(\hat{\Brm}(\bfD)\right) \leq \dfrac{\const}{n^2} \left(\dfrac{\kappa_n^4p}{\uplambda_{\min}^\Re(\bfD)^2} + \dfrac{p \kappa_n^6}{n \uplambda_{\min}^\Re(\bfD)^3} \right) \eqsp,
    \end{align}
    and using \Cref{corollary:randomHansonWright} on all the entries of $\chi(\bfD)$, then over the entries, it results,
    \begin{align}
        \Var\left(\chi(\bfD)\right) \leq \const \left(\dfrac{\kappa_n^4}{\uplambda_{\min}^\Re(\bfD)^2} + \dfrac{p \kappa_n^6}{n \uplambda_{\min}^\Re(\bfD)^3}\right) \eqsp,
    \end{align}
    and finally,
    \begin{align} \label{eq:VarThetaHat_final}
        \Var\left(\chi(\bfD) \hat{\Brm}(\bfD)\right)
        \leq \const \left(\dfrac{\kappa_n^4}{\uplambda_{\min}^\Re(\bfD)^2} + \dfrac{p \kappa_n^6}{n \uplambda_{\min}^\Re(\bfD)^3}\right) + \dfrac{\const p}{n^2} \left(\dfrac{\kappa_n^8}{\uplambda_{\min}^\Re(\bfD)^4} + \dfrac{ \kappa_n^{10}}{n \uplambda_{\min}^\Re(\bfD)^5} \right) \eqsp.
    \end{align}
    Plugging \eqref{eq:VarThetaHat_final} in \eqref{eq:BoundTerm1_decompositionQ}, we get,
    \begin{align} \label{eq:BoundTerm1_decompositionQ_final}
        &\left\|\E\left[\Resolvent_-(\bfD) \Phi_1^{(1,2)} \left\{\hat{\Brm}(\bfD) - \widetilde{\Brm}(\bfD)\right\} \Phi_1^{(3,4)\top}\right]\right\|_\Frob \\
        &\leq \dfrac{\const \sqrt{p}}{n} \biggl( \dfrac{\kappa_n^6}{\uplambda_{\min}^{\Re}(\bfD)^3} + \dfrac{\kappa_n^8 p}{\uplambda_{\min}^{\Re}(\bfD)^4 n} + \dfrac{\kappa_n^{10} p}{\uplambda_{\min}^{\Re}(\bfD)^5 n^2} + \dfrac{\kappa_n^{12} p}{\uplambda_{\min}^{\Re}(\bfD)^6 n^3} + \dfrac{\kappa_n^7}{\uplambda_{\min}^{\Re}(\bfD)^{7/2} \sqrt{n}} \\
        &\hspace{1.4cm} + \dfrac{\kappa_n^9 p}{\uplambda_{\min}^{\Re}(\bfD)^{9/2} n^{3/2}} + \dfrac{\kappa_n^{11} p}{\uplambda_{\min}^{\Re}(\bfD)^{11/2} n^{5/2}} + \dfrac{\kappa_n^{13} p}{\uplambda_{\min}^{\Re}(\bfD)^{13/2} n^{7/2}}\biggr) \eqsp.
    \end{align}
    The result follows from plugging \eqref{eq:BoundTerm1_decompositionQ_final} in \eqref{eq:Q_norm_decomposition}, as well as merely reorganizing each term.
\end{proof}

Using \Cref{prop:DetEquiv_Q_noaug} with $\Psi = (\Psi_i)_{i=1}^n$, s.t. for all $i \leq n$, $\Psi_i = (\sqrt{1-\alpha}\varphi(\Xtt_i), \sqrt{\alpha}\mu_x(\Ztt_i), \sqrt{1-\alpha}\Ytt_i,  \sqrt{\alpha}\mu_y(\Ztt_i))$ and $\bfD = \alpha \E\left[\Lambda(\Ztt)\right] + \lambda_n \Id{p}$,
one can approximate $\E\left[\hat{\theta}_{\alpha,\lambda}\right]$. This will be formalized later in \Cref{sec:proofs_main_results}. In the meantime, we prove a comparable result that allows to approximate $\E\left[\xi_{\alpha, \lambda}(\zeta)\right]$ defined in \eqref{eq:def_xi_alpha_lambda_zeta},
for any $\zeta \in \msu_\lambda$: 

\begin{proposition} \label{prop:DetEquiv_S_noaug}
    Let $\Psi \in \R^{(2p + 2)\times n}$ be a random matrix with centered i.i.d., and $\kappa_n$ Lipschitz concentrated columns.
    Let $\bfD \in \C^{p\times p}$ be a deterministic matrix with $\uplambda_{\min}^\Re(\bfD) > 0$.
    Denote $\Phi^{(1)} = \Phi_{1:p, :} \in \R^{p \times n}$, $\Phi^{(2)} = \Phi_{p+1:2p, :} \in \R^{p\times n}$, $\Phi^{(3)} = \Psi_{2p+1, :} \in \R^{1 \times n}$ and $\Phi^{(4)} = \Psi_{2p+2, :} \in \R^{1 \times n}$,
    as well as for all $(i,j) \in \{(1,2), (3,4)\}$, $\Phi^{(i,j)} = (\Phi^{(i)}, \Phi^{(j)})$ and for $\Phi_1^{(i,j)} = (\Phi_1^{(i)}, \Phi_1^{(j)})$, we consider
    \begin{align}
        \Resolvent(\bfD) = \left(n^{-1} \Phi^{(1,2)}\Phi^{(1,2)\top} + \bfD\right)^{-1}\eqsp, \quad \Sfun(\bfD) = n^{-1} \Phi^{(3,4)} \Phi^{(1,2)\top} \Resolvent(\bfD) n^{-1} \Phi^{(1,2)} \Phi^{(3,4)\top} \eqsp .
    \end{align}
    Define,
    \begin{align}
        \detequiv(\bfD) = \left(\E\left[\Phi_1^{(1,2)} \Brm(\bfD) \Phi_1^{(1,2) \top}\right] + \bfD\right)^{-1} \eqsp, \quad \text{ where} \quad \Brm(\bfD) = \left(\Id{2} + n^{-1} \E\left[\Phi_1^{(1,2)^\top }\E\left[\Resolvent(\bfD)\right] \Phi_1^{(1,2)}\right]\right)^{-1} \eqsp,
    \end{align}
    and,
    \begin{align}
        \Sequiv(\bfD) = \E\left[\Phi_1^{(3,4)} \{\Id{2} - \Brm(\bfD)\}\Phi_1^{(3,4)\top}\right] + \E\left[\Phi_1^{(3,4)} \Brm(\bfD) \Phi_1^{(1,2)\top}\right] \detequiv(\bfD) \E\left[\Phi_1^{(1,2)}\Brm(\bfD) \Phi_1^{(1,2)}\right] \eqsp.
    \end{align}
    Then, it holds for a large enough constant $\const$,
    \begin{align}
        \left|\E\left[\Sfun(\bfD)\right] - \Sequiv(\bfD)\right| 
        \leq&\eqsp 
        \const \kappa_n^2 \left\|\E\left[\Resolvent(\bfD)\right] - \detequiv(\bfD)\right\|_\Frob 
        + \dfrac{\const (1+ \kappa_n^7)\sqrt{p}}{\uplambda_{\min}^{\Re}(\bfD)^{5/2}n} \left(\uplambda_{\min}^{\Re}(\bfD)^{1/2} + \uplambda_{\min}^{\Re}(\bfD) + \dfrac{\sqrt{p}}{ \sqrt{n}}\right) \\
        &+ \dfrac{\const \kappa_n^8  p^{3/2}}{\uplambda_{\min}^\Re(\bfD)^{3}n^2}
        + \dfrac{\const (1+\kappa_n^8)\sqrt{p}}{\uplambda_{\min}^\Re(\bfD)^3 n^{3/2}} \left(\uplambda_{\min}^\Re(\bfD)^{1/2} + \dfrac{\sqrt{p}}{ \sqrt{n}}\right) 
        + \dfrac{\const\kappa_n^4}{\uplambda_{\min}^\Re(\bfD) n} \\
        &+ \dfrac{\const \kappa_n^6 p}{\uplambda_{\min}^\Re(\bfD)^2 n^2}
        + \dfrac{\const (1+\kappa_n^9) p}{\uplambda_{\min}^\Re(\bfD)^{7/2} n^{5/2}}\left(1 + \uplambda_{\min}^\Re(\bfD)^{1/2} + \uplambda_{\min}^\Re(\bfD)^{3/2}\right) \\
        &+ \dfrac{\const (1+\kappa_n^{10}) p}{\uplambda_{\min}^\Re(\bfD)^4 n^3} \left(1 + \uplambda_{\min}^\Re(\bfD)\right)
        + \dfrac{\const (1+\kappa_n^9)\sqrt{p}}{\uplambda_{\min}^\Re(\bfD)^{7/2} n^{5/2}}\left(1 + \uplambda_{\min}^\Re(\bfD)^{1/2} +  \right)     
    \end{align}
\end{proposition}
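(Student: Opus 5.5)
The approach mirrors the proofs of \Cref{prop:Detequiv_R_twobytwo} and \Cref{prop:DetEquiv_Q_noaug}: expand $\Sfun(\bfD)$ as a double sum over samples and isolate diagonal and off-diagonal contributions with the leave-one-out (here leave-two-out, since each column carries the pair $(\Phi^{(1)}_i,\Phi^{(2)}_i)$) identity of \Cref{lemma:Leave-One-Out-Lemma}. Write $\Sfun(\bfD) = n^{-1}\sum_{i} \Phi_i^{(3,4)} \Phi_i^{(1,2)\top}\Qfun(\bfD)$ with $\Qfun(\bfD)=\Resolvent(\bfD)n^{-1}\Phi^{(1,2)}\Phi^{(3,4)\top}$. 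By exchangeability, $\E[\Sfun(\bfD)] = \E[\Phi_1^{(3,4)}\Phi_1^{(1,2)\top}\Qfun(\bfD)]$. Next split $\Qfun(\bfD) = \Resolvent(\bfD)\,n^{-1}\Phi_1^{(1,2)}\Phi_1^{(3,4)\top} + \Resolvent(\bfD)\,n^{-1}\Phi_-^{(1,2)}\Phi_-^{(3,4)\top}$, where the minus subscript removes column $1$. Using \eqref{eq:LeaveTwoOutIdentity}, $\Resolvent(\bfD)\Phi^{(1,2)}_1 = \Resolvent_-(\bfD)\Phi_1^{(1,2)}\hat{\Brm}(\bfD)$, together with the Woodbury form \eqref{eq:Woodbury_Resolvent_identity} for $\Resolvent(\bfD)$ acting on the leave-out part, the expectation becomes
\[
\E\bigl[\Phi_1^{(3,4)}\hat{\Brm}(\bfD)\,n^{-1}\Phi_1^{(1,2)\top}\Resolvent_-\Phi_1^{(1,2)}\,\Phi_1^{(3,4)\top}\bigr] + \E\bigl[\Phi_1^{(3,4)}\hat{\Brm}(\bfD)\Phi_1^{(1,2)\top}\Qfun_-(\bfD)\bigr],
\]
with $\Qfun_-$ independent of $\Phi_1$. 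The algebraic identity $\hat{\Brm}\,n^{-1}\Phi_1^{(1,2)\top}\Resolvent_-\Phi_1^{(1,2)} = \Id{2}-\hat{\Brm}$ turns the first term into $\E[\Phi_1^{(3,4)}\{\Id{2}-\hat{\Brm}\}\Phi_1^{(3,4)\top}]$. This is exactly the diagonal piece of $\Sequiv(\bfD)$ once $\hat{\Brm}$ is replaced by $\Brm$.

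The error bookkeeping proceeds term by term. First, replace $\hat{\Brm}$ by $\widetilde{\Brm}$ and then by $\Brm$ in the diagonal term. The first swap is controlled by the Cauchy--Schwarz/variance argument: \Cref{corollary:randomHansonWright} gives $\Var(\Phi_1^{(1,2)\top}\Resolvent_-\Phi_1^{(1,2)})\lesssim p/\uplambda^2+p/(n\uplambda^3)$, so $\Var(\hat{\Brm})\lesssim n^{-2}(\cdots)$. The second swap is controlled by \eqref{eq:DilationMatrixBias_Bound}. These steps produce the $p/(\uplambda^2 n^2)$ and $\kappa^4/(\uplambda n)$ type terms. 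Second, in the off-diagonal term, condition on everything except column $1$ and replace $\hat{\Brm}$ by $\Brm$ in the same way. The fluctuation of $\hat{\Brm}$ is now paired with $\Phi_1^{(1,2)\top}\Qfun_-$, whose variance is $O(1/\uplambda^2)$ since $\|\Qfun_-\|_2\lesssim \kappa^2/\uplambda$ on average. What remains is the factorized mean $\E[\Phi_1^{(3,4)}\Brm\Phi_1^{(1,2)\top}]\,\E[\Qfun_-(\bfD)]$. Third, compare $\E[\Qfun_-]$ with $\E[\Qfun]$ (a rank-one removal costing $\sqrt{p}/(\uplambda^2 n)$), and then $\E[\Qfun]$ with $\Qequiv(\bfD)=\detequiv\,\E[\Phi_1^{(1,2)}\Brm\Phi_1^{(3,4)\top}]$ via \Cref{prop:DetEquiv_Q_noaug}. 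Multiplying by $\|\E[\Phi_1^{(3,4)}\Brm\Phi_1^{(1,2)\top}]\|\le\kappa_n^2$ yields the $\kappa_n^2\|\E[\Resolvent]-\detequiv\|_\Frob$ term and the long list of lower-order terms inherited from \Cref{prop:DetEquiv_Q_noaug}.

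The main obstacle is the off-diagonal step. Here $\hat{\Brm}$ is correlated with $\Phi_1$ and appears sandwiched between $\Phi_1^{(3,4)}$ and $\Phi_1^{(1,2)\top}\Qfun_-$, so one cannot simply factor the expectations. I would first write $\hat{\Brm}-\widetilde{\Brm} = n^{-1}\hat{\Brm}\{\E[\cdot]-\Phi_1^{(1,2)\top}\Resolvent_-\Phi_1^{(1,2)}\}\widetilde{\Brm}$, as in \eqref{eq:Annoying_term_with_A_to_bound_term_first_decomposition}. I would then apply Cauchy--Schwarz with \Cref{lemma:matrix_variance_product_bound} to bound $\Var(\Phi_1^{(3,4)}\hat{\Brm}\Phi_1^{(1,2)\top}\Qfun_-)$, using Hanson--Wright conditionally on $\Qfun_-$. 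This requires a moment bound on $\|\Qfun_-\|_2$, which I expect to obtain from the lemma showing $\|(\SampleCov+\bfD)^{-1}\Phi\|_\op^2\le n/(4\uplambda)$ (proof of \Cref{lemma:Lipschitz_resolvent_map}). That bound is the source of the half-integer powers of $\uplambda$. Finally, collect all terms.
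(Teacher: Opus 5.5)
Your argument is correct. Your diagonal term is handled exactly as the paper handles $s_1$; the two routes differ only on the cross terms.

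\textbf{The paper's route.} It fixes two distinct samples, $s_2=\frac{n-1}{n}\E[\Phi_1^{(3,4)}\Phi_1^{(1,2)\top}\Resolvent(\bfD)\Phi_2^{(1,2)}\Phi_2^{(3,4)\top}]$, and runs a genuine leave-two-out. It introduces $\Resolvent_{--}$ and the dilations $\hat{\Brm}_{-1},\hat{\Brm}_{-2}$, conditions on $\mathcal{F}_{--}$, and passes through $\tilde{s}_2^1$ and $\tilde{s}_2^2$ using second-moment bounds on $\chi_1,\chi_2$. It never invokes \Cref{prop:DetEquiv_Q_noaug}.

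\textbf{Your route.} You aggregate all other samples into ${\Qfun}_{-}$, which is independent of column $1$. A single swap $\hat{\Brm}\to\widetilde{\Brm}$ then makes the expectation factor as $\E[\Phi_1^{(3,4)}\widetilde{\Brm}\Phi_1^{(1,2)\top}]\,\E[{\Qfun}_{-}]$. What remains is a one-sample removal plus \Cref{prop:DetEquiv_Q_noaug}. This is shorter and modular, and it avoids the $\hat{\Brm}\to\hat{\Brm}_{-1}$ step and the $\chi_1,\chi_2$ estimates. It also exhibits the structure directly:
\[
\Sequiv=\E[\Phi_1^{(3,4)}\{\Id{2}-\Brm\}\Phi_1^{(3,4)\top}]+\E[\Phi_1^{(3,4)}\Brm\Phi_1^{(1,2)\top}]\,\Qequiv(\bfD),
\]
which confirms that the last factor in $\Sequiv$ should read $\E[\Phi_1^{(1,2)}\Brm\Phi_1^{(3,4)\top}]$.

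\textbf{What the paper's route buys, and one caveat for you.} The paper's self-contained argument gives a remainder that is not tied to the one in \Cref{prop:DetEquiv_Q_noaug}. If you cite that proposition verbatim and multiply by $\|\E[\Phi_1^{(3,4)}\Brm\Phi_1^{(1,2)\top}]\|\le\kappa_n^2$, you obtain $\kappa_n^4\|\E[\Resolvent(\bfD)]-\detequiv(\bfD)\|_\Frob$. You also pick up terms such as $\sqrt{p}/(\uplambda_{\min}^\Re(\bfD)^3 n)$ and $p^{3/2}/(\uplambda_{\min}^\Re(\bfD)^4 n^2)$.
\begin{itemize}
\item These terms are not dominated by the displayed right-hand side.
\item For $p\asymp n$ they give a rate of $\uplambda_{\min}^\Re(\bfD)^{-4}n^{-1/2}$ instead of $\uplambda_{\min}^\Re(\bfD)^{-7/2}n^{-1/2}$.
\item After the Cauchy-derivative step, this becomes $\lambda_n^{-5}$ rather than $\lambda_n^{-9/2}$ in \Cref{thm:main_result}.
\end{itemize}
To land on the displayed form, carry out the $\hat{\Brm}\to\widetilde{\Brm}$ Cauchy--Schwarz step for $\Qfun$ yourself, using the moment bound on $\|{\Qfun}_{-}\|_2$ you already plan to use. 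Done carefully, its leading contributions are of order $\sqrt{p}/(\uplambda_{\min}^\Re(\bfD)^2 n)+p/(\uplambda_{\min}^\Re(\bfD)^{5/2}n^{3/2})$, in line with the displayed bound. The $\kappa_n^4$ versus $\kappa_n^2$ prefactor is cosmetic, since the paper's own proof also ends with $\kappa_n^4$ there.
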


\begin{proof}
    We first introduce the following matrices,
    \begin{align}
        \Resolvent_{-}(\bfD) = \left(n^{-1} \Phi^{(1,2)}\Phi^{(1,2)\top} - n^{-1} \Phi_1^{(1,2)}\Phi_1^{(1,2)\top} + \bfD\right)^{-1} \eqsp,
    \end{align}
    \begin{align}
        \Resolvent_{--}(\bfD) = \left(n^{-1} \Phi^{(1,2)}\Phi^{(1,2)\top} - n^{-1} \Phi_1^{(1,2)}\Phi_1^{(1,2)\top} - n^{-1} \Phi_2^{(1,2)}\Phi_2^{(1,2)\top} + \bfD\right)^{-1} \eqsp,
    \end{align}
    where for all  $(i,j) \in \{1,2,3,4\}^2$, $\Phi_2^{(i,j)} = (\Phi_2^{(i)}, \Phi_2^{(j)})$ and
    \begin{align}
        \Sigma_i = \E\left[\Phi_1^{(i)}\Phi_1^{(i)\top}\right] \eqsp, \quad \Sigma_{i,j} = \E\left[\Phi_1^{(i)}\Phi_1^{(j)\top}\right]\eqsp,
    \end{align}
    as well as the matrix empirical dilation factors,
    \begin{align}
        \hat{\Brm}(\bfD) = \left(\Id{2} + n^{-1} \Phi_1^{(1,2) \top } \Resolvent_-(\bfD) \Phi_1^{(1,2)}\right)^{-1} \eqsp.
    \end{align}
    Using the fact that the rows of $\Psi$ have identical distribution, we start by writting,
    \begin{align}
        \E\left[\Sfun(\bfD)\right]
        =
        \underbrace{\dfrac{1}{n} \E\left[\Phi_1^{(3,4)} \Phi_1^{(1,2)\top} \Resolvent(\bfD) \Phi_1^{(1,2)} \Phi_1^{(3,4)\top}\right]}_{s_1}
        +
        \underbrace{\dfrac{n(n-1)}{n^2} \E\left[\Phi_1^{(3,4)} \Phi_1^{(1,2)\top} \Resolvent(\bfD) \Phi_2^{(1,2)} \Phi_2^{(3,4)\top}\right]}_{s_2}\eqsp.
    \end{align}
    In the following, we deal with the terms $s_1$ and $s_2$ separately.
    First, we focus on $s_1$ and write, thanks to \eqref{eq:LeaveTwoOutIdentity},
    \begin{align}
        s_1 &= \dfrac{1}{n}\E\left[\Phi_1^{(3,4)} \Phi_1^{(1,2)\top} \Resolvent(\bfD) \Phi_1^{(1,2)} \Phi_1^{(3,4)\top}\right] \\
        &= \E\left[\Phi_1^{(3,4)} \dfrac{1}{n}\Phi_1^{(1,2)\top} \Resolvent_-(\bfD) \Phi_1^{(1,2)} \hat{\Brm}(\bfD)\Phi_1^{(3,4)\top}\right] \\
        &= \E\left[\Phi_1^{(3,4)} \left\{\Id{2} - \hat{\Brm}(\bfD)\right\}\Phi_1^{(3,4)\top}\right] \\
        &= \underbrace{\E\left[\Phi_1^{(3,4)} \left\{\Id{2} - \Brm(\bfD)\right\}\Phi_1^{(3,4)\top}\right]}_{\overline{s}_1} + \E\left[\Phi_1^{(3,4)} \left\{\Brm(\bfD) - \hat{\Brm}(\bfD)\right\}\Phi_1^{(3,4)\top}\right] \eqsp.
        \label{eq:s1_decomposition}
    \end{align}
    We introduce for notational convenience,
    \begin{align}
        \widetilde{\Brm}(\bfD) = \left(\Id{2} + n^{-1} \E\left[\Phi_1^{(1,2)\top}\Resolvent_-(\bfD) \Phi_1^{(1,2)}\right]\right)^{-1} \eqsp.
    \end{align}
    The previous calculation in \eqref{eq:s1_decomposition} combined with the triangle inequality yields,
    \begin{align}
        \left|s_1 - \overline{s}_1\right|
        \leq& \eqsp
        \left|\E\left[\Phi_1^{(3,4)} \left\{\widetilde{\Brm}(\bfD) - \Brm(\bfD)\right\}\Phi_1^{(3,4)\top}\right]\right|
        +
        \left|\E\left[\Phi_1^{(3,4)} \left\{\widetilde{\Brm}(\bfD) - \hat{\Brm}(\bfD)\right\}\Phi_1^{(3,4)\top}\right]\right| \\
        \leq& \eqsp
        \dfrac{1}{n^2}\left|\E\left[\Phi_1^{(3,4)} \widetilde{\Brm}(\bfD)\E\left[\Phi_1^{(1,2)\top}\E\left[\Resolvent_-(\bfD)\Phi_1^{(1,2)}\hat{\Brm}(\bfD)\Phi_1^{(1,2)\top}\Resolvent_-(\bfD)\right]\Phi_1^{(1,2)}\right]\Brm(\bfD) \Phi_1^{(3,4)\top}\right]\right| \\
        &+ \dfrac{1}{n}\left|\E\left[\Phi_1^{(3,4)} \widetilde{\Brm}(\bfD)\left\{ \Phi_1^{(1,2)\top} \Resolvent_-(\bfD) \Phi_1^{(1,2)} - \E\left[\Phi_1^{(1,2)\top}\Resolvent_-(\bfD) \Phi_1^{(1,2)}\right]\right\}\hat{\Brm}(\bfD) \Phi_1^{(3,4)\top}\right]\right|
    \end{align}
    where the last inequality follows from \eqref{eq:resolvent_identity} and \eqref{eq:LeaveTwoOutIdentity}.
    Further rewritting the upper bound, and using the Cauchy-Schwarz inequality, we get,
    \begin{align}
        \left|s_1 - \overline{s}_1\right|
        \leq& \eqsp
        \dfrac{1}{n^2}\left|\tr\left(\Brm(\bfD) \E\left[\Phi_1^{(3,4)\top}\Phi_1^{(3,4)}\right] \widetilde{\Brm}(\bfD)\E\left[\Phi_1^{(1,2)\top}\E\left[\Resolvent_-(\bfD)\Phi_1^{(1,2)}\hat{\Brm}(\bfD)\Phi_1^{(1,2)\top}\Resolvent_-(\bfD)\right]\Phi_1^{(1,2)}\right]\right)\right| \\
        &+ \dfrac{1}{n}\left|\tr\left(\E\left[\hat{\Brm}(\bfD) \Phi_1^{(3,4)\top}\Phi_1^{(3,4)} \widetilde{\Brm}(\bfD)\left\{\Phi_1^{(1,2)\top} \Resolvent_-(\bfD) \Phi_1^{(1,2)} - \E\left[\Phi_1^{(1,2)\top}\Resolvent_-(\bfD) \Phi_1^{(1,2)}\right]\right\}\right]\right)\right| \\
        \label{eq:s1_term1_bound_1_1}
        \leq& \eqsp \dfrac{2\kappa_n^2}{n^2}  \left\|\E\left[\Phi_1^{(1,2)\top}\E\left[\Resolvent_-(\bfD)\Phi_1^{(1,2)}\hat{\Brm}(\bfD)\Phi_1^{(1,2)\top}\Resolvent_-(\bfD)\right]\Phi_1^{(1,2)}\right]\right\|_\Frob \\
        &+ \dfrac{1}{n} \sqrt{\Var\left(\hat{\Brm}(\bfD) \Phi_1^{(3,4)\top}\Phi_1^{(3,4)} \widetilde{\Brm}(\bfD)\right) \Var\left(\Phi_1^{(1,2)\top} \Resolvent_-(\bfD) \Phi_1^{(1,2)}\right)} \eqsp,
    \end{align}
    where we have used \Cref{lemma:Covariance_maximum_eigenvalue}, as well as $\widetilde{\Brm}(\bfD) \preceq \Id{2}$ and $\Brm(\bfD) \preceq \Id{2}$.
    Additionaly,
    \begin{align}
        \left\|\E\left[\Phi_1^{(1,2)\top}\E\left[\Resolvent_-(\bfD)\Phi_1^{(1,2)}\hat{\Brm}(\bfD)\Phi_1^{(1,2)\top}\Resolvent_-(\bfD)\right]\Phi_1^{(1,2)}\right]\right\|_\Frob
        \leq \dfrac{4\sqrt{p}\kappa_n^4}{\uplambda_{\min}^\Re(\bfD)^2} \eqsp,
    \end{align}
    and, $\widetilde{\Brm}(\bfD)$ being deterministic with $\widetilde{\Brm}(\bfD) \preceq \Id{2}$, we have,
    \begin{align}
        \Var\left(\hat{\Brm}(\bfD) \Phi_1^{(3,4)\top}\Phi_1^{(3,4)} \widetilde{\Brm}(\bfD)\right) \leq \Var\left(\hat{\Brm}(\bfD) \Phi_1^{(3,4)\top}\Phi_1^{(3,4)}\right) \eqsp.
    \end{align}
    Thus, \eqref{eq:s1_term1_bound_1_1} implies,
    \begin{align} \label{eq:s1_term1_bound_1}
        \left|s_1 - \overline{s}_1\right|
        \leq
        \dfrac{8\sqrt{p} \kappa_n^6}{\uplambda_{\min}^\Re(\bfD)^2n^2} + \dfrac{1}{n} \sqrt{\Var\left(\hat{\Brm}(\bfD) \Phi_1^{(3,4)\top}\Phi_1^{(3,4)} \right) \Var\left(\Phi_1^{(1,2)\top} \Resolvent_-(\bfD) \Phi_1^{(1,2)}\right)} \eqsp.
    \end{align}
    By similar lines as in \eqref{eq:Variance_of_quadform_bound} we get,
    \begin{align} \label{eq:s1_term1_bound_var1}
        \Var\left(\Phi_1^{(1,2)\top} \Resolvent_-(\bfD) \Phi_1^{(1,2)}\right)
        \leq
        \const \left(\dfrac{\kappa_n^4 p}{\uplambda_{\min}^{\Re}(\bfD)^2} + \dfrac{\kappa_n^6 p}{\uplambda_{\min}^{\Re}(\bfD)^3 n}\right) \eqsp,
    \end{align}
    finally, using \Cref{lemma:matrix_variance_product_bound} and \Cref{corollary:randomHansonWright}, we get,
    \begin{align} \label{eq:s1_term1_bound_var2}
        \Var\left(\hat{\Brm}(\bfD) \Phi_1^{(3,4)\top}\Phi_1^{(3,4)} \right) &\leq \Var\left(\Phi_1^{(3,4)\top} \Phi_1^{(3,4)}\right) + \kappa_n^4 \Var\left(\hat{\Brm}(\bfD)\right) \\
        &\leq \Var\left(\Phi_1^{(3,4)\top} \Phi_1^{(3,4)}\right) + \dfrac{\kappa_n^4 }{n^2}\Var\left(\Phi_1^{(1,2)\top} \Resolvent_-(\bfD) \Phi_1^{(1,2)}\right) \\
        &\leq 2\kappa_n^4 +  \dfrac{\const}{n^2} \left(\dfrac{\kappa_n^8 p}{\uplambda_{\min}^{\Re}(\bfD)^2} + \dfrac{\kappa_n^{10} p}{\uplambda_{\min}^{\Re}(\bfD)^3 n}\right)  \eqsp.
    \end{align}
    Plugging \eqref{eq:s1_term1_bound_var1} and \eqref{eq:s1_term1_bound_var2} into \eqref{eq:s1_term1_bound_1}, we get for a large enough constant $\const$,
    \begin{align}
        \left|s_1 - \overline{s}_1\right|
        \leq \eqsp
        &\dfrac{\const \kappa_n^4 \sqrt{p}}{\uplambda_{\min}^\Re(\bfD)^2 n} + \dfrac{\const \kappa_n^5 \sqrt{p}}{\uplambda_{\min}^\Re(\bfD)^{3/2} n^{3/2}}
        + \dfrac{\const p}{n^2} \left(\dfrac{\kappa_n^6}{\uplambda_{\min}^\Re(\bfD)^2} + \dfrac{\kappa_n^7}{\uplambda_{\min}^\Re(\bfD)^{5/2} \sqrt{n}}\right)
        + \dfrac{\const \kappa_n^8 p}{\uplambda_{\min}^\Re(\bfD)^{3} n^3} \eqsp. \label{eq:s1_equiv_final_bound}
    \end{align}
    Moving on to the second term $s_2$, we denote by $\mathcal{F}_- = \sigma(\Psi_i \,: \, i \in \{2,\ldots,n\})$ and $\mathcal{F}_{--} = \sigma(\Psi_i \,: \, i \in \{3,\ldots,n\})$ 
    the $\sigma$-algebra generated by $\Psi$ but its first two columns, as well as the following dilation matrices where two samples were removed,
    \begin{align}
        \hat{\Brm}_{-i}(\bfD) = \left(\Id{2} + n^{-1} \Phi_i^{(1,2)\top}\Resolvent_{--}(\bfD) \Phi_i^{(1,2)}\right)^{-1} \eqsp, \quad \widetilde{\Brm}_{-i}(\bfD) = \left(\Id{2} + n^{-1} \E\left[\Phi_i^{(1,2)\top}\Resolvent_{--}(\bfD) \Phi_i^{(1,2)}\right]\right)^{-1}
    \end{align}
    We then have from \Cref{lemma:Leave-One-Out-Lemma} and the tower property, 
    \begin{align}
        s_2 =&\eqsp \dfrac{n-1}{n} \E\left[\Phi_1^{(3,4)} \Phi_1^{(1,2)\top} \Resolvent(\bfD) \Phi_2^{(1,2)} \Phi_2^{(3,4)\top}\right] \\
        =&\eqsp \dfrac{n-1}{n} \E\left[\Phi_1^{(3,4)} \hat{\Brm}(\bfD) \Phi_1^{(1,2)\top} \Resolvent_{--}(\bfD) \Phi_2^{(1,2)} \hat{\Brm}_{-2}(\bfD) \Phi_2^{(3,4)\top}\right] \\
        =&\eqsp   \tilde{s}_2^{1}
        +  \dfrac{n-1}{n} \E\left[\Phi_1^{(3,4)} \left\{\hat{\Brm}(\bfD) - \hat{\Brm}_{-1}(\bfD)\right\} \Phi_1^{(1,2)\top} \Resolvent_{--}(\bfD) \Phi_2^{(1,2)} \hat{\Brm}_{-2}(\bfD) \Phi_2^{(3,4)\top}\right]\eqsp,
    \end{align}
where we write for  ease of notation
    \begin{align}
        \tilde{s}_2^{1} = \dfrac{n-1}{n} \E\left[\E\left[\Phi_1^{(3,4)} \hat{\Brm}_{-1}(\bfD) \Phi_1^{(1,2)\top} \eqsp \middle| \eqsp \mathcal{F}_{--}\right] \Resolvent_{--}(\bfD) \E\left[\Phi_2^{(1,2)} \hat{\Brm}_{-2}(\bfD) \Phi_2^{(3,4)\top} \eqsp \middle| \eqsp \mathcal{F}_{--}\right]\right] \eqsp.
    \end{align}
Then,  using \eqref{eq:resolvent_identity}, we have,
    \begin{align}
        &\left|s_2 - \tilde{s}_2^{1}\right| \\
        \leq&\eqsp \dfrac{1}{n}\left|\E\left[\Phi_1^{(3,4)} \hat{\Brm}(\bfD)\Phi_1^{(1,2)\top}\left\{\Resolvent_{--}(\bfD) - \Resolvent_{-}(\bfD)\right\}\Phi_1^{(1,2)}\hat{\Brm}_{-1}(\bfD) \Phi_1^{(1,2)\top} \Resolvent_{--}(\bfD) \Phi_2^{(1,2)} \hat{\Brm}_{-2}(\bfD) \Phi_2^{(3,4)\top}\right]\right| \\
        \leq&\eqsp \dfrac{1}{n^2}\left|\E\left[\Phi_1^{(3,4)} \hat{\Brm}(\bfD)\Phi_1^{(1,2)\top}\Resolvent_{--}(\bfD) \Phi_2^{(1,2)}\Phi_2^{(1,2)\top} \Resolvent_{-}(\bfD)\Phi_1^{(1,2)}\hat{\Brm}_{-1}(\bfD) \Phi_1^{(1,2)\top} \Resolvent_{--}(\bfD) \Phi_2^{(1,2)} \hat{\Brm}_{-2}(\bfD) \Phi_2^{(3,4)\top}\right]\right| \\
        \leq&\eqsp \dfrac{1}{n^2}\left|\E\left[\Phi_1^{(3,4)} \hat{\Brm}(\bfD)\Phi_1^{(1,2)\top}\Resolvent_{--}(\bfD) \Phi_2^{(1,2)}\hat{\Brm}_{-2}(\bfD)\Phi_2^{(1,2)\top}\Resolvent_{--}(\bfD)\Phi_1^{(1,2)}\hat{\Brm}_{-1}(\bfD) \Phi_1^{(1,2)\top} \Resolvent_{--}(\bfD) \Phi_2^{(1,2)} \hat{\Brm}_{-2}(\bfD) \Phi_2^{(3,4)\top}\right]\right| \eqsp,
    \end{align}
    where we used \Cref{lemma:Leave-One-Out-Lemma} in the last inequality.
Further writting
    \begin{align}
        &\chi_1(\bfD) = \Phi_2^{(1,2)} \hat{\Brm}_{-2}(\bfD) \Phi_2^{(1,2)\top} \Resolvent_{--}(\bfD) \Phi_1^{(1,2)} \hat{\Brm}(\bfD) \Phi_1^{(3,4)\top} \eqsp, \\
        &\chi_2(\bfD) = \Phi_1^{(1,2)}\hat{\Brm}_{-1}(\bfD) \Phi_1^{(1,2)\top} \Resolvent_{--}(\bfD) \Phi_2^{(1,2)} \hat{\Brm}_{-2}(\bfD) \Phi_2^{(3,4)\top} \eqsp,
    \end{align}
    we have from the previous as well as the Cauchy-Schwarz inequality,
    \begin{align}
        \left|s_2 - \tilde{s}_2^{1}\right|
        \leq&\eqsp \dfrac{1}{n^2}\left|\E\left[\chi_1(\bfD)^{\top} \Resolvent_{--}(\bfD) \chi_2(\bfD)\right]\right| \\
        \leq&\eqsp \dfrac{1}{n^2}\left\|\E\left[\chi_1(\bfD)\right]\right\|_2 \left\|\E\left[\Resolvent_{--}(\bfD)\right]\right\|_\op \left\|\E\left[\chi_2(\bfD)\right]\right\|_2 \\
        &+ \dfrac{1}{n^2} \biggl(\sqrt{\Var\left(\chi_1(\bfD)\right)\Var\left(\Resolvent_{--}(\bfD)\chi_2(\bfD)\right)} \\
        &+ \left\|\E\left[\chi_1(\bfD)\right]\right\|_2 \sqrt{\E\left[\|\Resolvent_{--}(\bfD) - \E\left[\Resolvent_{--}(\bfD)\right]\|_\op^2\right] \Var\left(\chi_2(\bfD)\right)} \biggr)\eqsp.
    \end{align}
    In order to bound this error, we first remark that,
    \begin{align}
        \left\|\Resolvent_{--}(\bfD)\right\|_\op \leq 1/ {\uplambda_{\min}^\Re(\bfD)} \eqsp,
\qquad         \E\left[\|\Resolvent_{--}(\bfD) - \E\left[\Resolvent_{--}(\bfD)\right]\|_\op^2\right]
\leq \const/ {\uplambda_{\min}^\Re(\bfD)}^2 \eqsp.
    \end{align}
    Furthermore, we have from lines simialar as \Cref{lemma:matrix_variance_product_bound},
    \begin{align}
        \Var\left(\Resolvent_{--}(\bfD) \chi_2(\bfD)\right)
        \leq&\eqsp
        \dfrac{2}{\uplambda_{\min}^\Re(\bfD)^2}\Var\left(\chi_2(\bfD)\right) +2 \left\|\E\left[\chi_2(\bfD)\right]\right\|_2 \E\left[\|\Resolvent(\bfD) - \E\left[\Resolvent(\bfD)\right]\|_\op^2\right] \\
        \leq&\eqsp
        \dfrac{2}{\uplambda_{\min}^\Re(\bfD)^2}\left(\Var\left(\chi_2(\bfD)\right) + \left\|\E\left[\chi_2(\bfD)\right]\right\|_2^2\right)\eqsp,
    \end{align}
    where we have used
    \begin{align}
        \E\left[\|\Resolvent(\bfD) - \E\left[\Resolvent(\bfD)\right]\|_\op^2\right] \leq \dfrac{1}{\uplambda_{\min}^\Re(\bfD)^2} \eqsp.
    \end{align}
    We have shown,
    \begin{align}
        \left|s_2 - \tilde{s}_2^{1}\right|
        \leq&\eqsp
        \dfrac{1}{\uplambda_{\min}^{\Re}(\bfD)n^2}\left\|\E\left[\chi_1(\bfD)\right]\right\|_2 \left\|\E\left[\chi_2(\bfD)\right]\right\|_2 + \dfrac{1}{\uplambda_{\min}^{\Re}(\bfD)n^2}\sqrt{\Var(\chi_1(\bfD))\Var(\chi_2(\bfD))} \\
        &+ \dfrac{\left\|\E\left[\chi_2(\bfD)\right]\right\|_2}{\uplambda_{\min}^\Re(\bfD)n^{2}} \sqrt{\Var\left(\chi_1(\bfD)\right)}
        + \dfrac{\const \left\|\E\left[\chi_1(\bfD)\right]\right\|_2}{\uplambda_{\min}^\Re(\bfD) n^2 } \sqrt{\Var\left(\chi_2(\bfD)\right)} \eqsp.
        \label{eq:Bound_s2_minus_s2tilde1_almostthere}
    \end{align}
    In addition, we bound the first two moments of $\chi_1(\bfD)$ and $\chi_2(\bfD)$, first by conditionning on $\mathcal{F}_{--}$ and using the tower property, \Cref{lemma:Covariance_maximum_eigenvalue}, we write
    \begin{align}
        \left\|\E\left[\chi_2(\bfD)\right]\right\|_2
        &\leq \E\left[\left\|\E\left[\Phi_1^{(1,2)}\hat{\Brm}_{-1}(\bfD) \Phi_1^{(1,2)\top} \eqsp \middle| \eqsp \mathcal{F}_{--}\right] \Resolvent_{--}(\bfD) \E\left[\Phi_2^{(1,2)} \hat{\Brm}_{-2}(\bfD) \Phi_2^{(3,4)\top} \eqsp \middle|\eqsp \mathcal{F}_{--}\right]\right\|_2\right]  \\
        &\leq \dfrac{\kappa_n^4}{\uplambda_{\min}^{\Re}(\bfD)}\eqsp,
        \label{eq:E_chi2_bound_final}
    \end{align}
    similarly for the expectation of $\chi_1(\bfD)$, write,
    \begin{align}
        \left\|\E\left[\chi_1(\bfD)\right]\right\|_2
        \leq&\eqsp
        \left\| \E\left[ \Phi_2^{(1,2)} \hat{\Brm}_{-2}(\bfD) \Phi_2^{(1,2)\top} \Resolvent_{--}(\bfD)\right] \E\left[\Phi_1^{(1,2)} \hat{\Brm}(\bfD) \Phi_1^{(3,4)\top}\right]\right\|_2 \\
        &+ \sup_{\|\mathbf{a}\|_2 = 1} \sqrt{\Var\left( \mathbf{a}^\top \Phi_2^{(1,2)} \hat{\Brm}_{-2}(\bfD) \Phi_2^{(1,2)\top} \Resolvent_{--}(\bfD) \right)\Var\left(\Phi_1^{(1,2)} \hat{\Brm}(\bfD) \Phi_1^{(3,4)\top} \right)} \\
        \leq&\eqsp
        \dfrac{\kappa_n^4}{\uplambda_{\min}^\Re(\bfD)} + \sup_{\|\mathbf{a}\|_2 = 1}\sqrt{\Var\left( \mathbf{a}^\top \Phi_2^{(1,2)} \hat{\Brm}_{-2}(\bfD) \Phi_2^{(1,2)\top} \Resolvent_{--}(\bfD) \right)\Var\left(\Phi_1^{(3,4)} \hat{\Brm}(\bfD) \Phi_1^{(1,2)\top}\right)} \eqsp.
    \end{align}
    Thanks to \Cref{lemma:matrix_variance_product_bound}, and \Cref{corollary:randomHansonWright} we have for any $\mathbf{a} \in \sphere^p$,
    \begin{align}
        \Var\left( \mathbf{a}^\top \Phi_2^{(1,2)} \hat{\Brm}_{-2}(\bfD) \Phi_2^{(1,2)\top} \Resolvent_{--}(\bfD) \right)
        &\leq \dfrac{2}{\uplambda_{\min}^\Re(\bfD)^2}\Var\left( \mathbf{a}^\top \Phi_2^{(1,2)} \hat{\Brm}_{-2}(\bfD) \Phi_2^{(1,2)\top} \right)
            + 2\kappa_n^4 \Var\left(\Resolvent_{--}(\bfD)\right) \\
        &\leq \dfrac{2}{\uplambda_{\min}^\Re(\bfD)^2}\Var\left( \mathbf{a}^\top \Phi_2^{(1,2)} \hat{\Brm}_{-2}(\bfD) \Phi_2^{(1,2)\top} \right)
        + \dfrac{2\kappa_n^6 p}{\uplambda_{\min}^\Re(\bfD)^3 n} \eqsp.
    \end{align}
    Furthermore, we have,
    \begin{align}
        \Var\left( \mathbf{a}^\top \Phi_2^{(1,2)} \hat{\Brm}_{-2}(\bfD) \Phi_2^{(1,2)\top} \right) &= \Var\left(\sum_{i,j=1}^2 \left(\hat{\Brm}_{-2}(\bfD)\right)_{i,j} \mathbf{a}^\top \Phi_2^{(i)}\Phi_2^{(j)\top}\right) \\
        &\leq 4 \sum_{i,j=1}^2 \Var\left( \left(\hat{\Brm}_{-2}(\bfD)\right)_{i,j} \mathbf{a}^\top \Phi_2^{(i)}\Phi_2^{(j)\top}\right) \\
        &\leq  \const \kappa_n^4 + \const   \sum_{i,j=1}^2 \Var\left( \mathbf{a}^\top \Phi_2^{(i)}\Phi_2^{(j)\top}\right)\leq  \const \kappa_n^4   \eqsp,
    \end{align}
    similarly,
    \begin{align} \label{eq:ThisRandomQuadraticFormBoundIWantToCite}
        \Var\left(\Phi_1^{(3,4)} \hat{\Brm}(\bfD) \Phi_1^{(1,2)\top}\right)
        &\leq \const \kappa_n^4 \eqsp.
    \end{align}
    Putting this all together, we have,
    \begin{align} \label{eq:E_chi1_bound_final}
        \left\|\E\left[\chi_1(\bfD)\right]\right\|_2  \leq \dfrac{\const \kappa_n^4}{\uplambda_{\min}^\Re(\bfD)} + \dfrac{\const\kappa_n^5 \sqrt{p}}{\uplambda_{\min}^\Re(\bfD)^{3/2} \sqrt{n}} \eqsp.
    \end{align}
    Respectively for the variances of $\chi_1(\bfD)$ and $\chi_2(\bfD)$, we apply the law of total variance,
    to this end denote $M^{(i,j)}(\bfD) = \E[\Phi_1^{(1,2)\top}\hat{\Brm}_{-1}(\bfD) \Phi_1^{(i,j)} \mid \mathcal{F}_{--}]$,
    first dealing with $\chi_2(\bfD)$, we have
    \begin{align}
        \Var\left(\chi_2(\bfD)\right)
        =&\eqsp
        \E\left[\Var\left(\chi_2(\bfD) \mid \mathcal{F}_{--}\right)\right] + \Var\left(M^{(1,2)}(\bfD)\Resolvent_{--}(\bfD) M^{(3,4)}(\bfD) \right) \\
        \leq&\eqsp
        \E\left[\Var\left(\chi_2(\bfD) \mid \mathcal{F}_{--}\right)\right] + \E\left[\left\|M^{(1,2)}(\bfD)\Resolvent_{--}(\bfD) M^{(3,4)}(\bfD) \right\|_2^2 \right]\\
        \leq&\eqsp
        \E\left[\Var\left(\chi_2(\bfD) \mid \mathcal{F}_{--}\right)\right] + \dfrac{\kappa_n^8}{\uplambda_{\min}^\Re(\bfD)^2}
        \eqsp,
    \end{align}
    and, from the independance of $(\Phi_1^{(1,2)}, \Phi_1^{(3,4)})$ with $(\Phi_2^{(1,2)}, \Phi_2^{(3,4)})$
    \begin{align}
        &\Var\left(\chi_2(\bfD) \mid \mathcal{F}_{--}\right) \\
        \leq&\eqsp
        \dfrac{1}{\uplambda_{\min}^\Re(\bfD)^2} \biggr\{\Var\left(\Phi_1^{(1,2)}\hat{\Brm}_{-1}(\bfD) \Phi_1^{(1,2)\top} \mid \mathcal{F}_{--}\right)\Var\left( \Phi_2^{(1,2)}\hat{\Brm}_{-2}(\bfD) \Phi_2^{(3,4)\top} \mid \mathcal{F}_{--}\right) \\
        & \hspace{1.7cm} + \left\|\E\left[ \Phi_1^{(1,2)}\hat{\Brm}_{-1}(\bfD) \Phi_1^{(1,2)\top} \mid \mathcal{F}_{--}\right]\right\|_\op^2 \Var\left( \Phi_2^{(1,2)}\hat{\Brm}_{-2}(\bfD) \Phi_2^{(3,4)} \mid \mathcal{F}_{--}\right) \\
        & \hspace{1.7cm} + \Var\left(\Phi_1^{(1,2)}\hat{\Brm}_{-1}(\bfD) \Phi_1^{(1,2)\top} \mid \mathcal{F}_{--}\right) \left\|\E\left[\Phi_2^{(1,2)}\hat{\Brm}_{-1}(\bfD) \Phi_2^{(3,4)\top} \mid \mathcal{F}_{--}\right]\right\|_2^2 \biggl\} \\
        \leq&\eqsp
        \dfrac{1}{\uplambda_{\min}^\Re(\bfD)^2} \biggr\{\Var\left(\Phi_1^{(1,2)}\hat{\Brm}_{-1}(\bfD) \Phi_1^{(1,2)\top} \mid \mathcal{F}_{--}\right)\Var\left( \Phi_2^{(1,2)}\hat{\Brm}_{-2}(\bfD) \Phi_2^{(3,4)\top} \mid \mathcal{F}_{--}\right) \\
        & \hspace{1.7cm} + \kappa_n^4 \Var\left( \Phi_2^{(1,2)}\hat{\Brm}_{-2}(\bfD) \Phi_2^{(3,4)\top} \mid \mathcal{F}_{--}\right) \\
        & \hspace{1.7cm} + \kappa_n^4 \Var\left(\Phi_1^{(1,2)}\hat{\Brm}_{-1}(\bfD) \Phi_1^{(1,2)\top} \mid \mathcal{F}_{--}\right) \biggl\}
        \eqsp,
    \end{align}
    similarly as for \eqref{eq:ThisRandomQuadraticFormBoundIWantToCite}, it holds,
    \begin{align}
      \label{eq:hanson_to_do}
        \Var\left(\Phi_1^{(1,2)}\hat{\Brm}_{-1}(\bfD) \Phi_1^{(1,2)\top} \mid \mathcal{F}_{--}\right) \leq \const p \kappa_n^4
        \eqsp, \quad
        \Var\left(\Phi_2^{(1,2)}\hat{\Brm}_{-2}(\bfD) \Phi_2^{(3,4)\top} \mid \mathcal{F}_{--}\right) \leq \const \kappa_n^4 \eqsp,
    \end{align}
    thus,
    \begin{align} \label{eq:Var_chi2_bound_final}
        \Var\left(\chi_2(\bfD)\right) \leq \dfrac{\const \kappa_n^8 p}{\uplambda_{\min}^\Re(\bfD)^2} \eqsp.
    \end{align}
    Finally, to control the variance of $\chi_1(\bfD)$, first use the law of total variance to write,
    \begin{align}
        \Var\left(\chi_1(\bfD)\right)
        =&\eqsp \dfrac{1}{\uplambda_{\min}^\Re(\bfD)^2}\E\left[\left\|\Phi_2^{(1,2)}\hat{\Brm}_{-2}(\bfD) \Phi_2^{(1,2)\top}\right\|_\op^2 \Var\left(\Phi_1^{(1,2)} \hat{\Brm}(\bfD) \Phi_1^{(3,4)\top} \mid \mathcal{F}_{-}\right)\right] \\
        &+ \Var\left(\Phi_2^{(1,2)}\hat{\Brm}_{-2}(\bfD) \Phi_2^{(1,2)\top} \Resolvent_{--}(\bfD)\E[\Phi_1^{(1,2)} \hat{\Brm}(\bfD)\Phi_1^{(3,4)\top} \mid \mathcal{F}_{-}]\right) \eqsp,
    \end{align}
proceeding similarly to \eqref{eq:ThisRandomQuadraticFormBoundIWantToCite} and using $\Phi_2^{(1,2)}\hat{\Brm}_{-2}(\bfD) \Phi_2^{(1,2)\top} \preceq \Phi_2^{(1,2)} \Phi_2^{(1,2)\top}$ on the first term,
    as well as \Cref{lemma:matrix_variance_product_bound} on the second yields,
    \begin{align}
        \Var\left(\chi_1(\bfD)\right)
        \leq&\eqsp \dfrac{\const \kappa_n^4}{\uplambda_{\min}^\Re(\bfD)^2} \E\left[\|\Phi_2^{(1,2)} \Phi_2^{(1,2)\top}\|_\op^2\right] + \dfrac{\kappa_n^4}{\uplambda_{\min}^\Re(\bfD)^2} \Var\left(\Phi_2^{(1,2)}\hat{\Brm}_{-2}(\bfD) \Phi_2^{(1,2)\top} \right)\\
        & + \kappa_n^4 \Var\left(\Resolvent_{--}(\bfD)\E[\Phi_1^{(3,4)} \hat{\Brm}(\bfD)\Phi_1^{(1,2)\top} \mid \mathcal{F}_{-}]\right) \eqsp,
    \end{align}
    we conclude using that $\|x x^{\top}\|^2_{\op} = \|x\|_2^4$,
    \begin{align} \label{eq:Var_chi1_bound_final}
        \Var\left(\chi_1(\bfD)\right) \leq \const\dfrac{\kappa_n^8 p^2}{\uplambda_{\min}^\Re(\bfD)^2}  \eqsp.
    \end{align}
    Plugging the previous upper bounds \eqref{eq:E_chi2_bound_final}, \eqref{eq:E_chi1_bound_final}, \eqref{eq:Var_chi2_bound_final} and \eqref{eq:Var_chi1_bound_final}
    inside of  \eqref{eq:Bound_s2_minus_s2tilde1_almostthere}, we get,
    \begin{align}\label{eq:s2_equiv_s2tilde1_final}
        \left|s_2 - \tilde{s}_2^{1}\right| \leq\const \biggl(&\eqsp \dfrac{\sqrt{p}}{n^{5/2}}\left(\dfrac{\kappa_n^8}{\uplambda_{\min}^\Re(\bfD)^3} +  \dfrac{\kappa_n^9}{\uplambda_{\min}^\Re(\bfD)^{7/2}}\right) 
        + \dfrac{\kappa_n^8 p^{3/2}}{\uplambda_{\min}^\Re(\bfD)^{3} n^2}  \\ 
        &+ \dfrac{p}{n^{5/2}} \left(\dfrac{\kappa_n^8}{\uplambda_{\min}^\Re(\bfD)^2} +  \dfrac{\kappa_n^9}{\uplambda_{\min}^\Re(\bfD)^{5/2}} + \dfrac{\kappa_n^9}{\uplambda_{\min}^\Re(\bfD)^{7/2}}\right)\biggr) \eqsp.
    \end{align}
    In order to further simplify the expression of the equivalent $\tilde{s}_2^{(1)}$, we define,
    \begin{align}
        \tilde{s}_2^{2} = \dfrac{n-1}{n} \E\left[\Phi_1^{(3,4)} \widetilde{\Brm}_{-1}(\bfD) \Phi_1^{(1,2)\top}\right] \E\left[\Resolvent_{--}(\bfD)\right] \E\left[\Phi_2^{(1,2)} \widetilde{\Brm}_{-2}(\bfD) \Phi_2^{(3,4)\top}\right] \eqsp,
    \end{align}
    and we have,
    \begin{align}
        &\dfrac{n}{n-1} \left(\tilde{s}_2^2 - \tilde{s}_2^1\right) \\
        =&\eqsp \E\left[\E\left[\Phi_1^{(3,4)} \left\{\widetilde{\Brm}_{-1}(\bfD) - \hat{\Brm}_{-1}(\bfD) \right\} \Phi_1^{(1,2)\top} \mid \mathcal{F}_{--}\right]\Resolvent_{--}(\bfD)\E\left[\Phi_2^{(1,2)} \hat{\Brm}_{-2}(\bfD) \Phi_2^{(3,4)\top} \mid \mathcal{F}_{--}\right]\right] \\
        &+  \E\left[\E\left[\Phi_1^{(3,4)} \widetilde{\Brm}_{-1}(\bfD) \Phi_1^{(1,2)\top} \right]\Resolvent_{--}(\bfD)\E\left[\Phi_2^{(1,2)} \left\{\widetilde{\Brm}_{-2}(\bfD) - \hat{\Brm}_{-2}(\bfD)\right\} \Phi_2^{(3,4)\top} \mid \mathcal{F}_{--}\right]\right] \eqsp.
    \end{align}
    It results, from \Cref{lemma:Covariance_maximum_eigenvalue}
    \begin{align}
        \left|\tilde{s}_2^{1} - \tilde{s}_2^{2}\right|
        \leq&\eqsp
        \left|\E\left[\E\left[\Phi_1^{(3,4)}\left\{\widetilde{\Brm}_{-1}(\bfD) - \hat{\Brm}_{-1}(\bfD) \right\} \Phi_1^{(1,2)\top} \mid \mathcal{F}_{--}\right]\Resolvent_{--}(\bfD)\E\left[\Phi_2^{(1,2)} \hat{\Brm}_{-2}(\bfD) \Phi_2^{(3,4)\top} \mid \mathcal{F}_{--}\right]\right]\right| \\
        &+ \kappa_n^2 \left\|\E\left[\Resolvent_{--}(\bfD)\E\left[\Phi_1^{(1,2)} \left\{\widetilde{\Brm}_{-1}(\bfD) - \hat{\Brm}_{-1}(\bfD) \right\} \Phi_1^{(3,4)\top} \eqsp \middle| \eqsp \mathcal{F}_{--}\right]\right]\right\|_2\eqsp.
    \end{align}
    Recall that for any vector $\mathbf{v} \in \R^p$, it holds,
    $\|\mathbf{v}\|_2 = \sup_{\|\mathbf{a}\|_2 = 1} \mathbf{a}^\top \mathbf{v}$, thus using the Cauchy-Schwarz inequality and  \eqref{eq:resolvent_identity} for the difference $\widetilde{\Brm}_{-1}(\bfD) - \hat{\Brm}_{-1}(\bfD)$,
    \begin{align}
        \left|\tilde{s}_2^{1} - \tilde{s}_2^{2}\right|
        \leq&\eqsp
        \left|\E\left[\tr\left(\left\{\widetilde{\Brm}_{-1}(\bfD) - \hat{\Brm}_{-1}(\bfD) \right\} \Phi_1^{(1,2)\top} \Resolvent_{--}(\bfD)\E\left[\Phi_2^{(1,2)} \hat{\Brm}_{-2}(\bfD) \Phi_2^{(3,4)\top} \mid \mathcal{F}_{--}\right] \Phi_1^{(3,4)}\right)\right] \right| \\
        &+ \kappa_n^2\sup_{\|\mathbf{a}\|_2=1} \E\left[\tr\left(\left\{\widetilde{\Brm}_{-1}(\bfD) - \hat{\Brm}_{-1}(\bfD)\right\}\Phi_1^{(3,4)\top} \mathbf{a}^\top \Resolvent_{--}(\bfD) \Phi_1^{(1,2)} \right)\right] \\
        \leq&\eqsp \label{eq:Bound_1_tildeS_2_1_minus_tildeS_2_2}
        \dfrac{1}{n}\sqrt{\Var\left(\widetilde{\Brm}_{-1}(\bfD) \Phi_1^{(1,2)\top} \Resolvent_{--}(\bfD) \Phi_1^{(1,2)}\right) \Var\left(\hat{\Brm}_{-1}(\bfD) \Phi_1^{(1,2)\top} \Resolvent_{--}(\bfD)\E\left[\Phi_2^{(1,2)} \hat{\Brm}_{-2}(\bfD) \Phi_2^{(3,4)\top} \mid \mathcal{F}_{--}\right] \Phi_1^{(3,4)}\right)} \\
        &+ \sup_{\|\mathbf{a}\|_2=1} \dfrac{\kappa_n^2}{n}\sqrt{\Var\left( \widetilde{\Brm}_{-1}(\bfD)\Phi_1^{(1,2)\top} \Resolvent_{--}(\bfD) \Phi_1^{(1,2)}\right) \Var\left(\hat{\Brm}_{-1}(\bfD) \Phi_1^{(3,4)\top} \mathbf{a}^\top \Resolvent_{--}(\bfD) \Phi_1^{(1,2)} \right)} \eqsp.
    \end{align} 
    Towards bounding the previous error term, we write thanks to \Cref{corollary:randomHansonWright} and \Cref{lemma:matrix_variance_product_bound} that,
    \begin{align}\label{eq:DamnThisBoundIsEveryWhere}
        \Var\left(\widetilde{\Brm}_{-1}(\bfD) \Phi_1^{(1,2)\top} \Resolvent_{--}(\bfD) \Phi_1^{(1,2)}\right) \leq     \Var\left(\Phi_1^{(1,2)\top} \Resolvent_{--}(\bfD) \Phi_1^{(1,2)}\right) \leq \const \left(\dfrac{\kappa_n^4 p}{\uplambda_{\min}^\Re(\bfD)^2} + \dfrac{\kappa_n^6 p}{\uplambda_{\min}^\Re(\bfD)^3 n}\right) \eqsp,
    \end{align}

    \begin{align}
        \Var\left(\hat{\Brm}_{-1}(\bfD)\right) \leq \dfrac{1}{n^2} \Var\left(\Phi_1^{(1,2)\top} \Resolvent_{--}(\bfD) \Phi_1^{(1,2)}\right) \leq \dfrac{\const}{n^2} \left(\dfrac{\kappa_n^4 p}{\uplambda_{\min}^\Re(\bfD)^2} + \dfrac{\kappa_n^6 p}{\uplambda_{\min}^\Re(\bfD)^3 n}\right) \eqsp,
    \end{align}
    \begin{align}
        \Var\left(\hat{\Brm}_{-1}(\bfD) \Phi_1^{(3,4)\top} \mathbf{a}^\top \Resolvent_{--}(\bfD) \Phi_1^{(1,2)} \right) \leq&\eqsp \Var\left(\Phi_1^{(3,4)\top} \mathbf{a}^\top \Resolvent_{--}(\bfD) \Phi_1^{(1,2)}\right) + \dfrac{\kappa_n^4}{\uplambda_{\min}^\Re(\bfD)^2} \Var\left(\hat{\Brm}_{-1}(\bfD)\right) \\
        \leq&\eqsp \const \left(\dfrac{\kappa_n^4}{\uplambda_{\min}^\Re(\bfD)^2} + \dfrac{\kappa_n^6 p}{\uplambda_{\min}^\Re(\bfD)^3 n}\right) + \dfrac{\const p}{n^2} \left(\dfrac{\kappa_n^8}{\uplambda_{\min}^\Re(\bfD)^4} + \dfrac{\kappa_n^{10}}{\uplambda_{\min}^\Re(\bfD)^5 n}\right) \eqsp.
    \end{align}
    and,
    \begin{align}
        &\Var\left(\hat{\Brm}_{-1}(\bfD) \Phi_1^{(1,2)\top} \Resolvent_{--}(\bfD)\E\left[\Phi_2^{(1,2)} \hat{\Brm}_{-2}(\bfD) \Phi_2^{(3,4)\top} \mid \mathcal{F}_{--}\right] \Phi_1^{(3,4)} \right)\\
        \leq&\eqsp \Var\left(\Phi_1^{(1,2)\top} \Resolvent_{--}(\bfD)\E\left[\Phi_2^{(1,2)} \hat{\Brm}_{-2}(\bfD) \Phi_2^{(3,4)\top} \mid \mathcal{F}_{--}\right] \Phi_1^{(3,4)}\right) + \dfrac{\kappa_n^8}{\uplambda_{\min}^\Re(\bfD)^2}\Var\left(\hat{\Brm}_{-1}(\bfD)\right) \\
        \leq&  \const \left(\dfrac{\kappa_n^8}{\uplambda_{\min}^\Re(\bfD)^2} + \dfrac{\kappa_n^{10} p}{\uplambda_{\min}^\Re(\bfD)^3 n}\right) + \dfrac{\const p}{n^2} \left(\dfrac{\kappa_n^{12}}{\uplambda_{\min}^\Re(\bfD)^4} + \dfrac{\kappa_n^{14}}{\uplambda_{\min}^\Re(\bfD)^5 n}\right) \eqsp.
    \end{align}
    Plugging the previous fours variance bounds in \eqref{eq:Bound_1_tildeS_2_1_minus_tildeS_2_2}, we conclude
    \begin{align}
        \left|\tilde{s}_2^{1} - \tilde{s}_2^{2}\right| \leq \dfrac{\const \left(1 + \kappa_n^4\right)\sqrt{p}}{n} &\left(\dfrac{\kappa_n^2}{\uplambda_{\min}^\Re(\bfD)} + \dfrac{\kappa_n^3}{\uplambda_{\min}^\Re(\bfD)^{3/2} \sqrt{n}}\right) \\
        &\times \biggl(\dfrac{\kappa_n^2 }{\uplambda_{\min}^\Re(\bfD)} + \dfrac{\kappa_n^3 \sqrt{p}}{\uplambda_{\min}^\Re(\bfD)^{3/2} \sqrt{n}} + \dfrac{\kappa_n^4 \sqrt{p}}{\uplambda_{\min}^\Re(\bfD)^2 n} +  \dfrac{\kappa_n^5 \sqrt{p}}{\uplambda_{\min}^\Re(\bfD)^{5/2} n^{3/2}}\biggr) \eqsp.
        \label{eq:s2tilde1_equiv_final}
    \end{align}
    Finally, we show that $\tilde{s}_2^2$ is close to $\overline{s}_2$ defined as
    \begin{align}
        \overline{s}_2 = \E\left[\Phi_1^{(3,4)} \Brm(\bfD) \Phi_1^{(1,2)\top}\right] \detequiv(\bfD) \E\left[\Phi_1^{(1,2)}\Brm(\bfD) \Phi_1^{(1,2)}\right]\eqsp,
    \end{align}
    recall
    \begin{align}
        \tilde{s}_2^{2} = \dfrac{n-1}{n} \E\left[\Phi_1^{(3,4)} \widetilde{\Brm}_{-1}(\bfD) \Phi_1^{(1,2)\top}\right] \E\left[\Resolvent_{--}(\bfD)\right] \E\left[\Phi_2^{(1,2)} \widetilde{\Brm}_{-2}(\bfD) \Phi_2^{(3,4)\top}\right] \eqsp,
    \end{align}
    then it holds
    \begin{align}
        \left|\widetilde{s}_2^2 - \overline{s}_2\right|
        \leq&\eqsp
        \dfrac{\kappa_n^4}{\uplambda_{\min}^\Re(\bfD) n}
        + \dfrac{2\kappa_n^4}{\uplambda_{\min}^\Re(\bfD)} \left\|\Brm(\bfD) - \widetilde{\Brm}_{-1}(\bfD)\right\|_\op
        + \kappa_n^4 \left\|\E\left[\Resolvent_{--}(\bfD)\right] - \detequiv(\bfD)\right\|_\Frob \eqsp,
    \end{align}
    where we have used the i.i.d porperty of $(\Phi_1^{(1,2)}, \Phi_1^{(3,4)})$ and $(\Phi_2^{(1,2)}, \Phi_2^{(3,4)})$, we conclude on this error term by using \eqref{eq:resolvent_identity} which gives,
    \begin{align}
        \left\|\Brm(\bfD) - \widetilde{\Brm}_{-1}(\bfD)\right\|_\op 
        =&\eqsp
        \dfrac{1}{n}\left\|\Brm(\bfD)\E\left[\Phi_1^{(1,2)\top}\E\left[\Resolvent(\bfD) - \Resolvent_{--}(\bfD)\right]\Phi_1^{(1,2)}\right]\widetilde{\Brm}_{-1}(\bfD) \right\|_\op\\
        \leq&\eqsp
        \dfrac{\kappa_n^2}{n} \left\|\E\left[\Resolvent(\bfD) - \Resolvent_{--}(\bfD)\right]\right\|_\op 
        \leq
        \dfrac{2\kappa_n^2}{\uplambda_{\min}^\Re(\bfD) n}
        \eqsp.
    \end{align}
    It results,
    \begin{align}
        \label{eq:s2tilde2_equiv_final}
        \left|\widetilde{s}_2^2 - \overline{s}_2\right| 
        \leq&\eqsp 
        \dfrac{\kappa_n^4}{\uplambda_{\min}^\Re(\bfD) n} + \dfrac{4\kappa_n^6 }{\uplambda_{\min}^\Re(\bfD)^2 n} + \kappa_n^2 \left\|\E\left[\Resolvent_{--}(\bfD)\right] - \detequiv(\bfD)\right\|_\Frob \\
        \leq&\eqsp \const \left(\dfrac{\kappa_n^4}{\uplambda_{\min}^\Re(\bfD) n} + \dfrac{\kappa_n^6 }{\uplambda_{\min}^\Re(\bfD)^2 n} + \kappa_n^2 \left\|\E\left[\Resolvent(\bfD)\right] - \detequiv(\bfD)\right\|_\Frob \right)\eqsp.
    \end{align}
    Putting this all together, we use the triangle inequality to get,
    \begin{align}
        \left|s_2 - \overline{s}_2\right| 
        \leq&\eqsp
        \const \kappa_n^4 \left\|\E\left[\Resolvent(\bfD)\right] - \detequiv(\bfD)\right\|_\Frob 
        + \dfrac{\const (1 +\kappa_n^{10}) \sqrt{p}}{\uplambda_{\min}^\Re(\bfD)^2 n} \left(1 + \dfrac{\sqrt{p}}{\uplambda_{\min}^\Re(\bfD)^{1/2} \sqrt{n}}\right) 
        + \dfrac{\const p^{3/2}}{n^2} \dfrac{\kappa_n^8 }{\uplambda_{\min}^\Re(\bfD)^{3}}\\
        \label{eq:s2_equiv_final_bound}
        &+ \dfrac{\const (1 +\kappa_n^{10})\sqrt{p}}{\uplambda_{\min}^\Re(\bfD)^{5/2} n^{3/2}} \left(1 + \dfrac{\sqrt{p}}{\uplambda_{\min}^\Re(\bfD)^{1/2} \sqrt{n}}\right) 
        + \dfrac{\const}{n}\dfrac{\kappa_n^4}{\uplambda_{\min}^\Re(\bfD)} \\
        &+ \dfrac{\const (1+\kappa_n^{10})p}{\uplambda_{\min}^\Re(\bfD)^2 n^{5/2}}\left(1 +  \dfrac{1}{\uplambda_{\min}^\Re(\bfD)^{1/2}} + \dfrac{1}{\uplambda_{\min}^\Re(\bfD)^{3/2}} \right) \\
        &+ \dfrac{\const p}{n^3} \dfrac{1 + \kappa_n^{10}}{\uplambda_{\min}^\Re(\bfD)^4}
        + \dfrac{\const (1+\kappa_n^{10})\sqrt{p}}{\uplambda_{\min}^\Re(\bfD)^3n^{5/2}}\left(1 +  \dfrac{1}{\uplambda_{\min}^\Re(\bfD)^{1/2}}\right) \eqsp.
    \end{align}
    Where the last inequality follows from \eqref{eq:s2_equiv_s2tilde1_final}, \eqref{eq:s2tilde1_equiv_final} and \eqref{eq:s2tilde2_equiv_final}.
    The statement follows from \eqref{eq:s1_equiv_final_bound} and \eqref{eq:s2_equiv_final_bound} and the triangle inequality.
\end{proof}

Similarly as for \Cref{prop:DetEquiv_Q_noaug}, \Cref{prop:DetEquiv_S_noaug} may be directly applied to approximate $\E\left[\xi_{\alpha, \lambda}(\zeta)\right]$ defined in \eqref{eq:def_xi_alpha_lambda_zeta}, for all $\zeta \in \msu_\lambda$. 
Indeed, setting $\bfD = \alpha \E\left[\Lambda(\Ztt)\right] + \lambda_n \Id{p} + \zeta \Sigma$ and $\Psi = (\Psi_i)_{i=1}^n$ s.t. for all $i \leq n$, $\Psi_i = (\sqrt{1-\alpha}\varphi(\Xtt_i), \sqrt{\alpha}\mu_x(\Ztt_i), \sqrt{1-\alpha}\Ytt_i,  \sqrt{\alpha}\mu_y(\Ztt_i))$,
yields $\Xi(\bfD) = \xi_{\alpha, \lambda}(\zeta)$. Thanks to our assumptions on $\Ztt$ and $\mathcal{T}$, in \Cref{ass:data_distribution} and \Cref{ass:artificial_data}, it is clear that such $\Psi$ and $\bfD$ satisfy the assumptions of \Cref{prop:DetEquiv_S_noaug}, 
which ensure $\E\left[\xi_{\alpha, \lambda}(\zeta)\right] = \overline{\Xi}(\bfD)$. 

In addition to \Cref{lemma:differentiation_lemma}, this results allows to approxmiate $\E\left[\hat{\theta}_{\alpha, \lambda}^\top \Sigma \hat{\theta}_{\alpha, \lambda}\right]$, which will be detailed in \Cref{sec:proofs_main_results}.

\section{Proofs of \Cref{proposition:det_equivs}, and \Cref{thm:main_result}} \label{sec:proofs_main_results}

This final section is dedicated to the proofs of the main results of the paper, namely \Cref{proposition:det_equivs}, and \Cref{thm:main_result}.

We begin by writting the proof of \Cref{proposition:det_equivs} in full detail, as it will be used to prove \Cref{thm:main_result}.
To this end, we leverage \Cref{theorem:concentration_generalization_error} as well as \Cref{prop:DetEquiv_Q_noaug}, \Cref{prop:DetEquiv_S_noaug} and \Cref{lemma:differentiation_lemma}: 
\begin{proof}[Proof of \Cref{proposition:det_equivs}]
    We will use the following notation, for any $\zeta_n \in \msu_{\lambda_n}$
    \begin{align}
        \bfD_{\alpha, \lambda_n}(\zeta_n) &= \alpha \E\left[\Lambda(\Ztt)\right] + \lambda_n \Id{p} + \zeta_n \Sigma \eqsp, \\
        \widetilde{\Resolvent}_{\alpha, \lambda_n}(\zeta_n) &= ((1-\alpha) \SampleCov + \alpha \SampleCov' + \bfD_{\alpha, \lambda_n}(\zeta_n))^{-1} \eqsp, \\
        \widetilde{\mathrm{H}}_{\alpha} &= (1 - \alpha) \SampleCrossCov + \alpha \SampleCrossCov' + \alpha \E\left[\Omega(\Ztt)\right]\eqsp, \\
        \widetilde{\theta}_{\alpha, \lambda_n}&= \widetilde{\Resolvent}_{\alpha, \lambda_n}(0) \widetilde{\mathrm{H}}_{\alpha} \eqsp, \\
        \widetilde{\xi}_{\alpha, \lambda_n}(\zeta_n) &= \widetilde{\mathrm{H}}_{\alpha} \widetilde{\Resolvent}_{\alpha, \lambda_n}(\zeta_n) \widetilde{\mathrm{H}}_{\alpha} \eqsp.
    \end{align}
    As well as,
    \begin{equation}
        \begin{aligned}
        \makebox[\linewidth][c]{$
        \forall i \leq n\eqsp, \quad
        \Phi_i= (\varphi(\Xtt_i), \mu_x(\Ztt_i)) \eqsp, 
        \qquad
        \Psi_i= (\Ytt_i^\top, \mu_y(\Ztt_i)^\top)^\top \eqsp,
        $} \\
        \makebox[\linewidth][c]{$
        \mathrm{W}_\alpha = \Diag((\sqrt{\alpha}, \sqrt{1-\alpha})) \eqsp, 
        \qquad
        \Brm_{\alpha, \lambda_n}(\zeta_n) = \mathrm{W}_\alpha\Big(\Id{2}+ n^{-1}\mathrm{W}_\alpha\E\left[\Phi_1^\top\E\left[\Resolvent_{\alpha,\lambda_n}(\zeta_n)\right]\Phi_1\right]\mathrm{W}_\alpha^\top\Big)^{-1}\mathrm{W}_\alpha \eqsp, 
        $} \\
        \makebox[\linewidth][c]{$
        \overline{\Sigma}(\zeta_n) = \E\!\left[\Phi_1 \Brm_{\alpha, \lambda_n}(\zeta_n) \Phi_1^\top\right] \eqsp,
        \qquad
        \overline{\Gamma}(\zeta_n)
        = \E\!\left[\Phi_1 \Brm_{\alpha, \lambda_n}(\zeta_n) \Psi_1^\top\right] \eqsp,
        $} \\
        \makebox[\linewidth][c]{$
        \overline{\Resolvent}_{\alpha, \lambda_n}(\zeta_n) = \left(\overline{\Sigma}(\zeta_n) + \bfD_{\alpha, \lambda_n}(\zeta_n)\right)^{-1} \eqsp,
        \qquad
        \overline{\mathrm{H}}_\alpha(\zeta_n) = \overline{\Gamma}(\zeta_n) + \E\left[\Omega(\Ztt)\right] \eqsp,
        $} \\
        \makebox[\linewidth][c]{$
        \overline{\theta}_{\alpha, \lambda_n} = \overline{\Resolvent}_{\alpha, \lambda_n}(0) \overline{\mathrm{H}}_\alpha(0) \eqsp,
        \qquad
        \overline{\xi}_{\alpha, \lambda_n}(\zeta_n) = \E\left[\Psi_1 \left\{\Id{2} - \Brm_{\alpha, \lambda_n}(\zeta_n)\right\} \Psi_1^\top\right] + \overline{\mathrm{H}}_\alpha(\zeta_n)^\top \overline{\Resolvent}_{\alpha, \lambda_n}(\zeta_n) \overline{\mathrm{H}}_\alpha(\zeta_n) \eqsp.
        $}
        \end{aligned}
    \end{equation}
    Note that the previous definition either match the ones of \Cref{proposition:det_equivs}, or generalize them to the case of a complex-value regularization $\zeta_n \Sigma$ in the resolvent.   
    We have thanks to the variance bounds on $\Lambda$ and $\Omega$ in \Cref{ass:artificial_data}, we have,
    \begin{align}
        \E\left[\|\Lambda(\Ztt) - \E\left[\Lambda(\Ztt)\right]\|_\Frob^2\right] \leq \dfrac{\const}{n} \eqsp, \quad \E\left[\|\Omega(\Ztt) - \E\left[\Omega(\Ztt)\right]\|_2^2\right] \leq \dfrac{\const}{n} \eqsp,
    \end{align}
    hence for any $\zeta_n \in \msu_{\lambda_n}$, we have,
    \begin{align}
        \left\|\E\left[\Resolvent_{\alpha, \lambda_n}\right] - \E\left[\widetilde{\Resolvent}_{\alpha, \lambda_n}(0)\right]\right\|_\Frob \leq \dfrac{\const}{\lambda_n^2 \sqrt{n}} \eqsp, \\
        \label{eq:Approximation_Resolvent_variants}
        \left\|\E\left[\hat{\theta}_{\alpha, \lambda_n}\right] - \E\left[\widetilde{\theta}_{\alpha, \lambda_n}\right]\right\|_2 \leq \dfrac{\const}{\lambda_n^2 \sqrt{n}} \eqsp, \\
        \left|\E\left[\xi_{\alpha, \lambda_n}(\zeta_n)\right] - \E\left[\widetilde{\xi}_{\alpha, \lambda_n}(\zeta_n)\right]\right| \leq \dfrac{\const}{\lambda_n^2 \sqrt{n}} \eqsp.
    \end{align} 
    
    We prove each point of \Cref{proposition:det_equivs} separately, to this end, we leverage each bound provided in \eqref{eq:Approximation_Resolvent_variants}.

    \begin{enumerate}
        \item \textit{Deterministic equivalent for $\Resolvent_{\alpha,\lambda_n}$.} 
        
             First recall from the proof of \Cref{proposition:Concentration_resolvent_around_mean}, that $\Resolvent_{\alpha,\lambda_n}$ writes as,
             \begin{align}
                \Resolvent_{\alpha, \lambda_n} = r(\Ztt, 0) \eqsp,
             \end{align}
             where $r$ was defined in \eqref{eq:def_r_and_h}. Furthermore, we have shown in \eqref{eq:Lipschitz_constant_r} that $r(\cdot, 0)$ is $\Ltt^r$-Lipschitz continuous, with
             \begin{align}
                \Ltt^r_{\msk_n} \leq \dfrac{\const(1 + \sqrt{\lambda_n}) }{\lambda_n^{3/2} \sqrt{n}} \leq \dfrac{\const}{\lambda_n^{3/2} \sqrt{n}} \eqsp,
             \end{align}
             where we have used the hypothesis $\lambda_n \geq 1/n$ to simplify the bound on the Lipschitz constant.
             It results that the function $\bfZ \mapsto \tr\left(\mathbf{A} r(\bfZ, 0)\right)$ is $\const \lambda_n^{-3/2} n^{-1/2}$-Lipschitz continuous, hence the Lipschitz concentration hypothesis on $\Ztt$ \cref{ass:data_distribution}
             yields,
             \begin{align} \label{eq:Concentration_R_final}
                \left|\tr\left(\mathbf{A} r(\Ztt, 0)\right) - \E\left[\tr\left(\mathbf{A} r(\Ztt, 0)\right)\right]\right| \stochdom \dfrac{\const}{\lambda_n^{3/2} \sqrt{n}} \eqsp.
             \end{align}
             Furthermore, using \Cref{prop:Detequiv_R_twobytwo} with $\Psi_i \leftarrow (\sqrt{1-\alpha}\varphi(\Ztt_i), \sqrt{\alpha}\mu_x(\Ztt_i))$, and $\bfD \leftarrow \bfD_{\alpha, \lambda_n}(0)$, we have,
             \begin{align} \label{eq:Bias_detequivR_final}
                \left\|\E\left[\widetilde{\Resolvent}_{\alpha, \lambda_n}\right] - \overline{\Resolvent}_{\alpha, \lambda_n}\right\|_\Frob \leq \dfrac{\const}{\lambda_n^{7/2} \sqrt{n}} \eqsp,
             \end{align}
             where we have used $\kappa < + \infty$ and $\limsup p / n < + \infty$ granted by \Cref{ass:porportionality} to simplify the bound.
             Combining \eqref{eq:Approximation_Resolvent_variants}, \eqref{eq:Concentration_R_final} and \eqref{eq:Bias_detequivR_final} concludes the proof of the first point.

        \item \textit{Deterministic equivalent for $\hat{\theta}_{\alpha, \lambda_n}$.}
        
             Recall from \Cref{proposition:Concentration_resolvent_around_mean} that, for any sequence $(\mathbf{a}_n)_{n \in \N}$ of vectors in $\R^p$,
             \begin{align} \label{eq:Concentration_theta_final}
                \left|\mathbf{a}_n^\top \hat{\theta}_{\alpha, \lambda_n} - \E\left[\mathbf{a}_n^\top \hat{\theta}_{\alpha, \lambda_n}\right]\right| \stochdom \dfrac{\|\mathbf{a}_n\|_2}{\lambda_n^{3/2} \sqrt{n}}\left(1 + \sqrt{\lambda_n}\right) \leq \dfrac{\const \|\mathbf{a}_n\|_2}{\lambda_n^{3/2} \sqrt{n}} \eqsp,
             \end{align}
             which we have simplified thanks to $\lambda_n \geq 1/n$. Furthermore, calling \Cref{prop:DetEquiv_Q_noaug} with $\Psi_i \leftarrow (\sqrt{1-\alpha}\varphi(\Ztt_i)^\top, \sqrt{\alpha}\mu_x(\Ztt_i)^\top, \sqrt{1-\alpha}\Ytt_i^\top, \sqrt{\alpha}\mu_y(\Ztt_i)^\top)^\top$, 
             and $\bfD \leftarrow \bfD_{\alpha, \lambda_n}(0)$, directly yields,
             \begin{align} \label{eq:Bias_detequivTheta_final}
                \left\|\E\left[\widetilde{\theta}_{\alpha, \lambda_n}\right] - \overline{\theta}_{\alpha, \lambda_n}\right\|_2 \leq \dfrac{\const}{\lambda_n^{4} \sqrt{n}} \eqsp,
             \end{align}
             where the bound of \Cref{prop:DetEquiv_Q_noaug} has been simplified thanks to $\lambda_n \geq 1/n$, $\kappa < + \infty$ and $\limsup p / n < + \infty$ granted by \Cref{ass:porportionality}.
             Combining \eqref{eq:Approximation_Resolvent_variants}, \eqref{eq:Concentration_theta_final} and \eqref{eq:Bias_detequivTheta_final} concludes the proof of the second point.

        \item \textit{Deterministic equivalent for $\xi_{\alpha, \lambda_n}(r_n \rme^{\rmi t})$.} 
        
             Recall from \Cref{proposition:Concentration_resolvent_around_mean} that, for any sequence $(r_n)_{n \in \N}$ of real numbers,
             \begin{align} \label{eq:intermediate_concentration_xi}
                \left|\xi_{\alpha, \lambda_n}(r_n \rme^{\rmi t}) - \E\left[\xi_{\alpha, \lambda_n}(r_n \rme^{\rmi t})\right]\right| \stochdom \dfrac{1 + r_n}{(\lambda_n - r_n \Ltt_\varphi^2)^{3/2}\sqrt{n}}\left(1 + \sqrt{\lambda_n - r_n \Ltt_\varphi^2}\right) \eqsp,
             \end{align}
             in particular, for $r_n = 0$, and thanks to $\lambda_n \geq 1/n$, we have,
             \begin{align} \label{eq:Concentration_xi_final}
                \left|\xi_{\alpha, \lambda_n}(0) - \E\left[\xi_{\alpha, \lambda_n}(0)\right]\right| \stochdom \dfrac{\const}{\lambda_n^{3/2} \sqrt{n}} \eqsp.
             \end{align}
             Furthermore, calling \Cref{prop:DetEquiv_S_noaug} with $\Psi_i \leftarrow (\sqrt{1-\alpha}\varphi(\Ztt)^\top, \sqrt{\alpha}\mu_x(\Ztt)^\top, \sqrt{1-\alpha}\Ytt^\top, \sqrt{\alpha}\mu_y(\Ztt)^\top)^\top$, and $\bfD \leftarrow \bfD_{\alpha, \lambda_n}(0)$, we have,
             \begin{align} \label{eq:Bias_detequivXi_final}
                \left|\E\left[\widetilde{\xi}_{\alpha, \lambda_n}(0)\right] - \overline{\xi}_{\alpha, \lambda_n}(0)\right| \leq \dfrac{\const}{\lambda_n^{7/2} \sqrt{n}} \eqsp,
             \end{align}
             where the bound of \Cref{prop:DetEquiv_S_noaug} has been simplified thanks to $\lambda_n \geq 1/n$, $\kappa < + \infty$ and $\limsup p / n < + \infty$ granted by \Cref{ass:porportionality}.
             Combining \eqref{eq:Approximation_Resolvent_variants}, \eqref{eq:Concentration_xi_final} and \eqref{eq:Bias_detequivXi_final} concludes the proof of the third point.

        \item \textit{Deterministic equivalent for $\hat{\theta}_{\alpha, \lambda_n}^\top \Sigma \hat{\theta}_{\alpha, \lambda_n}$.} 
             
             First leverage \eqref{eq:intermediate_concentration_xi} as well as \Cref{lemma:differentiation_lemma} to show that for any sequence $(r_n)_{n \in \N}$ of positive numbers,
             \begin{align}
                \left|\partial_\zeta \xi_{\alpha, \lambda_n}(0) - \E\left[\partial_\zeta \xi_{\alpha, \lambda_n}(0)\right]\right| \stochdom \dfrac{r_n^{-1} + 1}{(\lambda_n - r_n \Ltt_\varphi^2)^{3/2}\sqrt{n}}\left(1 + \sqrt{\lambda_n - r_n \Ltt_\varphi^2}\right)  \eqsp,
             \end{align}
             taking $r_n = \lambda_n / (2 \Ltt_\varphi^2)$, and recalling that $\partial_\zeta \xi_{\alpha, \lambda_n}(0) = -\hat{\theta}_{\alpha, \lambda_n}^\top \Sigma \hat{\theta}_{\alpha, \lambda_n}$, we have,
             \begin{align} \label{eq:intermediate_concentration_last_point_1}
                \left|\hat{\theta}_{\alpha, \lambda_n}^\top \Sigma \hat{\theta}_{\alpha, \lambda_n} + \E\left[\partial_\zeta \xi_{\alpha, \lambda_n}(0)\right]\right| \stochdom \dfrac{1}{\lambda_n^{5/2}\sqrt{n}} \eqsp.
             \end{align}
             In addition, applying \Cref{lemma:differentiation_lemma} to the third equation in \eqref{eq:Approximation_Resolvent_variants}, we have,
             \begin{align}
                \sup_{t \in  [0, 2 \uppi]} \left|\E\left[\xi_{\alpha, \lambda_n}(r_n \rme^{\rmi t})\right] - \E\left[\widetilde{\xi}_{\alpha, \lambda_n}(r_n \rme^{\rmi t})\right]\right| \leq \dfrac{\const}{\lambda_n^2 \sqrt{n}} \eqsp,
             \end{align}
             thus,
             \begin{align} \label{eq:intermediate_concentration_last_point_2}
                \left|\E\left[\partial_\zeta \xi_{\alpha, \lambda_n}(0)\right] - \E\left[\partial_\zeta \widetilde{\xi}_{\alpha, \lambda_n}(0)\right]\right| \leq \dfrac{\const}{\lambda_n^3 \sqrt{n}} \eqsp.
             \end{align}
             Finally, from \Cref{prop:DetEquiv_S_noaug} applied with $\Psi_i \leftarrow (\sqrt{1-\alpha}\varphi(\Ztt_i)^\top, \sqrt{\alpha}\mu_x(\Ztt_i)^\top, \sqrt{1-\alpha}\Ytt_i^\top, \sqrt{\alpha}\mu_y(\Ztt_i)^\top)^\top$, 
             and $\bfD \leftarrow \bfD_{\alpha, \lambda_n}(r_n \rme^{\rmi t})$, we have,
             \begin{align}
             \sup_{t \in  [0, 2 \uppi]} \left|\E\left[\widetilde{\xi}_{\alpha, \lambda_n}(r_n \rme^{\rmi t})\right] - \overline{\xi}_{\alpha, \lambda_n}(r_n \rme^{\rmi t})\right| \leq \dfrac{\const}{\lambda_n^{7/2} \sqrt{n}} \eqsp,
             \end{align}
             thus \Cref{lemma:differentiation_lemma} yields,
             \begin{align} \label{eq:intermediate_concentration_last_point_3}
                \left|\E\left[\partial_\zeta \widetilde{\xi}_{\alpha, \lambda_n}(0)\right] - \partial_\zeta \overline{\xi}_{\alpha, \lambda_n}(0)\right| \leq \dfrac{\const}{\lambda_n^{9/2} \sqrt{n}} \eqsp.
             \end{align}
             Combining \eqref{eq:intermediate_concentration_last_point_1}, \eqref{eq:intermediate_concentration_last_point_2} and \eqref{eq:intermediate_concentration_last_point_3} we conclude:
             \begin{align}
                \left|\hat{\theta}_{\alpha, \lambda_n}^\top \Sigma \hat{\theta}_{\alpha, \lambda_n} + \partial_\zeta \overline{\xi}_{\alpha, \lambda_n}(0)\right| \stochdom \dfrac{1}{\lambda_n^{9/2}\sqrt{n}} \eqsp.
             \end{align}
             Finally, we explicitely compute the diferential $\partial_\zeta \overline{\xi}_{\alpha, \lambda_n}(0)$ as follows:
             \begin{align}
                \partial_\zeta \overline{\xi}_{\alpha, \lambda_n}(0) = -\E\left[\Psi_1 \partial_\zeta \Brm_{\alpha, \lambda_n}(0) \Psi_1^\top\right] - \overline{\mathrm{H}}_\alpha(0)^\top \Resolvent_{\alpha, \lambda_n}(0) \partial_\zeta \overline{\Sigma}(0) \Resolvent_{\alpha, \lambda_n}(0) \overline{\mathrm{H}}_\alpha(0) + 2 (\partial_\zeta \overline{\mathrm{H}}_\alpha(0))^\top \Resolvent_{\alpha, \lambda_n}(0) \overline{\mathrm{H}}_\alpha(0) \eqsp,
             \end{align}
             and verify that,
             \begin{align}
                \partial_\zeta \Brm_{\alpha, \lambda_n}(0) = \mathrm{D}_{\alpha, \lambda_n} \eqsp, \qquad \partial_\zeta \overline{\Sigma}(0) = \E\left[\Phi_1 \mathrm{D}_{\alpha, \lambda} \Phi_1\right] + \Sigma \eqsp, \qquad \partial_\zeta \overline{\mathrm{H}}_\alpha(0) = \E\left[\Phi_1 \mathrm{D}_{\alpha, \lambda} \Psi_1\right] \eqsp,
             \end{align}
             where $\mathrm{D}_{\alpha, \lambda_n}$ is the matrix defined in \eqref{eq:definition_dilation_matrices_D}. This concludes the proof of \Cref{proposition:det_equivs}.
    \end{enumerate}
\end{proof}

Lastly, we prove \Cref{thm:main_result}, 
\begin{proof}[Proof of \Cref{thm:main_result}]
    We begin by recalling the definition of $\overline{\mathcal{G}}(\alpha, \lambda_n)$, we have,
    \begin{align}
        \overline{\mathcal{G}}(\alpha, \lambda_n) = \theta_\star^\top \Sigma_{\star\star} \theta_\star + \sigma^2 - 2 \theta_\star^\top \Sigma_\star \overline{\theta}_{\alpha, \lambda_n} + \overline{\theta}_{\alpha, \lambda_n}^\top \{\overline{\Sigma}' + \Sigma\} \overline{\theta}_{\alpha, \lambda_n} - 2 \overline{\theta}_{\alpha, \lambda_n}^\top\overline{\Gamma}'  + \overline{\gamma} \eqsp,
    \end{align}
    where one might identify in the previous term $-\partial_\zeta \overline{\xi}_{\alpha, \lambda_n}(0) = \overline{\theta}_{\alpha, \lambda_n}^\top \{\overline{\Sigma}' + \Sigma\} \overline{\theta}_{\alpha, \lambda_n} - 2 \overline{\theta}_{\alpha, \lambda_n}^\top\overline{\Gamma}'  + \overline{\gamma}$.
    In addition, we have from \eqref{eq:generalization_error_rewritten2} that,
    \begin{align}
      \mathcal{G}(\alpha, \lambda_n) = \theta_\star^\top \Sigma_{\star\star} \theta_\star + \sigma^2 - 2 \theta_\star^\top \Sigma_\star \hat{\theta}_{\alpha, \lambda_n} - \partial_\zeta \xi_{\alpha, \lambda_n}(0) \eqsp,
    \end{align}
    thus, 
    \begin{align}
    \left|\mathcal{G}(\alpha, \lambda_n) - \overline{\mathcal{G}}(\alpha, \lambda_n)\right| \leq 2\left|\theta_\star^\top \Sigma_{\star} \left\{\hat{\theta}_{\alpha,\lambda_n} - \overline{\theta}_{\alpha, \lambda_n}\right\}\right| + \left|\partial_\zeta \xi_{\alpha, \lambda_n}(0) - \partial_\zeta \overline{\xi}_{\alpha, \lambda_n}(0)\right| \eqsp.
    \end{align}
    The result follows directly from \Cref{proposition:det_equivs}.
\end{proof}

\end{document}